\let\hat\widehat
\let\tilde\widetilde
\DeclareMathOperator*{\E}{\mathbb{E}}
\let\Pr\relax
\DeclareMathOperator*{\Pr}{\mathbb{P}}
\newcommand{\eps}{\epsilon}
\newcommand{\inprod}[1]{\left\langle #1 \right\rangle}
\newcommand{\R}{\mathbb{R}}
\newcommand{\norm}[1]{\left\|#1 \right\|}
\newcommand{\tr}{\textnormal{Tr}}
\newtheorem{theorem}{Theorem}[section]
\newtheorem{corollary}[theorem]{Corollary}
\newtheorem{lemma}[theorem]{Lemma}
\newtheorem{proposition}[theorem]{Proposition}
\newtheorem{definition}[theorem]{Definition}
\newtheorem{remark}[theorem]{Remark}
\newtheorem*{rep@theorem}{\rep@title}
\newcommand{\newreptheorem}[2]{%
\newenvironment{rep#1}[1]{%
 \def\rep@title{#2 \ref{##1}}%
 \begin{rep@theorem}}%
 {\end{rep@theorem}}}
\begin{document}

%

%

\twocolumn[

\aistatstitle{On the Neural Tangent Kernel Analysis of Randomly Pruned Neural Networks}

\aistatsauthor{ Hongru Yang, \,\, Zhangyang Wang}
\aistatsaddress{VITA Group, The University of Texas at Austin} ]

\begin{abstract}
Motivated by both theory and practice, we study how random pruning of the weights affects a neural network's neural tangent kernel (NTK). 
In particular, this work establishes an equivalence of the NTKs between a fully-connected neural network and its randomly pruned version. 
The equivalence is established under two cases. 
The first main result studies the infinite-width asymptotic. 
It is shown that given a pruning probability, for fully-connected neural networks with the weights randomly pruned at the initialization, as the width of each layer grows to infinity sequentially, the NTK of the pruned neural network converges to the limiting NTK of the original network with some extra scaling.
If the network weights are rescaled appropriately after pruning, this extra scaling can be removed. 
The second main result considers the finite-width case.
It is shown that to ensure the NTK's closeness to the limit, the dependence of width on the sparsity parameter is asymptotically linear, as the NTK's gap to its limit goes down to zero. 
Moreover, if the pruning probability is set to zero (i.e., no pruning), the bound on the required width matches the bound for fully-connected neural networks in previous works up to logarithmic factors. 
The proof of this result requires developing a novel analysis of a network structure which we called \textit{mask-induced pseudo-networks}.
Experiments are provided to evaluate our results.
\end{abstract}

\section{INTRODUCTION} 
Can a sparse neural network achieve competitive performance as a dense network?
The answer to this question can be traced back to the early work of \citep{lecun1990optimal} which showed that pruning a fully-trained neural network can preserve the original network's performance while reducing the inference cost.
This led to many further developments in  post-training pruning  such as \citep{han2015deep}. 

However, such gain seems hard to be transferred to the \textit{training} phase until the discovery of the lottery ticket hypothesis (LTH) \citep{frankle2018lottery}. 
The LTH states that there exists a sparse subnetwork inside a dense network at the initialization stage such that when trained in isolation, it can achieve almost matching performance with the original dense network. 
However, the method they used to find such a network is computationally expensive: they proposed iterative magnitude-based pruning (IMP) with rewinding which requires multiple rounds of pruning and re-training \citep{frankle2018lottery, frankle2019stabilizing,chen2020lottery}. 
Subsequent work has been making effort in finding good sparse subnetworks at initialization with little or no training \citep{lee2018snip, wang2019picking, tanaka2020pruning, frankle2020pruning, sreenivasan2022rare}. 
Nonetheless, these methods suffer a degenerate performance than IMP. Surprisingly, even random pruning, albeit the most naive approach, has been observed to be competitive for sparse training in practice \citep{su2020sanity, frankle2020pruning,liu2022the}. 

On the theory side, a recent line of work \citep{malach2020proving, pensia2020optimal, sreenivasan2022finding} proves that there exists a subnetwork in a larger network at the random initialization, that can match the performance of a smaller trained network without further training. 
However, finding such a subnetwork is computationally hard. 
Other than that, little theoretical understanding of the aforementioned practical pruning method is established. 
Now, since random pruning is the simplest (and cheapest) avenue towards sparsity, if we can understand how good a random pruned subnetwork could be, compared to the original unpruned network, 
then we can establish a ``lower bound"-type understanding on the effectiveness of neural network pruning, compared to other sophisticated pruning options. 

To understand the success of deep  networks theoretically, people have proved that running (stochastic) gradient descent on a sufficiently overparameterized deep neural network can rapidly drive the training error toward zero \citep{du2018gradient, allen2019convergence, du2019gradient, ji2019polylogarithmic, lee2019wide, zou2020gradient}, and further, under some conditions, those networks are able to generalize \citep{arora2019fine, cao2019generalization}.
All the aforementioned works either explicitly or implicitly establish that the neural network is close to its neural tangent kernel (NTK) \citep{jacot2018neural}, provided that the neural network is sufficiently overparameterized.
Further, if the network width grows to infinity, this matrix converges to some deterministic matrix under Gaussian initialization. 
In addition, it is shown that the convergence and generalization of the networks heavily depend on the condition number and the smallest eigenvalue of the NTK \citep{du2018gradient, du2019gradient, arora2019fine, cao2019generalization}. 

Motivated by the established theory on NTK and the recent empirical observation that random pruning becomes particularly effective if the original network is wide and deep \citep{liu2022the}, we study the effect of randomly pruning an overparameterized neural network in the NTK regime by asking the following question:
\begin{quote}
    \vspace{-0.8em}
    \textit{How does random pruning affect the wide neural network's tangent kernel?}
    \vspace{-0.8em}
\end{quote} 
If we can understand and bound the difference between the pruned network's NTK and its unpruned version, then we can hope for formalized results suggesting that the pruned neural network can achieve \textit{fast convergence} to zero training error and yield \textit{good generalization} after training. 
For practitioners, this perhaps surprising result is likely to bring random pruning back to the spotlight of model compression and efficient training, in our era when neural networks are practically scaled a lot wider and deeper, say those gigantic ``foundational models" \citep{bommasani2021opportunities}.

Interestingly, we show that random pruning only incur limited changes to the neural network's tangent kernel. 
We now summarize the main contributions of this work:
\begin{itemize}
    \item \textbf{Asymptotic limit.} 
    The first result shows that pruning does not change the NTK much, asymptotically. 
    More specifically, \Cref{thm: main_text_main_asymp} states that given a pruning probability, the NTK of the pruned network converges to the limiting NTK of the original network at the initialization with some extra scaling factors depending on the pruning probability, as the network width grows to infinity sequentially.
    As a simple corollary, this scaling can be removed by rescaling the weights after pruning. 
    Further, this sequential limit can be indeed approached by increasing the width of the network. 
    
    \item \textbf{Non-asymptotic bound.}
    The second main result studies how large the network width needs to be to ensure that the pruned network's NTK is close to its infinite-width limit. 
    \Cref{thm: main_text_main_non_asymp} shows an asymptotically linear dependence of the network width on the sparsity parameter, as the gap between the pruned network's NTK and its limit goes down to zero.  
    Further, if the pruning probability is set to zero, our width lower bound recovers the bound in \citep{arora2019exact} for fully-connected neural networks up to some logarithmic factors. 
    The proof of \Cref{thm: main_text_main_non_asymp} requires developing novel analysis of a network structure that is closely related to the pruned network which we called \textbf{mask-induced pseudo-networks}. 
    We give a detailed explanation in \Cref{section: backward}. 
    We further validate our theory experimentally in \Cref{sec: validate_theory}. 
\end{itemize}
Although our result is about the networks at the initialization, \cite{du2018gradient, arora2019exact, allen2019convergence} suggested that the network is still closely related to the NTK after training, provided that the network is sufficiently overparameterized. 
Therefore, by further applying the established analysis in the previous work, the equivalence can still hold after training. 

\subsection{Related Work}
\textbf{Sparse Neural Networks in Practice.}
Since the discovery of the Lottery Ticket Hypothesis \citep{frankle2018lottery}, many efforts have been made to develop methods to find good sparse networks with little overhead.
Those methodologies can be divided into two groups: static sparse training and dynamic sparse training \citep{liu2023ten}.

Static sparse training can be based on either {random pruning} and {non-random pruning}. 
As for random pruning, every layer can be uniformly pruned with the same pre-defined pruning ratio \citep{mariet2015diversity, he2017channel, gale2019state} or the pruning ratio can be varied for different layers such as Erd\"o-R\'enyi \citep{mocanu2018scalable} and Erd\"o-R\'enyi Kernel \citep{evci2020rigging}.
For non-random pruning, those methods usually prune network weights according to some proposed saliency criteria such as SNIP \citep{lee2018snip}, GraSP \cite{wang2019picking}, SynFlow \cite{tanaka2020pruning} and NTK-based score \cite{liu2020finding}.
On the other hand, dynamic sparse training \citep{mocanu2018scalable,liu2021sparse} explores the sparsity pattern in a prune-and-grow scheme according to some criteria \citep{mocanu2018scalable, mostafa2019parameter, dettmers2019sparse, evci2020rigging, ye2020good, jayakumar2020top, liu2021we}. 
Further, the sparsity pattern can be learned by using sparsity-inducing regularizer \citep{Yang2020DeepHoyer:}.
Other ways of reducing the computational cost include finding a good subnetwork and then fine-tuning \citep{sreenivasan2022rare}, and transferring lottery tickets \citep{morcos2019one, chen2021elastic}. 
To understand the transferability of lottery tickets, \cite{redman2021universality} studied IMP via the renormalization group theory in physics.
Based on this development, people in practice use sparsity to improve robustness \citep{chen2021sparsity, liu2022robust,ding2021audio} and data efficiency \citep{chen2021data, zhang2021efficient}. 

\textbf{Theoretical Study of The Lottery Ticket Hypothesis.}
On the theory side, there are works proving that a small dense network can indeed be approximated by pruning a larger network. \cite{malach2020proving} proved that a target network of width $d$ and depth $l$ can be indeed approximated by pruning a randomly initialized network that is of a polynomial factor (in $d,l$) wider and twice deeper even without further training. 
\cite{ramanujan2020s} empirically verified this stronger version of LTH.
Later, \cite{pensia2020optimal} improved the widening factor to a logarithmic bound, and \cite{sreenivasan2022finding} proves that with a polylogarithmic widening factor, such a result holds even if the network weights are binary.
Unsurprisingly, all of the above results are computationally hard to achieve. 
In addition, all these works are based on a functional approximation argument and don't consider how pruning affects the training process (and, subsequently, generalization). 

\textbf{Neural Tangent Kernels.}
Over the past few years, there is tremendous progress on understanding training overparameterized deep neural networks. 
A series of works \citep{du2018gradient,allen2019convergence, du2019gradient, ji2019polylogarithmic, lee2019wide, zou2020gradient} have established gradient descent convergence guarantee based on NTK \citep{jacot2018neural}. 
Further, under some conditions, these networks are able to generalize \citep{arora2019fine, cao2019generalization}.
\cite{yang2019scaling, arora2019exact} provided asymptotic and non-asymptotic proofs on the limiting NTK.
Further, algorithms for computing the tangent kernels are developed for various architectures \citep{lee2019wide, arora2019exact, han2022fast}. 
Other related works include studying how depth affects the diagonal of NTK \citep{hanin2019finite} and the smallest eigenvalue of NTK under certain data distribution assumption \citep{nguyen2021tight}.
Overall, the neural tangent kernel provides valuable, yet oversimplified, explanation on the neural network's success \citep{chizat2019lazy}. 

One work in a similar spirit to ours is \citep{liao2022on} which studies the convergence of training an over-parameterized one-hidden-layer neural network with sparse activation by gradient descent. 
Although both works consider random pruning (or masking), our work is different from theirs in a sense that the sparsity in our setting is from pruning the weights instead of neurons whereas their sparsity is obtained from masking neurons at the each step of gradient descent. 
Further, we consider neural networks of arbitrary depth and their work is focusing on the one-hidden-layer neural networks. 
Note that for the problem considered in this work, pruning (masking) neurons will be trivial since it merely incur changes to the network width. 


\section{PRELIMINARIES}
\textbf{Notations.}
We use lowercase letters to denote scalars and boldface letters and symbols (e.g. $\mathbf{x}$) to denote vectors and matrices.
Element-wise product is denoted by $\odot$ and $\otimes$ denotes the Kronecker product. 
$\Pi_{\mathbf{x}}$ denotes the orthogonal projection onto the vector space generated by $\mathbf{x}$ and $\Pi_{\mathbf{A}}$ denote the orthogonal projection onto the column space of $\mathbf{A}$.
We use $\textnormal{diag}(\mathbf{x})$ to denote a diagonal matrix where its diagonals are elements from the vector $\mathbf{x}$.
Further, $\widetilde{O}, \widetilde{\Theta}, \widetilde{\Omega}$ are used to suppress logarithmic factors in $O, \Theta, \Omega$.

\subsection{Problem Formulation}
Here we want to study the training dynamics of a sparse sub-network in an ultra-wide neural network. 
For simplicity, we first apply our analysis on fully-connected neural networks. 
We denote by $f(\mathbf{x}) = f(\boldsymbol{\theta}, \mathbf{x})$ the output of the full network, $\tilde{f}(\mathbf{x}) = f(\boldsymbol{\theta} \odot \mathbf{m}, \mathbf{x}) \in \R$ the output of a sparse sub-network obtained by random pruning where $\boldsymbol{\theta} \in \R^N$ denotes the network parameters, $\mathbf{m} \in \R^N$ is the sparse mask and $\mathbf{x} \in \R^d$ is the input. 
We distinguish the output of each layer of the original full networks from the sparse sub-networks by adding tilde to the symbols. 
For simplicity, we assume the network outputs a scalar \footnote{Without loss of generality, our analysis can be extended to the vector-output case. }. 
We assume that the sparse mask is obtained from sampling each individual weight i.i.d. from a Bernoulli distribution with probability $\alpha$. 
Formally, let $\mathbf{x} \in \R^d$ be the input, and denote $\tilde{\mathbf{g}}^{(0)}(\mathbf{x}) = \mathbf{x}$ and $d_0 = d$. 
An $L$-hidden-layer fully connected network can be defined recursively as: 
\begin{align*}
    \tilde{\mathbf{f}}^{(h)}(\mathbf{x}) &= \left(\mathbf{W}^{(h)} \odot \mathbf{m}^{(h)}\right) \tilde{\mathbf{g}}^{(h-1)}(\mathbf{x}) \in \R^{d_h},\\
    \tilde{\mathbf{g}}^{(h)}(\mathbf{x}) &= \sqrt{\frac{c_\sigma}{d_h}} \sigma\left( \tilde{\mathbf{f}}^{(h)}(\mathbf{x}) \right) \in \R^{d_h}, \quad h = 1,2,\ldots, L,
\end{align*}
where $\mathbf{W}^{(h)} \in \R^{d_h \times d_{h-1}}$ is the weight matrix in the $h$-th layer, $\mathbf{m}^{(h)} \in \R^{d_h \times d_{h-1}}$ is the sparse mask for the $h$-th layer $\sigma: \R \rightarrow \R$ is a coordinate-wise activation function which we only consider ReLU activation in this work and $c_\sigma = \left( \E_{z \sim \mathcal{N}(0,1)} \left[ \sigma(z)^2 \right] \right)^{-1}$ is used to normalize the output of the activation.
For ReLU, a simple calculation shows $c_\sigma = 2$.
Let $\mathbf{m} = (\mathbf{m}^{(1)}, \ldots, \mathbf{m}^{(L+1)})$ and $\boldsymbol{\theta} = (\mathbf{W}^{(1)}, \ldots, \mathbf{W}^{(L+1)})$ represents the masks and weights in the network, respectively. 
All the weights $\mathbf{W}^{(h)}_{ij}$ are initialized i.i.d. from $\mathcal{N}(0,1)$ and the masks $\mathbf{m}^{(h)}_{ij}$ are sampled i.i.d. from $\textnormal{Bernoulli}(\alpha)$.

The NTK of the pruned network is given by
\begin{align}\label{eq: ntk}
    \tilde{\boldsymbol{\Theta}}(\mathbf{x,x'}) &= \inprod{\frac{\partial \tilde{f}(\mathbf{x})}{\partial \boldsymbol{\theta}}, \frac{\partial \tilde{f}(\mathbf{x'})}{\partial \boldsymbol{\theta}}} = \sum_{h=1}^{L+1} \inprod{\frac{\partial \tilde{f}(\mathbf{x})}{\partial \mathbf{W}^{(h)}}, \frac{\partial \tilde{f}(\mathbf{x}')}{\partial \mathbf{W}^{(h)}}}.
\end{align}
We now compute the gradient of the pruned network. 
Note that since the weights being pruned are staying at zero always during the training process, the gradient of the pruned network is simply the masked gradient of the unpruned network. 
Thus, its gradient is given by
\begin{align}\label{eq: grad_pruned_network}
    \frac{\partial \tilde{f}(\mathbf{x})}{\partial \mathbf{W}^{(h)}} &= \left(\tilde{\mathbf{b}}^{(h)}(\mathbf{x}) \left( \tilde{\mathbf{g}}^{(h-1)}(\mathbf{x}) \right)^\top \right) \odot \mathbf{m}^{(h)},
\end{align}
where $\tilde{\mathbf{b}}^{(h)}$ is given by 
\begin{align}\label{eq: def_b}
    & \tilde{\mathbf{b}}^{(L+1)}(\mathbf{x}) = 1 \in \R, \nonumber \\
    & \tilde{\mathbf{b}}^{(h)}(\mathbf{x}) = \nonumber \\
    & \sqrt{\frac{c_\sigma}{d_h}} \tilde{\mathbf{D}}^{(h)}(\mathbf{x}) (\mathbf{W}^{(h+1)} \odot \mathbf{m}^{(h+1)})^\top \tilde{\mathbf{b}}^{(h+1)}(\mathbf{x}) \in \R^{d_h}, 
\end{align}
and 
\begin{align*}
    & \tilde{\mathbf{D}}^{(h)}(\mathbf{x}) = \text{diag}\left( \dot{\sigma} \left( \tilde{\mathbf{f}}^{(h)}(\mathbf{x}) \right) \right) \in \R^{d_h \times d_h}, \quad h = 1, \ldots, L.
\end{align*}
Further, in order to give the infinite-width limit of the NTK for the fully-connected neural networks we need to define the following quantities:
for $h \in [L]$, define
\[
\Sigma^{(0)}(\mathbf{x,x}') = \mathbf{x}^\top \mathbf{x}',
\]
\[
\boldsymbol{\Lambda}^{(h)}(\mathbf{x,x}') =
\begin{bmatrix}
\Sigma^{(h-1)}(\mathbf{x, x}) & \Sigma^{(h-1)}(\mathbf{x, x}') \\
\Sigma^{(h-1)}(\mathbf{x', x}) & \Sigma^{(h-1)}(\mathbf{x', x'})
\end{bmatrix}
 \in \R^{2 \times 2},
\]
\[
\Sigma^{(h)}(\mathbf{x,x}') = c_\sigma \E_{(u,v) \sim \mathcal{N}(\mathbf{0}, \boldsymbol{\Lambda}^{(h)})} [\sigma(u) \sigma(v)],
\]
and
\begin{align*}
    \dot{\Sigma}^{(h)} (\mathbf{x,x'}) = c_\sigma \E_{(u,v) \sim \mathcal{N}(\mathbf{0}, \boldsymbol{\Lambda}^{(h)})} [\dot{\sigma}(u) \dot{\sigma}(v)],
\end{align*}
where $\dot{\sigma}$ denotes the derivative of ReLU: $\dot{\sigma}(x) = \mathbb{I}(x > 0)$.
We define similar quantities of $\Sigma^{(h)}$ for randomly pruned neural networks in \Cref{section: asymp}.
It can be shown that
\begin{align*}
    \boldsymbol{\Theta}_\infty(\mathbf{x,x'}) &= \lim_{d_1, d_2, \ldots, d_L \rightarrow \infty} \sum_{h=1}^{L+1} \inprod{\frac{\partial f(\boldsymbol{\theta}, \mathbf{x})}{\partial \mathbf{W}^{(h)}}, \frac{\partial f(\boldsymbol{\theta}, \mathbf{x}')}{\partial \mathbf{W}^{(h)}}} \\
    &= \sum_{h=1}^{L+1} \left( \Sigma^{(h-1)}(\mathbf{x,x'}) \prod_{h'=h}^{L+1} \dot{\Sigma}^{(h')} (\mathbf{x,x'}) \right).
\end{align*}

\section{MAIN RESULTS}
In this section, we present the main results of our work. 
We show that given the pruning probability, the NTK of the pruned network is closely related to the limiting NTK of the unpruned network, if the network is sufficiently wide.

\textbf{Asymptotic Limit.}
We first present the asymptotic limit of the pruned network as width grows to infinity. 
\begin{theorem}[The limiting NTK of randomly pruned networks]\label{thm: main_text_main_asymp}
Consider an $L$-hidden-layer fully-connected ReLU neural network. 
Suppose the network weights are initialized from an i.i.d. standard Gaussian distribution and the weights except the input layer are pruned independently with probability $1 - \alpha$ at the initialization. 
Assume the backpropagation is computed by sampling an independent copy of weights. 
Then, as the width of each layer goes to infinity sequentially,
\begin{align*}
    \lim_{d_1, d_2, \ldots, d_L \rightarrow \infty} \tilde{\boldsymbol{\Theta}}(\mathbf{x,x'}) = \alpha^{L} \boldsymbol{\Theta}_\infty(\mathbf{x,x'}),
\end{align*}
where $\tilde{\boldsymbol{\Theta}}$ denotes the NTK of the pruned network and $\boldsymbol{\Theta}_\infty$ denotes the limiting NTK of the unpruned network. 
\end{theorem}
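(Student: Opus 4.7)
The plan is to expand the NTK layer by layer and identify the contribution of each layer in the sequential infinite-width limit. First I would use the gradient expression for the pruned network together with $\mathbf{m}^{(h)} \odot \mathbf{m}^{(h)} = \mathbf{m}^{(h)}$ (the mask is binary) to write
\begin{align*}
\tilde{\boldsymbol{\Theta}}(\mathbf{x,x'}) = \sum_{h=1}^{L+1} \sum_{i,j} \mathbf{m}^{(h)}_{ij}\, \tilde{b}^{(h)}_i(\mathbf{x})\, \tilde{g}^{(h-1)}_j(\mathbf{x})\, \tilde{b}^{(h)}_i(\mathbf{x'})\, \tilde{g}^{(h-1)}_j(\mathbf{x'}).
\end{align*}
Under the gradient-independence assumption, in the sequential limit each summand decouples as $\alpha_h \cdot \tilde{B}^{(h)}(\mathbf{x,x'}) \cdot \tilde{\Sigma}^{(h-1)}(\mathbf{x,x'})$, where $\tilde{\Sigma}^{(h)}$ and $\tilde{B}^{(h)}$ denote the limiting forward and backward covariances, $\alpha_1 = 1$ (input layer unpruned), and $\alpha_h = \alpha$ for $h \ge 2$.

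Next I would compute $\tilde{\Sigma}^{(h)}$ by induction on $h$. Conditional on layer $h-1$, each entry $\tilde{f}^{(h)}_i(\mathbf{x})$ is a sum of centered i.i.d.\ terms $\mathbf{W}^{(h)}_{ij}\mathbf{m}^{(h)}_{ij}\tilde{g}^{(h-1)}_j(\mathbf{x})$ of variance $\alpha_h \tilde{g}^{(h-1)}_j(\mathbf{x})^2$. A Lindeberg CLT shows that after letting $d_{h-1}\to\infty$, the pair $(\tilde{f}^{(h)}_i(\mathbf{x}),\tilde{f}^{(h)}_i(\mathbf{x'}))$ is jointly Gaussian with covariance $\alpha_h \tilde{\boldsymbol{\Lambda}}^{(h)}$ built from $\tilde{\Sigma}^{(h-1)}$; a LLN on $\tilde{\mathbf{g}}^{(h)}$ then gives
\begin{align*}
\tilde{\Sigma}^{(h)}(\mathbf{x,x'}) = c_\sigma \,\E_{(u,v)\sim \mathcal{N}(\mathbf{0}, \alpha_h \tilde{\boldsymbol{\Lambda}}^{(h)})}[\sigma(u)\sigma(v)].
\end{align*}
Since ReLU is $1$-homogeneous, rescaling the Gaussian covariance by $a>0$ rescales this expectation by $a$, and a direct induction yields $\tilde{\Sigma}^{(h)} = \alpha^{h-1}\Sigma^{(h)}$ for all $h\ge 1$.

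For the backward pass I would run an analogous backward induction from $h = L+1$, exploiting the independent copies of weights and masks assumed for backpropagation. Repeating the CLT/LLN argument on the recursion defining $\tilde{\mathbf{b}}^{(h)}$ yields
\begin{align*}
\tilde{B}^{(h)} = \alpha_{h+1}\,\dot{\Sigma}^{(h)}\,\tilde{B}^{(h+1)},\qquad \tilde{B}^{(L+1)}=1,
\end{align*}
where the key point is that $\dot{\sigma}$ is $0$-homogeneous, so the rescaling of $\tilde{\boldsymbol{\Lambda}}^{(h)}$ leaves the expectation unchanged and the recursion features exactly the unpruned $\dot{\Sigma}^{(h)}$. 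Unrolling gives $\tilde{B}^{(h)} = \alpha^{L+1-h}\prod_{h'=h}^{L}\dot{\Sigma}^{(h')}$. Substituting back into the layer decomposition and collecting powers of $\alpha$ in each summand yields $\alpha_h\,\tilde{B}^{(h)}\,\tilde{\Sigma}^{(h-1)} = \alpha^L\,\Sigma^{(h-1)}\prod_{h'=h}^{L+1}\dot{\Sigma}^{(h')}$ uniformly in $h$ (with the convention $\dot{\Sigma}^{(L+1)}\equiv 1$), and summing over $h$ produces exactly $\alpha^L\boldsymbol{\Theta}_\infty(\mathbf{x,x'})$.

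The step I expect to be the main obstacle is the careful bookkeeping of the sequential limits together with the CLT/LLN concentration: in particular, justifying that after conditioning on the previous-layer activations one may apply a triangular-array CLT with the extra Bernoulli randomness, and that the infinite-width limit can be interchanged with the Gaussian integrals defining $\Sigma^{(h)}$ and $\dot{\Sigma}^{(h)}$ (the latter having a discontinuous integrand). This is the standard technical difficulty in sequential-limit NTK proofs, here only mildly complicated by the mask variables, and is handled by a layer-by-layer conditioning argument once the gradient-independence assumption is invoked.
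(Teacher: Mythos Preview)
Your proposal is correct and follows essentially the same route as the paper: decompose the NTK layer by layer, prove by forward induction (using ReLU $1$-homogeneity) that $\tilde{\Sigma}^{(h)} = \alpha^{h-1}\Sigma^{(h)}$, prove by backward induction under the fresh-weights assumption (using $\dot{\sigma}$ $0$-homogeneity) that $\tilde{B}^{(h)} = \alpha^{L+1-h}\prod_{h'=h}^{L}\dot{\Sigma}^{(h')}$, and combine. The only minor discrepancy is that the theorem assumes an independent copy of \emph{weights} only (not masks) in the backward pass, but this does not affect your argument since the masks are already independent of the weights.
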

The theorem suggests that given a pruning probability, asymptotically as the network width grows to infinity, the NTK of the randomly pruned network will converges to the limiting NTK of the full network up to some scaling depending on the pruning probability. 
Although we assume an independent copy of weights for the backward propagation, we will remove this assumption in \Cref{thm: main_text_main_non_asymp}.
\begin{remark}
From \citep{arora2019exact}, if the training dataset of size $n$ is given by $(\mathbf{X,y})$, the function induced by the NTK $\boldsymbol{\Theta}(\mathbf{X,X}) \in \R^{n \times n}$ is given as
\begin{align*}
    f_{\textnormal{ntk}}(\mathbf{x}) = \boldsymbol{\Theta}(\mathbf{x},\mathbf{X})^\top \boldsymbol{\Theta}(\mathbf{X},\mathbf{X})^{-1} \mathbf{y},
\end{align*}
where $\boldsymbol{\Theta}(\mathbf{x,X}) \in \R^n$.
Thus, any scaling factor in front of the NTK is cancelled and the actual function induced by the NTK is the \emph{same}.
\end{remark}
On the other hand, this scaling factor can be removed simply by rescaling the weights according to the pruning probability which is given in the following corollary.
\begin{corollary}\label{corollary: main_text_rescale}
Consider the same setting as in \Cref{thm: main_text_main_asymp} except now we rescale the mask by $1/\sqrt{\alpha}$. Then, the neural tangent kernel after rescaling $\widetilde{\boldsymbol{\Theta}}_\alpha$ satisfies
\begin{align*}
    \lim_{d_1, d_2, \ldots, d_L \rightarrow \infty} \tilde{\boldsymbol{\Theta}}_\alpha (\mathbf{x,x'}) = \boldsymbol{\Theta}_\infty(\mathbf{x,x'}).
\end{align*}
\end{corollary}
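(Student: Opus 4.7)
The plan is to retrace the proof of \Cref{thm: main_text_main_asymp}, tracking where the pruning probability $\alpha$ enters, and observe that the rescaling $1/\sqrt{\alpha}$ on each mask exactly cancels each such $\alpha$, turning $\alpha^L \boldsymbol{\Theta}_\infty$ into $\boldsymbol{\Theta}_\infty$. The rescaling only acts on layers $h = 2,\ldots,L+1$ (the $L$ pruned layers), and the effective weight $W^{(h)}_{ij} m^{(h)}_{ij}/\sqrt{\alpha}$ has second moment $\alpha\cdot(1/\alpha)=1$, matching the variance of a standard Gaussian weight in the unpruned network; this is the source of all cancellations.

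First I would redo the forward covariance recursion. In the unrescaled proof, the pre-activation covariance at each pruned layer $h$ picks up a factor $\E[(W^{(h)}_{ij} m^{(h)}_{ij})^2] = \alpha$, so the pruned analogues $\tilde{\Sigma}^{(h)}$ and $\dot{\tilde\Sigma}^{(h)}$ differ from $\Sigma^{(h)}$ and $\dot\Sigma^{(h)}$ by multiplicative factors of $\alpha$. After rescaling the corresponding expectation equals $1$, so the forward recursion coincides with the unpruned one, and in the sequential infinite-width limit we obtain $\tilde\Sigma^{(h)}_\alpha \to \Sigma^{(h)}$ and $\dot{\tilde\Sigma}^{(h)}_\alpha \to \dot\Sigma^{(h)}$. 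One-homogeneity of ReLU guarantees that the $\dot\sigma$ covariances behave analogously, since they depend only on the signs of jointly Gaussian pre-activations whose covariance has been rescaled back to the unpruned value.

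Next I would re-examine the NTK sum \eqref{eq: ntk} and the gradient formula \eqref{eq: grad_pruned_network}. With the rescaled mask, $\partial \tilde{f}_\alpha / \partial W^{(h)}_{ij}$ acquires an explicit factor $m^{(h)}_{ij}/\sqrt{\alpha}$, so its square contributes $(m^{(h)}_{ij})^2/\alpha$ whose mask-expectation equals $1$ rather than $\alpha$. Coupled with the backward recursion \eqref{eq: def_b}, in which $\mathbf{W}^{(h+1)}\odot\mathbf{m}^{(h+1)}$ is replaced by its rescaled counterpart, the backward covariance at each pruned layer likewise transitions from $\alpha$ to $1$. The independent-copy-of-weights hypothesis of \Cref{thm: main_text_main_asymp} is inherited verbatim.

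Summing the layer-wise contributions, every factor of $\alpha$ that built up to $\alpha^L$ in \Cref{thm: main_text_main_asymp} is matched by a cancelling $1/\alpha$ from the rescaled mask, and the limit reduces to $\sum_{h=1}^{L+1}\Sigma^{(h-1)} \prod_{h'=h}^{L+1}\dot\Sigma^{(h')} = \boldsymbol{\Theta}_\infty$. The main obstacle is almost entirely bookkeeping: one must verify that in each of the forward covariance, the backward covariance, and the explicit gradient prefactor, exactly one factor of $\alpha$ is removed at each of the $L$ pruned layers, with no cross term introducing an unmatched $\alpha$. Since the rescaling is a deterministic constant it commutes with the sequential Gaussian limits already established in \Cref{thm: main_text_main_asymp}, so no new probabilistic machinery is needed beyond what is proved there.
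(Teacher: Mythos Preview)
Your approach is correct, but it is more laborious than the paper's. You propose to redo the entire asymptotic computation of \Cref{thm: main_text_main_asymp} with the rescaled mask, tracking the second moment $\E[(W^{(h)}_{ij}m^{(h)}_{ij}/\sqrt{\alpha})^2]=1$ through each layer of the forward and backward recursions. The paper instead uses a deterministic, finite-width shortcut: by positive homogeneity of ReLU, rescaling each mask by $1/\sqrt{\alpha}$ multiplies $\tilde{\mathbf{g}}^{(h)}$ by $(1/\sqrt{\alpha})^{h-1}$ and $\tilde{\mathbf{b}}^{(h)}$ by $(1/\sqrt{\alpha})^{L+1-h}$, so every layer gradient $\partial \tilde f_\alpha/\partial \mathbf{W}^{(h)}$ is exactly $(1/\sqrt{\alpha})^{L}$ times the unrescaled one, \emph{pathwise} and at any width. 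Hence $\tilde{\boldsymbol{\Theta}}_\alpha = \alpha^{-L}\tilde{\boldsymbol{\Theta}}$ identically, and invoking \Cref{thm: main_text_main_asymp} once finishes the proof. Your route re-verifies the limit layer by layer; the paper's route reduces the corollary to a one-line scaling identity plus the already-proved theorem, avoiding any repetition of the sequential-limit analysis.
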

\begin{proof}
Let $\tilde{f}_\alpha$ be the network after rescaling and $\mathbf{m}_\alpha$ denote the rescaled mask, i.e., $\mathbf{m}_\alpha = \mathbf{m} \cdot (1/\sqrt{\alpha})$.
Based the definition of $\tilde{\mathbf{b}}^{(h)}(\mathbf{x})$ in \Cref{eq: def_b}, we define $\tilde{\mathbf{b}}_\alpha^{(L+1)} = 1$ and for $h = 1, 2, \ldots, L$,
\begin{align*}
    \tilde{\mathbf{b}}_{\alpha}^{(h)}(\mathbf{x}) := \sqrt{\frac{c_\sigma}{d_h}} \tilde{\mathbf{D}}^{(h)}(\mathbf{x}) (\mathbf{W}^{(h+1)} \odot \mathbf{m}^{(h+1)}_\alpha)^\top \tilde{\mathbf{b}}^{(h+1)}_\alpha(\mathbf{x}).
\end{align*}
Based on this definition we have $\tilde{\mathbf{b}}_\alpha^{(h)}(\mathbf{x}) = (1/\sqrt{\alpha})^{L+1-h} \tilde{\mathbf{b}}^{(h)}(\mathbf{x})$.
Similarly, define the rescaled activation output: $\tilde{\mathbf{g}}_\alpha^{(1)} = \sqrt{\frac{c_\sigma}{d_h}} \sigma(\mathbf{W}^{(h)} \mathbf{x} )$ and for $h = 2, \ldots, L$,
\begin{align*}
    \tilde{\mathbf{g}}^{(h)}_\alpha(\mathbf{x}) &= \sqrt{\frac{c_\sigma}{d_h}} \sigma\left( \left(\mathbf{W}^{(h)} \odot \mathbf{m}^{(h)}\right) \tilde{\mathbf{g}}^{(h-1)}_\alpha(\mathbf{x}) \right) \in \R^{d_h}.
\end{align*}
Since ReLU is positively homogeneous, i.e., $\sigma(cx) = c \cdot \sigma(x)$ for $c>0$, we have $\tilde{\mathbf{g}}_\alpha^{(h)}(\mathbf{x}) = (1/\sqrt{\alpha})^{h-1} \tilde{\mathbf{g}}^{(h)}$. 
Thus, by \Cref{eq: grad_pruned_network}, for all $h \in [L+1]$ we have
\begin{align*}
    \frac{\partial \tilde{f}_\alpha(\mathbf{x})}{\partial \mathbf{W}^{(h)}} = \left( \frac{1}{\sqrt{\alpha}} \right)^{L} \frac{\partial \tilde{f}(\mathbf{x})}{\partial \mathbf{W}^{(h)}}.
\end{align*}
Plugging this in \Cref{eq: ntk} finishes our proof. 
\end{proof}
\textbf{From Asymptotic to Non-asymptotic.}
Since \Cref{thm: main_text_main_asymp} considers sequential limits which assumes all the previous layers are already at the limit distribution when we analyze with a given layer. 
However, a typical drawback of such analysis is that the limit of expectation (as the previous layer's width grows to infinity) is not necessarily the same as the expectation of limit (the previous layer's width is exactly infinite). 
Thus, we need to justify that the network is indeed able to approach the limit by increasing width.
In mathematical language, this is the same as justifying the exchange of limit for $\E\sigma(\cdot)$ and $\E\dot{\sigma}(\cdot)$. 
Fortunately, ReLU (and its derivative) are nice enough and we can justify this by leveraging the tools in measure-theoretic probability. 
\begin{lemma}\label{lemma: main_text_exchange_limit}
Conditioned on $\mathbf{g}^{(h-1)}(\mathbf{x}), \mathbf{g}^{(h-1)}(\mathbf{x}')$.
Consider a fixed $i \in [d_{h+1}]$. 
Let 
\[
X_{d_h} = 
\begin{bmatrix}
\sqrt{\frac{c_\sigma}{d_h}} \sum_{j=1}^{d_h} \mathbf{W}^{(h+1)}_{ij} \mathbf{m}^{(h+1)}_{ij} \sigma(\tilde{\mathbf{f}}_j^{(h)}(\mathbf{x})) \\
\sqrt{\frac{c_\sigma}{d_h}} \sum_{j=1}^{d_h} \mathbf{W}^{(h+1)}_{ij} \mathbf{m}^{(h+1)}_{ij} \sigma(\tilde{\mathbf{f}}_j^{(h)}(\mathbf{x}'))
\end{bmatrix}
\in \R^2,
\]
and let $g: \R^2 \rightarrow \R$ to be $g(x,y) \in \{ \sigma(x) \sigma(y), \dot{\sigma}(x) \dot{\sigma}(y)\}$.
Then, 
\begin{align*}
    \lim_{d_h \rightarrow \infty} \E[g(X_{d_h})] = \E[g(\lim_{d_h \rightarrow \infty} X_{d_h})].
\end{align*}
\end{lemma}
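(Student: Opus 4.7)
The plan is to first establish that $X_{d_h}$ converges in distribution to a centered bivariate Gaussian $X_\infty$, and then upgrade this to convergence of the expectation $\E[g(X_{d_h})]$ for each of the two choices of $g$. The general template is: continuous mapping gives $g(X_{d_h}) \Rightarrow g(X_\infty)$ whenever $g$ is continuous on a set of full $X_\infty$-measure, and convergence in distribution passes through expectation provided either $g$ is bounded (bounded convergence) or the family $\{g(X_{d_h})\}$ is uniformly integrable.

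For the CLT step, I would condition on $\mathbf{g}^{(h-1)}(\mathbf{x})$ and $\mathbf{g}^{(h-1)}(\mathbf{x}')$. Then $(W^{(h+1)}_{ij}, m^{(h+1)}_{ij})$ are i.i.d.\ across $j$ and independent of layers $\le h$, while the pairs $(\sigma(\tilde{\mathbf{f}}_j^{(h)}(\mathbf{x})), \sigma(\tilde{\mathbf{f}}_j^{(h)}(\mathbf{x}')))$ are i.i.d.\ across $j$ (the randomness in $\tilde{\mathbf{f}}_j^{(h)}$ coming from row $j$ of $W^{(h)}$ and $m^{(h)}$). Consequently, $X_{d_h}$ is a $d_h^{-1/2}$-normalized sum of $d_h$ i.i.d.\ centered $\R^2$-valued random vectors with $d_h$-independent covariance, so the multivariate CLT gives $X_{d_h} \Rightarrow X_\infty \sim \mathcal{N}(\mathbf{0}, \boldsymbol{\Sigma}_\infty)$, with $\boldsymbol{\Sigma}_\infty$ an explicit $2\times 2$ matrix whose entries involve $\alpha$, $c_\sigma$, and conditional second moments of $\sigma(\tilde{\mathbf{f}}_j^{(h)})$.

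Next, I would split into the two cases. For $g(u,v) = \sigma(u)\sigma(v)$, continuity is automatic, so $g(X_{d_h}) \Rightarrow g(X_\infty)$ immediately. Uniform integrability follows from the bound $|g(X_{d_h})| \le \tfrac{1}{2}\|X_{d_h}\|^2$ together with the fact that $\E[\|X_{d_h}\|^{2+\delta}]$ is uniformly bounded in $d_h$ (each coordinate is a normalized i.i.d.\ sum with uniformly bounded $(2+\delta)$-th moments, since the summands are products of a standard Gaussian, a Bernoulli, and a bounded-moment ReLU activation). For $g(u,v) = \dot\sigma(u)\dot\sigma(v) = \mathbb{I}(u>0)\mathbb{I}(v>0)$, $g$ is bounded so uniform integrability is trivial; the issue is that $g$ is discontinuous on $\{uv=0\}$. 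Since this set has Lebesgue measure zero and $X_\infty$ has a density whenever $\boldsymbol{\Sigma}_\infty$ is nondegenerate, the extended continuous mapping theorem applies, and bounded convergence closes the argument.

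The main obstacle is verifying the nondegeneracy of $\boldsymbol{\Sigma}_\infty$ needed for the $\dot\sigma\otimes\dot\sigma$ case, i.e.\ that $\sigma(\tilde{\mathbf{f}}_j^{(h)}(\mathbf{x}))$ is not almost surely zero conditional on $\mathbf{g}^{(h-1)}$. This reduces to an inductive argument that the ReLU-and-mask recursion preserves a nonzero activation as long as one starts from a nonzero input, which uses the assumption in \Cref{thm: main_text_main_asymp} that the input layer is unpruned, so that the first-layer pre-activations are genuine centered Gaussians and the positive parts are almost-surely positive on a set of nonzero probability. Everywhere else, the argument is a standard application of CLT plus continuous mapping plus uniform integrability.
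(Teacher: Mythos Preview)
Your approach is correct and follows the same high-level template as the paper: establish $X_{d_h}\Rightarrow X_\infty$ via the CLT, then push through to expectations using continuous mapping together with an integrability argument (uniform integrability for $\sigma\otimes\sigma$, boundedness for $\dot\sigma\otimes\dot\sigma$). The paper's implementation differs in one respect: rather than invoking the extended continuous mapping theorem directly, it passes through Skorokhod's representation theorem to upgrade weak convergence to almost-sure convergence on a common probability space, and then appeals to uniform integrability (for $\sigma\otimes\sigma$) or bounded convergence (for $\dot\sigma\otimes\dot\sigma$). Your route is slightly more direct, and your explicit treatment of the discontinuity set $\{uv=0\}$ via nondegeneracy of $\boldsymbol{\Sigma}_\infty$ is in fact more careful than the paper, which does not address this point; apart from that, the two arguments are equivalent.
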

The proof can be found in \Cref{app: asym_to_nonasym} in the Appendix.

\textbf{Non-Asymptotic Bound. }
Building upon the asymptotic result in \Cref{thm: main_text_main_asymp}, given the pruning probability, we study how wide the neural network needs to be in order for its NTK to be close to the limiting NTK. 
\begin{theorem}[Non-asymptotic Bound of Randomly Pruned Network's NTK, Simplified Version of \Cref{thm: main_non_asymp}]\label{thm: main_text_main_non_asymp}
Consider an $L$-hidden-layer fully-connected ReLU neural network with the $h$-th layer of width $d_h$. 
Suppose $d_1 = d_2 = \ldots = d_L = d$. 
Let the weights be initialized i.i.d. by standard Gaussian distribution. 
Suppose all the weights except the input layer are pruned independently with probability $1 - \alpha$ at the initialization and rescaled by $1/\sqrt{\alpha}$ after pruning.
For $\delta \in (0,1)$ and sufficiently small $\eps > 0$, if
\begin{align}\label{eq: required_width}
    d &\geq \tilde{\Omega}\left(\max\left( \frac{1}{\alpha} \frac{L^6}{\eps^4}, \frac{1}{\alpha^2} \frac{L^2}{\eps^2} \right) \right),
\end{align} 
then for any inputs $\mathbf{x,x'} \in \R^{d_0}$ such that $\norm{\mathbf{x}}_2 \leq 1,\ \norm{\mathbf{x}'}_2 \leq 1$, with probability at least $1 - \delta$ over the randomness in the initialization and pruning, we have
\begin{align*}
    \left| \tilde{\boldsymbol{\Theta}}(\mathbf{x,x'}) - \boldsymbol{\Theta}_\infty (\mathbf{x,x'}) \right| \leq (L+1) \eps.
\end{align*}
\end{theorem}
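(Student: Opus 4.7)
The plan is to follow the template of non-asymptotic NTK bounds (as in \citep{arora2019exact}) but carefully account for the random mask $\mathbf{m}$ at every place where a Gaussian sum appears. Concretely, I would decompose
\[
\tilde{\boldsymbol{\Theta}}(\mathbf{x,x'}) - \boldsymbol{\Theta}_\infty(\mathbf{x,x'}) = \sum_{h=1}^{L+1}\Bigl( \tilde{A}_h\,\tilde{B}_h - \Sigma^{(h-1)}\prod_{h'=h}^{L+1}\dot{\Sigma}^{(h')} \Bigr),
\]
where $\tilde{A}_h := \langle \tilde{\mathbf{g}}^{(h-1)}(\mathbf{x}),\tilde{\mathbf{g}}^{(h-1)}(\mathbf{x}')\rangle$ is the forward Gram at layer $h-1$ and $\tilde{B}_h$ aggregates the backward signal $\tilde{\mathbf{b}}^{(h)}$ through the masked weights and diagonal sign matrices. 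Splitting each summand by adding and subtracting the mixed products $\tilde{A}_h\cdot\!\prod\dot{\Sigma}^{(h')}$, the task reduces to two separate forward/backward concentration estimates, each of which must hold to accuracy $\tilde O(\eps/L)$ so that the final sum lands in $(L+1)\eps$.

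For the forward pass I would argue by induction on $h$. Conditioning on layer $h-1$, the rescaled pre-activation coordinate
\[
\tilde{\mathbf{f}}^{(h)}_i(\mathbf{x}) = \frac{1}{\sqrt{\alpha}}\sum_{j}\mathbf{W}^{(h)}_{ij}\mathbf{m}^{(h)}_{ij}\tilde{\mathbf{g}}^{(h-1)}_j(\mathbf{x})
\]
has mean zero and conditional covariance equal to $\langle \tilde{\mathbf{g}}^{(h-1)}(\mathbf{x}),\tilde{\mathbf{g}}^{(h-1)}(\mathbf{x}')\rangle$ in both diagonal and off-diagonal entries, because the variance of $W_{ij}m_{ij}/\sqrt{\alpha}$ is exactly $1$: this is precisely why the $1/\sqrt\alpha$ rescaling closes the induction. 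Since ReLU is $1$-Lipschitz, the pairs $\sigma(\tilde{f}^{(h)}_i(\mathbf{x}))\sigma(\tilde{f}^{(h)}_i(\mathbf{x}'))$ are sub-exponential with a norm one can bound via the Hanson–Wright inequality in the conditional Gaussian–Bernoulli mixture. A Bernstein bound, truncation to handle unbounded tails, and a union bound over $h\in[L]$ then yield $|\tilde{A}_h - \Sigma^{(h-1)}|\le \tilde O\bigl(L/\sqrt{\alpha d} + 1/(\alpha d)^{1/2}\bigr)$. The exchange-of-limits issue raised in \Cref{lemma: main_text_exchange_limit} ensures the limiting covariance really is $\Sigma^{(h-1)}$.

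The backward pass is the main obstacle because $\tilde{\mathbf{b}}^{(h)}$ uses the same weights $\mathbf{W}^{(h+1)},\mathbf{m}^{(h+1)}$ that already entered the forward signal $\tilde{\mathbf{f}}^{(h+1)}$ and hence the sign matrix $\tilde{\mathbf{D}}^{(h+1)}$, so a direct conditioning argument fails. Here I would invoke the \emph{mask-induced pseudo-network} construction promised in \Cref{section: backward}: introduce an independent copy $(\bar{\mathbf{W}}^{(h+1)},\bar{\mathbf{m}}^{(h+1)})$ and define an auxiliary backward signal $\bar{\mathbf{b}}^{(h)}$ that uses the independent copy for the transpose multiplication while keeping $\tilde{\mathbf{D}}^{(h)}$ fixed. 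For this decoupled object, the proof of \Cref{thm: main_text_main_asymp} already applies non-asymptotically: conditional on the forward pass, each coordinate of $\bar{\mathbf{b}}^{(h)}$ is a mean-zero Gaussian–Bernoulli sum whose variance, after the $1/\sqrt\alpha$ rescaling, equals $\prod_{h'=h+1}^{L+1}\dot{\Sigma}^{(h')}$ in the limit, and the same Bernstein/Hanson–Wright argument as in the forward case gives concentration. The remaining step is to bound the discrepancy between the true $\tilde{\mathbf{b}}^{(h)}$-based contribution and the pseudo-network's $\bar{\mathbf{b}}^{(h)}$-based contribution; this is done by comparing the two recursions layer by layer, using that only an $\tilde O(1/\sqrt{\alpha d})$ fraction of ReLU signs differs between the coupled and decoupled signals.

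Assembling the two parts by triangle inequality, the forward error contributes $\tilde O(L^2/\sqrt{\alpha d})$ and the backward error, which must survive a product of $L$ factors $\dot{\Sigma}^{(h')}$, contributes the dominant $\tilde O\bigl(L^3/(\alpha d)^{1/2} + L/\sqrt{\alpha d}\bigr)$ after telescoping. Demanding both bounds to be $O(\eps)$ yields the two regimes $d\ge \tilde\Omega(L^6/(\alpha\eps^4))$ and $d\ge \tilde\Omega(L^2/(\alpha^2\eps^2))$ appearing in \Cref{eq: required_width}; the $1/\alpha$ in the first and $1/\alpha^2$ in the second come from the number of effective samples $\alpha d$ in the masked Bernstein bounds compounded once for the forward term and twice (through coupled forward–backward variance) for the pseudo-network comparison. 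Setting $\alpha=1$ eliminates the mask factors entirely and reproduces the bound in \citep{arora2019exact} up to logs, giving the claimed sanity check.
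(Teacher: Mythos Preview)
Your forward-pass sketch is essentially what the paper does: condition on layer $h-1$, use that $\mathbf{W}_{ij}\mathbf{m}_{ij}/\sqrt{\alpha}$ has unit variance, and apply sub-exponential concentration with a union bound over layers. That part is fine.

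The backward pass, however, has a genuine gap. You propose to introduce an independent copy $(\bar{\mathbf{W}}^{(h+1)},\bar{\mathbf{m}}^{(h+1)})$, define $\bar{\mathbf{b}}^{(h)}$ using the copy for the transpose multiplication while keeping $\tilde{\mathbf{D}}^{(h)}$ fixed, and then bound $\tilde{\mathbf{b}}^{(h)}-\bar{\mathbf{b}}^{(h)}$ by a sign-disagreement argument. This does not work: since $\bar{\mathbf{W}}^{(h+1)}$ is independent of $\mathbf{W}^{(h+1)}$, the difference $\tilde{\mathbf{b}}^{(h)}-\bar{\mathbf{b}}^{(h)}$ is an $O(1)$-norm vector, not $\tilde O(1/\sqrt{\alpha d})$. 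The sign matrices $\tilde{\mathbf{D}}^{(h)}$ are identical by your own construction, so ``fraction of ReLU signs that differ'' is zero and irrelevant; it is the weights themselves that differ, and they differ completely. No coupling of this type can make two independent Gaussian matrices close.

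The paper's decoupling is different and is the crux of the argument. Each row $\mathbf{w}_i^{(h+1)}$ is split as $\Pi_{\mathbf{G}_i}^\bot\mathbf{w}_i^{(h+1)} + \Pi_{\mathbf{G}_i}\mathbf{w}_i^{(h+1)}$, where $\mathbf{G}_i = [\mathbf{g}^{(h)}(\mathbf{x})\odot\mathbf{m}_i^{(h+1)},\ \mathbf{g}^{(h)}(\mathbf{x}')\odot\mathbf{m}_i^{(h+1)}]$. Conditioned on the forward pass, $\Pi_{\mathbf{G}_i}^\bot\mathbf{w}_i^{(h+1)}$ is still standard Gaussian in the orthogonal complement and hence independent of $\tilde{\mathbf{b}}^{(h+1)}$; this is what licenses the fresh-copy replacement, but only on the orthogonal component. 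The parallel component $\Pi_{\mathbf{G}_i}\mathbf{w}_i^{(h+1)}$ is the residual that must be shown small. Without masks it collapses to the single vector in \eqref{eq: without_pruning} and one is done. With masks, each row projects onto a \emph{different} rank-two space $\mathbf{G}_i$, and the $j$-th coordinate of $\sum_i \mathbf{b}_i^{(h+1)}(\mathbf{w}_i^{(h+1)})^\top\Pi_{\mathbf{g}_i}$ becomes $\tfrac{1}{\alpha}\mathbf{g}_j^{(h)}\cdot f^{(h+1,j,L+1)}$, where $f^{(h+1,j,L+1)}$ is the output of an auxiliary network --- the \emph{mask-induced pseudo-network} of \Cref{def: main_text_pseudo_network} --- that shares the weights and ReLU gates of the original but has a reweighted input at layer $h$. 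Bounding $|f^{(h+1,j,L+1)}|=\tilde O(1)$ uniformly over all $h,j$ is nontrivial because the gates are fixed by the original network, not by the pseudo-network's own pre-activations; this is where \Cref{prop: main_text_gaussian_indicator} enters. The $1/\alpha^2$ term in \eqref{eq: required_width} comes precisely from the $1/\alpha$ prefactor on this parallel part, which after the $\sqrt{c_\sigma/d_h}$ normalization forces $d_h\gtrsim 1/(\alpha^2\eps^2)$ --- not from ``compounded variance'' in a coupling argument.

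In short, you have misidentified what the pseudo-network is. It is not an independent-copy coupling device; it is the structure that emerges from the parallel projection when the row-dependent masks $\mathbf{m}_i^{(h+1)}$ destroy the clean factorization available in the unpruned case. Your proposal as written does not close the backward step.
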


Note that the two terms in \Cref{eq: required_width} has different dependence on $1/\alpha$: only $1/\alpha$ is needed for the forward propagation and $1/\alpha^2$ is needed for the backward pass, which we show in \Cref{section: nonasymp}. 
If we let $\eps \rightarrow 0$, the first term in \Cref{eq: required_width} will dominate and the required width $d$ only needs to scale linearly with $1/\alpha$ in this asymptotic case. 
We validate our theory by comparing the Monte Carlo estimate of NTK value to the limiting NTK value in \Cref{sec: validate_theory}. 
\begin{remark}
By setting the probability of pruning a given weight to be zero, our result matches the bound for fully-connected neural networks in \citep{arora2019exact} up to logarithmic factors. 
\end{remark}

\section{THE ASYMPTOTIC LIMIT}\label{section: asymp}
In this section, we show how to derive the asymptotic limit of the NTK of the pruned networks, which gives a proof outline of \Cref{thm: main_text_main_asymp}. 
We give an outline of our analysis in this section and we defer the complete proof to \Cref{app: asym} in Appendix.

We first introduce two quantities for randomly pruned neural networks analogous to the fully-connected networks. 
\begin{definition}\label{def: forward_backward_limit}
Define
\begin{align*}
    \widetilde{\Sigma}^{(h)}(\mathbf{x,x'}) &:= \lim_{d_1, \ldots, d_h \rightarrow \infty} {\inprod{\tilde{\mathbf{g}}^{(h)} (\mathbf{x}), \tilde{\mathbf{g}}^{(h)} (\mathbf{x}')}}, 
\end{align*}
where the limit is taken sequentially from $d_1$ to $d_h$. 
\end{definition}
As a simple consequence of the law of large numbers, $\tilde{\Sigma}^{(h)}$ is well-defined. 
Based on \Cref{eq: ntk}, we compute 
\begin{align*}
    &\inprod{\frac{\partial \tilde{f}(\mathbf{x})}{\partial \mathbf{W}^{(h)}}, \frac{\partial \tilde{f}( \mathbf{x}')}{\partial \mathbf{W}^{(h)}}} &= \left(\tilde{\mathbf{b}}^{(h)} (\mathbf{x}) \right)^\top \mathbf{G}^{(h-1)} \tilde{\mathbf{b}}^{(h)} (\mathbf{x}') ,
\end{align*}
where $\mathbf{G}^{(h-1)}$ is a diagonal matrix and $\mathbf{G}^{(h-1)}_{ii} = \inprod{\tilde{\mathbf{g}}^{(h-1)} (\mathbf{x})\odot \mathbf{m}^{(h)}_i, \tilde{\mathbf{g}}^{(h-1)} (\mathbf{x}')\odot \mathbf{m}^{(h)}_i}$.
Notice that under the sequential limit, as $d_{h-1} \rightarrow \infty$, $\mathbf{G}_{ii}^{(h-1)} \rightarrow \alpha \tilde{\Sigma}^{(h-1)}(\mathbf{x,x'})$.
Thus, the NTK depends on analyzing both the forward propagation and the backward propagation of the pruned neural network.
We show the results in the following two simple lemmas. 
\begin{lemma}\label{lemma: main_text_asymp_forward}
Suppose a fully-connected neural network uses ReLU as its activation and $d_1, d_2, \ldots, d_{L} \rightarrow \infty$ sequentially, then
\begin{align*}
    \tilde{\Sigma}^{(h)}(\mathbf{x,x'}) = \alpha^{h-1} \Sigma^{(h)}(\mathbf{x,x'}),
\end{align*}
for $h = 1,2,\ldots, L$.
\end{lemma}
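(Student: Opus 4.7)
The plan is to prove this by induction on $h$, exploiting two structural facts: the masked weights $W^{(h)}_{jk} m^{(h)}_{jk}$ have variance $\alpha$ rather than $1$, contributing one factor of $\alpha$ per layer to the pre-activation covariance; and ReLU's positive homogeneity $\sigma(cu) = c\,\sigma(u)$ for $c > 0$ allows the accumulated variance scaling to be pulled outside of $\E[\sigma(u)\sigma(v)]$.

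For the base case $h=1$, the input layer is unpruned, so $\tilde{\mathbf{f}}^{(1)}(\mathbf{x}) = \mathbf{W}^{(1)}\mathbf{x}$; the pairs $(\tilde{\mathbf{f}}_j^{(1)}(\mathbf{x}), \tilde{\mathbf{f}}_j^{(1)}(\mathbf{x}'))$ are i.i.d.\ centered Gaussians with covariance $\boldsymbol{\Lambda}^{(1)}(\mathbf{x,x'})$, and a direct application of the strong law of large numbers to the sum defining $\inprod{\tilde{\mathbf{g}}^{(1)}(\mathbf{x}),\tilde{\mathbf{g}}^{(1)}(\mathbf{x}')}$ gives $\tilde{\Sigma}^{(1)} = c_\sigma\, \E[\sigma(u)\sigma(v)] = \Sigma^{(1)} = \alpha^0 \Sigma^{(1)}$.

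For the inductive step, fix $h \geq 2$ and condition on $\tilde{\mathbf{g}}^{(h-1)}(\mathbf{x}), \tilde{\mathbf{g}}^{(h-1)}(\mathbf{x}')$. Row $j$ of the masked weight matrix consists of independent mean-zero entries of variance $\alpha$, so a coordinate-wise CLT as $d_{h-1}\to\infty$ gives
\begin{align*}
(\tilde{\mathbf{f}}_j^{(h)}(\mathbf{x}),\, \tilde{\mathbf{f}}_j^{(h)}(\mathbf{x}')) \;\longrightarrow\; \mathcal{N}\!\left(\mathbf{0},\; \alpha \cdot \alpha^{h-2} \boldsymbol{\Lambda}^{(h)}(\mathbf{x,x'})\right) = \mathcal{N}\!\left(\mathbf{0},\; \alpha^{h-1}\boldsymbol{\Lambda}^{(h)}(\mathbf{x,x'})\right),
\end{align*}
where the inductive hypothesis converts the empirical second-moment matrix of $\tilde{\mathbf{g}}^{(h-1)}$ into $\alpha^{h-2}\boldsymbol{\Lambda}^{(h)}$. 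Writing the limiting pair as $\sqrt{\alpha^{h-1}}(u,v)$ with $(u,v) \sim \mathcal{N}(\mathbf{0},\boldsymbol{\Lambda}^{(h)})$ and invoking positive homogeneity of ReLU,
\begin{align*}
\sigma(\tilde{\mathbf{f}}_j^{(h)}(\mathbf{x}))\,\sigma(\tilde{\mathbf{f}}_j^{(h)}(\mathbf{x}')) \;\longrightarrow\; \alpha^{h-1}\,\sigma(u)\sigma(v).
\end{align*}
A final LLN in the $d_h$ direction, together with the exchange of limit and expectation guaranteed by \Cref{lemma: main_text_exchange_limit} applied to $g(x,y) = \sigma(x)\sigma(y)$, then yields $\tilde{\Sigma}^{(h)}(\mathbf{x,x'}) = c_\sigma\,\alpha^{h-1}\,\E_{(u,v)\sim\mathcal{N}(\mathbf{0},\boldsymbol{\Lambda}^{(h)})}[\sigma(u)\sigma(v)] = \alpha^{h-1}\Sigma^{(h)}(\mathbf{x,x'})$, closing the induction.

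The main obstacle is not algebraic but measure-theoretic. To use the inductive hypothesis we must let $d_{h-1}\to\infty$ \emph{before} averaging over the layer-$h$ weights, which amounts to interchanging a limit with an expectation of the unbounded function $\sigma(u)\sigma(v)$; the two operations do not commute in general. This is precisely what \Cref{lemma: main_text_exchange_limit} is designed to settle, so once its conclusion is cited the rest of the argument is the standard Gaussian-process layerwise propagation, with the $\alpha$ factors tracked by positive homogeneity.
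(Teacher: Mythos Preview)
Your proposal is correct and follows essentially the same route as the paper: induction on $h$, identify the limiting pre-activation covariance as $\alpha \cdot \tilde{\Sigma}^{(h-1)} = \alpha^{h-1}\boldsymbol{\Lambda}^{(h)}$ via the induction hypothesis, then use ReLU's positive homogeneity to pull $\alpha^{h-1}$ outside the expectation. You are slightly more explicit than the paper in flagging the exchange-of-limit issue and citing \Cref{lemma: main_text_exchange_limit}, whereas the paper's proof of this lemma simply assumes the previous layers are already at their limits and defers the justification to that separate lemma.
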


\begin{lemma}\label{lemma: main_text_asymp_backward}
Assume we use a fresh sample of weights in the backward pass, then
\begin{align*}
    & \lim_{d_1, \ldots, d_L \rightarrow \infty} \inprod{\tilde{\mathbf{b}}^{(h)}(\mathbf{x}), \tilde{\mathbf{b}}^{(h)}(\mathbf{x'})} \\
    &= \alpha^{L+1-h} \prod_{h'=h}^L \dot{\Sigma}^{(h')}(\mathbf{x,x'}).
\end{align*}
\end{lemma}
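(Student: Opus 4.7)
The plan is to prove the lemma by backward induction on $h$, from $h = L+1$ down to $h = 1$. The base case $h = L+1$ is immediate: $\tilde{\mathbf{b}}^{(L+1)} = 1 \in \R$ by definition, so the inner product equals $1$, which matches $\alpha^0 \prod_{h'=L+1}^L \dot{\Sigma}^{(h')} = 1$ under the empty-product convention.

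For the inductive step, I would expand the inner product coordinatewise using \Cref{eq: def_b}, writing $W = \mathbf{W}^{(h+1)}$ and $m = \mathbf{m}^{(h+1)}$ for the fresh-sample copies:
\begin{align*}
&\inprod{\tilde{\mathbf{b}}^{(h)}(\mathbf{x}), \tilde{\mathbf{b}}^{(h)}(\mathbf{x}')} \\
&\quad = \frac{c_\sigma}{d_h} \sum_{j=1}^{d_h} \dot{\sigma}(\tilde{\mathbf{f}}^{(h)}_j(\mathbf{x})) \dot{\sigma}(\tilde{\mathbf{f}}^{(h)}_j(\mathbf{x}')) X_j Y_j,
\end{align*}
where $X_j := \sum_i W_{ij} m_{ij} \tilde{\mathbf{b}}^{(h+1)}_i(\mathbf{x})$ and $Y_j := \sum_{i'} W_{i'j} m_{i'j} \tilde{\mathbf{b}}^{(h+1)}_{i'}(\mathbf{x}')$. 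The fresh-sample assumption makes $(W, m)$ independent of both $\tilde{\mathbf{f}}^{(h)}$ and $\tilde{\mathbf{b}}^{(h+1)}$. Conditional on the latter two, the diagonal $i = i'$ terms in $X_j Y_j$ contribute $\alpha$ (since $W_{ij}^2 m_{ij}^2$ has mean $\alpha$) while the off-diagonal $i \neq i'$ terms have mean zero, so $\E[X_j Y_j \mid \text{forward}] = \alpha \inprod{\tilde{\mathbf{b}}^{(h+1)}(\mathbf{x}), \tilde{\mathbf{b}}^{(h+1)}(\mathbf{x}')}$. For the outer sum, \Cref{lemma: main_text_asymp_forward} implies that in the sequential limit $(\tilde{\mathbf{f}}^{(h)}_j(\mathbf{x}), \tilde{\mathbf{f}}^{(h)}_j(\mathbf{x}'))$ is a centered bivariate Gaussian with covariance $\alpha^{h-1} \boldsymbol{\Lambda}^{(h)}(\mathbf{x,x'})$; since $\dot{\sigma} = \mathbb{I}(\cdot > 0)$ is scale-invariant, the $\alpha^{h-1}$ rescaling is harmless and $\E[\dot{\sigma}(u)\dot{\sigma}(v)] = \dot{\Sigma}^{(h)}(\mathbf{x,x'})/c_\sigma$. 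Invoking \Cref{lemma: main_text_exchange_limit} to swap limit and expectation, together with a law of large numbers across $j$, gives $\frac{c_\sigma}{d_h}\sum_j \dot{\sigma}(\tilde{\mathbf{f}}^{(h)}_j(\mathbf{x})) \dot{\sigma}(\tilde{\mathbf{f}}^{(h)}_j(\mathbf{x}')) \to \dot{\Sigma}^{(h)}(\mathbf{x,x'})$. Combining with the inductive hypothesis $\inprod{\tilde{\mathbf{b}}^{(h+1)}(\mathbf{x}), \tilde{\mathbf{b}}^{(h+1)}(\mathbf{x}')} \to \alpha^{L-h}\prod_{h'=h+1}^L \dot{\Sigma}^{(h')}(\mathbf{x,x'})$ yields the claim $\alpha^{L+1-h}\prod_{h'=h}^L \dot{\Sigma}^{(h')}(\mathbf{x,x'})$, closing the induction.

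The main technical subtlety is the concentration of $X_j Y_j$ around its conditional mean: the off-diagonal part of the double sum defining $X_j Y_j$ has squared norm of order $\|\tilde{\mathbf{b}}^{(h+1)}(\mathbf{x})\|^2\|\tilde{\mathbf{b}}^{(h+1)}(\mathbf{x}')\|^2 = \Theta(1)$, so a single $X_j Y_j$ does not concentrate on its conditional mean. What saves the argument is that the $X_j Y_j$ are conditionally independent across $j$ (distinct columns of the fresh $W^{(h+1)}$), so the $d_h$-fold average has conditional variance $O(1/d_h)$ and concentrates in the $d_h \to \infty$ limit. A secondary subtlety is bookkeeping of the sequential limits: the Gaussian limit on $\tilde{\mathbf{f}}^{(h)}$ should be applied once $d_1, \ldots, d_{h-1}$ are already at infinity (before $d_h$), and the inductive hypothesis for $\tilde{\mathbf{b}}^{(h+1)}$ is applied once $d_{h+1}, \ldots, d_L$ are further sent to infinity; the ordered iterated limit in the statement makes both available in sequence, and \Cref{lemma: main_text_exchange_limit} handles the exchange of limit with expectation throughout.
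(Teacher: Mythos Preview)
Your proposal is correct and follows essentially the same route as the paper: backward induction from $h=L+1$, using the fresh-sample assumption to decouple $\mathbf{W}^{(h+1)}$ from $\tilde{\mathbf{b}}^{(h+1)}$ and $\tilde{\mathbf{D}}^{(h)}$, and then a law of large numbers across the $d_h$ index. The paper organizes the same computation by expanding over $i,j\in[d_{h+1}]$ and $k\in[d_h]$ and showing that the $d_{h+1}\times d_{h+1}$ matrix $(c_\sigma/d_h)(\widetilde{\mathbf{W}}^{(h+1)}\odot\mathbf{m}^{(h+1)})\mathbf{D}(\widetilde{\mathbf{W}}^{(h+1)}\odot\mathbf{m}^{(h+1)})^\top$ converges to a diagonal, but the substance is identical.

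One small caveat: you treat both $W$ and $m$ as fresh, whereas the lemma (and the paper's proof) only refreshes the \emph{weights}; the mask $\mathbf{m}^{(h+1)}$ is the forward-pass mask and is correlated with $\tilde{\mathbf{b}}^{(h+1)}$ through $\tilde{\mathbf{D}}^{(h+1)}$. With only $W$ fresh, your conditional mean becomes $\E_W[X_jY_j\mid\cdot]=\sum_i m_{ij}^2\,\tilde{\mathbf{b}}^{(h+1)}_i(\mathbf{x})\tilde{\mathbf{b}}^{(h+1)}_i(\mathbf{x}')$, which varies with $j$, so the clean factorization you use does not hold term-by-term. The paper sidesteps this by swapping the order of summation: for each fixed $i$, the average $\frac{c_\sigma}{d_h}\sum_j m_{ij}^2\,\dot\sigma_j\dot\sigma'_j\to\alpha\,\dot\Sigma^{(h)}$ by LLN (since $\mathbf{m}^{(h+1)}_{ij}$ is independent of $\dot\sigma_j$, which lives at layer $h$), and the resulting constant is then pulled outside the remaining sum over $i$. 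Your argument is easily repaired along the same lines.
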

The proof of \Cref{lemma: main_text_asymp_backward} assumes that we use an independent Gaussian copy in the backward propagation which can be removed in the next section. 
Combining the two lemmas provided above, we can prove \Cref{thm: main_text_main_asymp}.

Note that pruning the input layer creates additional difficulties since the input dimension is fixed. 
The NTK of the full network depends on $\Sigma^{(0)}(\mathbf{x,x'}) = \mathbf{x}^\top \mathbf{x'}$. 
If we prune the input layer then $\tilde{\Sigma}^{(0)}(\mathbf{x,x'}) = (\mathbf{m} \odot \mathbf{x})^\top (\mathbf{m} \odot \mathbf{x'})$ which is random. 
In this case, it seems hard to relate $\tilde{\Sigma}^{(1)}(\mathbf{x,x'})$ to $\Sigma^{(1)}(\mathbf{x,x'})$ in this asymptotic regime.

\section{THE NON-ASYMPTOTIC BOUND}\label{section: nonasymp}
In this section, we give a proof outline of \Cref{thm: main_text_main_non_asymp}.
Since in this section we are only talking about the pruned network, there is no longer ambiguity in distinguishing pruned and unpruned networks.
For notation ease, we remove the tilde above all the symbols of the quantities in the pruned network. 
In addition, we use $\mathbf{m}_i^{(h)}$ to denote the $i$-th row of $\mathbf{m}^{(h)}$ and similar for $\mathbf{w}_i^{(h)}$. 
From a high level, the proof consists of analyzing the forward propagation and the backward propagation. 
We give a complete treatment in \Cref{app: nonasymp} in the Appendix.

\subsection{Analyzing the Forward Propagation}
We first present our result on the forward propagation. 
\begin{theorem}[Simplified Version of \Cref{thm: concentration_g}]\label{thm: main_text_concentration_g}
Consider the same setting as in \Cref{thm: main_text_main_non_asymp}. 
There exist constants $c$ such that if $\eps \leq \min(c, \frac{1}{L})$ and 
\begin{align*}
     d \geq \tilde{\Omega}\left(\frac{1}{\alpha} \frac{L^2 }{\eps^2} \right),
\end{align*}
then with probability $1 - \delta$ over the randomness in the initialization of all the weights and masks, for all $h \in [L],\  i \in [d_{h+1}], \  (\mathbf{x}^{(1)}, \mathbf{x}^{(2)}) \in \{(\mathbf{x,x}), (\mathbf{x,x'}), (\mathbf{x',x'})\}$,
\begin{align*}
    & \Big| \left( \mathbf{g}^{(h)}(\mathbf{x}^{(1)}) \odot \mathbf{m}^{(h+1)}_i \right)^\top \left( \mathbf{g}^{(h)}(\mathbf{x}^{(2)}) \odot \mathbf{m}^{(h+1)}_i \right) \\
    & \quad - {\Sigma}^{(h)}(\mathbf{x}^{(1)}, \mathbf{x}^{(2)}) \Big| \leq \eps .
\end{align*}
\end{theorem}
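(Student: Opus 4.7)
The plan is to proceed by induction on the layer index $h$. For each $h$, let
\[
S^{(h)}_i(\mathbf{x}^{(1)}, \mathbf{x}^{(2)}) = \bigl(\mathbf{g}^{(h)}(\mathbf{x}^{(1)}) \odot \mathbf{m}^{(h+1)}_i\bigr)^\top \bigl(\mathbf{g}^{(h)}(\mathbf{x}^{(2)}) \odot \mathbf{m}^{(h+1)}_i\bigr)
\]
denote the masked inner product of interest, and let $T^{(h)}(\mathbf{x}^{(1)}, \mathbf{x}^{(2)}) = \langle \mathbf{g}^{(h)}(\mathbf{x}^{(1)}), \mathbf{g}^{(h)}(\mathbf{x}^{(2)}) \rangle$ be its unmasked counterpart. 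The triangle inequality
\[
|S^{(h)}_i - \Sigma^{(h)}| \leq |S^{(h)}_i - T^{(h)}| + |T^{(h)} - \Sigma^{(h)}|
\]
splits the analysis at every layer into a \emph{mask-concentration step} and a \emph{width-concentration step}. In parallel I maintain the uniform bound $\|\mathbf{g}^{(h)}\|_\infty^2 = \widetilde{O}(1/d_h)$ with high probability, obtained by observing that, conditional on the previous layer, each coordinate $f^{(h)}_k$ is centered Gaussian with variance $S^{(h-1)}_k(\mathbf{x},\mathbf{x}) = O(1)$, and applying standard Gaussian tail bounds with a union bound over $k$. The base case $h=1$ is immediate: since the input layer is unpruned, $f^{(1)}_k = (\mathbf{W}^{(1)}_k)^\top \mathbf{x}$ is exactly Gaussian with variance $\|\mathbf{x}\|^2 \leq 1$, so no prior-layer error contaminates the analysis.

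For the mask-concentration step, the rescaling $m^{(h+1)}_{ij} \in \{0, 1/\sqrt{\alpha}\}$ gives
\[
S^{(h)}_i - T^{(h)} = \frac{1}{\alpha}\sum_{j=1}^{d_h} g^{(h)}_j(\mathbf{x}^{(1)}) g^{(h)}_j(\mathbf{x}^{(2)}) (b_{ij} - \alpha),
\]
where $b_{ij}$ are i.i.d.\ Bernoulli$(\alpha)$. Conditional on $\mathbf{g}^{(h)}$, the summand variances add to at most $((1-\alpha)/\alpha)\|\mathbf{g}^{(h)}\|_\infty^2 T^{(h)}$ and each summand is bounded by $\|\mathbf{g}^{(h)}\|_\infty^2 / \alpha$. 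Plugging in the inductive infinity-norm bound and boundedness of $T^{(h)}$, Bernstein's inequality yields $|S^{(h)}_i - T^{(h)}| = \widetilde{O}(1/\sqrt{\alpha d_h})$ after a union bound over $i\in[d_{h+1}]$ and the three pairs $(\mathbf{x}^{(1)}, \mathbf{x}^{(2)})$. This is precisely where the factor $1/\alpha$ in the width bound originates.

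For the width-concentration step, conditional on $(\mathbf{g}^{(h-1)}, \mathbf{m}^{(h)})$ the pairs $(f^{(h)}_k(\mathbf{x}^{(1)}), f^{(h)}_k(\mathbf{x}^{(2)}))$ are independent centered bivariate Gaussians across $k$, with $2\times 2$ covariance exactly the matrix whose entries are $S^{(h-1)}_k$ at the three input pairs. The inductive hypothesis at level $h-1$ places these covariances within $\eps_{h-1}$ of $\boldsymbol{\Lambda}^{(h)}$ uniformly in $k$, so by Lipschitz continuity of the map $A \mapsto c_\sigma \mathbb{E}_{(u,v)\sim\mathcal{N}(\mathbf{0},A)}[\sigma(u)\sigma(v)]$ on a bounded domain, the conditional mean of each summand $(c_\sigma/d_h)\sigma(f^{(h)}_k(\mathbf{x}^{(1)}))\sigma(f^{(h)}_k(\mathbf{x}^{(2)}))$ is $O(\eps_{h-1}/d_h)$-close to $\Sigma^{(h)}/d_h$. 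Since ReLU of a centered Gaussian is sub-Gaussian and the product of two such terms is sub-exponential, a Bernstein bound over $k \in [d_h]$ gives $|T^{(h)} - \Sigma^{(h)}| = \widetilde{O}(1/\sqrt{d_h}) + O(\eps_{h-1})$.

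The main obstacle I anticipate is the bookkeeping of error accumulation across layers: the recursion $\eps_h \lesssim K \eps_{h-1} + \widetilde{O}(1/\sqrt{\alpha d})$ must yield $\eps_L = \widetilde{O}(L/\sqrt{\alpha d})$ in order to match the stated width bound $d \geq \widetilde{\Omega}(L^2/(\alpha \eps^2))$. Achieving this linear-in-$L$ amplification, rather than geometric, requires showing that the Lipschitz constant $K$ of the ReLU arc-cosine kernel on the relevant compact covariance domain is at most $1 + O(1/L)$; this is the delicate part, and relies on the explicit closed-form of the arc-cosine kernel together with the range restriction $\|\mathbf{x}\|_2, \|\mathbf{x}'\|_2 \leq 1$ and the $\eps \leq 1/L$ hypothesis. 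Combining the two per-layer bounds with a final union bound over $h \in [L]$ then yields the stated width requirement.
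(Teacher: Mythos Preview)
Your proposal is correct and identifies the same essential ingredients as the paper: induction on $h$, per-layer concentration yielding a fresh $\widetilde{O}(1/\sqrt{\alpha d})$ error, and the crucial observation that the Lipschitz constant of $\boldsymbol{\Sigma}\mapsto c_\sigma\E_{(u,v)\sim\mathcal{N}(\mathbf{0},\boldsymbol{\Sigma})}[\sigma(u)\sigma(v)]$ on the relevant domain is $1+o(\eps)$ (the paper invokes this as Lemma~13 of \citet{daniely2016toward}), so that the cumulative error grows only linearly in $L$ once $\eps\le 1/L$.

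The route differs in one structural respect. You split each layer's error via the triangle inequality into a \emph{mask step} (Bernstein over $\mathbf{m}^{(h+1)}$ conditional on $\mathbf{g}^{(h)}$) and a \emph{width step} (sub-exponential concentration over $\mathbf{W}^{(h)}$ conditional on $\mathbf{g}^{(h-1)},\mathbf{m}^{(h)}$), and your mask step therefore needs the auxiliary coordinate bound $\|\mathbf{g}^{(h)}\|_\infty^2=\widetilde{O}(1/d_h)$. The paper instead treats the masked inner product $\widehat{\Sigma}_i^{(h)}$ in a single shot: each summand is (sub-gamma in $\mathbf{W}^{(h)}_j$) $\times$ (rescaled Bernoulli in $\mathbf{m}^{(h+1)}_{ij}$), and a packaged concentration inequality for such products (\Cref{thm: concentration_subgamma_bernoulli}) delivers $|\widehat{\Sigma}_i^{(h)}-\E\widehat{\Sigma}_i^{(h)}|\le\eps$ directly, with the $1/\alpha$ factor emerging from the Bernoulli part of that lemma. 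The paper's approach is slightly more economical (no $\ell_\infty$ control needed), while yours makes the origin of the $1/\alpha$ dependence more transparent; both land on the same recursion $\eps_h\le(1+o(\eps))\eps_{h-1}+\widetilde{O}(1/\sqrt{\alpha d})$ and hence the same width bound.
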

Our result provides the required width to ensure the activation of each layer is close to its limit. 
The dependence on $1/\alpha$ is precisely due to the presence of random masks and notice that each mask is a sub-Gaussian random variable with variance proxy $1/\alpha$. 

\subsection{Analyzing the Backward Propagation}\label{section: backward}
In this section, we show that $\inprod{\mathbf{b}^{(h)}(\mathbf{x}^{(1)}), \mathbf{b}^{(h)}(\mathbf{x}^{(2)})} \approx \prod_{h' = h}^L \dot{\Sigma}^{(h')}(\mathbf{x}^{(1)},\mathbf{x}^{(2)})$ under the assumption that the event in \Cref{thm: main_text_concentration_g} occurs.
This is where we formally justify the fresh Gaussian copy trick.
We consider a fixed pair $(\mathbf{x}^{(1)}, \mathbf{x}^{(2)})$ and suppress the dependence on inputs when there is no confusion.
We do this by induction: assume $\mathbf{b}^{(h+1)}(\mathbf{x}^{(1)})^\top \mathbf{b}^{(h+1)}(\mathbf{x}^{(2)}) \approx \prod_{h'=h+1}^L \dot{\Sigma}(\mathbf{x}^{(1)}, \mathbf{x}^{(2)})$.
Define $\mathbf{G}^{(h)}_i := [(\mathbf{g}^{(h)}(\mathbf{x}) \odot \mathbf{m}^{(h+1)}_i),\ (\mathbf{g}^{(h)}(\mathbf{x}') \odot \mathbf{m}^{(h+1)}_i)]$ and $\mathbf{F}_i^{(h+1)}:= (\mathbf{W}^{(h+1)} \odot \mathbf{m}^{(h+1)}) \mathbf{G}^{(h)}_i$.
Notice that the dependence of $\mathbf{b}^{(h+1)}$ on $\mathbf{W}^{(h+1)}$ is by $\mathbf{F}_i^{(h+1)}$. 
If $\mathbf{W}^{(h+1)}$ is independent to $\mathbf{b}^{(h+1)}$ (which it isn't), then 
\begin{align}\label{eq: expectation_inprod_b}
    &\mathcal{E} := \E_{\mathbf{W}^{(h+1)}}\left[ (\mathbf{b}^{(h)}(\mathbf{x}^{(1)}))^\top \mathbf{b}^{(h)}(\mathbf{x}^{(2)}) \right] \\
    &= \frac{2}{d_h} \sum_i \mathbf{b}_i^{(h+1)} (\mathbf{x}^{(1)}) \mathbf{b}_i^{(h+1)} (\mathbf{x}^{(2)}) \tr( \mathbf{M}_i^{(h+1)} \mathbf{D} \mathbf{M}_i^{(h+1)}). \nonumber
\end{align}
It is easy to show that $ \tr( \mathbf{M}_i^{(h+1)} \mathbf{D} \mathbf{M}_i^{(h+1)}) \approx \dot{\Sigma}$ and $(\mathbf{b}^{(h)}(\mathbf{x}^{(1)}))^\top \mathbf{b}^{(h)}(\mathbf{x}^{(2)})$ is close to its expectation. 
Then by induction hypothesis we are done. 
Now we show that $\mathbf{W}^{(h+1)}$ is nearly independent to $\mathbf{b}^{(h+1)}$.
Recall a \textbf{special property of the standard Gaussian}: given $\mathbf{w} \sim \mathcal{N}(\mathbf{0, I})$ and two fixed vectors $\mathbf{x}, \mathbf{y}$, if $\mathbf{x^\top y} = 0$, then $\mathbf{w}^\top \mathbf{x}$ and $\mathbf{w}^\top \mathbf{y}$ are independent.
Thus, conditioned on $\mathbf{b}^{(h+1)}$, $\mathbf{G}_i^{(h)}, \mathbf{F}_i^{(h+1)}, \mathbf{m}^{(h+1)}$, we have $\mathbf{w}_i^{(h+1)} \Pi_{\mathbf{G}_i}^\bot \stackrel{\mathcal{D}}{=} \tilde{\mathbf{w}}_i^{(h+1)} \Pi_{\mathbf{G}_i}^\bot$ where $\tilde{\mathbf{w}}_i^{(h+1)}$ is an i.i.d. copy of ${\mathbf{w}}_i^{(h+1)}$.
Let
\begin{align*}
& \mathbf{b}^{(h)}_\bot := 
\left( \mathbf{b}^{(h+1)} \right)^\top
\begin{bmatrix}
((\tilde{\mathbf{w}}_1^{(h+1)})^\top \Pi_{\mathbf{G}_1}^\bot ) \odot \mathbf{m}_1^{(h+1)} \\
\vdots \\
((\tilde{\mathbf{w}}_{d_{h+1}}^{(h+1)})^\top \Pi_{\mathbf{G}_{d_{h+1}}}^\bot ) \odot \mathbf{m}^{(h+1)}_{d_{h+1}} \\
\end{bmatrix} \mathbf{D}, \\
& \mathbf{b}^{(h)}_\parallel := 
\left( \mathbf{b}^{(h+1)} \right)^\top
\begin{bmatrix}
((\mathbf{w}_1^{(h+1)})^\top \Pi_{\mathbf{G}_1})  \odot \mathbf{m}_1^{(h+1)} \\
\vdots \\
((\mathbf{w}_{d_{h+1}}^{(h+1)})^\top \Pi_{\mathbf{G}_{d_{h+1}}} ) \odot \mathbf{m}^{(h+1)}_{d_{h+1}}
\end{bmatrix} \mathbf{D}.
\end{align*}
Notice that $\mathbf{b}^{(h)} = \mathbf{b}^{(h)}_\bot + \mathbf{b}^{(h)}_\parallel$.
Next, we are going to show that the main contribution of $\inprod{\mathbf{b}^{(h)}(\mathbf{x}), \mathbf{b}^{(h)}(\mathbf{x'})}$ is from $\mathbf{b}^{(h)}_{\bot}$ and $(\mathbf{b}^{(h)}_{\bot})^\top \mathbf{b}^{(h)}_{\bot} \approx \mathcal{E}$ whereas the contribution from the dependent part $\mathbf{b}^{(h)}_\parallel$ is small.
We show these two results in \Cref{prop: main_text_independent} and \Cref{prop: main_text_dependent}.
\begin{proposition}[Informal Version of \Cref{prop: independent}]\label{prop: main_text_independent}
Under some appropriate conditions, with probability at least $1 - \delta_2/2$ over the randomness in $\mathbf{W}^{(h+1)}$, for any $(\mathbf{x}^{(1)}, \mathbf{x}^{(2)}) \in \{ (\mathbf{x,x}), (\mathbf{x,x'}), (\mathbf{x',x'}) \}$, we have
\begin{align*}
    & \bigg| \frac{2}{d_h} \left(\mathbf{b}^{(h)}_\bot (\mathbf{x}^{(1)}) \right)^\top \mathbf{b}^{(h)}_\bot (\mathbf{x}^{(2)}) - \mathcal{E} \bigg| \leq O\left( \sqrt{\frac{\log \frac{1}{\delta_2}}{\alpha d_h}} \right).
\end{align*}
\end{proposition}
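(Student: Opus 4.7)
The plan is to exploit the fresh Gaussian copy construction so that $\mathbf{b}^{(h)}_\bot$ becomes a quadratic-form-like expression in the independent copies $\tilde{\mathbf{w}}_i^{(h+1)}$, then compute its conditional expectation and apply a Gaussian concentration inequality. Throughout, I condition on the $\sigma$-algebra $\mathcal{F}_h$ generated by $\mathbf{W}^{(1)},\ldots,\mathbf{W}^{(h)}$, all masks $\mathbf{m}^{(1)},\ldots,\mathbf{m}^{(L+1)}$, and the backward vectors $\mathbf{b}^{(h+1)}(\mathbf{x}^{(1)}), \mathbf{b}^{(h+1)}(\mathbf{x}^{(2)})$. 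On this $\sigma$-algebra the matrices $\mathbf{G}_i^{(h)}$, the diagonal matrix $\mathbf{D}$, and the vectors $\mathbf{b}^{(h+1)}(\mathbf{x}^{(1)})$, $\mathbf{b}^{(h+1)}(\mathbf{x}^{(2)})$ are all deterministic, and by the Gaussian independence property the rows $\tilde{\mathbf{w}}_i^{(h+1)}$ can be treated as an i.i.d.\ standard Gaussian family.

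First, I would expand
\begin{align*}
\tfrac{2}{d_h}\bigl(\mathbf{b}^{(h)}_\bot(\mathbf{x}^{(1)})\bigr)^\top \mathbf{b}^{(h)}_\bot(\mathbf{x}^{(2)})
= \tfrac{2}{d_h}\sum_{i,i'} \mathbf{b}_i^{(h+1)}(\mathbf{x}^{(1)})\,\mathbf{b}_{i'}^{(h+1)}(\mathbf{x}^{(2)})\, Q_{i,i'},
\end{align*}
where $Q_{i,i'}$ is the appropriate quadratic form in $(\tilde{\mathbf{w}}_i^{(h+1)}, \tilde{\mathbf{w}}_{i'}^{(h+1)})$ obtained by expanding the product of the two bracketed row vectors together with the masks $\mathbf{m}_i^{(h+1)}, \mathbf{m}_{i'}^{(h+1)}$ and the diagonal $\mathbf{D}$. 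Since the $\tilde{\mathbf{w}}_i^{(h+1)}$ are independent across $i$, the off-diagonal terms $i\neq i'$ have zero conditional mean, and only the diagonal terms contribute to $\E_{\tilde{\mathbf{W}}^{(h+1)}}[\cdot \mid \mathcal{F}_h]$. Computing the Gaussian expectation of the diagonal gives $\tr\bigl(\Pi_{\mathbf{G}_i}^\bot \cdot (\mathbf{G}_i^{(h)}\mathrm{diag}(\mathbf{m}_i^{(h+1)}\odot \mathbf{m}_i^{(h+1)}))\bigr)\mathbf{D}_{jj}$-type quantities; the key observation is that $\Pi_{\mathbf{G}_i}^\bot$ differs from the identity by a rank-$2$ operator, so replacing $\Pi_{\mathbf{G}_i}^\bot$ by $\mathbf{I}$ changes the conditional expectation by at most $O(L/d_h)$ (using that the forward activations have bounded norm by Theorem~\ref{thm: main_text_concentration_g}), producing exactly $\mathcal{E}$ up to that negligible additive error.

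Next, I would establish concentration around the conditional mean. Viewing $Q := \tfrac{2}{d_h}\sum_{i,i'}\mathbf{b}_i^{(h+1)}(\mathbf{x}^{(1)})\mathbf{b}_{i'}^{(h+1)}(\mathbf{x}^{(2)})\,Q_{i,i'}$ as a quadratic form $\tilde{\mathbf{w}}^\top A \tilde{\mathbf{w}}$ in the stacked Gaussian vector $\tilde{\mathbf{w}} = (\tilde{\mathbf{w}}_1^{(h+1)}, \ldots, \tilde{\mathbf{w}}_{d_{h+1}}^{(h+1)})$, Hanson--Wright gives a sub-exponential deviation controlled by $\|A\|_F$ and $\|A\|_\mathrm{op}$. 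The mask rescaling $1/\sqrt{\alpha}$ inflates every entry of the induced matrix by a $1/\alpha$ factor, which is the source of the $1/\alpha$ in the bound; the bounds $\|\mathbf{b}^{(h+1)}\|_\infty = O(1/\sqrt{d_h})$ and $\|\mathbf{b}^{(h+1)}\|_2 = O(1)$ (with high probability under the inductive hypothesis) give $\|A\|_F^2 = O(1/(\alpha^2 d_h))$ and $\|A\|_\mathrm{op} = O(1/(\alpha d_h))$, producing the claimed $\sqrt{\log(1/\delta_2)/(\alpha d_h)}$ rate. Finally, the statement for the three pairs $(\mathbf{x,x}),(\mathbf{x,x'}),(\mathbf{x',x'})$ follows by a union bound over the three cases.

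The main obstacle is bookkeeping: keeping careful track of what is conditioned on at each stage, ensuring that the high-probability events from Theorem~\ref{thm: main_text_concentration_g} (bounding $\|\mathbf{g}^{(h)}\|$ and $\dot\Sigma^{(h)}$) and the inductive backward bounds on $\|\mathbf{b}^{(h+1)}\|$ are intersected consistently with $\mathcal{F}_h$, and verifying that the rank-$2$ deficit from $\Pi_{\mathbf{G}_i}^\bot$ is genuinely absorbed into the $(L+1)\eps$ slack in Theorem~\ref{thm: main_text_main_non_asymp} rather than degrading the dependence on $d_h$ or $\alpha$. The quadratic form is not strictly Gaussian due to the random masks, so either Hanson--Wright must be applied after conditioning on $\mathbf{m}^{(h+1)}$ (with a further small argument that the mask-dependent operator norms concentrate around their expectations), or one uses a sub-Gaussian generalization; I would take the former route, as the masks enter $\mathcal F_h$ already and no additional randomness needs to be juggled.
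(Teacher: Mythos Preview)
Your high-level plan (condition so the $\tilde{\mathbf w}_i^{(h+1)}$ are fresh Gaussians, compute the mean via the rank-$2$ defect of $\Pi_{\mathbf G_i}^\bot$, then apply a Gaussian-chaos inequality) matches the paper, but the norm estimates you give do not close. From $\|A\|_F^2=O(1/(\alpha^2 d_h))$, Hanson--Wright yields a deviation of order $\|A\|_F\sqrt{\log(1/\delta_2)}=O\bigl(\sqrt{\log(1/\delta_2)}/(\alpha\sqrt{d_h})\bigr)$, which is a factor $1/\sqrt{\alpha}$ worse than the target $O\bigl(\sqrt{\log(1/\delta_2)/(\alpha d_h)}\bigr)$. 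And the input $\|\mathbf b^{(h+1)}\|_\infty=O(1/\sqrt{d_h})$ you invoke is not part of the backward induction: the event $\overline{\mathcal B}^{h+1}$ only gives $\|\mathbf b^{(h+1)}\|_2=O(1)$, and the paper never establishes or uses coordinatewise control of $\mathbf b^{(h+1)}$. The overcount in your $\|A\|_F$ comes from treating every diagonal entry of $\mathbf M_i\mathbf D\mathbf M_{i'}$ as size $1/\alpha$; after conditioning on masks, the diagonal blocks $i=i'$ have only $\approx\alpha d_h$ nonzeros and the off-diagonal blocks $i\neq i'$ only $\approx\alpha^2 d_h$, which is what would buy back the missing $\sqrt{\alpha}$ if you redid the block-by-block estimate carefully. (Minor: the rank-$2$ bias is $O(1/(\alpha d_h))$, not $O(L/d_h)$, because the defect is multiplied by $\mathbf M_i^2$ with operator norm $1/\alpha$.)

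The paper sidesteps both problems by a dimension-reduction step your proposal does not have. Instead of applying Hanson--Wright to the stacked $d_h d_{h+1}$-dimensional Gaussian $\tilde{\mathbf w}$, it first forms the two $d_h$-dimensional Gaussian sums $\sum_i \mathbf b_i^{(h+1)}(\mathbf x^{(p)})\bigl((\tilde{\mathbf w}_i^{(h+1)})^\top\Pi_{\mathbf G_i}^\bot\bigr)\odot\mathbf m_i^{(h+1)}$ for $p=1,2$, writes their joint law as $\mathbf M\boldsymbol\xi$ with $\boldsymbol\xi\sim\mathcal N(\mathbf 0,\mathbf I_{2d_h})$, and applies the Gaussian-chaos lemma to $\mathbf A=\tfrac12\mathbf M^\top\bigl[\begin{smallmatrix}\mathbf 0&\mathbf D\\\mathbf D&\mathbf 0\end{smallmatrix}\bigr]\mathbf M$. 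The required bounds on $\|\mathbf A\|_2$ and $\|\mathbf A\|_F$ then reduce to bounding $\|\mathbf C\|_2$ and $\tr(\mathbf C)$ for the $2d_h\times 2d_h$ covariance $\mathbf C=\mathbf M\mathbf M^\top$; the paper handles both by exhibiting explicit PSD-dominating matrices (built from auxiliary Gaussian vectors) and using only $\|\mathbf b^{(h+1)}\|_2\le 2$ together with the per-row mask concentration event, obtaining $\|\mathbf A\|_F/d_h\le\sqrt{2/(\alpha d_h)}$ and hence the stated rate directly.
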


\begin{proposition}[Informal Version of \Cref{prop: dependent}]\label{prop: main_text_dependent}
Under some appropriate conditions, if $d \geq \tilde{\Omega}(\frac{1}{\alpha} \frac{L^2}{\eps^2})$, with probability $1 - \delta_2/2$ over the randomness in the initialization of $\mathbf{W}^{(h+1)}, \mathbf{m}^{(h+1)}, \ldots, \mathbf{W}^{(L+1)}, \mathbf{m}^{(L+1)}$, 
\begin{align*}
    \sqrt{\frac{1}{d_h}}\norm{\mathbf{b}^{(h)}_\parallel}_2 
    &\leq O\left(\sqrt{\frac{1}{\alpha^2 d_h} \log \frac{1}{\delta_2}} \right). 
\end{align*}
\end{proposition}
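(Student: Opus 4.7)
The plan is to proceed by backward induction on $h$ from $L$ down to $1$, exploiting the fact that each projector $\Pi_{\mathbf{G}_i^{(h)}}$ has rank at most $2$ (since $\mathbf{G}_i^{(h)}$ has only two columns). Fix a layer $h$ and condition on the forward-pass randomness $\mathbf{W}^{(1)},\ldots,\mathbf{W}^{(h)}$ and masks $\mathbf{m}^{(1)},\ldots,\mathbf{m}^{(h)}$ under the high-probability event of \Cref{thm: main_text_concentration_g}. For each $i\in[d_{h+1}]$, let $\mathbf{u}_i,\mathbf{v}_i\in\R^{d_h}$ form an orthonormal basis of the column space of $\mathbf{G}_i^{(h)}$, so that
\[
(\mathbf{w}_i^{(h+1)})^\top \Pi_{\mathbf{G}_i^{(h)}} = a_i\,\mathbf{u}_i^\top + b_i\,\mathbf{v}_i^\top,
\]
with $a_i = \inprod{\mathbf{w}_i^{(h+1)},\mathbf{u}_i}$ and $b_i = \inprod{\mathbf{w}_i^{(h+1)},\mathbf{v}_i}$. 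A direct computation shows these coefficients are algebraically determined by the $(h+1)$-th layer pre-activations, namely $(a_i,b_i)^\top = \mathbf{K}_i^{-1/2}(\tilde{\mathbf{f}}_i^{(h+1)}(\mathbf{x}),\tilde{\mathbf{f}}_i^{(h+1)}(\mathbf{x}'))^\top$ with $\mathbf{K}_i=(\mathbf{G}_i^{(h)})^\top\mathbf{G}_i^{(h)}$. Moreover, since $\Pi_{\mathbf{G}_i^{(h)}}$ is supported on the nonzero coordinates of $\mathbf{m}_i^{(h+1)}$, right-multiplying by the rescaled mask in $\{0,1/\sqrt{\alpha}\}^{d_h}$ just multiplies the entire row by $1/\sqrt{\alpha}$.

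Using the rank-$2$ expansion, I would apply Cauchy-Schwarz to obtain an estimate of the form
\[
\norm{\mathbf{b}^{(h)}_\parallel}_2^2 \;\lesssim\; \frac{1}{\alpha\, d_h}\cdot \lambda_{\max}\!\Bigg(\sum_{i=1}^{d_{h+1}}\Pi_{\mathbf{G}_i^{(h)}}\Bigg)\cdot \sum_{i=1}^{d_{h+1}} \big(\mathbf{b}_i^{(h+1)}\big)^2\big(a_i^2+b_i^2\big),
\]
and then control each factor. The prefactor $1/\alpha$ traces back to the mask rescaling identified above. The operator norm of the sum of rank-$2$ PSD matrices is bounded via a matrix-Bernstein argument, using that each $\Pi_{\mathbf{G}_i^{(h)}}$ has operator norm at most $1$ and is supported on an independent random coordinate set of size $\approx\alpha d_h$. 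For the scalar sum, \Cref{thm: main_text_concentration_g} ensures that every $\mathbf{K}_i$ concentrates around $\boldsymbol{\Lambda}^{(h)}$ uniformly in $i$, hence is uniformly well-conditioned, while Gaussian concentration on $\tilde{\mathbf{f}}_i^{(h+1)}$ (conditional on $\tilde{\mathbf{g}}^{(h)}$) yields $a_i^2+b_i^2 = \widetilde{O}(1)$ uniformly in $i$ with probability $1-\delta_2/4$. Finally, the inductive hypothesis combined with \Cref{prop: main_text_independent} at layer $h+1$ gives $\norm{\mathbf{b}^{(h+1)}}_2^2 = O(1)$ with high probability.

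Combining the pieces and union-bounding over layers yields the claimed $\widetilde{O}(1/\alpha^2)$ bound on $\norm{\mathbf{b}^{(h)}_\parallel}_2^2$. The main obstacle is the entangled probabilistic dependence between the Gaussian vectors $\mathbf{w}_i^{(h+1)}$ (which determine $a_i,b_i$, and hence $\mathbf{b}^{(h)}_\parallel$) and the vector $\mathbf{b}^{(h+1)}$ (which depends on $\mathbf{w}_i^{(h+1)}$ through $\tilde{\mathbf{D}}^{(h+1)}$). Crucially, this dependence is funnelled entirely through the two-dimensional parallel components, so conditioning on the forward pre-activations $\mathbf{F}^{(h+1)}$---equivalently on the collection $(a_i,b_i)_i$---freezes $\mathbf{b}^{(h+1)}$ and decouples the inductive bound on $\norm{\mathbf{b}^{(h+1)}}_2$ from the matrix-concentration argument. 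A secondary technical challenge is executing the matrix-Bernstein step carefully in the presence of the $1/\sqrt{\alpha}$ mask rescaling and the correlation between the two columns of each $\mathbf{G}_i^{(h)}$; in particular, one must symmetrize and centre the rank-$2$ summands before invoking a standard matrix Bernstein inequality.
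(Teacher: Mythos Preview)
Your Cauchy--Schwarz step
\[
\norm{\mathbf{b}^{(h)}_\parallel}_2^2 \;\lesssim\; \frac{1}{\alpha}\cdot \lambda_{\max}\Bigl(\sum_{i}\Pi_{\mathbf{G}_i^{(h)}}\Bigr)\cdot \sum_{i} \bigl(\mathbf{b}_i^{(h+1)}\bigr)^2\bigl(a_i^2+b_i^2\bigr)
\]
is algebraically correct, but the matrix-Bernstein step that follows cannot deliver what you need. The projectors $\Pi_{\mathbf{G}_i^{(h)}}$ are \emph{not} approximately orthogonal across $i$: every one of them has a uniformly large component along the fixed direction $\mathbf{g}^{(h)}(\mathbf{x})$. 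Indeed, under the forward event $\overline{\mathcal{A}}$ one has $\langle \mathbf{g}^{(h)},\,\mathbf{g}^{(h)}\odot\mathbf{m}_i^{(h+1)}\rangle\approx\sqrt{\alpha}$ and $\|\mathbf{g}^{(h)}\odot\mathbf{m}_i^{(h+1)}\|_2\approx 1$, so $\mathbf{g}^{(h)\top}\Pi_{\mathbf{G}_i^{(h)}}\mathbf{g}^{(h)}\gtrsim\alpha$ for every $i$, and hence $\lambda_{\max}\bigl(\sum_i\Pi_{\mathbf{G}_i^{(h)}}\bigr)\gtrsim \alpha\, d_{h+1}$. Centering before Bernstein does not help: the large eigenvalue sits entirely in the \emph{mean} $d_{h+1}\,\E[\Pi_{\mathbf{G}_1}]$, not in the fluctuation. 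Plugging $\lambda_{\max}=\Theta(\alpha d)$ back into your inequality gives $\|\mathbf{b}^{(h)}_\parallel\|_2^2=\widetilde O(d)$, so $\tfrac{1}{\sqrt{d_h}}\|\mathbf{b}^{(h)}_\parallel\|_2=\widetilde O(1)$, which is vacuous for the purpose of showing the parallel part is negligible.

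The paper's argument avoids Cauchy--Schwarz altogether and instead exploits a coordinate-wise factorization that your decomposition discards. It first splits $\Pi_{\mathbf{G}_i}=\Pi_{\mathbf{g}(\mathbf{x})\odot\mathbf{m}_i}+\Pi_{\mathbf{G}_i/(\mathbf{g}(\mathbf{x})\odot\mathbf{m}_i)}$. The second piece is handled easily by a fresh-Gaussian argument. For the first piece, the $j$-th coordinate of $\sum_i \mathbf{b}_i^{(h+1)}(\mathbf{w}_i^{(h+1)})^\top\Pi_{\mathbf{g}\odot\mathbf{m}_i}$ is shown to equal $\tfrac{1}{\alpha}\,\mathbf{g}_j^{(h)}\cdot f^{(h+1,j,L+1)}$, where $f^{(h+1,j,L+1)}$ is the scalar output of a \emph{mask-induced pseudo-network} (\Cref{def: main_text_pseudo_network}). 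One then proves $|f^{(h+1,j,L+1)}|=\widetilde O(1)$ uniformly in $j$ by propagating norm bounds through this pseudo-network; the crucial device here is \Cref{prop: main_text_gaussian_indicator}, which says that for the purpose of second moments, the ReLU gates of the pseudo-network can be replaced by those of the true network. The $\mathbf{g}_j^{(h)}$ prefactor is exactly what rescues the bound: $\|\mathbf{b}^{(h)}_\parallel\|_2\le \tfrac{1}{\alpha}\|\mathbf{g}^{(h)}\|_2\cdot\max_j|f^{(h+1,j,L+1)}|=\widetilde O(1/\alpha)$. Your approach, by passing to $\lambda_{\max}(\sum_i\Pi_{\mathbf{G}_i})$, throws away precisely this alignment between the large eigendirection of $\sum_i\Pi_{\mathbf{G}_i}$ and the small vector $\mathbf{g}^{(h)}$.
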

The proof of \Cref{prop: main_text_independent} requires some intricate calculation and then applying Gaussian chaos concentration bound which is left in \Cref{sec: independent} in Appendix. 
We now give a detailed description of the proof of \Cref{prop: main_text_dependent}. 
For the ease of presentation, we omit the dependence on layer and inputs when there is no confusion. 
First of all, we can decompose $\Pi_{\mathbf{G}_i} = \Pi_{\mathbf{g}(\mathbf{x}) \odot \mathbf{m}_i} + \Pi_{\mathbf{G}_i/\mathbf{g}(\mathbf{x}) \odot \mathbf{m}_i}$ where $\mathbf{G}_i/\mathbf{g}(\mathbf{x}) \odot \mathbf{m}_i$ denotes the subspace of $\mathbf{G}_i$ orthogonal to $\mathbf{g}(\mathbf{x}) \odot \mathbf{m}_i$.
Bounding the second part is simple by utilizing the special property of the standard Gaussian. 
We now focus on bounding the first part. 
Writing $\mathbf{g}_i$ short for $\mathbf{g} \odot \mathbf{m}_i$,
\begin{align}\label{eq: main_text_dependent}
    &\left( \mathbf{b}^{(h+1)} \right)^\top
    \begin{bmatrix}
    ((\mathbf{w}_1^{(h+1)})^\top \Pi_{\mathbf{g}_1})  \odot \mathbf{m}_1^{(h+1)} \\
    \vdots \\
    ((\mathbf{w}_{d_{h+1}}^{(h+1)})^\top \Pi_{\mathbf{g}_{d_{h+1}}} ) \odot \mathbf{m}^{(h+1)}_{d_{h+1}}
    \end{bmatrix} \nonumber \\
    &= \frac{1}{\sqrt{\alpha}} \sum_{i} \mathbf{b}^{(h+1)}_i ( {\mathbf{w}}^{(h+1)}_i)^\top \frac{\mathbf{g}_i \mathbf{g}_i^\top}{\norm{\mathbf{g}_i}^2_2}.
\end{align}
The presence of the mask introduces further difficulties in the analysis.
In particular, without the pruning masks, the above vector nicely simplifies to 
\begin{align}\label{eq: without_pruning}
    (\mathbf{b}^{(h+1)})^\top \mathbf{W}^{(h+1)} \mathbf{g}^{(h)} \frac{\mathbf{g}^{(h)}(\mathbf{x})}{\norm{\mathbf{g}^{(h)}(\mathbf{x})}_2} = f(\mathbf{x}) \frac{\mathbf{g}^{(h)}(\mathbf{x})}{\norm{\mathbf{g}^{(h)}(\mathbf{x})}_2}.
\end{align}
We would like the above relation to also hold for pruned network. 
However, this is not true since each $\mathbf{g}_i$ is different. 
A closer examination of the expression in \Cref{eq: main_text_dependent} tells us that like the relation in \Cref{eq: without_pruning}, the $i$-th coordinate of this vector can be written as the product of $\mathbf{g}^{(h)}_i(\mathbf{x})$ and some structure similar to the pruned network, which we call \textit{mask-induced pseudo-networks}.

\subsubsection{Mask-Induced Pseudo-Network}
\begin{definition}[Pseudo-network induced by mask]\label{def: main_text_pseudo_network}
Define the pseudo-network induced by the $h$-th layer $j$-th column of sparse masks $ \mathbf{m}^{(h)}$ for all $h \in \{2, \ldots, L\}$, $j \in [d_{h-1}]$ and $h' \in \{h+1, h+2, \ldots, L\}$ to be
\begin{align*}
    \mathbf{g}^{(h,j,h)} &= \sqrt{\frac{c_\sigma}{d_{h}}} \mathbf{D}^{(h)} \textnormal{diag}_i \left(\frac{\mathbf{m}_{ij}^{(h)} \sqrt{\alpha} }{\norm{\mathbf{g}^{(h-1)} \odot \mathbf{m}_i^{(h)}}_2^2} \right) \mathbf{f}^{(h)}, \\
    \mathbf{f}^{(h,j,h')} &= \left(\mathbf{W}^{(h')} \odot \mathbf{m}^{(h')} \right) \mathbf{g}^{(h, j, h'-1)}, \\
    \mathbf{g}^{(h, j, h')} &= \sqrt{\frac{c_\sigma}{d_{h'}}} \mathbf{D}^{(h')}(\mathbf{x}) \mathbf{f}^{(h,j,h')}.
\end{align*}
The output of this pseudo-network is $f^{(h,j,L+1)}$.
\end{definition}
Using this definition, we can write 
\begin{align*}
    &\left( \frac{1}{\sqrt{\alpha}} \sum_{i} \mathbf{b}^{(h+1)}_i ( {\mathbf{w}}^{(h+1)}_i)^\top \frac{\mathbf{g}_i^{(h)} \mathbf{g}^{(h)\top}_i}{\norm{\mathbf{g}_i^{(h)}}^2_2} \right)_j \\
    &= \frac{1}{{\alpha}} \mathbf{g}_j^{(h)} {f}^{(h+1, j, L+1)}.
\end{align*}
Now our goal is to show that $|{f}^{(h+1, j, L+1)}| = \tilde{O}(1)$ for all $h, j$. 
This requires us to analyze the forward propagation of this pseudo-network. 
Specifically, we need to show that the norm of $\mathbf{g}^{(h,j,h')}$ is $O(1)$ for all $h < h' \leq L$.
However, whether a neuron turns on depends on the input it receives in the pruned network instead of the pseudo-network. 
Nonetheless, we show that this doesn't matter when we consider the \textit{norm} of the activation in the pseudo-network, since it has the \textit{same} distribution as the activation in the pruned network. 
\begin{proposition}\label{prop: main_text_gaussian_indicator}
For any given nonzero vectors $\mathbf{x,y}$, the distribution of $(\mathbf{w}^\top \mathbf{x})^2 \mathbb{I}(\mathbf{w}^\top \mathbf{y} > 0)$ is the same as $(\mathbf{w}^\top \mathbf{x})^2 \mathbb{I}(\mathbf{w}^\top \mathbf{x} > 0)$ where $\mathbf{w} \sim \mathcal{N}(\mathbf{0, I})$.
\end{proposition}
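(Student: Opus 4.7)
The plan is to exploit the central symmetry of the standard Gaussian, namely $\mathbf{w} \stackrel{d}{=} -\mathbf{w}$, which is the only property of the distribution I will need. Set $X = \mathbf{w}^\top \mathbf{x}$ and $Y = \mathbf{w}^\top \mathbf{y}$, both Gaussian with mean zero and nondegenerate (since $\mathbf{x}, \mathbf{y} \neq \mathbf{0}$). The distribution of $X^2 \mathbb{I}(Y > 0)$ will be shown to coincide with that of $X^2 \mathbb{I}(X > 0)$ by proving that in both cases the indicator is Bernoulli$(1/2)$ and is \emph{independent} of $X^2$.

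The key step is to show that $X^2$ is independent of $\mathbb{I}(Y > 0)$. Since $(X, Y)$ is jointly Gaussian centered at zero, the sign-flip symmetry of $\mathbf{w}$ gives $(X, Y) \stackrel{d}{=} (-X, -Y)$. For any measurable $A \subseteq \mathbb{R}_{\geq 0}$,
\begin{align*}
\Pr(X^2 \in A,\ Y > 0) &= \Pr((-X)^2 \in A,\ -Y > 0) \\
&= \Pr(X^2 \in A,\ Y < 0).
\end{align*}
Since $\mathbf{y} \neq \mathbf{0}$ implies $\Pr(Y = 0) = 0$, adding these two equal quantities yields $\Pr(X^2 \in A) = 2 \Pr(X^2 \in A,\ Y > 0)$, i.e.\ $\Pr(X^2 \in A,\ Y > 0) = \frac{1}{2} \Pr(X^2 \in A) = \Pr(X^2 \in A)\,\Pr(Y > 0)$, establishing the independence. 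Exactly the same argument (with $\mathbf{y}$ replaced by $\mathbf{x}$) shows $X^2 \perp \mathbb{I}(X > 0)$.

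With both indicators being Bernoulli$(1/2)$ and independent of $X^2$, the products $X^2 \mathbb{I}(Y > 0)$ and $X^2 \mathbb{I}(X > 0)$ are each equal in distribution to $X^2 \cdot B$ with $B \sim \text{Bernoulli}(1/2)$ independent of $X^2$, so they share the same law. I do not foresee a serious obstacle here: the only delicate point is confirming $\Pr(Y = 0) = 0$ so that the union $\{Y > 0\} \cup \{Y < 0\}$ exhausts the sample space up to a null set, which is immediate from $\mathbf{y} \neq \mathbf{0}$. The argument is completely distribution-agnostic beyond central symmetry, which suggests the statement actually extends to any spherically symmetric $\mathbf{w}$, but only the Gaussian case is needed downstream.
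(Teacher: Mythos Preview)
Your proof is correct. Both you and the paper rely on the same central symmetry $\mathbf{w} \stackrel{d}{=} -\mathbf{w}$, but you organize the argument differently: you prove that $X^2$ is \emph{independent} of the sign indicator (whether of $Y$ or of $X$), from which the equality in law is immediate. The paper instead computes the CDFs directly, writing $\Pr[0 < z_1 \le z]$ as an integral over $\{\mathbf{w}: \mathbf{w}^\top\mathbf{y} > 0,\ |\mathbf{w}^\top\mathbf{x}| \le \sqrt{z}\}$, splitting that region according to the sign of $\mathbf{w}^\top\mathbf{x}$, and applying the substitution $\mathbf{w}\mapsto -\mathbf{w}$ to one piece to reassemble $\Pr[0 < z_2 \le z]$. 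Your route is cleaner, makes the underlying independence structure explicit, and---as you note---generalizes verbatim to any centrally symmetric $\mathbf{w}$; the paper's region-splitting argument is more hands-on but reaches the same conclusion without isolating the independence step.
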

This proposition says we can bound the norm of the activation by ignoring which neurons turn on. 
Thus, utilizing this result, we can analyze the forward propagation of the pseudo-network just as analyzing the pruned network and show that we indeed have $|{f}^{(h+1, j, L+1)}| = \tilde{O}(1)$ for all $h, j$. 
This completes the proof outline of \Cref{prop: main_text_dependent}.


\section{EXPERIMENTS}
This section presents our empirical results. 
Our results contain two parts: first we validate our theory; then, we evaluate our theory on real world dataset.

\subsection{Validating Our Theory}\label{sec: validate_theory}
\begin{figure}[ht]
    \centering
    \includegraphics[width=\columnwidth]{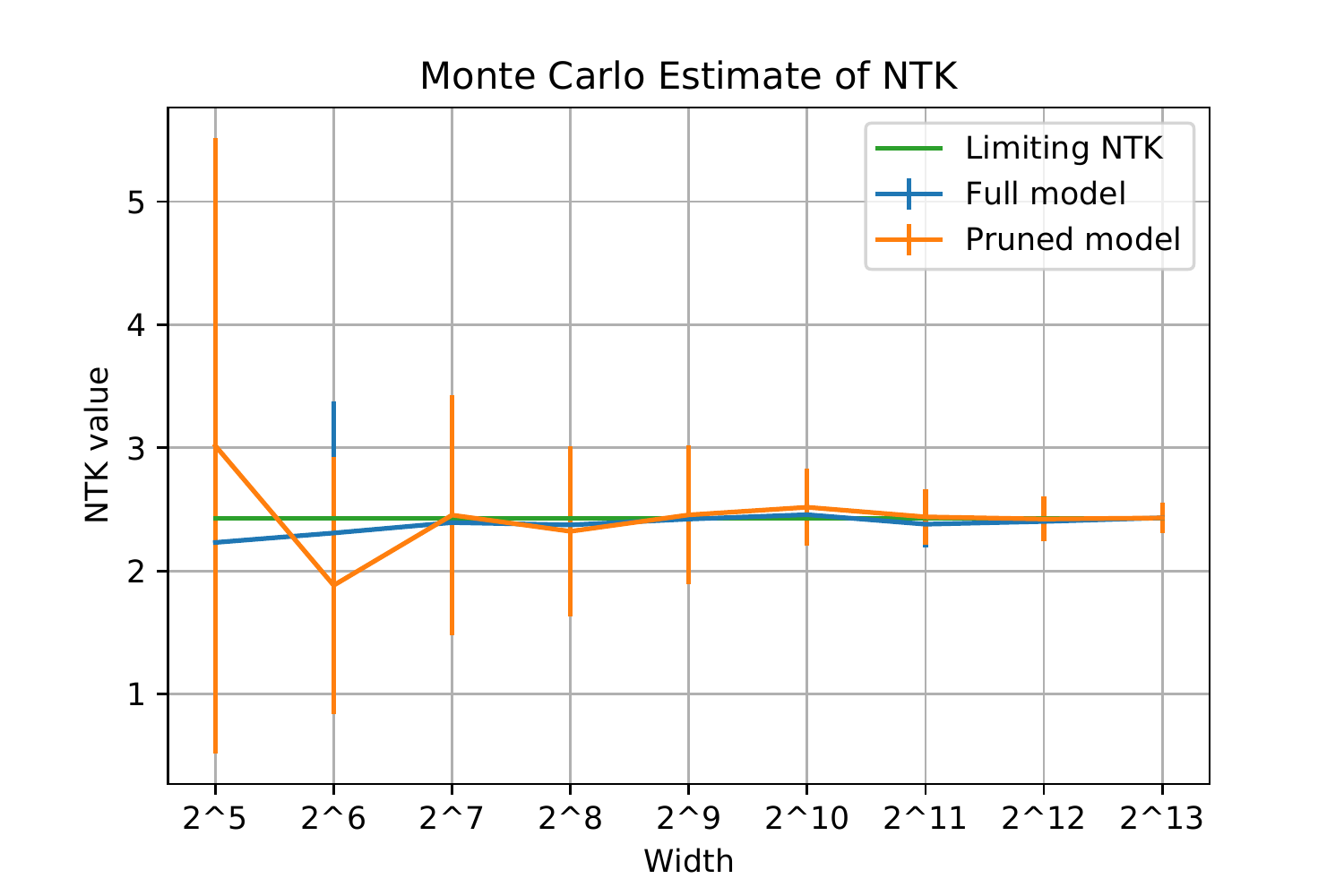}
    \caption{Figure (a) validates \Cref{corollary: main_text_rescale} which shows the empirical NTK value generated by the full model and pruned model with varying width compared with theoretical NTK limit.
    The limiting NTK value is computed by a known closed-form formula in \citep{arora2019exact}.}
    \label{figure: width_ntk}
\end{figure}

\begin{figure}[ht]
    \vskip -0.2in
    \centering
    \includegraphics[width=\columnwidth]{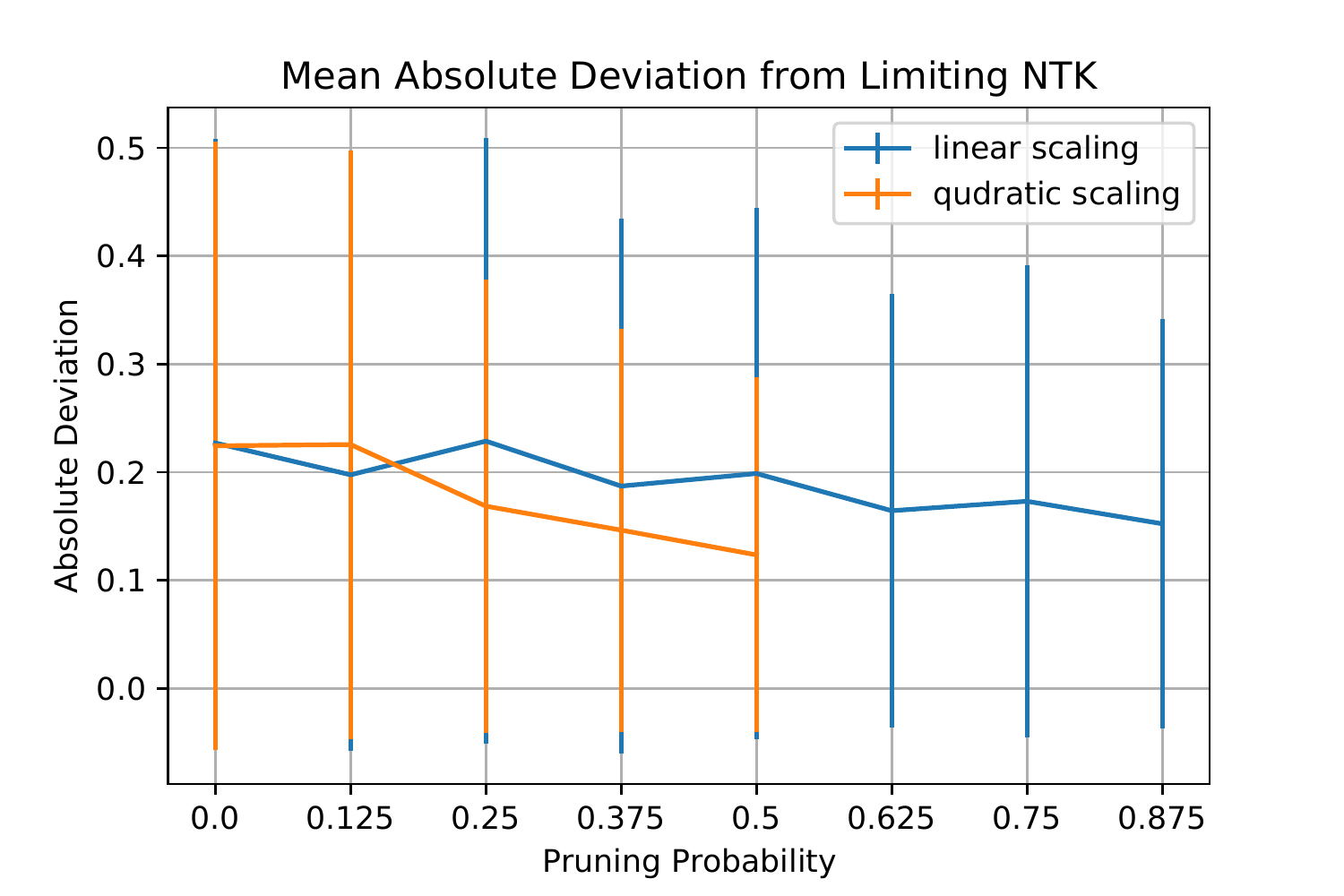}
    \caption{The results of the mean absolute deviation of the empirical NTK value from the limiting NTK. 
    At each pruning probability, the width of the network is scaled quadratically and linearly with respect to $1 / \alpha$.}
    \label{figure: pruning_fraction_ntk}
\end{figure}

\textbf{Validation of \Cref{corollary: main_text_rescale} (and, thus, \Cref{thm: main_text_main_asymp})}: We show that the empirical NTK value computed from the pruned network converges to the theoretical NTK limit as the width increases.
We use fully-connected neural networks with 3-hidden layers of the same width as our model.
We rescale the weights by $1 / \sqrt{\alpha}$ after pruning. 
We first randomly generate two data points $\mathbf{x,y}$ and then randomly initialize the networks with Gaussian distribution. 
We fix the pruning probability to be $1/2$ and vary the width from $32$ to $8192$. 
For each trial, we create 64 samples of the empirical NTK values generated by the unpruned and pruned networks, and plot their mean. 
\Cref{figure: width_ntk} shows that, as the width increase, our empirical estimates from both unpruned and pruned model converge to the limiting NTK value.

\textbf{Validation of \Cref{thm: main_text_main_non_asymp}}:
\Cref{thm: main_text_main_non_asymp} suggests that $d_h$ needs to scale asymptotically linearly with respect to $1 / \alpha$ to maintain the gap between the empirical NTK and limiting NTK. 
To evaluate our \Cref{thm: main_text_main_non_asymp}, we start with a full model of width $1024$ and then prune the model with various probability $1- \alpha$ while scaling the width quadratically and linearly with $1/\alpha$.
Since quadratically scaling width is expensive, we stop at $0.5$ pruning probability.
We generate 100 samples for each pruning probability and take their mean absolute deviation from the theoretically computed NTK value. 
The result is shown in \Cref{figure: pruning_fraction_ntk}.
In both cases, the gap to the limiting NTK is non-increasing.

\subsection{On the Real World Data}
\begin{figure}
    \vskip -0.2in
    \centering
    \includegraphics[width=\columnwidth]{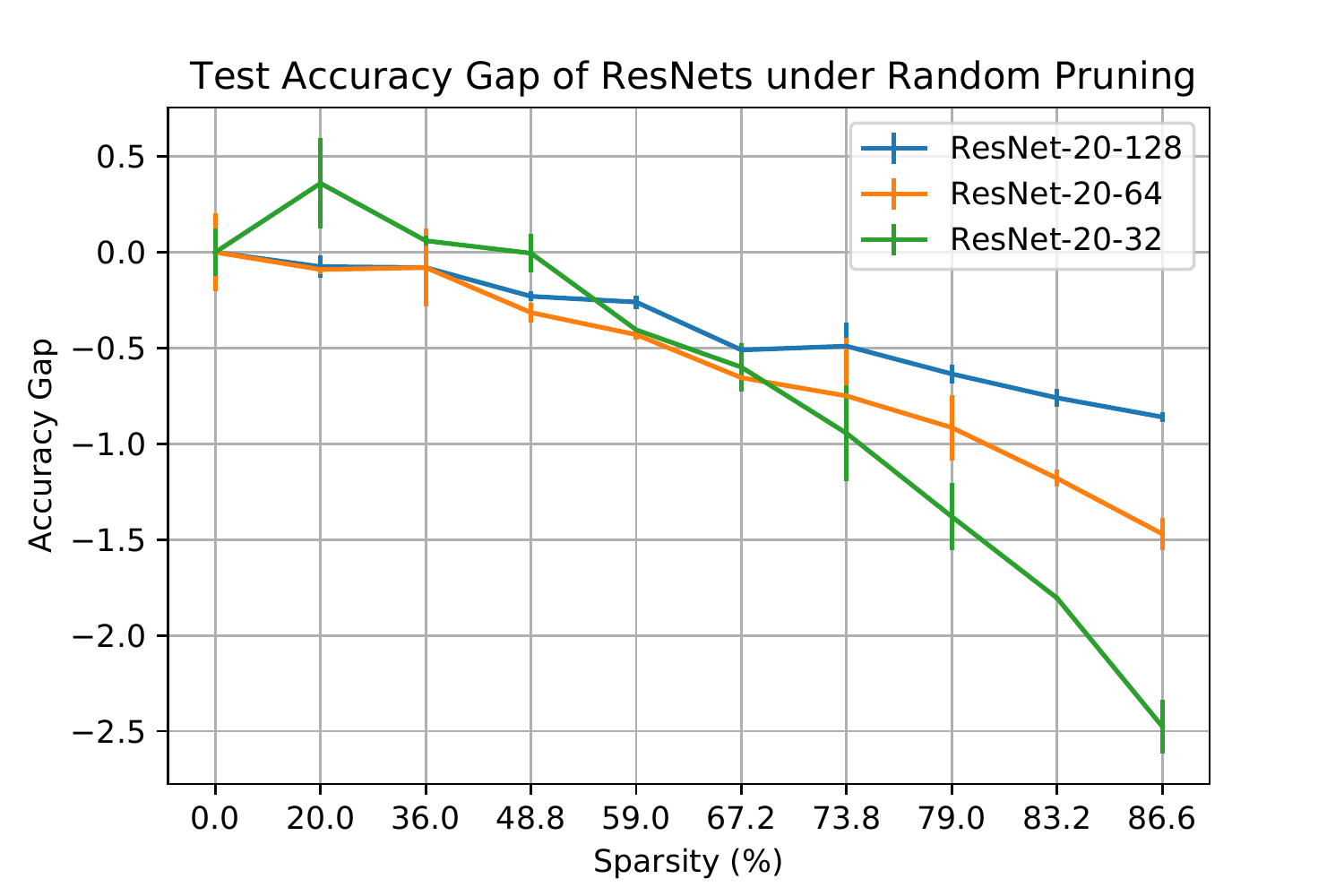} 
    \caption{Performance of random pruning without rescaling on ResNet-20 of different widths. 
    Sparsity on the x-axis means the fraction of weights remaining in IMP and pruning probability in random pruning.}
    \label{figure: resnet_random}
\end{figure}
In this section, we further evaluate our theory on real-world data. 
Our theory suggests that if the network is wide enough, the pruned networks should retain much of the performance of the full networks.  
We note that here we prune all layers of the neural networks.
We adopt the implementation from \cite{chen2021elastic}.

We extensively test our theory across different neural network architectures and datasets.
For pruning methods, in addition to random pruning (with and without rescaling weights after pruning), we also include Iterative Magnitude-based Pruning (IMP) in our experiments. 
We train fully-connected neural networks on MNIST dataset \cite{deng2012mnist} and, VGGs and ResNets \cite{he2016deep} on CIFAR-10 \cite{krizhevsky2009learning}, and vary the width of these architectures. 
We generate each data point in the plot by averaging over 2 independent runs.
We defer the detailed experiment setup in \Cref{sec: exp_setup} in Appendix.

\textbf{Results.}
In \Cref{figure: resnet_random}, for random pruning without rescaling, the testing performance gap narrows as the network width is getting larger.  
For ResNet-20-128, at sparsity $86.6\%$, the performance of random pruning and the full model is within $1\%$ on CIFAR-10. 
Similar results have been observed for other pruning methods and other architectures and datasets.  
Further experiment results are shown in \Cref{sec: further_exp} in Appendix.

\section{DISCUSSION AND FUTURE WORK}
In this paper, we establish an equivalence between the NTK of a randomly pruned neural network and the limiting NTK of the unpruned network under both asymptotic and finite-width cases. 
For the finite width case, we establish an asymptotically linear dependence of network width on the sparsity parameter $1/\alpha$. 
One open problem is whether $1/\alpha^2$ dependence is indeed necessary for the backward propagation so that the width dependence on $1/\alpha$ can be improved to exactly linear instead of asymptotically linear. 
We leave further investigation on this open problem. 

One limitation of our current analysis is that it only applies to random pruning and assumes that the pruning distribution is completely independent from the weight initialization. Therefore, our analysis is not valid for magnitude-based pruning or gradient-based pruning, as the weights being pruned have internal correlations with the magnitude of the weights.
Another limitation is that the NTK analysis inherently restricts the neural network's ability to perform feature learning. We believe that the advantages of pruning, such as improving network generalization, can be demonstrated in a feature learning setting. This direction is left for future research and exploration as well.

\subsection*{Acknowledgements}
H. Yang and Z. Wang thank the anonymous reviewers for the helpful feedback and comments. Z. Wang is supported by NSF Scale-MoDL (award number: 2133861).

\bibliography{ref}
\bibliographystyle{apalike}

\onecolumn
\aistatstitle{Supplementary Materials}
\tableofcontents
\newpage

\section{ASYMPTOTIC ANALYSIS (Proof of \texorpdfstring{\Cref{thm: main_text_main_asymp}}{Asymptotic Limit})}\label{app: asym}
This section is devoted to prove the asymptotic limit of the pruned networks' NTK.
Recall that we use tilde over a symbol to denote the quantity in the \emph{pruned} network and the corresponding symbol without tilde denotes the quantity in the unpruned network. 
\begin{theorem}[The limiting NTK of randomly pruned networks, Restatement of \Cref{thm: main_text_main_asymp}]\label{thm: main_asymp}
Consider an $L$-hidden-layer fully-connected ReLU neural network. 
Suppose the network weights are initialized from an i.i.d. standard Gaussian distribution and the weights except the input layer are pruned independently with probability $1 - \alpha$ at the initialization. 
Assume the backpropagation is computed by sampling a independent copy of weights. 
Then, as the width of each layer goes to infinity sequentially,
\begin{align*}
    \lim_{d_1, d_2, \ldots, d_L \rightarrow \infty} \tilde{\boldsymbol{\Theta}}(\mathbf{x,x'}) = \alpha^{L} \boldsymbol{\Theta}_\infty(\mathbf{x,x'}),
\end{align*}
where $\tilde{\boldsymbol{\Theta}}$ denotes the NTK of the pruned network and $\boldsymbol{\Theta}_\infty$ denotes the limiting NTK of the unpruned network. 
\end{theorem}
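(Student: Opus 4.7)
\textbf{Proof proposal for \Cref{thm: main_asymp}.}
The plan is to decompose the NTK layer by layer and then reduce the problem to \Cref{lemma: main_text_asymp_forward} (forward) and \Cref{lemma: main_text_asymp_backward} (backward). Starting from \Cref{eq: grad_pruned_network}, for each $h \in [L+1]$ I would expand
\begin{align*}
\inprod{\frac{\partial \tilde{f}(\mathbf{x})}{\partial \mathbf{W}^{(h)}}, \frac{\partial \tilde{f}(\mathbf{x}')}{\partial \mathbf{W}^{(h)}}}
 = \sum_{i=1}^{d_h} \tilde{\mathbf{b}}^{(h)}_i(\mathbf{x}) \tilde{\mathbf{b}}^{(h)}_i(\mathbf{x}') \, \mathbf{G}^{(h-1)}_{ii},
\end{align*}
with $\mathbf{G}^{(h-1)}_{ii}=\inprod{\tilde{\mathbf{g}}^{(h-1)}(\mathbf{x})\odot \mathbf{m}^{(h)}_i,\tilde{\mathbf{g}}^{(h-1)}(\mathbf{x}')\odot \mathbf{m}^{(h)}_i}$, as already noted in \Cref{section: asymp}. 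Because the masks $\mathbf{m}^{(h)}_i$ are i.i.d.\ Bernoulli$(\alpha)$ and independent of $\tilde{\mathbf{g}}^{(h-1)}$, once the inner widths have been taken to infinity each $\mathbf{G}^{(h-1)}_{ii}$ concentrates on $\alpha\,\tilde{\Sigma}^{(h-1)}(\mathbf{x},\mathbf{x}')$ for $h\geq 2$ (and on $\Sigma^{(0)}(\mathbf{x},\mathbf{x}')$ for $h=1$, since the input layer is unpruned). Substituting \Cref{lemma: main_text_asymp_forward} gives $\mathbf{G}^{(h-1)}_{ii}\to \alpha^{h-1}\,\Sigma^{(h-1)}(\mathbf{x},\mathbf{x}')$ for every $h\in[L+1]$.

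Next I would pull this common scalar out of the sum, so that the $h$-th term of the NTK becomes $\alpha^{h-1}\Sigma^{(h-1)}(\mathbf{x},\mathbf{x}')\cdot \inprod{\tilde{\mathbf{b}}^{(h)}(\mathbf{x}),\tilde{\mathbf{b}}^{(h)}(\mathbf{x}')}$ in the sequential limit. Invoking \Cref{lemma: main_text_asymp_backward} this simplifies to
\begin{align*}
\alpha^{h-1}\,\Sigma^{(h-1)}(\mathbf{x},\mathbf{x}') \cdot \alpha^{L+1-h}\prod_{h'=h}^{L}\dot{\Sigma}^{(h')}(\mathbf{x},\mathbf{x}')
 = \alpha^{L}\,\Sigma^{(h-1)}(\mathbf{x},\mathbf{x}')\prod_{h'=h}^{L+1}\dot{\Sigma}^{(h')}(\mathbf{x},\mathbf{x}'),
\end{align*}
where the final step uses the convention $\dot{\Sigma}^{(L+1)}\equiv 1$. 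Summing over $h=1,\dots,L+1$ and comparing with the closed form for $\boldsymbol{\Theta}_\infty$ recalled in the preliminaries immediately yields $\alpha^{L}\boldsymbol{\Theta}_\infty(\mathbf{x},\mathbf{x}')$.

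The genuine work is therefore concentrated in establishing \Cref{lemma: main_text_asymp_forward} and \Cref{lemma: main_text_asymp_backward}. For the forward lemma I would induct on $h$: the case $h=1$ is immediate because the input layer is not pruned, and for $h\geq 2$ a conditional CLT in $j$ gives $(\tilde{\mathbf{f}}^{(h)}_i(\mathbf{x}),\tilde{\mathbf{f}}^{(h)}_i(\mathbf{x}'))\Rightarrow \mathcal{N}(\mathbf{0},\alpha\,\tilde{\boldsymbol{\Lambda}}^{(h)})$; then a LLN over $i$ together with the positive homogeneity $\sigma(cu)=c\sigma(u)$ and the inductive identity $\tilde{\boldsymbol{\Lambda}}^{(h)}=\alpha^{h-2}\boldsymbol{\Lambda}^{(h)}$ produces an extra factor $\alpha^{h-1}$ when the $\sigma\sigma$ expectation is rescaled back to $\mathcal{N}(\mathbf{0},\boldsymbol{\Lambda}^{(h)})$. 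For the backward lemma I would induct downward from $h=L+1$: with the fresh Gaussian copy of $\mathbf{W}^{(h+1)}$ the components of $(\tilde{\mathbf{W}}^{(h+1)}\odot \mathbf{m}^{(h+1)})^\top \tilde{\mathbf{b}}^{(h+1)}$ are independent of $\tilde{\mathbf{D}}^{(h)}$ and $\tilde{\mathbf{b}}^{(h+1)}$, so conditioning on the latter and computing the Gaussian second moment yields the factor $\alpha \inprod{\tilde{\mathbf{b}}^{(h+1)}(\mathbf{x}),\tilde{\mathbf{b}}^{(h+1)}(\mathbf{x}')}$ as $d_{h+1}\to\infty$; a second LLN over the $\dot{\sigma}\dot{\sigma}$ terms plus the $0$-homogeneity of $\dot{\sigma}$ delivers the factor $\dot{\Sigma}^{(h)}$. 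The main obstacle is making these convergence-in-distribution arguments rigorous while the inner widths are taken to infinity \emph{before} the outer ones; this is exactly where \Cref{lemma: main_text_exchange_limit} justifies exchanging the limit with the $\E[\sigma\sigma]$ and $\E[\dot{\sigma}\dot{\sigma}]$ expectations, eliminating the only non-routine analytic step.
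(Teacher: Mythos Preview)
Your proposal is correct and follows essentially the same route as the paper: the same layerwise decomposition $\sum_i \tilde{\mathbf{b}}^{(h)}_i(\mathbf{x})\tilde{\mathbf{b}}^{(h)}_i(\mathbf{x}')\,\mathbf{G}^{(h-1)}_{ii}$, the same reduction to \Cref{lemma: main_text_asymp_forward} and \Cref{lemma: main_text_asymp_backward}, and the same combination $\alpha^{h-1}\cdot\alpha^{L+1-h}=\alpha^L$. One small slip: in the backward induction the LLN that produces the factor $\alpha\,\dot{\Sigma}^{(h)}$ runs over the $d_h$ coordinates (not $d_{h+1}$), since $\tilde{\mathbf{b}}^{(h)}\in\R^{d_h}$ and the relevant sum is $\frac{c_\sigma}{d_h}\sum_{k=1}^{d_h}(\cdot)$.
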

For the pruned neural networks, its gradient is given by
\begin{align*}
    \frac{\partial \tilde{f}(\mathbf{x})}{\partial \mathbf{W}^{(h)} } &= \left(\tilde{\mathbf{b}}^{(h)}(\mathbf{x}) \left( \tilde{\mathbf{g}}^{(h-1)}(\mathbf{x}) \right)^\top \right) \odot \mathbf{m}^{(h)}, \quad h = 2,\ldots, L+1
\end{align*}
where 
\begin{gather}
    \tilde{\mathbf{b}}^{(h)}(\mathbf{x}) = 
    \begin{cases}
    1 \in \R, & h = L+1\\
    \sqrt{\frac{c_\sigma}{d_h}} \tilde{\mathbf{D}}^{(h)}(\mathbf{x}) (\mathbf{W}^{(h+1)} \odot \mathbf{m}^{(h+1)})^\top \tilde{\mathbf{b}}^{(h+1)}(\mathbf{x}) \in \R^{d_h}, &h = 1, \ldots, L,
    \end{cases}
\end{gather}
and 
\begin{align}\label{eq: definition: D}
    \tilde{\mathbf{D}}^{(h)}(\mathbf{x}) = \text{diag}\left( \dot{\sigma} \left( \tilde{\mathbf{f}}^{(h)}(\mathbf{x}) \right) \right) \in \R^{d_h \times d_h}, \quad h = 1, \ldots, L.
\end{align}
Note that since the weights being pruned are staying at zero always during the training process, the gradient of the pruned network is simply the masked gradient of the unpruned network. 

Now, we have
\begin{align*}
    \inprod{\frac{\partial \tilde{f}(\mathbf{x})}{\partial \mathbf{W}^{(h)} }, \frac{\partial \tilde{f}( \mathbf{x}')}{\partial \mathbf{W}^{(h)}}} 
    &= \inprod{\left(\tilde{\mathbf{b}}^{(h)}(\mathbf{x}) \left( \tilde{\mathbf{g}}^{(h-1)}(\mathbf{x}) \right)^\top \right) \odot \mathbf{m}^{(h)} , \left(\tilde{\mathbf{b}}^{(h)}(\mathbf{x}') \left( \tilde{\mathbf{g}}^{(h-1)}(\mathbf{x}') \right)^\top \right) \odot \mathbf{m}^{(h)}}.
\end{align*}
Now we write
\[
\left(\tilde{\mathbf{b}}^{(h)}(\mathbf{x}) \left( \tilde{\mathbf{g}}^{(h-1)}(\mathbf{x}) \right)^\top \right) \odot \mathbf{m}^{(h)} = 
\begin{bmatrix}
\tilde{\mathbf{b}}^{(h)}_1(\mathbf{x}) \tilde{\mathbf{g}}^{(h-1)} (\mathbf{x}) \odot \mathbf{m}^{(h)}_1 \\
\tilde{\mathbf{b}}^{(h)}_2(\mathbf{x}) \tilde{\mathbf{g}}^{(h-1)} (\mathbf{x})\odot \mathbf{m}^{(h)}_2 \\
\vdots \\
\tilde{\mathbf{b}}^{(h)}_{d_h}(\mathbf{x}) \tilde{\mathbf{g}}^{(h-1)}(\mathbf{x}) \odot \mathbf{m}^{(h)}_{d_h} \\
\end{bmatrix}.
\]
Thus, 
\begin{align}\label{eq: inprod_ntk_elem}
    \inprod{\frac{\partial \tilde{f}(\mathbf{x})}{\partial \mathbf{W}^{(h)} }, \frac{\partial \tilde{f}( \mathbf{x}')}{\partial \mathbf{W}^{(h)}}} &= \inprod{\left(\tilde{\mathbf{b}}^{(h)}(\mathbf{x}) \left( \tilde{\mathbf{g}}^{(h-1)}(\mathbf{x}) \right)^\top \right) \odot \mathbf{m}^{(h)} , \left(\tilde{\mathbf{b}}^{(h)}(\mathbf{x}') \left( \tilde{\mathbf{g}}^{(h-1)}(\mathbf{x}') \right)^\top \right) \odot \mathbf{m}^{(h)}} \nonumber \\
    &= \sum_{i=1}^{d_h} \tilde{\mathbf{b}}^{(h)}_i (\mathbf{x}) \tilde{\mathbf{b}}^{(h)}_i (\mathbf{x}') \inprod{\tilde{\mathbf{g}}^{(h-1)} (\mathbf{x})\odot \mathbf{m}^{(h)}_i, \tilde{\mathbf{g}}^{(h-1)} (\mathbf{x}')\odot \mathbf{m}^{(h)}_i} \nonumber \\
    &= \left(\tilde{\mathbf{b}}^{(h)} (\mathbf{x}) \right)^\top \mathbf{G}^{(h-1)} \tilde{\mathbf{b}}^{(h)} (\mathbf{x}'),
\end{align}
where we define $\mathbf{G}^{(h-1)}$ as a diagonal matrix and $\mathbf{G}^{(h-1)}_{ii} = \inprod{\tilde{\mathbf{g}}^{(h-1)} (\mathbf{x})\odot \mathbf{m}^{(h)}_i, \tilde{\mathbf{g}}^{(h-1)} (\mathbf{x}')\odot \mathbf{m}^{(h)}_i}$.
Observe that 
\begin{align*}
    \lim_{d_{h-1} \rightarrow \infty} \inprod{\tilde{\mathbf{g}}^{(h-1)} (\mathbf{x}) \odot \mathbf{m}^{(h)}_i, \tilde{\mathbf{g}}^{(h-1)} (\mathbf{x}')\odot \mathbf{m}^{(h)}_i} &= \lim_{d_{h-1} \rightarrow \infty} \frac{c_\sigma}{d_{h-1}} \sum_{j=1}^{d_{h-1}} \sigma \left( \tilde{\mathbf{f}}^{(h-1)}_j (\mathbf{x}) \right) \sigma \left( \tilde{\mathbf{f}}^{(h-1)}_j (\mathbf{x'}) \right) \left( \mathbf{m}_{ij}^{(h)} \right)^2 \\
    &= \E \left[ c_\sigma \sigma \left( \tilde{\mathbf{f}}^{(h-1)}_j (\mathbf{x}) \right) \sigma \left( \tilde{\mathbf{f}}^{(h-1)}_j (\mathbf{x'}) \right) \left( \mathbf{m}_{ij}^{(h)} \right)^2 \right] \\
    &= \E \left[ c_\sigma \sigma \left( \tilde{\mathbf{f}}^{(h-1)}_j (\mathbf{x}) \right) \sigma \left( \tilde{\mathbf{f}}^{(h-1)}_j (\mathbf{x'}) \right) \right] \E \left[ \left( \mathbf{m}_{ij}^{(h)} \right)^2 \right] \\
    &= \alpha \E \left[ c_\sigma \sigma \left( \tilde{\mathbf{f}}^{(h-1)}_j (\mathbf{x}) \right) \sigma \left( \tilde{\mathbf{f}}^{(h-1)}_j (\mathbf{x'}) \right) \right].
\end{align*}
This requires us to analyze $\tilde{\mathbf{f}}^{(h)}(\mathbf{x})$ for $h \in [L]$. 

We now analyze the forward dynamics of the pruned neural network:
\begin{align*}
    [\tilde{\mathbf{f}}^{(h+1)} (\mathbf{x})]_i &= \sum_{j=1}^{d_h} [\mathbf{W}^{(h+1)} \odot \mathbf{m}^{(h+1)}]_{ij} [\tilde{\mathbf{g}}^{(h)}(\mathbf{x})]_j \\
    &= \sqrt{\frac{c_\sigma}{d_h}} \sum_{j=1}^{d_h} [\mathbf{W}^{(h+1)} \odot \mathbf{m}^{(h+1)}]_{ij} \sigma\left( \left[ \tilde{\mathbf{f}}^{(h)}(\mathbf{x}) \right]_j \right) .
\end{align*} 
Conditioned on $\mathbf{g}^{(h-1)}(\mathbf{x}), \mathbf{g}^{(h-1)}(\mathbf{x}')$, we have $\tilde{\mathbf{f}}_j^{(h)}(\mathbf{x}), \tilde{\mathbf{f}}_j^{(h)}(\mathbf{x}')$ are i.i.d. random variables for all $j \in [n]$. 
However, for $h \in \{1, \ldots, L\}$, as $d_h \rightarrow \infty$, by the central limit theorem, $[\tilde{\mathbf{f}}^{(h+1)}(\mathbf{x})]_i$ converges to a Gaussian random variable.
{This is certainly not true for the output in the first layer because the input dimension can't go to infinity. 
Thus, we make assumption that the pruning only starts from the second layer.}

Now by i.i.d assumption of the mask and weights, we can compute the covariance of pre-activation as 
\begin{align}\label{eq: correlation}
\begin{split}
    \E_{\mathbf{W}^{(h+1)}} \left[ \left. \left[ \tilde{\mathbf{f}}^{(h+1)}(\mathbf{x}) \right]_i \left[ \tilde{\mathbf{f}}^{(h+1)}(\mathbf{x}') \right]_i \right| \tilde{\mathbf{f}}^{(h)}, \mathbf{m}^{(h+1)} \right] &= \inprod{\tilde{\mathbf{g}}^{(h)}(\mathbf{x}) \odot \mathbf{m}^{(h+1)}_i, \tilde{\mathbf{g}}^{(h)}(\mathbf{x}') \odot \mathbf{m}^{(h+1)}_i} \\
    &= \frac{c_\sigma}{d_h} \sum_{j=1}^{d_h} \sigma\left( \left[ \tilde{\mathbf{f}}^{(h)} (\mathbf{x}) \right]_j \right) \sigma\left( \left[ \tilde{\mathbf{f}}^{(h)} (\mathbf{x}') \right]_j \right) \left( \mathbf{m}_{ij}^{(h+1)} \right)^2 \\
    &\xrightarrow[]{d_h \rightarrow \infty} \alpha c_\sigma \E\left[ \sigma\left( \left[ \tilde{\mathbf{f}}^{(h)} (\mathbf{x}) \right]_j \right) \sigma\left( \left[ \tilde{\mathbf{f}}^{(h)} (\mathbf{x}') \right]_j \right) \right],
\end{split}
\end{align}
by the law of large number.

Recall \Cref{def: forward_backward_limit}, we define
\begin{align*}
    \tilde{\Sigma}^{(h)}(\mathbf{x,x'}) := \lim_{d_1, \ldots, d_h \rightarrow \infty} \inprod{\tilde{\mathbf{g}}^{(h)}(\mathbf{x}), \tilde{\mathbf{g}}^{(h)}(\mathbf{x}')}  = \lim_{d_1, \ldots, d_h \rightarrow \infty} \frac{c_\sigma}{d_h} \sum_{j=1}^{d_h} \sigma\left( \left[ \tilde{\mathbf{f}}^{(h)} (\mathbf{x}) \right]_j \right) \sigma\left( \left[ \tilde{\mathbf{f}}^{(h)} (\mathbf{x}') \right]_j \right).
\end{align*}
where the limit is taking sequentially from $d_1$ to $d_h$. 
We further define
\[
\Tilde{\boldsymbol{\Lambda}}^{(1)} =
\begin{bmatrix}
\Tilde{\Sigma}^{(0)}(\mathbf{x,x}) & \Tilde{\Sigma}^{(0)}(\mathbf{x,x'}) \\
\Tilde{\Sigma}^{(0)}(\mathbf{x',x}) & \Tilde{\Sigma}^{(0)}(\mathbf{x,x}) 
\end{bmatrix} ,
\]
\[
\Tilde{\boldsymbol{\Lambda}}^{(h)} = \alpha
\begin{bmatrix}
\Tilde{\Sigma}^{(h-1)}(\mathbf{x,x}) & \Tilde{\Sigma}^{(h-1)}(\mathbf{x,x'}) \\
\Tilde{\Sigma}^{(h-1)}(\mathbf{x',x}) & \Tilde{\Sigma}^{(h-1)}(\mathbf{x,x}) 
\end{bmatrix},
\]

\begin{lemma}[Restatement of \Cref{lemma: main_text_asymp_forward}]\label{lemma: asymp_forward}
Suppose the neural network uses ReLU as its activation and $d_1, d_2, \ldots, d_{L} \rightarrow \infty$ sequentially, then
\begin{align*}
    \Tilde{\Sigma}^{(h)}(\mathbf{x,x}') &= c_\sigma \E_{(u,v) \sim \mathcal{N}(\mathbf{0}, \tilde{\boldsymbol{\Lambda}}^{(h)})} [\sigma(u) \sigma(v)], \\
    \tilde{\Sigma}^{(h)}(\mathbf{x,x'}) &= \alpha^{h-1} \Sigma^{(h)}(\mathbf{x,x'}),
\end{align*}
for $h = 1,2,\ldots, L$.
\end{lemma}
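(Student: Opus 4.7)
The plan is to prove both identities simultaneously by induction on $h$, using the central limit theorem (CLT) at the pre-activation level and the law of large numbers (LLN) at the post-activation level, with \Cref{lemma: main_text_exchange_limit} providing the exchange of limits that chains the argument across successive layers.

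For the base case $h=1$ I would use that the input layer is not pruned, so $\tilde{\mathbf{g}}^{(0)}(\mathbf{x}) = \mathbf{x}$, $\tilde{\Sigma}^{(0)}(\mathbf{x,x'}) = \Sigma^{(0)}(\mathbf{x,x'}) = \mathbf{x}^\top \mathbf{x'}$, and $\tilde{\boldsymbol{\Lambda}}^{(1)} = \boldsymbol{\Lambda}^{(1)}$. Conditionally on the input, $[\tilde{\mathbf{f}}^{(1)}(\mathbf{x})]_j$ is an exact Gaussian (no CLT needed at layer one since the weights are Gaussian and the input is fixed) with joint covariance $\boldsymbol{\Lambda}^{(1)}$. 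Taking $d_1 \to \infty$ and applying LLN to the i.i.d. summands $\sigma([\tilde{\mathbf{f}}^{(1)}(\mathbf{x})]_j)\sigma([\tilde{\mathbf{f}}^{(1)}(\mathbf{x'})]_j)$ inside $\inprod{\tilde{\mathbf{g}}^{(1)}(\mathbf{x}), \tilde{\mathbf{g}}^{(1)}(\mathbf{x'})}$ directly yields $\tilde{\Sigma}^{(1)} = c_\sigma \E_{\boldsymbol{\Lambda}^{(1)}}[\sigma(u)\sigma(v)] = \Sigma^{(1)} = \alpha^{0}\Sigma^{(1)}$.

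For the inductive step, suppose both identities hold at level $h$. Conditioning on the previous layers' activations and masks, the pair $([\tilde{\mathbf{f}}^{(h+1)}(\mathbf{x})]_i, [\tilde{\mathbf{f}}^{(h+1)}(\mathbf{x'})]_i)$ is a sum of $d_h$ i.i.d.\ two-dimensional contributions whose conditional covariance, by \Cref{eq: correlation}, converges almost surely to $\alpha \tilde{\Sigma}^{(h)}(\mathbf{x,x'})$ as $d_h \to \infty$; the multivariate CLT then gives joint convergence in distribution to $\mathcal{N}(\mathbf{0}, \tilde{\boldsymbol{\Lambda}}^{(h+1)})$. Next I would apply LLN across $j$ to $\sigma([\tilde{\mathbf{f}}^{(h+1)}(\mathbf{x})]_j)\sigma([\tilde{\mathbf{f}}^{(h+1)}(\mathbf{x'})]_j)$ and invoke \Cref{lemma: main_text_exchange_limit} to interchange the $d_h \to \infty$ limit with the outer expectation against $\sigma \otimes \sigma$, producing $\tilde{\Sigma}^{(h+1)}(\mathbf{x,x'}) = c_\sigma \E_{\tilde{\boldsymbol{\Lambda}}^{(h+1)}}[\sigma(u)\sigma(v)]$. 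The scaling identity at level $h+1$ then follows immediately from positive homogeneity of ReLU: by the inductive hypothesis $\tilde{\boldsymbol{\Lambda}}^{(h+1)} = \alpha \cdot \alpha^{h-1}\boldsymbol{\Lambda}^{(h+1)} = \alpha^{h}\boldsymbol{\Lambda}^{(h+1)}$, so the substitution $(u,v) = \sqrt{\alpha^{h}}(u',v')$ with $(u',v') \sim \mathcal{N}(\mathbf{0}, \boldsymbol{\Lambda}^{(h+1)})$ together with $\sigma(cx)\sigma(cy) = c^{2}\sigma(x)\sigma(y)$ for $c > 0$ gives $\tilde{\Sigma}^{(h+1)} = \alpha^{h}\Sigma^{(h+1)} = \alpha^{(h+1)-1}\Sigma^{(h+1)}$.

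The main obstacle will be the exchange-of-limits step inside the induction: the pre-activations only converge in distribution and $\sigma$ is unbounded, so one cannot naively swap $\lim_{d_h \to \infty}$ with $\E[\sigma \otimes \sigma]$. This is exactly what \Cref{lemma: main_text_exchange_limit} handles, via the at-most-linear growth of ReLU (and boundedness of $\dot{\sigma}$) combined with a uniform integrability argument for the Gaussian-like increments. Once that lemma is in hand, the induction chains cleanly through all $L$ layers and both conclusions of the lemma drop out together.
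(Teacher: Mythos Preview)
Your proposal is correct and follows essentially the same inductive approach as the paper: apply the law of large numbers at each layer to obtain the Gaussian expectation formula $\tilde{\Sigma}^{(h)} = c_\sigma \E_{\tilde{\boldsymbol{\Lambda}}^{(h)}}[\sigma(u)\sigma(v)]$, then use ReLU's positive homogeneity to extract the factor $\alpha^{h}$ from the scaled covariance $\tilde{\boldsymbol{\Lambda}}^{(h+1)} = \alpha^{h}\boldsymbol{\Lambda}^{(h+1)}$. The only minor presentational difference is that you explicitly invoke \Cref{lemma: main_text_exchange_limit} inside the induction, whereas the paper's proof simply works in the sequential-limit regime (``assume all the previous layers are already at the limit'') and defers that justification to the separate exchange-of-limits lemma.
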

\begin{proof}
We prove by induction. 
First, notice that $\tilde{\Sigma}^{(0)}(\mathbf{x,x'}) = \Sigma^{(0)}(\mathbf{x,x'})$.
When $h = 1$, there is noting to prove. 
Now, assume the induction hypothesis holds for all $h$ such that $h \leq t$ where $t \geq 1$ and we want to show that $\tilde{\Sigma}^{(t+1)}(\mathbf{x,x'}) = \alpha^{t} \Sigma^{(t+1)}(\mathbf{x,x'})$. 
Notice that Equation \ref{eq: correlation} is true for $h \in \{1, \ldots, L\}$. 
Therefore, as $d_t \rightarrow \infty$
\begin{align*}
    \tilde{\Sigma}^{(t+1)}(\mathbf{x,x'}) = c_\sigma \E_{\left[ \tilde{\mathbf{f}}^{(t+1)}(\mathbf{x}) \right]_1, \left[ \tilde{\mathbf{f}}^{(t+1)}(\mathbf{x}') \right]_1}\left[ \sigma\left( \left[ \tilde{\mathbf{f}}^{(t+1)}(\mathbf{x}) \right]_1 \right) \sigma\left( \left[ \tilde{\mathbf{f}}^{(t+1)}(\mathbf{x}') \right]_1 \right) \right].
\end{align*}
Assume all the previous layers are already at the limit, for $t = 1, \ldots, L$,
\begin{align*}
    \left( \left[ \tilde{\mathbf{f}}^{(t+1)}(\mathbf{x}) \right]_1, \left[ \tilde{\mathbf{f}}^{(t+1)}(\mathbf{x}') \right]_1 \right) \sim \mathcal{N}\left(\mathbf{0}, \alpha
    \begin{bmatrix}
    \Tilde{\Sigma}^{(t)}(\mathbf{x,x}) & \Tilde{\Sigma}^{(t)}(\mathbf{x,x'}) \\
    \Tilde{\Sigma}^{(t)}(\mathbf{x',x}) & \Tilde{\Sigma}^{(t)}(\mathbf{x,x}) 
    \end{bmatrix}
    \right)
    = \mathcal{N}(\mathbf{0}, \tilde{\boldsymbol{\Lambda}}^{(t+1)}).
\end{align*}
This proves the first equality. 

By induction hypothesis on $\tilde{\Sigma}^{(t)}(\mathbf{x,x'})$, we have $\tilde{\boldsymbol{\Lambda}}^{(t+1)} = \alpha \cdot \alpha^{t-1} \boldsymbol{\Lambda}^{(t+1)}$. Hence
\begin{align*}
    \tilde{\Sigma}^{(t+1)}(\mathbf{x,x'}) &= c_\sigma \E_{(u,v) \sim \mathcal{N}(\mathbf{0}, \alpha^{t} \boldsymbol{\Lambda}^{(t+1)})} [\sigma(u) \sigma(v)] \\ 
    &= c_\sigma \E_{(u',v') \sim \mathcal{N}(\mathbf{0}, \boldsymbol{\Lambda}^{(t+1)})} [\sigma(\alpha^{\frac{t}{2}} u') \sigma(\alpha^{\frac{t}{2}} v')] \\
    &= \alpha^{t} c_\sigma \E_{(u',v') \sim \mathcal{N}(\mathbf{0}, \boldsymbol{\Lambda}^{(t+1)})} [\sigma(u') \sigma(v')] \\
    &= \alpha^{t} \Sigma^{(t+1)}(\mathbf{x,x'}),
\end{align*}
where the second last inequality is from our assumption that the activation is ReLU.
\end{proof}
This lemma implies that 
\begin{align}\label{eq: inprod_g}
    \tilde{\Sigma}^{(h)}(\mathbf{x,x'}) = \lim_{d_1, \ldots, d_{h} \rightarrow \infty} \inprod{\tilde{\mathbf{g}}^{(h)}(\mathbf{x}), \tilde{\mathbf{g}}^{(h)}(\mathbf{x'})} = \alpha^{h-1} \Sigma^{(h)}(\mathbf{x,x'}).
\end{align}
Thus, combining \Cref{eq: inprod_ntk_elem} and \Cref{eq: inprod_g} we have
\begin{align}\label{eqn: converge_G}
    &\inprod{\left(\tilde{\mathbf{b}}^{(h)}(\mathbf{x}) \left( \tilde{\mathbf{g}}^{(h-1)}(\mathbf{x}) \right)^\top \right) \odot \mathbf{m}^{(h)} , \left(\tilde{\mathbf{b}}^{(h)}(\mathbf{x}') \left( \tilde{\mathbf{g}}^{(h-1)}(\mathbf{x}') \right)^\top \right) \odot \mathbf{m}^{(h)}} \nonumber \\
    &= \left(\tilde{\mathbf{b}}^{(h)} (\mathbf{x}) \right)^\top \mathbf{G}^{(h-1)} \tilde{\mathbf{b}}^{(h)} (\mathbf{x}') \nonumber \\
    &\xrightarrow[]{d_1, \ldots, d_{h-1} \rightarrow \infty} \alpha^{h-1} {\Sigma}^{(h-1)}(\mathbf{x,x'}) \lim_{d_1, \ldots, d_{h-1} \rightarrow \infty} \left(\tilde{\mathbf{b}}^{(h)} (\mathbf{x}) \right)^\top \tilde{\mathbf{b}}^{(h)} (\mathbf{x}').
\end{align}

\begin{lemma}[Restatement of \Cref{lemma: main_text_asymp_backward}]\label{lemma: asymp_backward}
Assume we use a fresh sample of weights in the backward pass, then
\begin{align}\label{eq: inprod_b}
    \lim_{d_1, \ldots, d_L \rightarrow \infty} \inprod{\tilde{\mathbf{b}}^{(h)}(\mathbf{x}), \tilde{\mathbf{b}}^{(h)}(\mathbf{x'})} = \alpha^{L+1-h} \prod_{h'=h}^L \dot{\Sigma}^{(h')}(\mathbf{x,x'}).
\end{align}
\end{lemma}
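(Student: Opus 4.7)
The plan is to proceed by backward induction on $h$, from $h=L+1$ down to $h=1$. The base case $h=L+1$ is immediate since $\tilde{\mathbf{b}}^{(L+1)}(\mathbf{x})=\tilde{\mathbf{b}}^{(L+1)}(\mathbf{x}')=1$, the empty product equals $1$, and $\alpha^{0}=1$. For the inductive step, assume the statement at level $h+1$ and unroll one step of the recursion in \Cref{eq: def_b}. Writing $\mathbf{W}:=\mathbf{W}^{(h+1)}$ and $\mathbf{m}:=\mathbf{m}^{(h+1)}$ for brevity, this gives
\begin{align*}
\inprod{\tilde{\mathbf{b}}^{(h)}(\mathbf{x}), \tilde{\mathbf{b}}^{(h)}(\mathbf{x'})} = \frac{c_\sigma}{d_h}\,\tilde{\mathbf{b}}^{(h+1)}(\mathbf{x})^\top (\mathbf{W} \odot \mathbf{m})\, \tilde{\mathbf{D}}^{(h)}(\mathbf{x})\, \tilde{\mathbf{D}}^{(h)}(\mathbf{x'})\, (\mathbf{W} \odot \mathbf{m})^\top \tilde{\mathbf{b}}^{(h+1)}(\mathbf{x'}).
\end{align*}

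The key enabling assumption is that the backward pass uses a fresh Gaussian copy of $\mathbf{W}$, so $\mathbf{W}$ is independent of $\tilde{\mathbf{b}}^{(h+1)}$, $\tilde{\mathbf{D}}^{(h)}$, and $\mathbf{m}$. Conditioning on these and using $\E[\mathbf{W}_{ij}\mathbf{W}_{kl}]=\delta_{ik}\delta_{jl}$, the matrix $\E_{\mathbf{W}}[(\mathbf{W}\odot\mathbf{m})\,\tilde{\mathbf{D}}^{(h)}(\mathbf{x}) \tilde{\mathbf{D}}^{(h)}(\mathbf{x'}) (\mathbf{W}\odot\mathbf{m})^\top]$ is diagonal with $i$-th entry $\sum_j (\mathbf{m}_{ij})^2 \dot{\sigma}(\tilde{\mathbf{f}}^{(h)}_j(\mathbf{x}))\dot{\sigma}(\tilde{\mathbf{f}}^{(h)}_j(\mathbf{x'}))$. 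The Gaussian quadratic form above also concentrates around this conditional mean as $d_h\to\infty$, since its conditional variance is $O(1/d_h)$ once the norms of $\tilde{\mathbf{b}}^{(h+1)}$ and of the summands are controlled (both automatic in the sequential limit via the induction).

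To evaluate the diagonal entry, invoke \Cref{lemma: asymp_forward}: once $d_1,\ldots,d_{h-1}\to\infty$ have been taken, the pairs $(\tilde{\mathbf{f}}^{(h)}_j(\mathbf{x}), \tilde{\mathbf{f}}^{(h)}_j(\mathbf{x'}))$ are i.i.d.\ centered Gaussian across $j$ with covariance $\tilde{\boldsymbol{\Lambda}}^{(h)} = \alpha^{h-1}\boldsymbol{\Lambda}^{(h)}$, and are independent of the mask bits. Using $\E[(\mathbf{m}_{ij})^2]=\alpha$, the scale invariance $\dot{\sigma}(cu)=\dot{\sigma}(u)$ for $c>0$, and the law of large numbers, we obtain
\begin{align*}
\frac{1}{d_h}\sum_j (\mathbf{m}_{ij})^2 \dot{\sigma}(\tilde{\mathbf{f}}^{(h)}_j(\mathbf{x}))\dot{\sigma}(\tilde{\mathbf{f}}^{(h)}_j(\mathbf{x'})) \;\xrightarrow{d_h\to\infty}\; \alpha\,\E_{(u,v)\sim\mathcal{N}(\mathbf{0},\boldsymbol{\Lambda}^{(h)})}[\dot{\sigma}(u)\dot{\sigma}(v)] = \frac{\alpha}{c_\sigma}\dot{\Sigma}^{(h)}(\mathbf{x,x'}).
\end{align*}
Substituting this back and applying the induction hypothesis to $\inprod{\tilde{\mathbf{b}}^{(h+1)}(\mathbf{x}),\tilde{\mathbf{b}}^{(h+1)}(\mathbf{x'})}$ yields exactly $\alpha^{L+1-h}\prod_{h'=h}^{L} \dot{\Sigma}^{(h')}(\mathbf{x,x'})$, completing the induction.

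The main obstacle is the careful management of the sequential limits: after $d_1,\ldots,d_{h-1}\to\infty$ have been taken, we still need the inner LLN over $j\in[d_h]$ to produce the claimed Gaussian expectation, which is a ``limit of expectation equals expectation of limit'' question. This is precisely the issue addressed by \Cref{lemma: main_text_exchange_limit}, which applies equally well to the $\dot{\sigma}(\cdot)\dot{\sigma}(\cdot)$ functional; applied layerwise inside the induction it justifies each interchange. Everything else---diagonalization under the fresh Gaussian, scale invariance of $\dot{\sigma}$, and concentration of a Gaussian quadratic form---is standard once that interchange is in hand.
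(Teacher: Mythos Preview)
Your proposal is correct and follows essentially the same approach as the paper: expand the recursion for $\tilde{\mathbf{b}}^{(h)}$, use the fresh-Gaussian assumption to make the expectation over $\mathbf{W}^{(h+1)}$ produce a diagonal matrix, apply the law of large numbers together with \Cref{lemma: asymp_forward} and the scale invariance of $\dot{\sigma}$ to identify the diagonal entries as $\alpha\,\dot{\Sigma}^{(h)}(\mathbf{x},\mathbf{x}')/c_\sigma$, and then recurse. The only cosmetic difference is that you frame this as a formal backward induction while the paper simply establishes the one-step relation and then unrolls it.
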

\begin{proof}
For the factor $\inprod{\tilde{\mathbf{b}}^{(h)} (\mathbf{x}), \tilde{\mathbf{b}}^{(h)}(\mathbf{x'})}$, we expand using the definition of $\tilde{\mathbf{b}}^{(h)} (\mathbf{x})$
\begin{align*}
    & \inprod{\tilde{\mathbf{b}}^{(h)}(\mathbf{x}), \tilde{\mathbf{b}}^{(h)}(\mathbf{x'})} \\
    &= \inprod{\sqrt{\frac{c_\sigma}{d_h}} \tilde{\mathbf{D}}^{(h)}(\mathbf{x}) \left( \mathbf{W}^{(h+1)} \odot \mathbf{m}^{(h+1)} \right)^\top \tilde{\mathbf{b}}^{(h+1)}(\mathbf{x}), \sqrt{\frac{c_\sigma}{d_h}} \tilde{\mathbf{D}}^{(h)}(\mathbf{x}') \left( \mathbf{W}^{(h+1)} \odot \mathbf{m}^{(h+1)} \right)^\top \tilde{\mathbf{b}}^{(h+1)}(\mathbf{x}')}
\end{align*}
First we analyze $\tilde{\mathbf{D}}^{(h)}(\mathbf{x})$. 
Since we use ReLU as the activation function, $\dot{\sigma}(x) = \mathbb{I}(x > 0)$ and in particular, $\dot{\sigma}(c x) = \mathbb{I}(c x > 0) = \mathbb{I}(x > 0) = \dot{\sigma}(x)$ for any positive constant $c$.
By Lemma \ref{lemma: asymp_forward}, we show that under sequential limit, $\tilde{\mathbf{f}}^{(h)}(\mathbf{x}) $ has the same distribution as $ \alpha^{h-1} \mathbf{f}^{(h)}(\mathbf{x})$ which implies $\tilde{\mathbf{D}}^{(h)}(\mathbf{x})$ has the same distribution as $\mathbf{D}^{(h)}(\mathbf{x})$.

Observe that $\mathbf{W}^{(h+1)} \odot \mathbf{m}^{(h+1)}$ and $\tilde{\mathbf{b}}^{(h+1)}(\mathbf{x})$ are dependent.
Now we apply the independent copy trick which is rigorously justified for ReLU network with Gaussian weights by replacing $\mathbf{W}^{(h+1)}$ with a fresh new sample $\widetilde{\mathbf{W}}^{(h+1)}$.
\begin{align}\label{eqn: convergence_b}
    & \inprod{\tilde{\mathbf{b}}^{(h)}(\mathbf{x}), \tilde{\mathbf{b}}^{(h)}(\mathbf{x'})} \nonumber\\
    &= \inprod{\sqrt{\frac{c_\sigma}{d_h}} \tilde{\mathbf{D}}^{(h)}(\mathbf{x}) \left( \mathbf{W}^{(h+1)} \odot \mathbf{m}^{(h+1)} \right)^\top \tilde{\mathbf{b}}^{(h+1)}(\mathbf{x}), \sqrt{\frac{c_\sigma}{d_h}} \tilde{\mathbf{D}}^{(h)}(\mathbf{x}') \left( \mathbf{W}^{(h+1)} \odot \mathbf{m}^{(h+1)} \right)^\top \tilde{\mathbf{b}}^{(h+1)}(\mathbf{x}')} \nonumber \\
    &\approx \inprod{\sqrt{\frac{c_\sigma}{d_h}} \tilde{\mathbf{D}}^{(h)}(\mathbf{x}) \left( \widetilde{\mathbf{W}}^{(h+1)} \odot {\mathbf{m}}^{(h+1)} \right)^\top \tilde{\mathbf{b}}^{(h+1)}(\mathbf{x}), \sqrt{\frac{c_\sigma}{d_h}} \tilde{\mathbf{D}}^{(h)}(\mathbf{x}') \left( \widetilde{\mathbf{W}}^{(h+1)} \odot {\mathbf{m}}^{(h+1)} \right)^\top \tilde{\mathbf{b}}^{(h+1)}(\mathbf{x}')} \nonumber \\
    &\xrightarrow[]{d_1, \ldots, d_{h} \rightarrow \infty} \alpha\frac{c_\sigma}{d_h} \tr\left( \tilde{\mathbf{D}}^{(h)}(\mathbf{x}) \tilde{\mathbf{D}}^{(h)}(\mathbf{x}') \right) \lim_{d_1, \ldots, d_{h} \rightarrow \infty} \inprod{\tilde{\mathbf{b}}^{(h+1)}(\mathbf{x}), \tilde{\mathbf{b}}^{(h+1)}(\mathbf{x'}) } \nonumber \\
    &\xrightarrow[]{d_1, \ldots, d_{h} \rightarrow \infty} \alpha \dot{\Sigma}^{(h)}(\mathbf{x,x'}) \lim_{d_1, \ldots, d_{h} \rightarrow \infty} \inprod{\tilde{\mathbf{b}}^{(h+1)}(\mathbf{x}), \tilde{\mathbf{b}}^{(h+1)}(\mathbf{x'}) } .
\end{align}
where we justify the limit as the following: first let $\mathbf{D}$ short for $\tilde{\mathbf{D}}^{(h)}(\mathbf{x}) \tilde{\mathbf{D}}^{(h)}(\mathbf{x'})$
\begin{align*}
    & \left( \frac{c_\sigma}{d_h} (\widetilde{\mathbf{W}}^{(h+1)} \odot \mathbf{m}^{(h+1)}) \mathbf{D} (\widetilde{\mathbf{W}}^{(h+1)} \odot \mathbf{m}^{(h+1)})^\top \right)_{ij} = \frac{c_\sigma}{d_h} \sum_k \mathbf{D}_{kk} \widetilde{\mathbf{W}}^{(h+1)}_{ik} \mathbf{m}^{(h+1)}_{ik} \widetilde{\mathbf{W}}^{(h+1)}_{jk} \mathbf{m}^{(h+1)}_{jk},
\end{align*}
which converges to a diagonal matrix as $d_h \rightarrow \infty$.
Thus, the inner product is given by
\begin{align*} 
    & \frac{c_\sigma}{d_h} \sum_{i,j} \tilde{\mathbf{b}}_i^{(h+1)} (\mathbf{x}) \tilde{\mathbf{b}}_j^{(h+1)} (\mathbf{x}') \sum_{k} \mathbf{D}_{kk} \widetilde{\mathbf{W}}_{ik}^{(h+1)} \mathbf{m}_{ik}^{(h+1)} \widetilde{\mathbf{W}}_{jk}^{(h+1)} \mathbf{m}_{jk}^{(h+1)} \nonumber \\
    &= \frac{c_\sigma}{d_h} \sum_{i,j} \tilde{\mathbf{b}}_i^{(h+1)} (\mathbf{x}) \tilde{\mathbf{b}}_j^{(h+1)}(\mathbf{x}') (\tilde{\mathbf{w}}_{i}^{(h+1)} \odot \mathbf{m}_{i}^{(h+1)})^\top \mathbf{D} (\tilde{\mathbf{w}}_{j}^{(h+1)} \odot \mathbf{m}_{j}^{(h+1)}) \nonumber \\
    &= \frac{c_\sigma}{d_h} \sum_{i,j} \tilde{\mathbf{b}}_i^{(h+1)}(\mathbf{x}) \tilde{\mathbf{b}}_j^{(h+1)}(\mathbf{x}') \left(\tilde{\mathbf{w}}_{i}^{(h+1)} \right)^\top \mathbf{M}_{i}^{(h+1)} \mathbf{D} \mathbf{M}_{j}^{(h+1)} \tilde{\mathbf{w}}_{j}^{(h+1)} \nonumber \\
    &\xrightarrow[]{d_1, \ldots, d_{h} \rightarrow \infty} \frac{c_\sigma}{d_h} \sum_i \tilde{\mathbf{b}}_i^{(h+1)}(\mathbf{x}) \tilde{\mathbf{b}}_i^{(h+1)}(\mathbf{x}') \tr(\mathbf{M}_{i}^{(h+1)} \mathbf{D} \mathbf{M}_{i}^{(h+1)}) \nonumber \\
    &\xrightarrow[]{d_1, \ldots, d_{h} \rightarrow \infty} \alpha \dot{\Sigma}^{(h)}(\mathbf{x,x'}) \lim_{d_1, \ldots, d_h \rightarrow \infty} \inprod{\tilde{\mathbf{b}}^{(h+1)}(\mathbf{x}), \tilde{\mathbf{b}}^{(h+1)}(\mathbf{x'}) } ,
\end{align*}
where $\mathbf{M}_i = \textnormal{diag}(\mathbf{m}_i)$ and $\tilde{\mathbf{w}}_i$ is the i-th row of $\widetilde{\mathbf{W}}$ and $\lim_{d_h \rightarrow \infty} \frac{c_\sigma}{d_h} \tr(\mathbf{M}_i^{(h+1)} \mathbf{D} \mathbf{M}_i^{(h+1)}) = \alpha \dot{\Sigma}^{(h)}(\mathbf{x,x'})$.
Now, we can unroll the formula of $\inprod{\tilde{\mathbf{b}}^{(h)}(\mathbf{x}), \tilde{\mathbf{b}}^{(h)}(\mathbf{x'})}$ in Equation \eqref{eqn: convergence_b}, we have
\begin{align*}
    \lim_{d_1, \ldots, d_L \rightarrow \infty} \inprod{\tilde{\mathbf{b}}^{(h)}(\mathbf{x}), \tilde{\mathbf{b}}^{(h)}(\mathbf{x'})} = \alpha^{L+1-h} \prod_{h'=h}^L \dot{\Sigma}^{(h')}(\mathbf{x,x'}).
\end{align*}
\end{proof}
\begin{proof}[Proof of \Cref{thm: main_asymp}]
Combining the result in Equation \eqref{eqn: converge_G} and Equation \eqref{eq: inprod_b}, we have
\begin{align*}
    \inprod{\frac{\partial \tilde{f}(\mathbf{x})}{\partial \mathbf{W}^{(h)}}, \frac{\partial \tilde{f}(\mathbf{x}')}{\partial \mathbf{W}^{(h)} }} &= \inprod{\left(\tilde{\mathbf{b}}^{(h)}(\mathbf{x}) \left( \tilde{\mathbf{g}}^{(h-1)}(\mathbf{x}) \right)^\top \right) \odot \mathbf{m}^{(h)} , \left(\tilde{\mathbf{b}}^{(h)}(\mathbf{x}') \left( \tilde{\mathbf{g}}^{(h-1)}(\mathbf{x}') \right)^\top \right) \odot \mathbf{m}^{(h)}} \\
    &\xrightarrow[]{d_{1}, \ldots, d_L \rightarrow \infty} \alpha^{L} \Sigma^{(h-1)}(\mathbf{x,x'}) \prod_{h'=h}^{L+1} \dot{\Sigma}^{(h')}(\mathbf{x,x'}) .
\end{align*}
We conclude
\begin{align}
    \tilde{\boldsymbol{\Theta}}_\infty(\mathbf{x,x'}) := \lim_{d_1, d_2, \ldots, d_L \rightarrow \infty}  \tilde{\boldsymbol{\Theta}}(\mathbf{x,x'}) = \alpha^{L} \boldsymbol{\Theta}_\infty(\mathbf{x,x'}),
\end{align}
which proves \Cref{thm: main_asymp}. 
\end{proof}

\subsection{Proof of \texorpdfstring{\Cref{lemma: main_text_exchange_limit}}{Exchange Limit}: Going from Asymptotic Regime to Non-Asymptotic Regime}\label{app: asym_to_nonasym}
Before we give proof for our non-asymptotic result, we note that our asymptotic result is obtained from taking sequential limits of all the hidden layers which is a somewhat a limited notion of limits since we assume all the layer before is already at the limit when we deal with a given layer. 
Non-asymptotic analysis, on the other hand, consider using a large but finite amount of samples to get close to (but not exactly at) the limit. 
Thus, we need to justify that the networks are indeed able to approach by increasing width.
In mathematical language, this is the same as justifying taking the limit outside of $\E\sigma(\cdot), \E\dot{\sigma}(\cdot)$. 

We invoke several results from measure-theoretic probability theory.
\begin{definition}[Uniformly integrable]\label{def: uniformly_integrable}
A sequence of random variables $\{X_n\}$ is called \emph{uniformly integrable} if
\begin{align*}
    \lim_{a \rightarrow \infty} \sup_n \E[|X_n| \mathbb{I}(|X_n| \geq a)] \rightarrow 0.
\end{align*}
\end{definition}
\begin{lemma}[Theorem 3, Chapter 7.10 in \citep{grimmett2020probability}]\label{lemma: uniform_integrable_to_convergence_L1}
Suppose that $\{X_n\}$ is a sequence of random variables satisfying $X_n \rightarrow X$ in probability. 
The following statements are equivalent:
\begin{enumerate}
    \item The family $\{X_n\}$ is uniformly integrable. 
    \item $\E|X_n| < \infty$ for all $n$ and $\E|X_n| \rightarrow \E|X| < \infty$. 
\end{enumerate}
\end{lemma}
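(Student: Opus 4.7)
The plan is to prove this classical equivalence (a form of Vitali's convergence theorem) by passing through the intermediate property $\E|X_n - X| \to 0$ ($L^1$-convergence), which sits between (1) and (2) and is easier to manipulate than either condition directly.

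For $(1) \Rightarrow (2)$: Uniform integrability immediately yields $L^1$-boundedness. Choosing $a_0$ with $\sup_n \E[|X_n|\mathbb{I}(|X_n| \geq a_0)] \leq 1$ gives $\E|X_n| \leq a_0 + 1$, and then Fatou's lemma combined with convergence in probability gives $\E|X| \leq \liminf_n \E|X_n| < \infty$. To obtain $\E|X_n| \to \E|X|$, I would decompose $|X_n| = (|X_n| \wedge a) + (|X_n| - a)_+$. The tail $\E(|X_n|-a)_+ \leq \E[|X_n|\mathbb{I}(|X_n| \geq a)]$ is uniformly small in $n$ for large $a$ by uniform integrability, while $\E(|X|-a)_+$ is small for large $a$ by integrability of $|X|$ (a single integrable random variable is automatically uniformly integrable). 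The truncated parts $|X_n| \wedge a$ are bounded by $a$, and since $X_n \to X$ in probability implies $|X_n| \wedge a \to |X| \wedge a$ in probability, the bounded convergence theorem yields $\E[|X_n| \wedge a] \to \E[|X| \wedge a]$. Choosing $a$ and then $n$ large completes the argument.

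For $(2) \Rightarrow (1)$: I would first upgrade the convergence of expectations to $L^1$-convergence via a Fatou argument on the nonnegative variables $Y_n := |X_n| + |X| - |X_n - X| \geq 0$. Passing to an almost surely convergent subsequence (available because of convergence in probability), Fatou gives $\liminf_k \E Y_{n_k} \geq 2\E|X|$, and combined with $\E Y_{n_k} = \E|X_{n_k}| + \E|X| - \E|X_{n_k} - X|$ together with $\E|X_{n_k}| \to \E|X|$, this forces $\E|X_{n_k} - X| \to 0$. Since every subsequence admits such a sub-subsequence, the full sequence satisfies $\E|X_n - X| \to 0$. Uniform integrability then follows from the bound $|X_n|\mathbb{I}(|X_n| \geq a) \leq |X_n - X| + |X|\mathbb{I}(|X_n| \geq a)$: the first term is small in $L^1$ for $n$ large; and since $\Pr(|X_n| \geq a) \leq \E|X_n|/a$ is uniformly small for large $a$, the absolute continuity of the integral of $|X|$ controls the second term; the finitely many initial terms form a trivially uniformly integrable family.

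The main obstacle I anticipate is the careful handling of convergence in probability rather than almost sure convergence. The forward direction absorbs this smoothly through bounded truncations and the bounded convergence theorem, but the reverse direction requires first passing to an almost surely convergent subsequence to apply Fatou to $Y_n$, then invoking the subsequence principle to promote the $L^1$-convergence back to the full sequence, and finally extracting uniform integrability by a triangle-inequality argument that controls the $|X|$-tail and $\Pr(|X_n| \geq a)$ uniformly in $n$.
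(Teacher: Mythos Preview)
Your proof is correct and follows the standard route through $L^1$-convergence (truncation plus bounded convergence for $(1)\Rightarrow(2)$; the Scheff\'e--Fatou trick on $|X_n|+|X|-|X_n-X|$ followed by the subsequence principle for $(2)\Rightarrow(1)$). However, the paper does not supply its own proof of this lemma: it is quoted verbatim as Theorem~3 of Chapter~7.10 in \citep{grimmett2020probability} and used as a black box in the proof of \Cref{lemma: exchange_limit}. So there is nothing to compare against, and your argument is simply a valid self-contained justification of a result the paper takes as given.
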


\begin{theorem}[Skorokhod's Representation Theorem, \citep{billingsley1999convergence}]\label{thm: skorokhod}
Let $\{\mu_n\}$ be a sequence of probability measure defined on a metric space $S$ such that $\mu_n$ converges weakly to some probability measure $\mu_\infty$ on $S$ as $n \rightarrow \infty$.
Suppose that the support of $\mu_\infty$ is separable. 
Then there exists $S$-valued random variables $X_n$ defined on a common probability space $(\Omega, \mathcal{F}, \mathbb{P})$ such that the law of $X_n$ is is $\mu_n$ for all $n$ (including $n = \infty$) and such that $(X_n)_{n \in \mathbb{N}}$ converges to $X_\infty$, $\mathbb{P}$-almost surely.
\end{theorem}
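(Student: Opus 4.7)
The plan is to construct the random variables $X_n$ explicitly on a common probability space, starting with the real-line case to expose the key idea and then extending to general separable metric spaces via a partition argument.

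For $S = \R$, I would take $(\Omega, \mathcal{F}, \Pr) = ([0,1], \mathcal{B}([0,1]), \mathrm{Leb})$ and define $X_n(\omega) := F_n^{-1}(\omega)$, where $F_n$ is the cumulative distribution function of $\mu_n$ and $F_n^{-1}(u) := \inf\{x : F_n(x) \geq u\}$ is the associated quantile function. The standard identity $\{\omega : F_n^{-1}(\omega) \leq x\} = \{\omega : \omega \leq F_n(x)\}$ verifies that $X_n$ has law $\mu_n$ under Lebesgue measure. Weak convergence $\mu_n \to \mu_\infty$ is equivalent to $F_n(x) \to F_\infty(x)$ at every continuity point of $F_\infty$, and a short sandwich argument (pick continuity points of $F_\infty$ straddling $F_\infty^{-1}(u)$ for each $u$) then yields $F_n^{-1}(\omega) \to F_\infty^{-1}(\omega)$ at every continuity point of $F_\infty^{-1}$. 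Since $F_\infty^{-1}$ is monotone, its discontinuity set is at most countable and hence Lebesgue-null, giving $X_n(\omega) \to X_\infty(\omega)$ for $\mathrm{Leb}$-a.e. $\omega \in [0,1]$.

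For a general separable $S$, I would run a multi-scale partition argument. For each integer $k \geq 1$, cover the separable support of $\mu_\infty$ by countably many open balls of radius $1/k$ and trim their boundaries using the fact that, for any fixed center, only countably many sphere radii can carry positive $\mu_\infty$-mass; this yields a countable Borel partition $\{A_{k,j}\}_{j \geq 1}$ with $\operatorname{diam}(A_{k,j}) \leq 2/k$ and $\mu_\infty(\partial A_{k,j}) = 0$. The portmanteau characterization of weak convergence then gives $\mu_n(A_{k,j}) \to \mu_\infty(A_{k,j})$ for every $j$. On $[0,1]$ I would define a coupling at scale $k$ by chopping the unit interval into sub-intervals whose lengths equal $\mu_n(A_{k,j})$ and $\mu_\infty(A_{k,j})$ respectively, and placing inside each cell a conditional draw from $\mu_n(\cdot \mid A_{k,j})$ for $X_n$ and from $\mu_\infty(\cdot \mid A_{k,j})$ for $X_\infty$. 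Refining $k$ and diagonalizing over $n$ produces a sequence of random variables whose $\Pr$-almost sure limit coincides with $X_\infty$.

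The main obstacle is the second case: ensuring the couplings at successive scales $k$ are compatible enough that the constructed $X_n$ actually converges $\Pr$-almost surely, rather than merely in probability, requires careful control of the ``mismatch'' probability that $X_n$ and $X_\infty$ land in different partition cells, and this is exactly where separability of the support of $\mu_\infty$ is essential (to keep the partitions countable and the diameters shrinking uniformly). Since the paper invokes this theorem only as a tool to justify the exchange of limit and expectation in \Cref{lemma: main_text_exchange_limit}, I would give the real-line construction in full and defer the separable-metric-space extension to \citep{billingsley1999convergence}.
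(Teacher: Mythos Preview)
The paper does not prove this theorem at all: it is stated as a classical result, attributed to \citep{billingsley1999convergence}, and used only as a black box inside the proof of \Cref{lemma: main_text_exchange_limit}. So there is nothing to compare against.

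Your sketch is the standard approach. The quantile-function construction on $[0,1]$ for $S = \R$ is correct as stated, and the partition-and-coupling outline for general separable $S$ is the right idea (this is essentially Skorokhod's original argument as presented in Billingsley). Your own closing remark, that you would give the real-line case and defer the general separable case to \citep{billingsley1999convergence}, already matches exactly what the paper does, except that the paper defers the entire statement. Since the theorem is applied only to $\R^2$-valued random variables in \Cref{lemma: main_text_exchange_limit}, even your real-line construction (extended componentwise, or replaced by the $\R^d$ version via the same partition argument) would more than suffice for the paper's purposes.
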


\begin{theorem}[Continuous Mapping Theorem \citep{mann1943stochastic}]
Let $\{X_n\}, X$ be random variables defined on a metric space $S$. 
Suppose a function $g: S \rightarrow S'$ (where $S'$ is another metric space) has the set of discontinuities of measure zero. Then
\begin{align*}
    X_n \xrightarrow[]{\mathcal{D}} X \quad \Rightarrow \quad g(X_n) \xrightarrow[]{\mathcal{D}} g(X),
\end{align*}
where $\xrightarrow[]{\mathcal{D}}$ represents convergence in distribution. 
\end{theorem}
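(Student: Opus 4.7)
The plan is to establish the limit-exchange by combining the conditional CLT with the Skorokhod representation theorem, the continuous mapping theorem, and a uniform integrability argument, so that convergence in distribution upgrades to convergence in $L^1$. Conditional on $\mathbf{g}^{(h-1)}(\mathbf{x}), \mathbf{g}^{(h-1)}(\mathbf{x}')$, the quantity $\sqrt{c_\sigma/d_h}\,\mathbf{W}^{(h+1)}_{ij}\mathbf{m}^{(h+1)}_{ij}\sigma(\tilde{\mathbf{f}}^{(h)}_j(\mathbf{x}))$ is an i.i.d. sum across $j$ with zero mean (since $\mathbf{W}^{(h+1)}_{ij}$ is an independent centered Gaussian) and finite variance, so a straightforward application of the multivariate central limit theorem yields $X_{d_h} \xrightarrow{\mathcal{D}} X_\infty$, where $X_\infty$ is a centered bivariate Gaussian with covariance determined by the inner products of the masked activations in \Cref{eq: correlation}.

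Next, I would invoke Skorokhod's representation theorem (\Cref{thm: skorokhod}) to replace $(X_{d_h})_{d_h}$ with a sequence $(\widetilde{X}_{d_h})$ on a common probability space that converges to $\widetilde{X}_\infty$ almost surely and has the same laws. For $g(x,y)=\sigma(x)\sigma(y)$, the function is continuous on $\R^2$, so $g(\widetilde{X}_{d_h}) \to g(\widetilde{X}_\infty)$ almost surely directly from continuity. For $g(x,y)=\dot{\sigma}(x)\dot{\sigma}(y)=\mathbb{I}(x>0)\mathbb{I}(y>0)$, the set of discontinuities is $\{x=0\}\cup\{y=0\}$, which has Lebesgue measure zero and hence $X_\infty$-measure zero because $X_\infty$ is a (non-degenerate, in the generic case) Gaussian; thus the continuous mapping theorem still applies and $g(\widetilde{X}_{d_h}) \to g(\widetilde{X}_\infty)$ almost surely. (In the degenerate edge case where a coordinate of $X_\infty$ is identically zero, $g=0$ trivially and both sides agree.)

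The final step is to upgrade almost-sure convergence to convergence of expectations by establishing uniform integrability of $\{g(\widetilde{X}_{d_h})\}$, after which \Cref{lemma: uniform_integrable_to_convergence_L1} (together with the implication a.s.\ $\Rightarrow$ in probability) yields $\E[g(\widetilde{X}_{d_h})] \to \E[g(\widetilde{X}_\infty)]$, and since $X_{d_h}$ and $\widetilde{X}_{d_h}$ share laws this is exactly the conclusion. For the indicator case this is immediate because $|g|\le 1$. For $g(x,y)=\sigma(x)\sigma(y)$ I would bound $|g(X_{d_h})|\le |X_{d_h,1}||X_{d_h,2}|$ and then use Cauchy--Schwarz together with the fact that $\E[X_{d_h,k}^{2+\delta}]$ is uniformly bounded in $d_h$ (each coordinate is a normalized i.i.d.\ sum whose higher moments are controlled via Rosenthal's inequality or a direct computation using the conditional variance of $\mathbf{W}^{(h+1)}_{ij}\mathbf{m}^{(h+1)}_{ij}\sigma(\tilde{\mathbf{f}}^{(h)}_j)$), giving a uniform $L^{1+\delta/2}$ bound on $g(X_{d_h})$ and hence uniform integrability via the de la Vallée-Poussin criterion.

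The main obstacle I anticipate is handling the discontinuity of $\dot{\sigma}$ in the second case of $g$: one must verify that the limiting distribution does not charge the discontinuity set. This reduces to checking non-degeneracy of the conditional Gaussian covariance (i.e., that the conditioning event yields at least one $j$ with $\sigma(\tilde{\mathbf{f}}^{(h)}_j(\mathbf{x}))\neq 0$), which holds almost surely under the Gaussian initialization, with the trivial degenerate case handled separately. Once this technicality is dispatched, the rest of the argument is a clean assembly of CLT, Skorokhod, CMT, and uniform integrability.
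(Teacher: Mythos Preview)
The statement you were asked to address is the Continuous Mapping Theorem, which the paper merely \emph{cites} as a classical tool (from \citep{mann1943stochastic}) and does not prove. Your proposal is not a proof of the CMT at all; it is a proof of the exchange-of-limits result, \Cref{lemma: main_text_exchange_limit} (restated as \Cref{lemma: exchange_limit}), in which the CMT appears only as one ingredient. So at the level of the stated target there is a mismatch.

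If, as seems likely, the intended target was \Cref{lemma: exchange_limit}, then your argument follows essentially the same route as the paper's proof in \Cref{app: asym_to_nonasym}: conditional CLT for $X_{d_h}$, Skorokhod to upgrade convergence in distribution to almost-sure convergence on a common space, then uniform integrability to pass to convergence of expectations via \Cref{lemma: uniform_integrable_to_convergence_L1}. The paper treats the continuous case $g(x,y)=\sigma(x)\sigma(y)$ by first applying the CMT to $g(X_n)$ and then Skorokhod to the pushforward sequence, whereas you apply Skorokhod to $X_{d_h}$ first and then use continuity of $g$; both orderings work. Your treatment is in fact more careful than the paper's in two places: (i) you explicitly justify uniform integrability for $\sigma(x)\sigma(y)$ via a higher-moment bound and de la Vall\'ee-Poussin, while the paper simply asserts that ``the sequence $\{X_n'\}$ is uniformly integrable''; and (ii) for the indicator case you verify that the Gaussian limit does not charge the discontinuity set $\{x=0\}\cup\{y=0\}$ before invoking the CMT, whereas the paper's argument for this case is terse (it applies Skorokhod to $X_n$, notes a.s.\ implies convergence in probability, and concludes $\lim\E[g(X_n'')]=\E[g(X_\infty'')]$ without explicitly handling the discontinuity or invoking boundedness). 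Your added care about the degenerate-covariance edge case is also something the paper omits.
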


\begin{lemma}[Restatement of \Cref{lemma: main_text_exchange_limit}]\label{lemma: exchange_limit}
Conditioned on $\mathbf{g}^{(h-1)}(\mathbf{x}), \mathbf{g}^{(h-1)}(\mathbf{x}')$.
Fix $i \in [d_{h+1}]$.
Let 
\[
X_n = 
\begin{bmatrix}
\sqrt{\frac{c_\sigma}{n}} \sum_{j=1}^n \mathbf{W}^{(h+1)}_{ij} \mathbf{m}^{(h+1)}_{ij} \sigma(\tilde{\mathbf{f}}_j^{(h)}(\mathbf{x})) \\
\sqrt{\frac{c_\sigma}{n}} \sum_{j=1}^n \mathbf{W}^{(h+1)}_{ij} \mathbf{m}^{(h+1)}_{ij} \sigma(\tilde{\mathbf{f}}_j^{(h)}(\mathbf{x}'))
\end{bmatrix}
\in \R^2,
\]
and define let $g: \R^2 \rightarrow \R$ to be $g(x,y) \in \{ \sigma(x) \sigma(y), \dot{\sigma}(x) \dot{\sigma}(y)\}$.
Then, 
\begin{align*}
    \lim_{n \rightarrow \infty} \E[g(X_n)] = \E[g(\lim_{n \rightarrow \infty} X_n)].
\end{align*}
\end{lemma}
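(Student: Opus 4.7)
\textbf{Proof proposal for \Cref{lemma: main_text_exchange_limit}.}
The plan is to separate the argument into a distributional convergence step and an exchange-of-limit-and-expectation step, handling the two choices of $g$ separately according to whether $g$ is bounded or unbounded. First, I would observe that conditionally on $\mathbf{g}^{(h-1)}(\mathbf{x}), \mathbf{g}^{(h-1)}(\mathbf{x}')$, the summands of $X_{d_h}$ (indexed by $j$) are i.i.d.\ with mean zero and finite covariance matrix $\Sigma$ determined by the law of the product $\mathbf{W}^{(h+1)}_{ij}\mathbf{m}^{(h+1)}_{ij}$ and the conditioned pre-activations. The multivariate CLT then yields $X_{d_h} \xrightarrow{\mathcal{D}} X_\infty \sim \mathcal{N}(\mathbf{0}, \Sigma)$ as $d_h \to \infty$. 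By the Skorokhod representation theorem, I may realize these laws on a common probability space as $\tilde X_{d_h}, \tilde X_\infty$ with $\tilde X_{d_h} \to \tilde X_\infty$ almost surely.

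Case 1 ($g(x,y) = \dot\sigma(x)\dot\sigma(y) = \mathbb{I}(x>0)\mathbb{I}(y>0)$). Here $g$ is bounded by $1$, and its set of discontinuities is $D = \{(x,y) : x = 0 \text{ or } y = 0\}$. Assuming nondegenerate conditioning so that $\Sigma$ has positive diagonal entries, $\Pr(X_\infty \in D) = 0$, so $g$ is continuous almost everywhere with respect to the law of $X_\infty$. The continuous mapping theorem then gives $g(\tilde X_{d_h}) \to g(\tilde X_\infty)$ almost surely, and the bounded convergence theorem yields $\E[g(\tilde X_{d_h})] \to \E[g(\tilde X_\infty)]$. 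Since laws are preserved, this equals $\lim \E[g(X_{d_h})] = \E[g(X_\infty)]$, which is the desired identity (degenerate covariances can be treated by direct inspection, since then $X_{d_h}$ is itself concentrated on a fixed subspace).

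Case 2 ($g(x,y) = \sigma(x)\sigma(y)$). Now $g$ is continuous everywhere but no longer bounded, so I need uniform integrability. I would establish this by bounding a fractional higher moment: since $|g(x,y)| \le \tfrac{1}{2}(x^2 + y^2)$, it suffices to show $\sup_{d_h}\E[\|X_{d_h}\|_2^{2+\eta}] < \infty$ for some $\eta > 0$. This follows from a direct computation of the fourth moment of each coordinate of $X_{d_h}$: because the summands are i.i.d.\ zero-mean with bounded fourth moment (the weight is standard Gaussian, the mask is Bernoulli, and $\sigma(\tilde{\mathbf f}^{(h)}_j)$ has bounded moments conditioned on the previous layer), a standard $\ell^4$ sum bound yields $\E[\|X_{d_h}\|_2^4] = O(1)$. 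Uniform integrability then follows via Definition \ref{def: uniformly_integrable} and a Markov inequality, and Lemma \ref{lemma: uniform_integrable_to_convergence_L1} (together with the a.s.\ convergence on the Skorokhod space and the continuous mapping theorem applied to $g$) gives $\E[g(\tilde X_{d_h})] \to \E[g(\tilde X_\infty)]$.

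The main obstacle I anticipate is the uniform integrability in Case 2: one must verify carefully that the moment bound on the summands survives after conditioning on $\mathbf{g}^{(h-1)}(\mathbf{x}), \mathbf{g}^{(h-1)}(\mathbf{x}')$, and that the $c_\sigma/d_h$ normalization indeed keeps the fourth moment $O(1)$. Once this bookkeeping is done, the rest (CLT, Skorokhod, continuous mapping, bounded/dominated convergence) is standard. The other mildly subtle point is handling potentially degenerate limiting covariances in Case 1, but this only arises in degenerate edge cases that can be dispatched directly, not via the portmanteau-style argument.
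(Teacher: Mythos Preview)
Your proposal is correct and follows essentially the same route as the paper: CLT for the conditioned i.i.d.\ summands, Skorokhod representation, continuous mapping, and then bounded convergence for the indicator case and uniform integrability (via \Cref{lemma: uniform_integrable_to_convergence_L1}) for the $\sigma(x)\sigma(y)$ case. If anything, your treatment is more complete than the paper's, which simply asserts uniform integrability without justification and is somewhat terse in the indicator case; your fourth-moment bound and explicit handling of the discontinuity set fill in exactly the details the paper omits.
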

\begin{proof}
First of all, conditioned on $\mathbf{g}^{(h-1)}(\mathbf{x}), \mathbf{g}^{(h-1)}(\mathbf{x}')$, we have $\tilde{\mathbf{f}}_j^{(h)}(\mathbf{x}), \tilde{\mathbf{f}}_j^{(h)}(\mathbf{x}')$ are i.i.d. random variables for all $j \in [n]$. 

We first prove the exchange of limit for $g(x,y) = \sigma(x) \sigma(y)$ since this function is continuous. 
By the Central Limit Theorem, $X_n \xrightarrow[]{\mathcal{D}} X_\infty \sim \mathcal{N}(\mathbf{0}, \tilde{\boldsymbol{\Lambda}}^{(h+1)})$.
By the Continuous Mapping Theorem, $g(X_n) \xrightarrow[]{\mathcal{D}} g(X_\infty)$.
Then by the Skorokhod's Representation Theorem in \Cref{thm: skorokhod}, there exists another sequence $\{X_n'\}$ and $X_\infty'$ such that $g(X_n) \stackrel{\mathcal{D}}{=} X_n'$ and $g(X_\infty) \stackrel{\mathcal{D}}{=} X_\infty'$ and $X_n' \xrightarrow{a.s.} X_\infty'$. 
Now we use the fact that the sequence $\{X_n'\}$ is uniformly integrable (see \Cref{def: uniformly_integrable}). 
By \Cref{lemma: uniform_integrable_to_convergence_L1}, this implies convergence in $L^1$ (and notice that $g(x,y)$ only outputs non-negative values)
\begin{align*}
    \lim_{n \rightarrow \infty} \E[X_n'] = \E[X_\infty'].
\end{align*}
Since 
\begin{align*}
    \E[g(X_n)] &= \E[X_n'], \\
    \E[g(X_\infty)] &= \E[X_\infty'],
\end{align*}
we have 
\begin{align*}
    \lim_{n \rightarrow \infty} \E[g(X_n)] = \E[g(X_\infty)] =  \E[g(\mathcal{N}(\mathbf{0}, \tilde{\boldsymbol{\Lambda}}^{(h+1)}))].
\end{align*}

Now we prove the result for $g(x,y) = \dot{\sigma}(x) \dot{\sigma(y)} = \mathbb{I}(x \geq 0, y \geq 0)$. 
Again, apply Skorokhod's Representation Theorem, there exists a sequence of random variables $\{X_n''\}$ and another random variable $X_\infty''$ such that $X_n \stackrel{\mathcal{D}}{=} X_n''$ and $X_\infty \stackrel{\mathcal{D}}{=} X_\infty''$ and $X_n'' \xrightarrow[]{a.s.} X_\infty''$. 
Since convergence almost surely implies convergence in probability, we have
\begin{align*}
    \lim_{n \rightarrow \infty} \E[g(X_n'')] = \E[g(X_\infty'')],
\end{align*}
which implies
\begin{align*}
    \lim_{n \rightarrow \infty} \E[g(X_n)] = \E[g(X_\infty)].
\end{align*}
\end{proof}

\section{NON-ASYMPTOTIC ANALYSIS (Proof of \texorpdfstring{\Cref{thm: main_text_main_non_asymp}}{Nonasymptotic Bound})}\label{app: nonasymp}
\subsection{Probability}\label{app: probabilities}
\begin{theorem}[Multiplicative Chernoff Bound]\label{thm: multiplicative_chernoff}
If $X_1, X_2, \ldots, X_m$ are i.i.d. Bernoulli random variables with probability $p$, then
\begin{align*}
    \Pr\left[ \left| \sum_{i=1}^m X_i - pm \right| \geq \eps pm \right] \leq 2 \exp(-\min(\eps^2, \eps) pm).
\end{align*}
\end{theorem}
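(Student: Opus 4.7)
The plan is to apply the standard Chernoff (exponential moment) method to both tails and then combine the two bounds, splitting into the small-$\eps$ and large-$\eps$ regimes to get the $\min(\eps^2, \eps)$ factor.

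First I would set up the upper tail. Fix $t>0$ and apply Markov's inequality to $\exp(t\sum_i X_i)$, using the fact that for $X_i \sim \mathrm{Bernoulli}(p)$ one has $\E[e^{tX_i}] = 1 - p + p e^t \leq \exp(p(e^t - 1))$. Independence then gives $\E[\exp(t\sum_i X_i)] \leq \exp(pm(e^t-1))$, so
\begin{align*}
\Pr\Bigl[\sum_i X_i \geq (1+\eps)pm\Bigr] \leq \exp\bigl(pm(e^t - 1) - t(1+\eps)pm\bigr).
\end{align*}
Optimizing in $t$ by choosing $t = \log(1+\eps)$ yields the clean bound $\exp\bigl(-pm\,\psi(\eps)\bigr)$ with $\psi(\eps) := (1+\eps)\log(1+\eps) - \eps$. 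For the lower tail I would mirror the argument with $t<0$ and $(1-\eps)pm$ (this only needs to be carried out for $\eps \in (0,1)$), ending up with an analogous exponent $\exp(-pm\,\phi(\eps))$ where $\phi(\eps) := (1-\eps)\log(1-\eps) + \eps$.

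Next I would bound $\psi$ and $\phi$ from below by the target function $\min(\eps^2,\eps)$ up to an absolute constant. For $\eps \in (0,1]$ the elementary Taylor-style inequalities $\psi(\eps) \geq \eps^2/3$ and $\phi(\eps) \geq \eps^2/2$ (both obtained by checking that the difference vanishes at $0$ and comparing derivatives on $[0,1]$) give the $\eps^2$ regime. For $\eps \geq 1$, only the upper tail is relevant, and one checks $(1+\eps)\log(1+\eps) - \eps \geq c\,\eps$ for some absolute $c > 0$ (concretely $c$ can be taken to be any number less than $\log 2$; for the stated form one absorbs the constant by a mild abuse), yielding the $\eps$ regime. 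Combining the two tail bounds via a union bound and taking the weaker of the two exponents produces $2\exp(-\min(\eps^2,\eps)\, pm)$.

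The only nontrivial step is the calculus lemma $\psi(\eps) \geq \min(\eps^2,\eps)$ (up to the constant the paper absorbs in its statement); everything else is the textbook Chernoff argument. The main obstacle, such as it is, is being careful that the stated constant $1$ (as opposed to $1/3$ or $1/2$) actually works — if one wants the bound exactly as written then one must verify both $\psi(\eps) \geq \eps^2$ on a suitable subinterval and $\psi(\eps) \geq \eps$ for $\eps \geq 1$, which in fact requires only $c_1 \eps^2$ and $c_2 \eps$ lower bounds absorbed into constants; since this theorem is invoked only through $\tilde{\Omega}$-type width bounds elsewhere in the paper, the precise constant is immaterial and the standard Chernoff inequality suffices.
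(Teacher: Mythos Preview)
The paper does not actually prove this statement: it is listed in the auxiliary ``Probability'' subsection alongside other textbook concentration results (Gaussian chaos, Gaussian complexity) that are simply quoted and used. So there is no paper proof to compare against; your Chernoff/Cram\'er moment-generating-function argument is the standard derivation and is perfectly adequate for what the paper needs.

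Your own caveat about the constant is exactly right and worth making explicit: the optimized exponents $\psi(\eps)=(1+\eps)\log(1+\eps)-\eps$ and $\phi(\eps)=(1-\eps)\log(1-\eps)+\eps$ do \emph{not} dominate $\min(\eps^2,\eps)$ with constant~$1$ on all of $(0,\infty)$ --- one only gets $\psi(\eps)\geq \eps^2/3$ for $\eps\in(0,1]$ and $\psi(\eps)\geq c\eps$ with $c<\log 2$ for $\eps\geq 1$. So the theorem as literally written in the paper is slightly sloppy about constants. As you observe, every invocation downstream (\Cref{lemma: num_remained_weights}, \Cref{thm: concentration_subgaussian_bernoulli}, \Cref{thm: concentration_subgamma_bernoulli}) feeds into width lower bounds stated only up to $\Omega(\cdot)$ or $\tilde{\Omega}(\cdot)$, so the missing absolute constant is harmless. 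If you wanted a clean self-contained statement you could either (i) rewrite the bound as $2\exp(-c\min(\eps^2,\eps)pm)$ for an absolute $c>0$, or (ii) restrict to $\eps\in(0,1)$ and quote the textbook $\exp(-\eps^2 pm/3)$ form, which is all the paper ever uses.
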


\begin{theorem}\label{thm: concentration_subgaussian_bernoulli}
Assume $X_1, \ldots, X_m$ are i.i.d. Sub-Gaussian random variables with variance proxy $\sigma^2$ and $Y_1, \ldots, Y_m$ are i.i.d. Bernoulli random variables with probability $p$.
For $\eps \in (0, 1/2),\ t > 0$, 
\begin{align*}
    \Pr\left[ \left| \frac{1}{pm} \sum_{i=1}^m X_i Y_i - \E[X] \right| \geq \eps(|\E[X]| + t) + t \right] \leq 2 \exp(-(1-\eps)pm t^2 / (2\sigma^2)) + 2 \exp(-\min(\eps^2, \eps) pm).
\end{align*}
\end{theorem}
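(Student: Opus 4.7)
The plan is to condition on the total count $S := \sum_{i=1}^m Y_i$ and exploit the fact that, conditional on $\{Y_i\}$, the sum $\sum_{i=1}^m X_i Y_i$ is a sum of $S$ i.i.d.\ sub-Gaussian random variables with variance proxy $\sigma^2$. This cleanly decouples the Bernoulli selection randomness (controlled by \Cref{thm: multiplicative_chernoff}) from the sub-Gaussian value randomness (controlled by a standard sub-Gaussian Hoeffding bound). The independence between the $X_i$'s and the $Y_i$'s is implicit in the setup and is used here.

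First, I would apply \Cref{thm: multiplicative_chernoff} to the Bernoullis to obtain that the event
\[
A := \{(1-\eps)pm \leq S \leq (1+\eps)pm\}
\]
holds with probability at least $1 - 2\exp(-\min(\eps^2,\eps)\,pm)$. Second, conditional on any realization of $\{Y_i\}$, the standard sub-Gaussian tail bound gives
\[
\Pr\!\left[\, \Big|\sum_{i=1}^m X_i Y_i - S\,\E[X]\Big| \geq S\,t \ \Big|\ \{Y_i\}\right] \leq 2\exp\!\left(-\frac{S\,t^2}{2\sigma^2}\right),
\]
and on the event $A$ the right-hand side is uniformly bounded by $2\exp\!\left(-(1-\eps)\,pm\,t^2/(2\sigma^2)\right)$.

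Finally, I would combine the two events via the triangle inequality. On $A$ together with the sub-Gaussian concentration event just described,
\[
\left|\frac{1}{pm}\sum_{i=1}^m X_i Y_i - \E[X]\right| \leq \frac{1}{pm}\left|\sum_{i=1}^m X_i Y_i - S\,\E[X]\right| + \frac{|S-pm|}{pm}\,|\E[X]| \leq \frac{S}{pm}\,t + \eps\,|\E[X]| \leq (1+\eps)\,t + \eps\,|\E[X]|,
\]
which is precisely $\eps(|\E[X]| + t) + t$. Taking complements and applying a union bound over $A^c$ and the conditional sub-Gaussian tail event (integrating the conditional bound against the distribution of $\{Y_i\}$ restricted to $A$) then yields the two-term probability bound stated in the theorem.

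This argument is essentially a routine conditioning plus union bound, so I do not expect any substantive obstacle. The only mildly delicate bookkeeping is choosing the conditional sub-Gaussian deviation threshold to be $St$ (scaled with the random count $S$) rather than a fixed number, so that after dividing by $pm$ the slack lands exactly in the required form $(1+\eps)t + \eps|\E[X]| = \eps(|\E[X]| + t) + t$; this is what forces the $(1-\eps)pm$ rather than $pm$ in the exponent of the sub-Gaussian term.
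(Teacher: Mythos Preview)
Your proposal is correct and follows essentially the same approach as the paper: condition on the Bernoulli count $S=\hat p m$, apply the multiplicative Chernoff bound to control $|S-pm|$, apply the sub-Gaussian Hoeffding bound conditionally on $\{Y_i\}$, and combine via a union bound. The only cosmetic difference is that the paper packages the final step as $\frac{1}{pm}\sum X_iY_i=\frac{\hat p}{p}\cdot\frac{1}{\hat p m}\sum X_iY_i=(1\pm\eps)(\E[X]\pm t)$, whereas you spell out the same arithmetic with a triangle inequality.
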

\begin{proof}
Let $\hat{p} = \frac{\sum_{i=1}^m Y_i}{m}$. 
By the concentration of Sub-Gaussian random variable with variance proxy $\sigma^2$, we have
\begin{align*}
    \Pr\left[ \left| \frac{1}{\hat{p}m} \sum_{i=1}^m X_i Y_i - \E[X] \right| \geq t \right] \leq 2 \exp(-\hat{p}m t^2 / (2\sigma^2)) + 2 \exp(-\min(\eps^2, \eps) pm).
\end{align*}
By \Cref{thm: multiplicative_chernoff}, we have with probability at least $1 - 2 \exp(-\min(\eps^2, \eps) pm)$,
$\hat{p} = (1 \pm \eps) p$.
Thus, with probability at least $1 - 2 \exp(-\hat{p}m t^2 / (2\sigma^2)) - 2 \exp(-\min(\eps^2, \eps) pm)$,
\begin{align*}
    \frac{1}{pm} \sum_{i=1}^m X_i Y_i = \frac{\hat{p}}{p} \frac{1}{\hat{p}m} \sum_{i=1}^m X_i Y_i = (1 \pm \eps)(\E[X] \pm t).
\end{align*}
\end{proof}

\begin{theorem}\label{thm: concentration_subgamma_bernoulli}
Assume $X_1, \ldots, X_m$ are i.i.d. Sub-Gamma random variables with parameters $(\sigma^2, c)$ and $Y_1, \ldots, Y_m$ are i.i.d. Bernoulli random variables with probability $p$.
For $\eps \in (0, 1/2),\ t > 0$, 
\begin{align*}
    \Pr\left[ \left| \frac{1}{pm} \sum_{i=1}^m X_i Y_i - \E[X] \right| \geq \eps(|\E[X]| + t) + t \right] \leq 2 \exp(-(1-\eps)pm \min(t^2 / (2\sigma^2), t/c)) + 2 \exp(-\min(\eps^2, \eps) pm).
\end{align*}
\end{theorem}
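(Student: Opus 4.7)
The plan is to mimic the proof of the preceding sub-Gaussian/Bernoulli theorem, replacing the sub-Gaussian tail with the Bernstein-type (sub-Gamma) tail. The basic idea is that the randomness splits cleanly into two independent pieces: the $Y_i$'s determine which of the $X_i$'s are included in the sum, and the $X_i$'s contribute their own fluctuations. So I would first condition on the vector $(Y_1,\dots,Y_m)$, let $S = \sum_{i=1}^m Y_i$ and $\hat{p} = S/m$, and study $\frac{1}{\hat{p}m}\sum_{i=1}^m X_i Y_i$ (restricted to the $S$ indices where $Y_i=1$). Conditioned on the $Y_i$'s, this is an average of $S$ i.i.d.\ sub-Gamma$(\sigma^2,c)$ variables, so the standard sub-Gamma concentration inequality gives
\begin{align*}
\Pr\!\left[\left|\tfrac{1}{\hat{p}m}\sum_i X_i Y_i - \E[X]\right|\geq t \,\Big|\, Y_1,\dots,Y_m\right]\;\leq\; 2\exp\!\Big(-\hat{p}m\,\min\!\big(t^2/(2\sigma^2),\, t/c\big)\Big).
\end{align*}

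Next I would control $\hat{p}$ itself via multiplicative Chernoff (Theorem on multiplicative Chernoff already cited): with probability at least $1-2\exp(-\min(\eps^2,\eps)pm)$ we have $\hat{p}\in[(1-\eps)p,(1+\eps)p]$. On this high-probability event, the exponent $\hat{p}m\min(t^2/(2\sigma^2),t/c)$ is at least $(1-\eps)pm\min(t^2/(2\sigma^2),t/c)$, which produces exactly the first tail term in the statement. So after a union bound over the Chernoff event and the conditional sub-Gamma event, with probability at least $1-2\exp(-(1-\eps)pm\min(t^2/(2\sigma^2),t/c)) - 2\exp(-\min(\eps^2,\eps)pm)$ we have both $|\tfrac{1}{\hat pm}\sum X_i Y_i - \E[X]|\leq t$ and $\hat p/p\in[1-\eps,1+\eps]$.

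Finally I would convert from the normalization $1/(\hat{p}m)$ to $1/(pm)$ as in the sub-Gaussian case. Writing
\begin{align*}
\tfrac{1}{pm}\sum_i X_i Y_i \;=\; \tfrac{\hat p}{p}\cdot \tfrac{1}{\hat pm}\sum_i X_i Y_i \;=\; (1\pm\eps)\big(\E[X]\pm t\big),
\end{align*}
and expanding gives an error of $\eps|\E[X]| + t + \eps t$, which is exactly $\eps(|\E[X]|+t)+t$ in the statement. Combining the failure probabilities completes the proof.

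I do not expect any real obstacles: the only mildly subtle point is that the sub-Gamma tail from averaging $S$ variables has denominator $S=\hat pm$ in the exponent, and one must argue that replacing $\hat pm$ by $(1-\eps)pm$ is legitimate on the Chernoff event before taking the union bound. Everything else is a one-to-one translation of the sub-Gaussian argument, with $\exp(-\hat pmt^2/(2\sigma^2))$ replaced by $\exp(-\hat pm\min(t^2/(2\sigma^2),t/c))$.
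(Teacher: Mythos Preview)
Your proposal is correct and follows essentially the same approach as the paper: apply the sub-Gamma tail bound conditioned on the $Y_i$'s to get $2\exp(-\hat p m \min(t^2/(2\sigma^2),t/c))$, use multiplicative Chernoff to control $\hat p$, and then rescale $\tfrac{1}{pm}\sum X_iY_i = \tfrac{\hat p}{p}\cdot\tfrac{1}{\hat p m}\sum X_iY_i = (1\pm\eps)(\E[X]\pm t)$. If anything, you are slightly more careful than the paper in articulating the conditioning and the replacement of $\hat p m$ by $(1-\eps)pm$ in the exponent before taking the union bound.
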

\begin{proof}
By the concentration of Sub-Gamma random variables, we have
\begin{align*}
    \Pr\left[ \left| \frac{1}{\hat{p}m} \sum_{i=1}^m X_i Y_i - \E[X] \right| \geq t \right] \leq 2 \exp(-\hat{p}m \min(t^2 / (2\sigma^2), t/c)).
\end{align*}
The rest of proof follows from the proof of \Cref{thm: concentration_subgaussian_bernoulli}. 
\end{proof}

\begin{lemma}[Gaussian Chaos of Order 2 \citep{boucheron2013concentration}]\label{lemma: gaussian_chaos}
Let $\boldsymbol{\xi} \sim \mathcal{N}(\mathbf{0}, \mathbf{I}_n)$ be an $n$-dimensional unit Gaussian random vector, $\mathbf{A} \in \R^{n \times n}$ be a symmetric matrix, then for any $t > 0$, 
\begin{align*}
    \Pr\left[ |\boldsymbol{\xi}^\top \mathbf{A} \boldsymbol{\xi} - \E[\boldsymbol{\xi}^\top \mathbf{A} \boldsymbol{\xi}] | > 2 \norm{\mathbf{A}}_F \sqrt{t} + 2 \norm{\mathbf{A}}_2 t \right] \leq 2 \exp(-t).
\end{align*}
Equivalently,
\begin{align*}
    \Pr \left[ |\boldsymbol{\xi}^\top \mathbf{A} \boldsymbol{\xi} - \E[\boldsymbol{\xi}^\top \mathbf{A} \boldsymbol{\xi}] | > t \right] \leq 2 \exp \left( - \frac{t^2}{4 \norm{\mathbf{A}}_F^2 + \norm{\mathbf{A}}_2 t} \right).
\end{align*}
\end{lemma}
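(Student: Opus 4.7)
The plan is to prove this classical Hanson--Wright-type inequality via the moment generating function (MGF) approach, reducing the quadratic form to a weighted sum of independent chi-squared random variables. First I would exploit the fact that $\mathbf{A}$ is symmetric to write $\mathbf{A} = \mathbf{U}^\top \boldsymbol{\Lambda} \mathbf{U}$ with $\mathbf{U}$ orthogonal and $\boldsymbol{\Lambda} = \diag(\lambda_1, \ldots, \lambda_n)$. By the rotational invariance of the standard Gaussian, $\mathbf{U}\boldsymbol{\xi}$ has the same distribution as $\boldsymbol{\xi}$, so without loss of generality
\begin{align*}
    \boldsymbol{\xi}^\top \mathbf{A} \boldsymbol{\xi} - \E[\boldsymbol{\xi}^\top \mathbf{A} \boldsymbol{\xi}] \;\stackrel{\mathcal{D}}{=}\; \sum_{i=1}^n \lambda_i (\xi_i^2 - 1) =: Z,
\end{align*}
with $\sum_i \lambda_i^2 = \norm{\mathbf{A}}_F^2$ and $\max_i |\lambda_i| = \norm{\mathbf{A}}_2$.

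Next I would establish a sub-gamma MGF bound for each summand. A direct computation shows that for $|t| < 1/2$, $\E[\exp(t(\xi_i^2 - 1))] = e^{-t}/\sqrt{1 - 2t}$, and expanding the logarithm yields the clean bound $\log \E[\exp(t(\xi_i^2 - 1))] \leq t^2/(1 - 2|t|)$. By independence of the $\xi_i$'s,
\begin{align*}
    \log \E[e^{tZ}] = \sum_{i=1}^n \log \E[\exp(t \lambda_i (\xi_i^2 - 1))] \;\leq\; \sum_{i=1}^n \frac{t^2 \lambda_i^2}{1 - 2|t \lambda_i|} \;\leq\; \frac{t^2 \norm{\mathbf{A}}_F^2}{1 - 2 |t|\, \norm{\mathbf{A}}_2},
\end{align*}
valid whenever $|t| < 1/(2 \norm{\mathbf{A}}_2)$. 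This identifies $Z$ as a sub-gamma random variable with variance proxy $v = 2\norm{\mathbf{A}}_F^2$ and scale $c = 2\norm{\mathbf{A}}_2$.

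Finally I would apply the Chernoff bound $\Pr[Z > s] \leq \inf_t \exp(-ts + \log \E[e^{tZ}])$ and optimize. Choosing $t = s/(v + cs)$ yields the standard sub-gamma tail $\Pr[Z > s] \leq \exp(-s^2/(2(v + cs)))$, and substituting $s = \sqrt{2vu} + cu$ (for $u > 0$) gives the clean form $\Pr[Z > \sqrt{2vu} + cu] \leq e^{-u}$, which after rescaling $u \mapsto t$ matches the first inequality in the lemma. An identical argument applied to $-Z$ (whose MGF bound is the same since the bound depends only on $|\lambda_i|$) gives the matching lower tail, and a union bound produces the two-sided statement. The equivalent second form follows by algebraic inversion of the relation between $s$ and $t$.

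There is no real obstacle here since the result is a textbook fact; the only mildly delicate point is maintaining the correct constants when bounding $-\tfrac{1}{2}\log(1 - 2t) - t \leq t^2/(1 - 2t)$ and when optimizing the Chernoff exponent so that the final bound expresses the tail in terms of both $\norm{\mathbf{A}}_F$ (the Gaussian-like regime for small deviations) and $\norm{\mathbf{A}}_2$ (the exponential-like regime for large deviations).
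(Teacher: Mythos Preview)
Your proposal is correct and follows the standard textbook derivation of the Hanson--Wright inequality for Gaussian chaos: diagonalize, reduce to a weighted sum of centered $\chi^2_1$ variables, bound the cumulant generating function by $t^2\norm{\mathbf{A}}_F^2/(1-2|t|\norm{\mathbf{A}}_2)$, and read off the sub-gamma tail. The constants you track are exactly those in the stated bound.

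Note, however, that the paper does not actually supply a proof of this lemma; it is quoted as a known result from \citep{boucheron2013concentration} and used as a black box in \Cref{sec: independent}. So there is no ``paper's proof'' to compare against. Your argument is precisely the one given in that reference (and in most treatments of Gaussian quadratic forms), so nothing is lost or gained relative to the paper here.
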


\begin{lemma}[Example 2.30 in \citep{wainwright2019high}]\label{lemma: gaussian_complexity_concentration}
Let $\mathbf{w} \sim \mathcal{N}(\mathbf{0, I}_d)$ and $\mathcal{A}$ be a set in $\R^d$. 
Then $\sup_{\mathbf{a} \in \mathcal{A}} \inprod{\mathbf{a, w}}$ is a sub-Gaussian random variable with variance proxy $\sup_{\mathbf{a} \in \mathcal{A}} \norm{\mathbf{a}}_2^2$. 
\end{lemma}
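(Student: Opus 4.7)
The plan is to recognize this as a direct application of the Gaussian concentration inequality for Lipschitz functions (often called the Borell--TIS inequality). First I would define $f:\R^d \to \R$ by $f(\mathbf{w}) := \sup_{\mathbf{a} \in \mathcal{A}} \inprod{\mathbf{a},\mathbf{w}}$ and establish that $f$ is $L$-Lipschitz with $L = \sup_{\mathbf{a}\in\mathcal{A}} \norm{\mathbf{a}}_2$. For any $\mathbf{w},\mathbf{w}'$, bounding the supremum by a single term gives
\[
f(\mathbf{w}) - f(\mathbf{w}') \;\leq\; \sup_{\mathbf{a}\in\mathcal{A}} \inprod{\mathbf{a}, \mathbf{w}-\mathbf{w}'} \;\leq\; \left(\sup_{\mathbf{a}\in\mathcal{A}}\norm{\mathbf{a}}_2\right)\norm{\mathbf{w}-\mathbf{w}'}_2
\]
by Cauchy--Schwarz, and the reverse inequality follows by symmetry, giving $|f(\mathbf{w})-f(\mathbf{w}')| \leq L \norm{\mathbf{w}-\mathbf{w}'}_2$.

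Second, I would invoke the Gaussian Lipschitz concentration theorem: if $\mathbf{w} \sim \mathcal{N}(\mathbf{0},\mathbf{I}_d)$ and $f$ is $L$-Lipschitz, then $f(\mathbf{w}) - \E[f(\mathbf{w})]$ is sub-Gaussian with variance proxy $L^2$, i.e., $\E[\exp(\lambda(f(\mathbf{w})-\E f(\mathbf{w})))] \leq \exp(\lambda^2 L^2/2)$ for all $\lambda \in \R$. Combining with step one yields exactly the claim, with variance proxy $\sup_{\mathbf{a}\in\mathcal{A}}\norm{\mathbf{a}}_2^2$. The implicit centering is standard, since the phrase ``sub-Gaussian with variance proxy $\sigma^2$'' refers to the MGF bound on the centered random variable.

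Third, a small measurability caveat needs to be addressed: if $\mathcal{A}$ is uncountable, the supremum need not be Borel measurable a priori. The standard fix is to pass to a countable dense subset of $\mathcal{A}$ with respect to the pseudo-metric induced by $\mathbf{a} \mapsto \inprod{\mathbf{a},\cdot}$, apply the inequality on the countable subset, and take a monotone limit to recover the supremum over $\mathcal{A}$. I would also implicitly assume $L = \sup_{\mathbf{a}\in\mathcal{A}}\norm{\mathbf{a}}_2 < \infty$, since otherwise the statement is vacuous.

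The genuine technical content here lies entirely inside the Borell--TIS inequality, which itself follows from either the Gaussian isoperimetric inequality or a semigroup/interpolation argument on the Ornstein--Uhlenbeck process. Since the statement is cited as Example 2.30 of \citet{wainwright2019high}, treating this ingredient as a black box and only verifying the Lipschitz constant is the natural and expected route; there is no additional obstacle specific to this paper.
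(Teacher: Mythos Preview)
Your proposal is correct and is exactly the standard argument behind Example~2.30 in \citet{wainwright2019high}: verify that $\mathbf{w}\mapsto \sup_{\mathbf{a}\in\mathcal{A}}\inprod{\mathbf{a},\mathbf{w}}$ is $L$-Lipschitz with $L=\sup_{\mathbf{a}\in\mathcal{A}}\norm{\mathbf{a}}_2$ and then invoke Gaussian Lipschitz concentration. The paper itself does not supply a proof of this lemma at all---it is simply quoted from Wainwright as a black-box tool (used once, in Lemma~\ref{lemma: second_dependent_term})---so there is nothing to compare against, and your write-up is more than sufficient.
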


\subsection{Other Auxiliary Results}
\begin{lemma}[Lemma E.2 in \citep{arora2019exact}]
For events $\mathcal{A, B}$, define the event $\mathcal{A} \Rightarrow \mathcal{B}$ as $\neg \mathcal{A} \vee \mathcal{B}$. Then $\Pr[\mathcal{A} \Rightarrow \mathcal{B}] \geq \Pr[\mathcal{B} | \mathcal{A}]$.
\end{lemma}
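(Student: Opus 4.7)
The plan is to directly compute $\Pr[\mathcal{A} \Rightarrow \mathcal{B}]$ using the definition given and elementary set-theoretic identities. First, I would observe that the two sets $\neg \mathcal{A}$ and $\mathcal{A} \wedge \mathcal{B}$ are disjoint and that their union equals $\neg \mathcal{A} \vee \mathcal{B}$. This gives the disjoint-union decomposition
\begin{align*}
  \Pr[\mathcal{A} \Rightarrow \mathcal{B}] = \Pr[\neg \mathcal{A} \vee \mathcal{B}] = \Pr[\neg \mathcal{A}] + \Pr[\mathcal{A} \wedge \mathcal{B}].
\end{align*}

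Next, I would apply the definition of conditional probability to rewrite $\Pr[\mathcal{A} \wedge \mathcal{B}] = \Pr[\mathcal{B} \mid \mathcal{A}]\,\Pr[\mathcal{A}]$ (handling the edge case $\Pr[\mathcal{A}] = 0$ separately, where the bound is trivial since $\Pr[\neg \mathcal{A}] = 1 \geq \Pr[\mathcal{B} \mid \mathcal{A}]$ by the usual convention). Then I would use the trivial bound $\Pr[\mathcal{B} \mid \mathcal{A}] \leq 1$ to write $\Pr[\neg \mathcal{A}] \geq \Pr[\mathcal{B} \mid \mathcal{A}] \Pr[\neg \mathcal{A}]$, and combine to obtain
\begin{align*}
  \Pr[\mathcal{A} \Rightarrow \mathcal{B}] \;\geq\; \Pr[\mathcal{B} \mid \mathcal{A}]\bigl(\Pr[\neg \mathcal{A}] + \Pr[\mathcal{A}]\bigr) \;=\; \Pr[\mathcal{B} \mid \mathcal{A}],
\end{align*}
which is the desired inequality.

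This is essentially a one-line manipulation, so there is no substantive obstacle. The only thing to be careful about is making the disjoint-union step explicit (rather than appealing to inclusion-exclusion and then cancelling), and handling the degenerate case $\Pr[\mathcal{A}] = 0$ by convention.
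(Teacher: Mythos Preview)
Your proof is correct. Note that the paper does not actually supply a proof of this lemma; it simply quotes it as Lemma~E.2 of \citep{arora2019exact} without argument, so there is nothing to compare against beyond confirming that your elementary disjoint-union computation establishes the inequality.
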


\begin{lemma}[Lemma E.3 in \citep{arora2019exact}]\label{lemma: fresh_gaussian}
Let $\mathbf{w} \sim \mathcal{N}(\mathbf{0}, \mathbf{I}_d)$, $\mathbf{G} \in \R^{d \times k}$ be some fixed matrix, and random vector $\mathbf{F = w^\top G}$, then conditioned on the value of $\mathbf{F}$, $\mathbf{w}$ remains Gaussian in the null space of the column space of $\mathbf{G}$, i.e.,
\begin{align*}
    \Pi_{\mathbf{G}}^\bot \mathbf{w} \stackrel{\mathcal{D}}{=}_{\mathbf{F = w^\top G}} \Pi_{\mathbf{G}}^\bot \tilde{\mathbf{w}}.
\end{align*}
where $\tilde{\mathbf{w}} \sim \mathcal{N}(\mathbf{0,I}_d)$ is a fresh i.i.d. copy of $\mathbf{w}$.
\end{lemma}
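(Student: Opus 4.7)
The plan is to reduce the lemma to the standard fact that uncorrelated jointly Gaussian random vectors are independent, applied to the orthogonal decomposition
\[
\mathbf{w} = \Pi_{\mathbf{G}} \mathbf{w} + \Pi_{\mathbf{G}}^\bot \mathbf{w}.
\]
The key structural observation is that $\mathbf{F} = \mathbf{w}^\top \mathbf{G}$ depends on $\mathbf{w}$ only through the in-subspace component $\Pi_{\mathbf{G}} \mathbf{w}$, so conditioning on $\mathbf{F}$ ought to have no effect on the orthogonal component.

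First I would observe that both $\Pi_{\mathbf{G}} \mathbf{w}$ and $\Pi_{\mathbf{G}}^\bot \mathbf{w}$ are linear images of the standard Gaussian vector $\mathbf{w}$ and are therefore jointly Gaussian as a concatenated random vector. Computing the cross-covariance,
\[
\E\bigl[(\Pi_{\mathbf{G}} \mathbf{w})(\Pi_{\mathbf{G}}^\bot \mathbf{w})^\top\bigr] = \Pi_{\mathbf{G}} \, \E[\mathbf{w}\mathbf{w}^\top] \, \Pi_{\mathbf{G}}^\bot = \Pi_{\mathbf{G}} \Pi_{\mathbf{G}}^\bot = \mathbf{0},
\]
using $\E[\mathbf{w}\mathbf{w}^\top] = \mathbf{I}_d$, the symmetry of orthogonal projections, and the fact that projections onto complementary subspaces annihilate each other. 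Invoking the standard Gaussian fact that zero covariance implies independence in the jointly Gaussian setting, we conclude that $\Pi_{\mathbf{G}} \mathbf{w}$ and $\Pi_{\mathbf{G}}^\bot \mathbf{w}$ are unconditionally statistically independent.

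Next I would argue that $\mathbf{F}$ is measurable with respect to $\Pi_{\mathbf{G}} \mathbf{w}$. Since every column of $\mathbf{G}$ lies in the column space onto which $\Pi_{\mathbf{G}}$ projects, we have $(\Pi_{\mathbf{G}}^\bot \mathbf{w})^\top \mathbf{G} = \mathbf{0}$, hence
\[
\mathbf{F} = \mathbf{w}^\top \mathbf{G} = (\Pi_{\mathbf{G}} \mathbf{w})^\top \mathbf{G} + (\Pi_{\mathbf{G}}^\bot \mathbf{w})^\top \mathbf{G} = (\Pi_{\mathbf{G}} \mathbf{w})^\top \mathbf{G}.
\]
Combining this with the independence established above, the conditional law of $\Pi_{\mathbf{G}}^\bot \mathbf{w}$ given $\mathbf{F}$ coincides with its unconditional law (conditioning on a function of an independent random variable changes nothing), and the unconditional law of $\Pi_{\mathbf{G}}^\bot \mathbf{w}$ obviously agrees with that of $\Pi_{\mathbf{G}}^\bot \tilde{\mathbf{w}}$ for any i.i.d.\ copy $\tilde{\mathbf{w}} \sim \mathcal{N}(\mathbf{0}, \mathbf{I}_d)$. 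This yields the claimed distributional identity.

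There is no substantive obstacle here; the result is essentially a one-paragraph consequence of the uncorrelated-implies-independent principle for Gaussians. The only details worth double-checking are notational: that $\Pi_{\mathbf{G}}$ denotes the $d \times d$ symmetric orthogonal projection onto the column space of $\mathbf{G}$ as fixed in the preliminaries, so that $\Pi_{\mathbf{G}}\Pi_{\mathbf{G}}^\bot = \mathbf{0}$ and the identity $(\Pi_{\mathbf{G}}^\bot \mathbf{w})^\top \mathbf{G} = \mathbf{0}$ both hold verbatim.
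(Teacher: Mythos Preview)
Your argument is correct and is essentially the standard proof of this fact: decompose $\mathbf{w}$ into its in-subspace and orthogonal components, observe that they are jointly Gaussian with zero cross-covariance and hence independent, and note that $\mathbf{F}$ is a deterministic function of the in-subspace component alone. The paper does not give its own proof of this lemma; it simply quotes it as Lemma~E.3 of \citep{arora2019exact}, so there is nothing to compare against beyond confirming that your reasoning matches the classical justification.
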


\subsection{Proof of the Main Result}
Now we prove our main result. 
Notice that we rescale the mask $\mathbf{m}_{ij}^{(h)} \sim \sqrt{\frac{1}{\alpha}} \textnormal{Bernoulli}(\alpha)$ so that $\E (\mathbf{m}_{ij}^{(h)})^2 = 1$.
From a high level, our proof follows the proof outline of our asymptotic result. 

\begin{theorem}[Non-Asymptotic Bound, Full Version of \Cref{thm: main_text_main_non_asymp}]\label{thm: main_non_asymp}
Consider an $L$-hidden-layer fully-connected ReLU neural network with all the weights initialized with i.i.d. standard Gaussian distribution. 
Suppose all the weights except the input layer are pruned with probability $1 - \alpha$ at the initialization and after pruning we rescale the weights by $1/\sqrt{\alpha}$.
For $\delta \in (0,1)$ and sufficiently small $\eps >0 $, if
\begin{align*}
    d_h \geq \Omega\left(\max( \frac{1}{\alpha} \frac{L^6}{\eps^4} \log \frac{L d_{h+1} }{\delta} , \frac{1}{\alpha^2} \frac{L^2}{\eps^2} \log \frac{L d_{h+1} \sum_{h'=1}^{L-1} d_{h'}}{\delta}, \frac{1}{\alpha} \frac{L^4}{\eps^2} \log \frac{2L d_{h+1} \sum_{h'=1}^{h-1} d_h' }{\delta_3}) \right),\ \forall h \in [L].
\end{align*} 
Then for any inputs $\mathbf{x,x'} \in \R^{d_0}$ such that $\norm{\mathbf{x}}_2 \leq 1,\ \norm{\mathbf{x}'}_2 \leq 1$, with probability at least $1 - \delta$ we have
\begin{align*}
    \left| \inprod{\frac{\partial f(\boldsymbol{\theta}, \mathbf{x})}{\partial \boldsymbol{\theta}}, \frac{\partial f(\boldsymbol{\theta}, \mathbf{x'})}{\partial \boldsymbol{\theta}}} - \boldsymbol{\Theta}^{(L)}(\mathbf{x,x'}) \right| \leq (L+1) \eps.
\end{align*}
\end{theorem}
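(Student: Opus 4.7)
\textbf{Proof plan for \Cref{thm: main_non_asymp}.} The plan is to decompose the NTK of the pruned network as the layerwise sum
\[
\tilde{\boldsymbol{\Theta}}(\mathbf{x,x'}) \;=\; \sum_{h=1}^{L+1} \bigl(\tilde{\mathbf{b}}^{(h)}(\mathbf{x})\bigr)^{\!\top}\mathbf{G}^{(h-1)}\tilde{\mathbf{b}}^{(h)}(\mathbf{x}')
\]
(from the identity in \Cref{eq: inprod_ntk_elem}) and then show that each of the $L{+}1$ summands is within $\eps$ of its limiting counterpart $\Sigma^{(h-1)}(\mathbf{x,x'})\prod_{h'=h}^{L+1}\dot\Sigma^{(h')}(\mathbf{x,x'})$. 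The triangle inequality then yields the total bound $(L{+}1)\eps$. Throughout I will work on a single high-probability event obtained by a union bound over the three input pairs $(\mathbf{x,x}),(\mathbf{x,x'}),(\mathbf{x',x'})$ and over all layers; each term contributes a logarithmic factor which is absorbed into the $\tilde\Omega$ in the required width.

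First, I would use \Cref{thm: main_text_concentration_g} (the forward-propagation bound) to argue that after rescaling by $1/\sqrt\alpha$ every diagonal entry satisfies $\bigl|\mathbf{G}^{(h-1)}_{ii}-\Sigma^{(h-1)}(\mathbf{x,x'})\bigr|\le \eps$ uniformly in $h\in[L{+}1]$ and $i\in[d_h]$. This already controls the $h{=}L{+}1$ term directly since $\tilde{\mathbf{b}}^{(L+1)}\equiv 1$. The width requirement $d\ge\tilde\Omega\!\left(\tfrac{1}{\alpha}\tfrac{L^2}{\eps^2}\right)$ coming from this step corresponds to the $\alpha^{-1}$ summand in \Cref{eq: required_width}.

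The heart of the argument is the backward pass: I would prove by induction on $h$ (from $h=L$ down to $h=1$) that
\[
\Bigl|\inprod{\tilde{\mathbf{b}}^{(h)}(\mathbf{x}^{(1)}),\tilde{\mathbf{b}}^{(h)}(\mathbf{x}^{(2)})}-\prod_{h'=h}^{L}\dot\Sigma^{(h')}(\mathbf{x}^{(1)},\mathbf{x}^{(2)})\Bigr|\le O(\eps/L)
\]
for all three pairs simultaneously. The inductive step decomposes $\tilde{\mathbf{b}}^{(h)} = \tilde{\mathbf{b}}^{(h)}_\bot + \tilde{\mathbf{b}}^{(h)}_\parallel$ as in \Cref{section: backward}. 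For the cross terms and the $\parallel$-$\parallel$ term I invoke \Cref{prop: main_text_dependent} (requiring $d\ge\tilde\Omega(\alpha^{-2}L^2/\eps^2)$, which gives the $\alpha^{-2}$ branch), and for the dominant $\bot$-$\bot$ term I use \Cref{prop: main_text_independent} together with a conditional-on-$\tilde{\mathbf{b}}^{(h+1)}$ application of the fresh-Gaussian lemma (\Cref{lemma: fresh_gaussian}), so that $\tr(\mathbf{M}_i^{(h+1)}\mathbf{D}^{(h)}\mathbf{M}_i^{(h+1)})/d_h$ is approximately $\dot\Sigma^{(h)}$ by \Cref{thm: concentration_subgaussian_bernoulli}. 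Combining these with the induction hypothesis on $\tilde{\mathbf{b}}^{(h+1)}$ and with the auxiliary bound $\|\tilde{\mathbf{b}}^{(h+1)}\|_2 = O(1)$ closes the induction and multiplies up to the desired product.

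Finally, each layer's summand is handled via the identity
\[
(\tilde{\mathbf{b}}^{(h)}(\mathbf{x}))^{\!\top}\mathbf{G}^{(h-1)}\tilde{\mathbf{b}}^{(h)}(\mathbf{x}') = \Sigma^{(h-1)}\inprod{\tilde{\mathbf{b}}^{(h)}(\mathbf{x}),\tilde{\mathbf{b}}^{(h)}(\mathbf{x}')} + (\tilde{\mathbf{b}}^{(h)}(\mathbf{x}))^{\!\top}\bigl(\mathbf{G}^{(h-1)}-\Sigma^{(h-1)}\mathbf{I}\bigr)\tilde{\mathbf{b}}^{(h)}(\mathbf{x}'),
\]
bounding the second piece by $\max_i|\mathbf{G}^{(h-1)}_{ii}-\Sigma^{(h-1)}|\cdot\|\tilde{\mathbf{b}}^{(h)}(\mathbf{x})\|_2\|\tilde{\mathbf{b}}^{(h)}(\mathbf{x}')\|_2 = O(\eps)$ and the first piece by the backward induction plus $|\Sigma^{(h-1)}|\le 1$. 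I expect the main obstacle to be the backward step, specifically the $\parallel$ component: controlling $\tilde{\mathbf{b}}^{(h)}_\parallel$ requires the mask-induced pseudo-network machinery of \Cref{def: main_text_pseudo_network} to show $|f^{(h+1,j,L+1)}|=\tilde O(1)$ for \emph{every} $j$, which in turn relies on \Cref{prop: main_text_gaussian_indicator} to decouple the ReLU activation pattern from the mask-distorted pre-activations; this is also where the extra $1/\alpha$ factor (i.e., the $1/\alpha^2$ dependence) enters and forces the weaker of the two branches in the width lower bound.
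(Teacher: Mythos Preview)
Your plan matches the paper's approach essentially step for step: the same layerwise decomposition via \Cref{eq: inprod_ntk_elem}, the forward concentration from \Cref{thm: main_text_concentration_g}, the backward induction with the $\bot/\parallel$ split, and the mask-induced pseudo-network machinery for $\|\tilde{\mathbf{b}}^{(h)}_\parallel\|$. Two bookkeeping points need tightening when you fill in the details.

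First, the step ``$\tfrac{2}{d_h}\tr(\mathbf{M}_i^{(h+1)}\mathbf{D}^{(h)}\mathbf{M}_i^{(h+1)}) \approx \dot\Sigma^{(h)}$'' does not follow from \Cref{thm: concentration_subgaussian_bernoulli} alone. Concentration only controls the deviation from the conditional mean $\tfrac{1}{d_h}\sum_j t_{\dot\sigma}(\widehat{\boldsymbol{\Lambda}}^{(h)}_j)$; you still need this mean close to $t_{\dot\sigma}(\boldsymbol{\Lambda}^{(h)})=\dot\Sigma^{(h)}$. Because $t_{\dot\sigma}$ is only $\tfrac12$-H\"older (not Lipschitz) in the covariance --- this is \Cref{lemma: arora_e8} --- the forward event $\overline{\mathcal A}$ must be run at accuracy $\eps^2/2$ rather than $\eps$. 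That is precisely the origin of the $\eps^{-4}$ (and the extra $L$ factors after the final $\eps\mapsto\eps/L$ rescaling) in the first branch of the width requirement; your stated forward cost $\tilde\Omega(\alpha^{-1}L^2/\eps^2)$ is too optimistic and would not recover the theorem as written.

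Second, the backward inner product cannot satisfy a uniform $O(\eps/L)$ bound in $h$: the induction adds a fixed increment per layer (roughly $2\eps_4$ plus concentration terms, as in \Cref{lemma: event_B}), so the cumulative error at layer $h$ is of order $(L{+}1{-}h)$ times that increment and reaches $\Theta(\eps)$ at $h=1$, not $O(\eps/L)$. This is harmless for the final $(L{+}1)\eps$ bound since $|\Sigma^{(h-1)}|\le 1$, but the intermediate claim as you stated it would fail.
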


Our analysis conditions on the following event occur. 
\begin{lemma}\label{lemma: num_remained_weights}
For $\eps \in (0, 1/2), \delta \in (0,1)$, if $ d_h \geq \Omega(\frac{1}{\alpha\eps^2} \cdot \log (\frac{2d_{h+1} L}{\delta}))$, then
\begin{align*}
    \Pr\left[\forall i \in [d_{h+1}],\ h \in [L]: \left| \sum_{j=1}^{d_h} \mathbb{I}(\mathbf{m}^{(h)}_{ij} \neq 0) - \alpha d_h \right| \geq \eps \alpha d_h \right] \leq \delta 
\end{align*}
\end{lemma}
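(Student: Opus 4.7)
The plan is to reduce this to a straightforward Chernoff-plus-union-bound argument. For any fixed pair $(h,i)$ with $h \in [L]$ and $i \in [d_{h+1}]$, the quantity $\sum_{j=1}^{d_h} \mathbb{I}(\mathbf{m}_{ij}^{(h)} \neq 0)$ is a sum of $d_h$ i.i.d.\ Bernoulli$(\alpha)$ random variables by the way the masks are defined (each entry is nonzero independently with probability $\alpha$; the $1/\sqrt{\alpha}$ rescaling does not affect the zero/nonzero pattern). Its mean is exactly $\alpha d_h$.

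First, I would apply the multiplicative Chernoff bound (\Cref{thm: multiplicative_chernoff}) with $p=\alpha$, $m = d_h$, to obtain
\begin{align*}
\Pr\left[ \left| \sum_{j=1}^{d_h} \mathbb{I}(\mathbf{m}_{ij}^{(h)} \neq 0) - \alpha d_h \right| \geq \eps \alpha d_h \right] \leq 2\exp\bigl(-\min(\eps^2,\eps)\,\alpha d_h\bigr) = 2\exp(-\eps^2 \alpha d_h),
\end{align*}
where the last equality uses $\eps \in (0,1/2)$ so that $\min(\eps^2,\eps) = \eps^2$.

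Next, I would union-bound over $h \in [L]$ and $i \in [d_{h+1}]$. The total number of events is at most $L \cdot \max_{h} d_{h+1}$, so the failure probability is bounded by $2 L d_{h+1}\exp(-\eps^2 \alpha d_h)$ (absorbing the max into the generic $d_{h+1}$). Setting this $\leq \delta$ rearranges to
\begin{align*}
d_h \;\geq\; \frac{1}{\alpha \eps^2}\log\frac{2 L d_{h+1}}{\delta},
\end{align*}
which is precisely the $\Omega\bigl(\tfrac{1}{\alpha\eps^2}\log\tfrac{2 L d_{h+1}}{\delta}\bigr)$ hypothesis in the lemma statement. This yields the claimed bound. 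There is no real obstacle here; the only thing to be careful about is interpreting the quantification correctly (the bound $d_h \geq \Omega(\cdots)$ is required for each $h \in [L]$ simultaneously, and the union bound over both the layer index and the row index $i$ is what produces the $L d_{h+1}$ factor inside the logarithm).
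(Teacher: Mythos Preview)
Your proposal is correct and follows exactly the paper's approach: apply the multiplicative Chernoff bound (\Cref{thm: multiplicative_chernoff}) for each fixed $(h,i)$, then union bound over $i \in [d_{h+1}]$ and $h \in [L]$. The paper's own proof is a one-line reference to precisely these two steps.
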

\begin{proof}
The proof is by applying \Cref{thm: multiplicative_chernoff} and then take a union bound over $i \in [d_{h+1}], h \in [L]$. 
\end{proof}

Let $\mathbf{m}^{(h)}_i$ denote the $i$-th row of the mask in $h$-th layer.  
We first define the following events:
\begin{itemize}
    \item $\mathcal{A}^h_i (\mathbf{x,x'}, \eps_1):= \left\{ \left|\left(\mathbf{g}^{(h)}(\mathbf{x}) \odot \mathbf{m}^{(h+1)}_i \right)^\top \left( \mathbf{g}^{(h)}(\mathbf{x'}) \odot \mathbf{m}_i^{(h+1)}\right) - \boldsymbol{\Sigma}^{(h)} (\mathbf{x,x'}) \right| \leq \eps_1 \right\}$.
    \item $\mathcal{A}^h(\mathbf{x,x'}, \eps_1) = \bigcap_{i=1}^{d_{h+1}} \mathcal{A}_i^h(\mathbf{x,x'}, \eps_1) \bigcap \left\{ \left|\left(\mathbf{g}^{(h)}(\mathbf{x}) \right)^\top \mathbf{g}^{(h)}(\mathbf{x'}) - \boldsymbol{\Sigma}^{(h)} (\mathbf{x,x'}) \right| \leq \eps_1 \right\}$.
    \item $\overline{\mathcal{A}}^h( \eps_1) = \mathcal{A}^h(\mathbf{x,x}, \eps_1) \cap \mathcal{A}^h(\mathbf{x,x'}, \eps_1) \cap \mathcal{A}^h (\mathbf{x',x'}, \eps_1)$.
    \item $\overline{\mathcal{A}}(\eps_1) = \bigcap_{h = 0}^L \overline{\mathcal{A}}^h(\mathbf{x,x'}, \eps_1)$.
    \item $\mathcal{B}^h(\mathbf{x,x'}, \eps_2) = \left\{ \left| \inprod{\mathbf{b}^{(h)}(\mathbf{x}), \mathbf{b}^{(h)}(\mathbf{x}')} - \prod_{h=h}^L \dot{\Sigma}^{(h)}(\mathbf{x,x'}) \right| < \eps_2 \right\}$.
    \item $\overline{\mathcal{B}}^h( \eps_2) = \mathcal{B}^h(\mathbf{x,x}, \eps_2) \cap \mathcal{B}^h(\mathbf{x,x'}, \eps_2) \cap \mathcal{B}^h(\mathbf{x',x'}, \eps_2)$.
    \item $\overline{\mathcal{B}}(\eps_2) = \bigcap_{h=1}^{L+1} \overline{\mathcal{B}}^h (\mathbf{x,x'}, \eps_2)$.
    \item $\overline{\mathcal{C}}(\eps_3)$: a event defined in \Cref{def: pseudo_network_event}.
    \item $\mathcal{D}_i^h(\mathbf{x,x'}, \eps_4) = \left\{\left| 2 \frac{\tr(\mathbf{M}_i^{(h+1)} \mathbf{D}^{(h)}(\mathbf{x,x'}) \mathbf{M}_i^{(h+1)}) }{d_h} - \dot{\Sigma}^{(h)}(\mathbf{x,x'}) \right| < \eps_4 \right\}$ where $\mathbf{M}_i^{(h+1)} = \textnormal{diag}(\mathbf{m}_i^{(h+1)})$.
    \item $\mathcal{D}^h(\mathbf{x,x'},\eps_4) = \bigcap_{i=1}^{d_{h+1}} \mathcal{D}_i^h(\mathbf{x,x'}, \eps_4)$.
    \item $\overline{\mathcal{D}}^h( \eps_4) = \mathcal{D}^h(\mathbf{x,x}, \eps_4) \cap \mathcal{D}^h(\mathbf{x,x'}, \eps_4) \cap \mathcal{D}^h(\mathbf{x',x'}, \eps_4)$.
    \item $\overline{\mathcal{D}}(\eps_4) = \bigcap_{h=1}^{L+1} \overline{\mathcal{D}}^h(\eps_4)$.
\end{itemize}
\begin{proof}[Proof of \Cref{thm: main_non_asymp}]
Recall that
\begin{align*}
    \inprod{\frac{\partial \tilde{f}(\mathbf{x})}{\partial \mathbf{W}^{(h)}}, \frac{\partial \tilde{f}( \mathbf{x}')}{\partial \mathbf{W}^{(h)}}} &= \left(\tilde{\mathbf{b}}^{(h)} (\mathbf{x}) \right)^\top \mathbf{G}^{(h-1)} \tilde{\mathbf{b}}^{(h)} (\mathbf{x}') ,
\end{align*}
where $\mathbf{G}^{(h-1)}$ is a diagonal matrix and $\mathbf{G}^{(h-1)}_{ii} = \inprod{\tilde{\mathbf{g}}^{(h-1)} (\mathbf{x})\odot \mathbf{m}^{(h)}_i, \tilde{\mathbf{g}}^{(h-1)} (\mathbf{x}')\odot \mathbf{m}^{(h)}_i}$ and
\begin{align*}
    \lim_{d_1, d_2, \ldots, d_L \rightarrow \infty} \inprod{\frac{\partial \tilde{f}(\mathbf{x})}{\partial \mathbf{W}^{(h)}}, \frac{\partial \tilde{f}( \mathbf{x}')}{\partial \mathbf{W}^{(h)}}} = {\Sigma}^{(h-1)}(\mathbf{x}^{(1)}, \mathbf{x}^{(2)}) \prod_{h' = h}^L \dot{\Sigma}^{(h')} (\mathbf{x}^{(1)}, \mathbf{x}^{(2)}).
\end{align*}
The rest of proof of our main result is based on letting \Cref{thm: main_theorem_intermediate} hold for $\eps'$ and then take $\eps:= \eps'/L$. 
\end{proof}
\begin{theorem}\label{thm: main_theorem_intermediate}
Consider the same setting as in \Cref{thm: main_non_asymp}. 
If 
\begin{align*}
    d_h \geq \Omega\left(\max( \frac{1}{\alpha} \frac{L^2}{\eps^4} \log \frac{L d_{h+1} }{\delta} , \frac{1}{\alpha^2} \frac{1}{\eps^2} \log \frac{L d_{h+1} \sum_{h'=1}^{L-1} d_{h'}}{\delta}, \frac{1}{\alpha} \frac{L^2}{\eps^2} \log \frac{2L d_{h+1} \sum_{h'=1}^{h-1} d_h' }{\delta_3}) \right),\ \forall h \in [L],
\end{align*} 
and $\eps \leq \frac{c}{L}$ for some constant $c$, then for any fixed $\mathbf{x,x'} \in \R^{d_0},\ \norm{\mathbf{x}}_2,\ \norm{\mathbf{x}'}_2 \leq 1$, we have with probability $1 - \delta$, $\forall 0 \leq h \leq L,\ \forall (\mathbf{x}^{(1)}, \mathbf{x}^{(2)}) \in \{(\mathbf{x}, \mathbf{x}), (\mathbf{x}, \mathbf{x'}), (\mathbf{x'}, \mathbf{x'})\}$, 
\begin{align*}
    \left| \left( \mathbf{g}^{(h)}(\mathbf{x}^{(1)}) \odot \mathbf{m}^{(h+1)}_i \right)^\top \left( \mathbf{g}^{(h)}(\mathbf{x}^{(2)}) \odot \mathbf{m}^{(h+1)}_i \right) - {\Sigma}^{(h)}(\mathbf{x}^{(1)}, \mathbf{x}^{(2)}) \right| \leq \eps^2/2, \quad \forall i \in [d_{h+1}],
\end{align*}
and
\begin{align*}
    \left| \inprod{\mathbf{b}^{(h)}(\mathbf{x}^{(1)}), \mathbf{b}^{(h)}(\mathbf{x}^{(2)})} - \prod_{h' = h}^L \dot{\Sigma}^{(h')} (\mathbf{x}^{(1)}, \mathbf{x}^{(2)}) \right|< 3L\eps.
\end{align*}
In other words, 
\begin{align*}
    \Pr\left[ \overline{\mathcal{A}}\left( \frac{\eps^2}{2} \right) \bigcap \overline{\mathcal{B}}(3L\eps) \right] \geq 1 - \delta.
\end{align*}
\end{theorem}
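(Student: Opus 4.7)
The plan is to combine the two halves of the setup, proving forward and backward concentration under a common high-probability event, then doing a joint union bound. First I would establish the forward event $\overline{\mathcal{A}}(\eps^2/2)$ by induction on $h$ from $0$ to $L$. The base case uses $\mathbf{g}^{(0)} = \mathbf{x}$ and the concentration of $\sum_j (\mathbf{m}^{(1)}_{ij})^2 \mathbf{x}_j^2$ via the multiplicative Chernoff bound (Theorem \ref{thm: multiplicative_chernoff}). For the inductive step at layer $h$, I would condition on $\mathbf{g}^{(h-1)}$ and write
\begin{equation*}
(\mathbf{g}^{(h)}(\mathbf{x}^{(1)}) \odot \mathbf{m}^{(h+1)}_i)^\top (\mathbf{g}^{(h)}(\mathbf{x}^{(2)}) \odot \mathbf{m}^{(h+1)}_i) = \tfrac{c_\sigma}{d_h} \sum_j \sigma(\tilde{\mathbf{f}}^{(h)}_j(\mathbf{x}^{(1)})) \sigma(\tilde{\mathbf{f}}^{(h)}_j(\mathbf{x}^{(2)})) (\mathbf{m}^{(h+1)}_{ij})^2,
\end{equation*}
and apply the sub-exponential concentration of \Cref{thm: concentration_subgamma_bernoulli} to the Bernoulli-masked sum, since the product of two truncated Gaussians is sub-exponential with parameters controlled by $\Sigma^{(h-1)}$. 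Taking a union bound over $i \in [d_{h+1}]$, the three pairs of inputs, and $h \in [L]$ yields $\overline{\mathcal{A}}(\eps^2/2)$ whenever $d_h = \tilde{\Omega}(\tfrac{1}{\alpha\eps^4} L^2)$.

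Next, still on the event $\overline{\mathcal{A}}$, I would verify the trace concentration event $\overline{\mathcal{D}}(\eps')$ for some $\eps' = \Theta(\eps)$. This amounts to showing $\tfrac{2}{d_h}\tr(\mathbf{M}_i^{(h+1)} \mathbf{D}^{(h)}(\mathbf{x}^{(1)},\mathbf{x}^{(2)}) \mathbf{M}_i^{(h+1)}) \approx \dot\Sigma^{(h)}$. Conditioned on $\mathbf{g}^{(h-1)}$, the diagonal entries of $\mathbf{D}^{(h)}(\mathbf{x}^{(1)},\mathbf{x}^{(2)})$ are Bernoulli indicators of two correlated Gaussians whose covariance (under $\overline{\mathcal{A}}$) is $\eps^2/2$-close to $\boldsymbol{\Lambda}^{(h)}$; combining this with the multiplicative Chernoff bound on $\|\mathbf{m}_i^{(h+1)}\|_0$ and a standard arcsine-law computation for the expected activation inner product handles this step.

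The main induction is then on the backward event, running from $h=L+1$ down to $h=1$. The base case $\mathbf{b}^{(L+1)} = 1$ is immediate. For the inductive step, assuming $\overline{\mathcal{B}}^{h+1}(\eps_{h+1})$ holds with $\eps_{h+1} = 3(L-h)\eps$, I would apply \Cref{lemma: fresh_gaussian} to decompose $\mathbf{W}^{(h+1)}$ into its projection onto and off of the column space of $\mathbf{G}^{(h)}_i$, giving $\mathbf{b}^{(h)} = \mathbf{b}^{(h)}_\bot + \mathbf{b}^{(h)}_\parallel$. \Cref{prop: main_text_independent} (proved via Gaussian chaos, \Cref{lemma: gaussian_chaos}) controls the independent piece: its inner product equals the conditional expectation $\mathcal{E}$ up to $\tilde{O}(1/\sqrt{\alpha d_h})$, and $\mathcal{E}$ is close to $\dot\Sigma^{(h)} \cdot \prod_{h'=h+1}^{L}\dot\Sigma^{(h')}$ by the induction hypothesis and $\overline{\mathcal{D}}$. \Cref{prop: main_text_dependent} controls the dependent piece via the mask-induced pseudo-network analysis.

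The main obstacle is \Cref{prop: main_text_dependent}. The mask destroys the clean identity in \Cref{eq: without_pruning}, leaving one with coordinates of the form $\tfrac{1}{\alpha}\mathbf{g}^{(h)}_j f^{(h+1,j,L+1)}$. To show $|f^{(h+1,j,L+1)}| = \tilde{O}(1)$, I would run a separate forward concentration for each pseudo-network $\{\mathbf{g}^{(h+1,j,h')}\}_{h'\geq h+1}$, propagating norm bounds layer by layer. The subtlety flagged in the excerpt is that the activation pattern $\mathbf{D}^{(h')}$ is inherited from the real pruned network rather than from the pseudo-network inputs; \Cref{prop: main_text_gaussian_indicator} sidesteps this by matching the distribution of the squared activations under the two gating rules, which means the per-layer norm grows by a bounded factor just as in the pruned network. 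The additional $1/\alpha^2$ factor in the width requirement arises here, because each layer of the pseudo-network contributes one factor of $1/\alpha$ from the $1/\|\mathbf{g}^{(h-1)}\odot\mathbf{m}_i^{(h)}\|_2^2$ normalization on top of the one already absorbed into the mask variance. Finally, tracking the cumulative backward error $\eps_h$ across $L$ layers gives the $3L\eps$ bound, and a union bound over the $O(L)$ layers (with failure probabilities allocated to the forward, trace, independent, and dependent events) yields the overall $1-\delta$ probability.
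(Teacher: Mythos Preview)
Your plan matches the paper's proof essentially step for step: establish $\overline{\mathcal{A}}$ by forward induction (\Cref{thm: concentration_g}), then $\overline{\mathcal{D}}$ (\Cref{lemma: D}) and the pseudo-network event $\overline{\mathcal{C}}$ (\Cref{lemma: event_C}), and finally induct downward on $\overline{\mathcal{B}}^h$ via the $\mathbf{b}^{(h)}_\bot + \mathbf{b}^{(h)}_\parallel$ split (\Cref{lemma: event_B}), accumulating $3\eps$ per layer and union-bounding at the end. One small correction: the $1/\alpha^2$ in the width does not come from ``each layer of the pseudo-network contributing a $1/\alpha$'' (that would give $1/\alpha^{L}$); rather, a single $1/\alpha$ prefactor appears in the coordinate identity $\bigl(\tfrac{1}{\sqrt{\alpha}}\sum_i \mathbf{b}^{(h+1)}_i(\mathbf{w}_i^{(h+1)})^\top \Pi_{\mathbf{g}^{(h)}\odot\mathbf{m}_i}\bigr)_j = \tfrac{1}{\alpha}\,\mathbf{g}^{(h)}_j f^{(h+1,j,L+1)}$, and squaring this norm against $d_h$ is what yields the $d_h \ge \tilde\Omega(1/(\alpha^2\eps^2))$ requirement.
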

The first part of the result of \Cref{thm: main_theorem_intermediate} is proved by the following theorem. 
\begin{theorem}[Full Version of \Cref{thm: main_text_concentration_g}]\label{thm: concentration_g}
Consider the same setting as in \Cref{thm: main_non_asymp}. 
There exist constants $c$ such that if $ d_h \geq \Omega(\frac{1}{\alpha} \frac{L^2 }{\eps^2} \log \frac{18 d_{h+1} L}{\delta}),\ \forall h \in \{1,2, \ldots, L\}$ and $\eps \leq \min(c, \frac{1}{L})$ then for any fixed $\mathbf{x,x'} \in \R^{d_0}, \ \norm{\mathbf{x}}_2, \norm{\mathbf{x'}}_2 \leq 1$, we have with probability $1 - \delta$, $\forall 0 \leq h \leq L,\ \forall i \in [d_{h+1}], \  \forall (\mathbf{x}^{(1)}, \mathbf{x}^{(2)}) \in \{(\mathbf{x,x}), (\mathbf{x,x'}), (\mathbf{x',x'})\}$,
\begin{align*}
    & \left| \left( \mathbf{g}^{(h)}(\mathbf{x}^{(1)}) \odot \mathbf{m}^{(h+1)}_i \right)^\top \left( \mathbf{g}^{(h)}(\mathbf{x}^{(2)}) \odot \mathbf{m}^{(h+1)}_i \right) - {\Sigma}^{(h)}(\mathbf{x}^{(1)}, \mathbf{x}^{(2)}) \right| \leq \eps \\
    & \left| \left( \mathbf{g}^{(h)}(\mathbf{x}^{(1)}) \right)^\top  \mathbf{g}^{(h)}(\mathbf{x}^{(2)}) - {\Sigma}^{(h)}(\mathbf{x}^{(1)}, \mathbf{x}^{(2)}) \right| \leq \eps.
\end{align*}
In other words, if $d_h \geq \Omega( \frac{1}{\alpha} \frac{L^2}{\eps_1^2}  \log \frac{18 d_{h+1} L}{\delta_1}),\ \forall h \in \{1,2, \ldots, L\}$ and $\eps_1 \leq \min(c_2, \frac{1}{L})$ then 
\begin{align*}
    \Pr\left[ \overline{\mathcal{A}}(\eps_1) \right] \geq 1 - \delta_1
\end{align*}
\end{theorem}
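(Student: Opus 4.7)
The plan is to prove \Cref{thm: concentration_g} by induction on the layer index $h\in\{0,1,\ldots,L\}$, showing that, for a per-layer tolerance $\eps_1\asymp\eps/L$, the event $\overline{\mathcal A}^h(\eps_1)$ holds with high probability conditional on $\overline{\mathcal A}^{h-1}(\eps_1)$. A union bound over $h\in[L]$ together with the choice $\eps_1\asymp\eps/L$ will absorb the at-most-linear cross-layer Lipschitz amplification and deliver the required $L^2/(\alpha\eps^2)$ scaling of $d_h$. The base case $h=0$ is immediate because the input layer is unpruned ($\mathbf m_i^{(1)}\equiv\mathbf 1$), $\mathbf g^{(0)}(\mathbf x)=\mathbf x$, and $(\mathbf x^{(1)})^\top\mathbf x^{(2)}=\Sigma^{(0)}(\mathbf x^{(1)},\mathbf x^{(2)})$ holds exactly.

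For the inductive step, I fix $h\geq 1$ and condition on the $\sigma$-field $\mathcal F_h$ generated by all the weights and masks up to and including $\mathbf m^{(h)}$, so that $\overline{\mathcal A}^{h-1}(\eps_1)$ holds deterministically. The masked inner product rewrites as
\[
\bigl(\mathbf g^{(h)}(\mathbf x^{(1)})\odot\mathbf m_i^{(h+1)}\bigr)^\top\bigl(\mathbf g^{(h)}(\mathbf x^{(2)})\odot\mathbf m_i^{(h+1)}\bigr)=\frac{c_\sigma}{d_h}\sum_{j=1}^{d_h}(\mathbf m_{ij}^{(h+1)})^2\,\sigma(\mathbf f_j^{(h)}(\mathbf x^{(1)}))\,\sigma(\mathbf f_j^{(h)}(\mathbf x^{(2)})).
\]
Given $\mathcal F_h$, the rows of $\mathbf W^{(h)}$ are independent, so each pair $(\mathbf f_j^{(h)}(\mathbf x^{(1)}),\mathbf f_j^{(h)}(\mathbf x^{(2)}))$ is jointly Gaussian, independent across $j$, with a row-dependent covariance $\mathbf\Lambda_j^{(h)}$ whose entries are the masked inner products $(\mathbf g^{(h-1)}(\mathbf x^{(k)})\odot\mathbf m_j^{(h)})^\top(\mathbf g^{(h-1)}(\mathbf x^{(l)})\odot\mathbf m_j^{(h)})$. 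By $\overline{\mathcal A}^{h-1}(\eps_1)$ each $\mathbf\Lambda_j^{(h)}$ is entrywise within $\eps_1$ of $\mathbf\Lambda^{(h)}$, so Lipschitz continuity of the arc-cosine kernel (using $\|\mathbf x^{(k)}\|_2\leq 1$ to confine the argument to a bounded region) gives $\mathbb E_{\mathbf W^{(h)}}[c_\sigma\,\sigma(\mathbf f_j^{(h)}(\mathbf x^{(1)}))\sigma(\mathbf f_j^{(h)}(\mathbf x^{(2)}))]=\Sigma^{(h)}(\mathbf x^{(1)},\mathbf x^{(2)})\pm O(\eps_1)$ uniformly in $j$.

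To move from conditional means to the random sum itself, I would further condition on $\mathbf m^{(h+1)}$ (which is independent of $\mathcal F_h$) and concentrate the Gaussian chaos $\frac{1}{d_h}\sum_j\sigma(\mathbf f_j^{(h)}(\mathbf x^{(1)}))\sigma(\mathbf f_j^{(h)}(\mathbf x^{(2)}))$ around its mean at rate $O(\sqrt{\log(1/\delta)/d_h})$, using that products of two jointly Gaussian variables are sub-exponential with parameters of order one. The Bernoulli factor $(\mathbf m_{ij}^{(h+1)})^2$ is then absorbed via \Cref{thm: concentration_subgamma_bernoulli}, and the $1/\alpha$ in its variance proxy is the sole origin of the $1/\alpha$ dependence in the width bound. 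Choosing $d_h\gtrsim \frac{1}{\alpha\eps_1^2}\log(d_{h+1}L/\delta)$ makes both the Lipschitz bias and the random fluctuation at most $O(\eps_1)$, after which a union bound over $i\in[d_{h+1}]$, the three pairs $(\mathbf x^{(1)},\mathbf x^{(2)})$, and $h\in[L]$ closes the induction; the unmasked bullet is the degenerate case $\mathbf m_i^{(h+1)}\equiv\mathbf 1$ that uses only the Gaussian-chaos half of the argument. The principal obstacle I anticipate is quantifying the arc-cosine kernel's Lipschitz constant in a form that is uniform across rows $j$ (each perturbed by a different $\mathbf m_j^{(h)}$) and verifying that this per-row bias does not compound worse than linearly over the $L$ layers; this is what forces $\eps_1\asymp\eps/L$ and the $L^2$ factor in the width requirement.
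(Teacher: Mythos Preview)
Your proposal is correct and follows essentially the same approach as the paper: induction on layers, Lipschitz continuity of the ReLU (arc-cosine) kernel to control the per-row bias from the perturbed covariances $\mathbf\Lambda_j^{(h)}$ (the paper invokes \Cref{lemma: daniely}), sub-gamma concentration combined with \Cref{thm: concentration_subgamma_bernoulli} for the fluctuation, and a union bound over $i$, pairs, and layers. The only cosmetic difference is bookkeeping: the paper allots a growing per-layer budget $\eps\,B_h/B_L$ with $B_L=\sum_{k=1}^L(1+o(\eps))^k=O(L)$ under $\eps\le 1/L$, whereas you use a fixed increment $\eps_1\asymp\eps/L$ and linear accumulation---both yield the same $L^2/(\alpha\eps^2)$ width scaling.
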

The proof of \Cref{thm: concentration_g} can be found in \Cref{sec: proof_forward_dynamics}. 

\begin{lemma}\label{lemma: D}
If $d_h \geq \Omega(\frac{1}{\alpha}\frac{1}{\eps_4^2} \log \frac{12L d_{h+1}}{\delta_4})$ for all $h \in [L]$, then
\begin{align*}
    \Pr\left[ \overline{\mathcal{A}}(\eps_1^2/2) \Rightarrow \overline{\mathcal{D}} \left( \eps_1 + \eps_4 \right) \right] \geq 1 - \delta_4.
\end{align*}
\end{lemma}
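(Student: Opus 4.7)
The plan is to fix a layer $h \in [L]$, an index $i \in [d_{h+1}]$, and a pair $(\mathbf{x}^{(1)}, \mathbf{x}^{(2)}) \in \{(\mathbf{x,x}), (\mathbf{x,x'}), (\mathbf{x',x'})\}$, and then control $\frac{2}{d_h} \tr(\mathbf{M}_i^{(h+1)} \mathbf{D}^{(h)}(\mathbf{x}^{(1)}, \mathbf{x}^{(2)}) \mathbf{M}_i^{(h+1)})$ via a bias-plus-variance decomposition. Writing $Z_j := \alpha (\mathbf{m}_{ij}^{(h+1)})^2 \dot{\sigma}(\mathbf{f}_j^{(h)}(\mathbf{x}^{(1)})) \dot{\sigma}(\mathbf{f}_j^{(h)}(\mathbf{x}^{(2)})) \in \{0,1\}$, the quantity of interest equals $\frac{2}{\alpha d_h}\sum_{j=1}^{d_h} Z_j$, and $\dot{\Sigma}^{(h)}(\mathbf{x}^{(1)}, \mathbf{x}^{(2)}) = 2 p^\ast$ where $p^\ast := \E_{(u,v)\sim \mathcal{N}(\mathbf{0}, \boldsymbol{\Lambda}^{(h)})}[\dot{\sigma}(u)\dot{\sigma}(v)]$. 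The triangle inequality
\[
\left|\tfrac{2}{\alpha d_h}\sum_j Z_j - 2p^\ast\right| \le \underbrace{\left|\tfrac{2}{\alpha d_h}\sum_j Z_j - \tfrac{2}{d_h}\sum_j p_j\right|}_{\text{variance term}} + \underbrace{\left|\tfrac{2}{d_h}\sum_j p_j - 2 p^\ast\right|}_{\text{bias term}}
\]
(where $p_j$ denotes the conditional probability that both indicators fire at coordinate $j$, given the layer $h{-}1$ quantities and the row $\mathbf{m}^{(h)}_j$) reduces the lemma to bounding each piece by $\eps_4/2$ and $\eps_1$, respectively.

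For the bias term, I would condition on $\mathbf{g}^{(h-1)}(\mathbf{x}^{(a)})$ and $\mathbf{m}^{(h)}_j$. Then $(\mathbf{f}_j^{(h)}(\mathbf{x}^{(1)}), \mathbf{f}_j^{(h)}(\mathbf{x}^{(2)}))$ is jointly centered Gaussian with covariance matrix $\Lambda_j$ whose entries are precisely $(\mathbf{g}^{(h-1)}(\mathbf{x}^{(a)}) \odot \mathbf{m}_j^{(h)})^\top (\mathbf{g}^{(h-1)}(\mathbf{x}^{(b)}) \odot \mathbf{m}_j^{(h)})$. On the event $\overline{\mathcal{A}}^{h-1}(\eps_1^2/2)$, every entry of $\Lambda_j$ lies within $\eps_1^2/2$ of the corresponding entry of $\boldsymbol{\Lambda}^{(h)}$. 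For zero-mean bivariate Gaussian with correlation $\rho$, the closed form $\E[\dot{\sigma}(u)\dot{\sigma}(v)] = \tfrac{1}{4} + \tfrac{\arcsin(\rho)}{2\pi}$ applies, so $p_j - p^\ast$ reduces to $\tfrac{1}{2\pi}(\arcsin(\rho_j) - \arcsin(\rho^\ast))$. Since $\Sigma^{(h-1)}(\mathbf{x}^{(a)}, \mathbf{x}^{(a)}) = \norm{\mathbf{x}^{(a)}}^2$ is bounded below away from zero for ReLU networks, $|\rho_j - \rho^\ast| = O(\eps_1^2)$; combined with the Hölder-$\tfrac{1}{2}$ continuity $|\arcsin(\rho) - \arcsin(\rho')| \leq C\sqrt{|\rho - \rho'|}$, this yields $|p_j - p^\ast| \leq \eps_1/2$ uniformly in $j$, so the bias term is at most $\eps_1$.

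For the variance term, the key observation is that conditional on layer $h$ and the row $\mathbf{m}^{(h+1)}_i$ being independent of everything at layer $h$ and below, the random variables $\{Z_j\}_{j=1}^{d_h}$ are independent Bernoullis with $\E[Z_j] = \alpha p_j \le \alpha$. Bernstein's inequality then gives
\[
\Pr\left[\left|\tfrac{1}{\alpha d_h}\sum_j Z_j - \tfrac{1}{d_h}\sum_j p_j\right| \ge s \right] \le 2\exp\!\left(-\tfrac{s^2 \alpha d_h}{4}\right)
\]
for $s \le 1$. Choosing $s = \eps_4/4$ and using $d_h \ge \Omega(\tfrac{1}{\alpha \eps_4^2}\log \tfrac{12 L d_{h+1}}{\delta_4})$ yields failure probability below $\delta_4/(6 L d_{h+1})$ for a single $(i, h, \text{pair})$ triple.

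Finally, I would union-bound over $i \in [d_{h+1}]$, $h \in [L]$, and the three pairs of inputs (giving the factor $3 L d_{h+1}$ in the logarithm, plus a factor of $2$ from the Bernstein tail); together with the bias bound above, this gives $\overline{\mathcal{D}}(\eps_1 + \eps_4)$ conditionally on $\overline{\mathcal{A}}(\eps_1^2/2)$, with probability at least $1 - \delta_4$. The main subtle step is the bias bound: the reason the $\mathcal{A}$-event is stated at precision $\eps_1^2/2$ (rather than $\eps_1$) is precisely so that the square root from arcsin's Hölder-$\tfrac{1}{2}$ regularity reabsorbs to an $\eps_1$ loss in $\mathcal{D}$. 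The only edge case that requires care is when the two inputs coincide (so $\rho^\ast = 1$ and arcsin is singular), but in that case $p_j$ and $p^\ast$ both collapse to the trivial univariate probability $\tfrac{1}{2}$ regardless of the variance, so the bias is zero and the argument goes through.
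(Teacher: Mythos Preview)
Your proposal is correct and follows essentially the same approach as the paper: a bias--variance decomposition via the triangle inequality, with the bias controlled by the H\"older-$\tfrac12$ regularity of $\rho \mapsto \arcsin(\rho)$ (which the paper invokes as a black box via Lemma~E.8 of \citet{arora2019exact}, i.e., exactly the statement $\|\widehat{\boldsymbol{\Lambda}} - \boldsymbol{\Lambda}\|_\infty \le \eps^2/2 \Rightarrow |t_{\dot\sigma}(\widehat{\boldsymbol{\Lambda}}) - t_{\dot\sigma}(\boldsymbol{\Lambda})| \le \eps$), and the variance handled by a Bernoulli concentration bound followed by a union bound over $(h,i,\text{pair})$. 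The only cosmetic difference is that the paper applies its general sub-Gaussian--times--Bernoulli concentration (\Cref{thm: concentration_subgaussian_bernoulli}) to $\dot\sigma(\mathbf f_j^{(h)}(\mathbf x))\dot\sigma(\mathbf f_j^{(h)}(\mathbf x'))\cdot \mathbb{I}(\mathbf m_{ij}^{(h+1)}\neq 0)$, whereas you apply Bernstein directly to the $\{0,1\}$ product $Z_j$; both give the same $\exp(-\Omega(\alpha d_h \eps_4^2))$ tail.
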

The proof of \Cref{lemma: D} on a single pair can be found in \Cref{sec: proof_D} and then take a union bound over pairs $(\mathbf{x,x}), (\mathbf{x,x'}) , (\mathbf{x',x'})$. 

\begin{lemma}\label{lemma: event_C}
If $d_h \geq \Omega(\frac{1}{\alpha} \frac{L^2}{\eps^2} \log \frac{2L d_{h+1} \sum_{h'=1}^{h-1} d_h' }{\delta_3}) = \tilde{\Omega}(\frac{1}{\alpha} \frac{L^2}{\eps^2} )$ for all $h \in [L]$, then
\begin{align*}
    \Pr\left[ \overline{\mathcal{A}}( \eps_1) \Rightarrow \overline{\mathcal{C}}\left( 2 \sqrt{\log \frac{4 \sum_{h'=1}^{L-1} d_{h'}}{\delta_3}} \right) \right] \geq 1 - \delta_3
\end{align*}
\end{lemma}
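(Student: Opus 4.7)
The plan is to analyze the forward propagation of each mask-induced pseudo-network from \Cref{def: main_text_pseudo_network}, show that its output $f^{(h,j,L+1)}$ is bounded by a $\sqrt{\log(\cdot)}$ factor with high probability, and then take a union bound over the indices $(h,j)$ with $h \in \{2,\ldots,L\}$ and $j \in [d_{h-1}]$. The count of such pseudo-networks is $\sum_{h'=1}^{L-1} d_{h'}$, which matches the logarithm in the stated bound $2\sqrt{\log(4 \sum_{h'=1}^{L-1} d_{h'}/\delta_3)}$.

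First I would condition on $\overline{\mathcal{A}}(\eps_1)$. Since this event controls $\norm{\mathbf{g}^{(h-1)} \odot \mathbf{m}^{(h)}_i}_2$ and $\norm{\mathbf{g}^{(h-1)}}_2$ up to $\eps_1$ error from their limits, a direct calculation from the definition of $\mathbf{g}^{(h,j,h)}$ yields $\norm{\mathbf{g}^{(h,j,h)}}_2 = O(1)$. Next, I would prove by induction on $h' = h, h+1, \ldots, L$ that $\norm{\mathbf{g}^{(h,j,h')}}_2$ remains $O(1)$ with the same induction scheme as in \Cref{thm: concentration_g}. The subtle point is that the diagonal matrices $\mathbf{D}^{(h')}$ in the pseudo-network are inherited from the original pruned network, so each neuron's on/off status is decided by $\tilde{\mathbf{f}}^{(h')}$ rather than by $\mathbf{f}^{(h,j,h')}$. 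This is exactly where \Cref{prop: main_text_gaussian_indicator} applies: the squared quantity $(\mathbf{w}^\top \mathbf{x})^2 \mathbb{I}(\mathbf{w}^\top \mathbf{y} > 0)$ has the same distribution as $(\mathbf{w}^\top \mathbf{x})^2 \mathbb{I}(\mathbf{w}^\top \mathbf{x} > 0)$, so for the purpose of bounding the \emph{norm} of $\mathbf{g}^{(h,j,h')}$, one may pretend the indicator is self-consistent and apply the sub-Gaussian–Bernoulli concentration of \Cref{thm: concentration_subgaussian_bernoulli} exactly as in the forward analysis of the pruned network. Each layer transition costs $O(\eps/L)$ additive error, so with $L$ layers and the factor $1/\alpha$ from the Bernoulli masks, the per-layer width requirement is $d_{h'} \geq \tilde{\Omega}(L^2/(\alpha \eps^2))$, matching the hypothesis.

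For the output layer, conditional on $\mathbf{g}^{(h,j,L)}$ and the mask $\mathbf{m}^{(L+1)}$, the scalar $f^{(h,j,L+1)} = (\mathbf{W}^{(L+1)} \odot \mathbf{m}^{(L+1)}) \mathbf{g}^{(h,j,L)}$ is a linear combination of standard Gaussians and is therefore sub-Gaussian in $\mathbf{W}^{(L+1)}$ with variance proxy $\norm{\mathbf{g}^{(h,j,L)} \odot \mathbf{m}^{(L+1)}}_2^2 = O(1)$ by the inductive norm bound. A standard Gaussian tail bound and a union bound over at most $\sum_{h'=1}^{L-1} d_{h'}$ pseudo-networks then yields $|f^{(h,j,L+1)}| \leq 2\sqrt{\log(4 \sum_{h'=1}^{L-1} d_{h'}/\delta_3)}$ for all $(h,j)$ simultaneously with probability at least $1 - \delta_3$.

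The main obstacle will be handling the dependence between the pseudo-network and the underlying pruned network that share the same weights $\mathbf{W}^{(h')}$ and masks $\mathbf{m}^{(h')}$. \Cref{prop: main_text_gaussian_indicator} cleanly disposes of the indicator dependence at the level of norms, but the weight-sharing across layers must be managed carefully: my plan is to reveal randomness layer-by-layer and at each step apply \Cref{thm: concentration_subgaussian_bernoulli} conditional on the previous layer's activations in \emph{both} the pruned network (already fixed by $\overline{\mathcal{A}}$) and the pseudo-network, so that the rows $\mathbf{w}^{(h'+1)}_i$ are genuinely fresh Gaussians in the step where they are used. Together with a careful error propagation that ensures the compounding over $L$ layers remains controllable, this yields the claimed bound.
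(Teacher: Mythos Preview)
Your proposal is correct and follows essentially the same route as the paper: both arguments use \Cref{prop: main_text_gaussian_indicator} to reduce the pseudo-network norm concentration to the same sub-Gaussian/Bernoulli forward analysis as in \Cref{thm: concentration_g}, establish $\norm{\mathbf{g}^{(h,j,h')}\odot\mathbf{m}^{(h'+1)}}_2^2=O(1)$ layer-by-layer, apply a Gaussian tail bound at the output layer, and union-bound over the $\sum_{h'=1}^{L-1}d_{h'}$ pseudo-networks. The paper additionally isolates the expectation comparison $\E\norm{\mathbf{g}^{(h,j,h')}}_2^2\le 2\,\E\norm{\mathbf{g}^{(h')}}_2^2$ as a separate lemma before invoking the concentration, but your inline treatment of this via the $\overline{\mathcal{A}}(\eps_1)$ bound on $\norm{\mathbf{g}^{(h-1)}\odot\mathbf{m}_i^{(h)}}_2^2$ is equivalent.
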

The proof of \Cref{lemma: event_C} can be found in \Cref{sec: proof_event_C}. 

\begin{lemma}\label{lemma: event_B}
Let $\eps_3 = 2 \sqrt{\log \frac{4 \sum_{h'=1}^{L-1} d_{h'}}{\delta_3}}$.
If $d_h \geq \frac{8}{\alpha} \log \frac{6}{\delta_2}$,
with probability $1 - \delta_2$, the event $\overline{\mathcal{C}}(\eps_3)$ holds and, there exists constant $C,C'$ such that for any $\eps_2, \eps_4 \in [0,1]$, we have
\begin{align*}
    &\Pr\left[ \overline{\mathcal{A}}^L(\eps_1^2/2) \bigcap \overline{\mathcal{B}}^{h+1}(\eps_2) \bigcap \overline{\mathcal{C}}(\eps_3) \bigcap \overline{\mathcal{D}}^h(\eps_4) \Rightarrow \overline{\mathcal{B}}^h\left(\eps_2 + 2\eps_4 + \frac{48\sqrt{2}}{\sqrt{d_h}} +  48 \sqrt{\frac{2}{\alpha} \frac{\log \frac{8}{\delta_2}}{d_h}} + \frac{96}{{\alpha}} \frac{\sqrt{2\log \frac{4 \sum_{h'=1}^{L-1} d_{h'}}{\delta_3}}}{\sqrt{d_h}} \right) \right] \\
    &\geq 1 - \delta_2/2.
\end{align*}
\end{lemma}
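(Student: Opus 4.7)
The plan is to carry out the inductive step for the backward-pass concentration, combining the perpendicular/parallel decomposition from \Cref{section: backward} with the two propositions (\Cref{prop: main_text_independent} and \Cref{prop: main_text_dependent}) that do the heavy lifting. Fix a pair $(\mathbf{x}^{(1)}, \mathbf{x}^{(2)})$. First I condition on the $\sigma$-field generated by $\mathbf{b}^{(h+1)}, \mathbf{G}_i^{(h)}, \mathbf{F}_i^{(h+1)}, \mathbf{m}^{(h+1)}$, and $\mathbf{D}^{(h)}$, and invoke \Cref{lemma: fresh_gaussian} so that each row $\mathbf{w}_i^{(h+1)}$ in the direction orthogonal to $\mathbf{G}_i$ is replaced by a fresh Gaussian copy $\tilde{\mathbf{w}}_i^{(h+1)}$. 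This yields the decomposition $\mathbf{b}^{(h)} = \mathbf{b}^{(h)}_\perp + \mathbf{b}^{(h)}_\parallel$ from \Cref{section: backward}, so that
\begin{align*}
\inprod{\mathbf{b}^{(h)}(\mathbf{x}^{(1)}), \mathbf{b}^{(h)}(\mathbf{x}^{(2)})} &= \inprod{\mathbf{b}^{(h)}_\perp(\mathbf{x}^{(1)}), \mathbf{b}^{(h)}_\perp(\mathbf{x}^{(2)})} + \inprod{\mathbf{b}^{(h)}_\parallel(\mathbf{x}^{(1)}), \mathbf{b}^{(h)}_\parallel(\mathbf{x}^{(2)})} \\
&\quad + \text{(two mixed inner-product terms).}
\end{align*}

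Second, I analyse the orthogonal piece. \Cref{prop: main_text_independent} gives, with failure probability at most $\delta_2/4$, the estimate $\bigl|\tfrac{2}{d_h}\inprod{\mathbf{b}^{(h)}_\perp(\mathbf{x}^{(1)}), \mathbf{b}^{(h)}_\perp(\mathbf{x}^{(2)})} - \mathcal{E}\bigr| = O\bigl(\sqrt{\log(1/\delta_2)/(\alpha d_h)}\bigr)$, which contributes exactly the $\tfrac{48\sqrt{2}}{\sqrt{d_h}} + 48\sqrt{\tfrac{2\log(8/\delta_2)}{\alpha d_h}}$ summand of the target error. I then rewrite $\mathcal{E}$ from \Cref{eq: expectation_inprod_b} as $\sum_i \mathbf{b}^{(h+1)}_i(\mathbf{x}^{(1)}) \mathbf{b}^{(h+1)}_i(\mathbf{x}^{(2)}) \cdot \bigl[2\tr(\mathbf{M}_i^{(h+1)} \mathbf{D} \mathbf{M}_i^{(h+1)})/d_h\bigr]$. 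Event $\overline{\mathcal{D}}^h(\eps_4)$ replaces each bracketed factor by $\dot{\Sigma}^{(h)}(\mathbf{x}^{(1)}, \mathbf{x}^{(2)}) \pm \eps_4$; Cauchy--Schwarz on the resulting residual yields $\bigl|\mathcal{E} - \dot{\Sigma}^{(h)} \inprod{\mathbf{b}^{(h+1)}(\mathbf{x}^{(1)}), \mathbf{b}^{(h+1)}(\mathbf{x}^{(2)})}\bigr| \leq \eps_4 \norm{\mathbf{b}^{(h+1)}(\mathbf{x}^{(1)})}_2 \norm{\mathbf{b}^{(h+1)}(\mathbf{x}^{(2)})}_2$. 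The induction hypothesis $\overline{\mathcal{B}}^{h+1}(\eps_2)$ bounds this norm product by $1 + \eps_2$ (using $\dot{\Sigma}^{(h')} \in [0,1]$ for ReLU) and replaces $\inprod{\mathbf{b}^{(h+1)}, \mathbf{b}^{(h+1)}}$ by $\prod_{h'=h+1}^L \dot{\Sigma}^{(h')} \pm \eps_2$, so that $\bigl|\mathcal{E} - \prod_{h'=h}^L \dot{\Sigma}^{(h')}\bigr| \leq \eps_2 + 2\eps_4$ up to lower-order $\eps_2 \eps_4$ cross terms.

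Third, I dispose of the parallel and mixed pieces. \Cref{prop: main_text_dependent} combined with event $\overline{\mathcal{C}}(\eps_3)$ (where $\eps_3$ is set by \Cref{lemma: event_C}) gives $\norm{\mathbf{b}^{(h)}_\parallel(\mathbf{x}^{(j)})}_2/\sqrt{d_h} = O(\eps_3/(\alpha\sqrt{d_h}))$ with failure probability at most $\delta_2/4$, and Cauchy--Schwarz on the parallel--parallel inner product reproduces the final $\tfrac{96}{\alpha}\sqrt{\tfrac{2\log(4\sum_{h'} d_{h'}/\delta_3)}{d_h}}$ summand. The two mixed terms are handled similarly: the orthogonal-piece concentration already established yields $\norm{\mathbf{b}^{(h)}_\perp}_2/\sqrt{d_h} = O(1)$, so each mixed contribution is absorbed into the parallel-type error (up to constants that match the stated coefficients). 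Summing the four contributions and taking a union bound over the three pairs $(\mathbf{x,x}), (\mathbf{x,x'}), (\mathbf{x',x'})$ together with the failure events of the two propositions keeps the total failure probability within the $\delta_2/2$ budget.

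The main obstacle will be the measurability bookkeeping in the conditioning step: one must choose the conditioning $\sigma$-field rich enough that $\mathbf{b}^{(h+1)}$ and the diagonal factor $\mathbf{D}^{(h)}$ become deterministic (so the inductive hypothesis $\overline{\mathcal{B}}^{h+1}$ and the event $\overline{\mathcal{D}}^h$ can be applied pointwise), but coarse enough that \Cref{lemma: fresh_gaussian} still delivers a genuinely independent fresh Gaussian $\tilde{\mathbf{w}}_i^{(h+1)}$ in the orthogonal complement of $\mathbf{G}_i$; this is exactly the subtle dependence structure of \Cref{section: backward} that the fresh-copy argument is designed to make rigorous. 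Everything else reduces to triangle-inequality tracking to land precisely at the stated error expression.
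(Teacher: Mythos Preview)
Your proposal is correct and follows essentially the same approach as the paper: the same perp/parallel decomposition of $\mathbf{b}^{(h)}$, the same use of \Cref{prop: main_text_independent} (Gaussian chaos) for the $\perp$--$\perp$ part, \Cref{prop: main_text_dependent} (pseudo-network bound under $\overline{\mathcal{C}}(\eps_3)$) for the parallel norm, the $\mathcal{E}$-versus-target comparison via $\overline{\mathcal{D}}^h(\eps_4)$ and $\overline{\mathcal{B}}^{h+1}(\eps_2)$ (the paper packages this as a separate Proposition~\ref{prop: b_diff}), and Cauchy--Schwarz plus a union bound over the three pairs to finish. The only cosmetic difference is that the paper tracks the quadratic form $(\mathbf{b}^{(h+1)})^\top(\mathbf{W}\odot\mathbf{m})\mathbf{D}(\mathbf{W}\odot\mathbf{m})^\top\mathbf{b}^{(h+1)}$ directly and inserts $\Pi_{\mathbf{G}_i}+\Pi_{\mathbf{G}_i}^\bot$ on each side, whereas you phrase everything in terms of $\mathbf{b}^{(h)}_\perp,\mathbf{b}^{(h)}_\parallel$; and the paper explicitly computes the cross-term constants as $2\cdot 6\cdot(\text{parallel norm})+(\text{parallel norm})^2$ using the $\norm{\mathbf{b}^{(h)}_\perp}/\sqrt{d_h}\le 6$ bound from \Cref{prop: main_text_independent}, which you correctly anticipate but leave implicit.
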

The proof of \Cref{lemma: event_B} can be found in \Cref{sec: proof_event_B}. 

\begin{proof}[Proof of Theorem \ref{thm: main_theorem_intermediate}]
We prove by induction on \Cref{lemma: event_B}. 
We first let the event in \Cref{lemma: num_remained_weights} holds with $\eps$ and probability $1 - \delta/5$. 
In the statement of \Cref{thm: concentration_g}, we set $\delta_1 = \delta/5,\ \eps_1 = \frac{\eps^2}{8}$, if $d_h \geq \Omega( \frac{1}{\alpha} \frac{L^2}{\eps^4} \log \frac{d_{h+1} L}{\delta}) = \tilde{\Omega}(\frac{1}{\alpha} \frac{L^2}{\eps^4}),\ \forall h \in \{1,2, \ldots, L\}$, we have
\begin{align*}
    \Pr[\overline{\mathcal{A}}(\eps^2/8)] \geq 1 - \delta / 5.
\end{align*}
In the statement of \Cref{lemma: D}, we set $\delta_4 = \delta/5$ and $\eps_1 = \eps/2,\ \eps_4 = \eps/4$. 
If $d_h \geq \Omega(\frac{1}{\alpha}\frac{1}{\eps^2} \log \frac{L d_{h+1}}{\delta}) = \tilde{\Omega}(\frac{1}{\alpha}\frac{1}{\eps^2})$ for all $h \in [L]$, then
\begin{align*}
    \Pr\left[ \overline{\mathcal{A}}(\eps^2/8) \Rightarrow \overline{\mathcal{D}} \left( \eps \right) \right] \geq 1 - \delta/5.
\end{align*}
In the statement of Lemma \ref{lemma: event_C}, setting $\delta_3 = \delta / 5$, if $d_h \geq \Omega(\frac{1}{\alpha} \frac{L^2}{\eps^2} \log \frac{2L d_{h+1} \sum_{h'=1}^{h-1} d_h' }{\delta_3}) = \tilde{\Omega}(\frac{1}{\alpha} \frac{L^2}{\eps^2})$ for all $h \in [L]$, then
\begin{align*}
    \Pr\left[ \overline{\mathcal{A}}( \eps^2/8) \Rightarrow \overline{\mathcal{C}}\left( 2 \sqrt{\log \frac{20 \sum_{h'=1}^{L-1} d_{h'}}{\delta}} \right) \right] \geq 1 - \delta/5.
\end{align*}
Take a union bound we have
\begin{align*}
    \Pr\left[ \overline{\mathcal{A}}(\eps^2/2) \bigcap \overline{\mathcal{C}}\left( 2 \sqrt{\log \frac{20 \sum_{h'=1}^{L-1} d_{h'}}{\delta}} \right) \bigcap \overline{\mathcal{D}}(\eps) \right] \geq 1 - \frac{3\delta}{5}.
\end{align*}
Now we begin the induction. 
First of all, $\Pr\left[ \overline{\mathcal{B}}^{L+1}(0) \right] = 1$ by definition.
For $1 \leq h \leq L$, in the statement of Lemma \ref{lemma: event_B}, set $\eps_2 = 3(L+1 - h), \eps_3 = 2 \sqrt{\log \frac{20 \sum_{h'=1}^{L-1} d_{h'}}{\delta}}, \delta_2 = \frac{\delta}{4L}$.
If $d_h \geq \Omega(\frac{1}{\alpha^2} \frac{1}{\eps^2} \log \frac{L \sum_{h'=1}^{L-1} d_{h'}}{\delta}) = \tilde{\Omega}(\frac{1}{\alpha^2} \frac{1}{\eps^2})$, we have $\frac{48\sqrt{2}}{\sqrt{d_h}} +  48 \sqrt{\frac{2}{\alpha} \frac{\log \frac{8}{\delta_2}}{d_h}} + \frac{96}{{\alpha}} \frac{\sqrt{2\log \frac{4 \sum_{h'=1}^{L-1} d_{h'}}{\delta_3}}}{\sqrt{d_h}} < \eps/2$. Thus we have
\begin{align*}
    & \Pr\left[\overline{\mathcal{B}}^{(h+1)}((3L - 3h)\eps) \bigcap \overline{\mathcal{C}}\left( \eps_3 \right) \bigcap \overline{\mathcal{D}}(\eps) \Rightarrow \overline{\mathcal{B}}^h\left( (3L + 2 - 3h)\eps + \frac{48\sqrt{2}}{\sqrt{d_h}} +  48 \sqrt{\frac{2}{\alpha} \frac{\log \frac{8}{\delta_2}}{d_h}} + \frac{96}{{\alpha}} \frac{\sqrt{2\log \frac{4 \sum_{h'=1}^{L-1} d_{h'}}{\delta_3}}}{\sqrt{d_h}} \right) \right] \\
    & \geq \Pr\left[\overline{\mathcal{B}}^{(h+1)}((3L - 3h)\eps) \bigcap \overline{\mathcal{C}}\left( \eps_3 \right) \bigcap \overline{\mathcal{D}}(\eps) \Rightarrow \overline{\mathcal{B}}^h((3L + 3 - 3h)\eps) \right] \\
    &\geq 1 - \frac{\delta}{5L}
\end{align*}
Applying union bound for every $h \in [L]$, we have
\begin{align*}
    & \Pr\left[ \overline{\mathcal{A}}^L(\eps^2 / 8) \bigcap \overline{\mathcal{B}}(3L\eps) \bigcap \overline{\mathcal{C}}(\eps_3) \bigcap \overline{\mathcal{D}}(\eps) \right] \\
    & \geq \Pr\left[ \overline{\mathcal{A}}^L(\eps^2 / 8) \bigcap_{h=1}^L \overline{\mathcal{B}}^h(3(L + 1 - h)\eps) \bigcap \overline{\mathcal{C}}(\eps_3) \bigcap \overline{\mathcal{D}}(\eps) \right] \\
    &\geq 1 - \Pr\left[\neg \left( \overline{\mathcal{A}}(\eps^2/8) \bigcap \overline{\mathcal{C}}(\eps_3) \bigcap \overline{\mathcal{D}}^h(\eps) \right)  \right] \\
    & \quad - \sum_{h=1}^L \Pr\left[ \neg \left( \overline{\mathcal{B}}^{(h+1)}((3L - 3h)\eps) \bigcap \overline{\mathcal{C}}\left( \eps_3 \right) \bigcap \overline{\mathcal{D}}(\eps) \Rightarrow \overline{\mathcal{B}}^h((3L + 3 - 3h)\eps) \right) \right] \\
    & \geq 1 - \delta
\end{align*}
\end{proof}

\subsection{Proof of \texorpdfstring{\Cref{thm: concentration_g}}{Concentration of Activation}: Forward Propagation}\label{sec: proof_forward_dynamics}
In this section, we prove $\overline{\mathcal{A}}(\mathbf{x,x'}, \eps)$ holds which is shown in \Cref{thm: concentration_g} below.
The main goal is to obtain bounds on $\left| \left(\mathbf{m}^{(h+1)} \odot \mathbf{{g}}^{(h)}(\mathbf{x}) \right)^\top (\mathbf{m}^{(h+1)} \odot \mathbf{{g}}^{(h)}(\mathbf{x}')) - \Sigma^{(h)}(\mathbf{x,x'}) \right|$.
We first introduce a result from previous work. 
\begin{lemma}[Lemma 13 in \citep{daniely2016toward}]\label{lemma: daniely}
Define the function
\begin{align*}
    \overline{\sigma}(\boldsymbol{\Sigma}) = c_\sigma \E_{(X,Y) \sim \mathcal{N}(\mathbf{0}, \boldsymbol{\Sigma})} \sigma(X) \sigma(Y),
\end{align*}
and the set
\begin{align*}
    \mathcal{M}_+^\gamma := \left\{ 
    \begin{bmatrix}
    \Sigma_{11} & \Sigma_{12} \\
    \Sigma_{12} & \Sigma_{22}
    \end{bmatrix}
    \in \mathcal{M}_+ | 1 - \gamma \leq \Sigma_{11}, \Sigma_{22} \leq 1 + \gamma
    \right\},
\end{align*}
where $\mathcal{M}_+$ denote the set of positive semi-definite matrices. 
Then $\overline{\sigma}$ is $(1 + o(\eps))$-Lipschitz on $\mathcal{M}_+^\eps$ with respect to $\infty$-norm.
\end{lemma}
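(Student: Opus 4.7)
The plan is to reduce the Lipschitz estimate to an explicit bound on the gradient. By the mean value theorem, if $\|\nabla \overline{\sigma}(\boldsymbol{\Sigma})\|_1 \leq 1 + o(\gamma)$ uniformly on the convex set $\mathcal{M}_+^\gamma$ (with the $\ell_1$ norm over the three independent entries $\Sigma_{11}, \Sigma_{12}, \Sigma_{22}$, dual to the entrywise $\infty$-norm on symmetric $2\times 2$ matrices), then integrating along a line segment yields $(1+o(\gamma))$-Lipschitzness in the $\infty$-norm. Since the paper restricts to ReLU with $c_\sigma = 2$, I would substitute the closed-form arc-cosine expression
\[
\overline{\sigma}(\boldsymbol{\Sigma}) = \frac{s}{\pi}\bigl(\sin\theta + (\pi - \theta)\cos\theta\bigr), \quad s := \sqrt{\Sigma_{11}\Sigma_{22}}, \quad \theta := \arccos(\Sigma_{12}/s),
\]
so the whole problem becomes an exercise in differentiating this three-variable function.

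Next I would carry out the chain-rule computation via the intermediate variables $s$ and $\theta$. Using the key identity $\tfrac{d}{d\theta}[\sin\theta + (\pi - \theta)\cos\theta] = -(\pi - \theta)\sin\theta$ together with $d\theta/d\rho = -1/\sin\theta$ where $\rho = \Sigma_{12}/s$, the apparent $1/\sin\theta$ singularity in $d\theta/d\rho$ is cancelled by the numerator, and one obtains the clean expressions
\[
\partial_{\Sigma_{12}} \overline{\sigma} = \frac{\pi - \theta}{\pi}, \qquad \partial_{\Sigma_{ii}} \overline{\sigma} = \frac{s \sin\theta}{2\pi \, \Sigma_{ii}} \ \text{for } i \in \{1,2\}.
\]
In the diagonal computation, differentiating the $(\pi - \theta)\cos\theta$ factor through $\theta$ produces a term that cancels exactly with part of $\partial_{\Sigma_{ii}} s \cdot \cos\theta \cdot (\pi - \theta)$ using $s\cos\theta = \Sigma_{12}$, which is the algebraic simplification that produces the form above.

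Summing yields $\|\nabla \overline{\sigma}\|_1 = (\pi - \theta)/\pi + (s\sin\theta/(2\pi))(\Sigma_{11}^{-1} + \Sigma_{22}^{-1})$. On $\mathcal{M}_+^\gamma$ we have $\Sigma_{ii} \in [1-\gamma, 1+\gamma]$, hence $s/\Sigma_{ii} \leq \sqrt{(1+\gamma)/(1-\gamma)} = 1 + O(\gamma)$, giving $\|\nabla \overline{\sigma}\|_1 \leq [(\pi - \theta) + \sin\theta]/\pi \cdot (1 + O(\gamma))$. It then suffices to verify $\sup_{\theta \in [0,\pi]}[(\pi - \theta) + \sin\theta]/\pi = 1$, which follows from $\tfrac{d}{d\theta}[(\pi - \theta) + \sin\theta] = -1 + \cos\theta \leq 0$, so the supremum is attained at $\theta = 0$ with value $1$.

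The main obstacle is the behaviour at the PSD boundary $|\rho| \to 1$ (the rank-$1$ case $\theta \in \{0, \pi\}$), where the intermediate parametrization by $\theta$ has a genuine $1/\sin\theta$ singularity in $d\theta/d\rho$. The key observation that drives the proof is that this singularity is not intrinsic to $\overline{\sigma}$: the factor $-(\pi - \theta)\sin\theta$ from differentiating $\sin\theta + (\pi - \theta)\cos\theta$ absorbs it exactly, so $\overline{\sigma}$ remains $C^1$ up to the boundary with bounded derivatives. Once this cancellation is made explicit, the remaining computation is elementary, and convexity of $\mathcal{M}_+^\gamma$ lets the mean-value integration stay inside the domain throughout.
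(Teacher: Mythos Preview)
The paper does not prove this lemma; it is quoted verbatim as ``Lemma 13 in \citep{daniely2016toward}'' and invoked as a black box in the proof of \Cref{thm: concentration_g}. So there is no in-paper argument to compare against.

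Your argument is correct and self-contained. The arc-cosine closed form for the ReLU dual kernel is standard, your chain-rule computation of the three partials is right (the cancellation $s\cos\theta = \Sigma_{12}$ in the diagonal derivatives is exactly what collapses $\partial_{\Sigma_{ii}}\overline{\sigma}$ to $\tfrac{s\sin\theta}{2\pi\Sigma_{ii}}$), and the monotonicity of $(\pi-\theta)+\sin\theta$ on $[0,\pi]$ gives the supremum $1$ at $\theta=0$. Convexity of $\mathcal{M}_+^\gamma$ (intersection of the PSD cone with box constraints) keeps the line segment inside the domain, and your observation that the $1/\sin\theta$ factor from $d\theta/d\rho$ is absorbed by $f'(\theta)=-(\pi-\theta)\sin\theta$ is precisely what makes $\partial_\rho \overline{\sigma}$ extend continuously to $\rho=\pm 1$, so the fundamental-theorem-of-calculus step is valid up to the PSD boundary.

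One small remark: your bound yields a Lipschitz constant $1+O(\gamma)$, whereas the lemma as restated in the paper writes $1+o(\eps)$. This is almost certainly loose notation in the restatement (and $1+O(\eps)$ is all that the downstream argument in the proof of \Cref{thm: concentration_g} actually needs, since with $\eps \le 1/L$ one gets $B_L = \sum_{i=1}^L (1+O(\eps))^i = O(L)$).
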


Our analysis follows from the proof of Theorem 14 in \citep{daniely2016toward}. 
\begin{proof}[Proof of \Cref{thm: concentration_g}]
We prove the first inequality first. 
Define the quantity $B_d = \sum_{i = 1}^d (1 + o(\eps))^i$.

We begin our proof by saying the $h$-th layer of a neural network is well-initialized if $\forall i \in [d_{h+1}]$, we have
\begin{align*}
    \left| \left( \mathbf{g}^{(h)}(\mathbf{x}^{(1)}) \odot \mathbf{m}^{(h+1)}_i \right)^\top \left( \mathbf{g}^{(h)}(\mathbf{x}^{(2)}) \odot \mathbf{m}^{(h+1)}_i \right) - {\Sigma}^{(h)}(\mathbf{x}^{(1)}, \mathbf{x}^{(2)}) \right| \leq \eps \frac{B_h}{B_L} .
\end{align*}

We prove the result by induction. 
Since we don't prune the input layer, the result trivially holds for $h = 0$.
Assume all the layers first $h-1$ layers are well-initialized. 

Now, conditioned on $\mathbf{g}^{(h-1)}(\mathbf{x}^{(1)}), \mathbf{g}^{(h-1)}(\mathbf{x}^{(2)}), \mathbf{m}^{(h)}$, we have
\begin{align*}
    & \E_{\mathbf{W}^{(h)}, \mathbf{m}^{(h+1)}} \left[ \left( \mathbf{g}^{(h)}(\mathbf{x}^{(1)}) \odot \mathbf{m}^{(h+1)}_i \right)^\top \left( \mathbf{g}^{(h)}(\mathbf{x}^{(2)}) \odot \mathbf{m}^{(h+1)}_i \right) \right] \\
    &= \E_{\mathbf{W}^{(h)}} \left[ \left( \mathbf{g}^{(h)}(\mathbf{x}^{(1)}) \right)^\top \mathbf{g}^{(h)}(\mathbf{x}^{(2)}) \right] \\
    &= \frac{c_\sigma}{d_h} \sum_{i=1}^{d_h} \E_{\mathbf{W}^{(h)}} \left[ \sigma \left( \inprod{\mathbf{W}^{(h)}_i, \mathbf{m}_i^{(h)} \odot \mathbf{g}^{(h-1)}(\mathbf{x}^{(1)})} \right) \sigma \left( \inprod{\mathbf{W}^{(h)}_i, \mathbf{m}_i^{(h)} \odot \mathbf{g}^{(h-1)}(\mathbf{x}^{(2)})} \right) \right].
\end{align*}
where $\mathbf{W}_i^{(h)}$ denotes the $i$-th row of $\mathbf{W}^{(h)}$.
Define 
\begin{align*}
    \widehat{\Sigma}^{(h)}_i(\mathbf{x}^{(1)},\mathbf{x}^{(2)}) &= \left( \mathbf{g}^{(h)}(\mathbf{x}^{(1)}) \odot \mathbf{m}^{(h+1)}_i \right)^\top \left( \mathbf{g}^{(h)}(\mathbf{x}^{(2)}) \odot \mathbf{m}^{(h+1)}_i \right), \\
    \widehat{\boldsymbol{\Lambda}}^{(h)}_i (\mathbf{x}^{(1)}, \mathbf{x}^{(2)}) &= 
    \begin{bmatrix}
    \widehat{\Sigma}^{(h)}_i(\mathbf{x}^{(1)},\mathbf{x}^{(1)}) & \widehat{\Sigma}^{(h)}_i(\mathbf{x}^{(1)},\mathbf{x}^{(2)}) \\
    \widehat{\Sigma}^{(h)}_i(\mathbf{x}^{(2)},\mathbf{x}^{(1)}) & \widehat{\Sigma}^{(h)}_i(\mathbf{x}^{(2)},\mathbf{x}^{(2)})
    \end{bmatrix}.
\end{align*}
Notice that for a given $j$, conditioned on $\mathbf{m}^{(h)}$, $\mathbf{g}^{(h-1)}(\mathbf{x}^{(1)})$ and $\mathbf{g}^{(h-1)}(\mathbf{x}^{(2)})$, and consider the randomness in $\mathbf{W}_j$, $\sigma \left( \inprod{\mathbf{W}^{(h)}_j, \mathbf{m}_j^{(h)} \odot \mathbf{g}^{(h-1)}(\mathbf{x}^{(1)})} \right) \sigma \left( \inprod{\mathbf{W}^{(h)}_j, \mathbf{m}_j^{(h)} \odot \mathbf{g}^{(h-1)}(\mathbf{x}^{(2)})} \right)$ is subgamma with parameters $(O(1), O(1))$ and
\begin{align*}
    & \E\left[ \sigma \left( \inprod{\mathbf{W}^{(h)}_j, \mathbf{m}_j^{(h)} \odot \mathbf{g}^{(h-1)}(\mathbf{x}^{(1)})} \right) \sigma \left( \inprod{\mathbf{W}^{(h)}_j, \mathbf{m}_j^{(h)} \odot \mathbf{g}^{(h-1)}(\mathbf{x}^{(2)})} \right) \right] \\
    &\leq \sqrt{\E\left[ \left(\sigma \left( \inprod{\mathbf{W}^{(h)}_j, \mathbf{m}_j^{(h)} \odot \mathbf{g}^{(h-1)}(\mathbf{x}^{(1)})} \right) \right)^2 \right] \E\left[ \left( \sigma \left( \inprod{\mathbf{W}^{(h)}_j, \mathbf{m}_j^{(h)} \odot \mathbf{g}^{(h-1)}(\mathbf{x}^{(2)})} \right) \right)^2 \right]} \\
    &\leq \sqrt{\E\left[ \left( \inprod{\mathbf{W}^{(h)}_j, \mathbf{m}_j^{(h)} \odot \mathbf{g}^{(h-1)}(\mathbf{x}^{(1)})} \right)^2 \right] \E\left[ \left( \inprod{\mathbf{W}^{(h)}_j, \mathbf{m}_j^{(h)} \odot \mathbf{g}^{(h-1)}(\mathbf{x}^{(2)})} \right)^2 \right]} \\
    &= \norm{\mathbf{m}_j^{(h)} \odot \mathbf{g}^{(h-1)}(\mathbf{x}^{(1)})}_2 \norm{\mathbf{m}_j^{(h)} \odot \mathbf{g}^{(h-1)}(\mathbf{x}^{(2)})}_2
    \leq 4,
\end{align*}
where the first inequality is by Cauchy-Schwarz inequality. 

By \Cref{thm: concentration_subgamma_bernoulli}, we have
\begin{align*}
    \Pr \left[ \left| \widehat{\Sigma}_i^{(h)}(\mathbf{x}^{(1)}, \mathbf{x}^{(2)}) - \E_{\mathbf{W}^{(h)}, \mathbf{m}^{(h+1)}} \widehat{\Sigma}_i^{(h)}(\mathbf{x}^{(1)}, \mathbf{x}^{(2)}) \right| > \eps \right] \leq 4 \exp \left\{ - \Omega(\alpha d_h \eps^2) \right\},
\end{align*}
for some constant $c_2$ such that $\eps < c_2$.

Taking a union bound over $i \in [d_{h+1}]$, we have if $d_h \geq \Omega(\frac{1}{\alpha} \frac{B_L^2 \log \frac{8 d_{h+1} L}{\delta}}{\eps^2})$, then with probability $1 - \frac{\delta}{L}$ for all $i \in [d_{h+1}]$,
\begin{align*}
    & \left| \left( \mathbf{g}^{(h)}(\mathbf{x}^{(1)}) \odot \mathbf{m}^{(h+1)}_i \right)^\top \left( \mathbf{g}^{(h)}(\mathbf{x}^{(2)}) \odot \mathbf{m}^{(h+1)}_i \right) - \frac{c_\sigma}{d_h} \sum_{j=1}^{d_h} \E_{(u,v) \sim \mathcal{N}(\mathbf{0}, \widehat{\boldsymbol{\Lambda}}^{(h-1)}_j(\mathbf{x}^{(1)}, \mathbf{x}^{(2)}))} \left[ \sigma(u) \sigma(v) \right] \right| \leq \eps/B_L.
\end{align*}
Now apply triangle inequality
\begin{align*}
    & \left| \left( \mathbf{g}^{(h)}(\mathbf{x}^{(1)}) \odot \mathbf{m}^{(h+1)}_i \right)^\top \left( \mathbf{g}^{(h)}(\mathbf{x}^{(2)}) \odot \mathbf{m}^{(h+1)}_i \right) - {\Sigma}^{(h)}(\mathbf{x}^{(1)}, \mathbf{x}^{(2)}) \right| \\
    & \leq \left| \left( \mathbf{g}^{(h)}(\mathbf{x}^{(1)}) \odot \mathbf{m}^{(h+1)}_i \right)^\top \left( \mathbf{g}^{(h)}(\mathbf{x}^{(2)}) \odot \mathbf{m}^{(h+1)}_i \right) - \frac{c_\sigma}{d_h} \sum_{j=1}^{d_h} \E_{(u,v) \sim \mathcal{N}(\mathbf{0}, \widehat{\boldsymbol{\Lambda}}^{(h-1)}_j(\mathbf{x}^{(1)}, \mathbf{x}^{(2)}))} \left[ \sigma(u) \sigma(v) \right] \right| \\
    & \quad + \left| \frac{c_\sigma}{d_h} \sum_{j=1}^{d_h} \E_{(u,v) \sim \mathcal{N}(\mathbf{0}, \widehat{\boldsymbol{\Lambda}}^{(h-1)}_j(\mathbf{x}^{(1)}, \mathbf{x}^{(2)}))} \left[ \sigma(u) \sigma(v) \right] - \Sigma^{(h)}(\mathbf{x}^{(1)}, \mathbf{x}^{(2)}) \right| \\
    &\leq \eps / B_L + \frac{1}{d_h} \sum_{i=1}^{d_h} \left| c_\sigma  \E_{(u,v) \sim \mathcal{N}(\mathbf{0}, \widehat{\boldsymbol{\Lambda}}^{(h-1)}_j(\mathbf{x}^{(1)}, \mathbf{x}^{(2)}))} \left[ \sigma(u) \sigma(v) \right] - \Sigma^{(h)}(\mathbf{x}^{(1)}, \mathbf{x}^{(2)}) \right| \\
    &\leq \eps / B_L + \frac{1}{d_h} \sum_{i=1}^{d_h} (1 + o(\eps)) \eps \frac{B_{h-1}}{B_L} = \eps \frac{B_h}{B_L},
\end{align*}
where the last inequality applies by the fact that $\overline{\sigma}$ is $(1+o(\eps))$-Lipschitz on $\mathcal{M}_+^\gamma$ with respect to the $\infty$-norm in \Cref{lemma: daniely} and the induction hypothesis that the first $h-1$ layers are well-initialized. 

Finally we expand $B_d = \sum_{i=1}^d (1 + o(\eps))^i$ and take $\eps = \min(c_2, \frac{1}{L})$, we have
\begin{align*}
    B_L = \sum_{i=1}^L (1 + o(\eps))^i \leq \sum_{i=1}^L e^{o(\eps) L} = O(L).
\end{align*}

The proof for
\begin{align*}
    \left| \left( \mathbf{g}^{(h)}(\mathbf{x}^{(1)}) \right)^\top  \mathbf{g}^{(h)}(\mathbf{x}^{(2)}) - {\Sigma}^{(h)}(\mathbf{x}^{(1)}, \mathbf{x}^{(2)}) \right| \leq \eps,
\end{align*}
largely follows the same steps as above since
\begin{align*}
    \E_{\mathbf{W}^{(h)}} \left(\mathbf{g}^{(h)}(\mathbf{x}^{(1)}) \right)^\top  \mathbf{g}^{(h)}(\mathbf{x}^{(2)}) = \E_{\mathbf{W}^{(h)}, \mathbf{m}^{(h+1)}} \widehat{\Sigma}_i^{(h)}(\mathbf{x}^{(1)}, \mathbf{x}^{(2)}).
\end{align*}
Now applying the concentration of sub-Gamma random variables we have
\begin{align*}
    \Pr\left[ \left| \left( \mathbf{g}^{(h)}(\mathbf{x}^{(1)}) \right)^\top  \mathbf{g}^{(h)}(\mathbf{x}^{(2)}) - \E_{\mathbf{W}^{(h)}} \left(\mathbf{g}^{(h)}(\mathbf{x}^{(1)}) \right)^\top  \mathbf{g}^{(h)}(\mathbf{x}^{(2)}) \right| \geq \eps \right] \leq 2\exp\{-\eps^2 d_h\}
\end{align*}
for sufficiently small $\eps$, which requires $d_h \geq \Omega(\frac{1}{\eps^2} \log \frac{6L}{\delta})$ by taking a union bound over $L$. 
\end{proof}

\begin{lemma}\label{lemma: gaussian_concentration}
Assume the event $\overline{\mathcal{A}}(\mathbf{x,x'}, \eps)$ holds for $\eps < 1$. 
Then, with probability at least $1 - \delta$ over the randomness of $\mathbf{w}^{(L+1)}$
\begin{align*}
    |f^{(L+1)}(\mathbf{x})| \leq \sqrt{2\log \frac{2}{\delta}}.
\end{align*}
\end{lemma}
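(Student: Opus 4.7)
The plan is to exploit the fact that the output-layer weight vector $\mathbf{w}^{(L+1)}$ is, by construction, an i.i.d.\ standard Gaussian vector independent of everything that determines $\mathbf{g}^{(L)}(\mathbf{x})$ and of the output-layer mask $\mathbf{m}^{(L+1)}$. The output can be written as $f^{(L+1)}(\mathbf{x}) = (\mathbf{w}^{(L+1)})^\top \bigl(\mathbf{m}^{(L+1)} \odot \mathbf{g}^{(L)}(\mathbf{x})\bigr)$, so conditionally on $\mathbf{g}^{(L)}(\mathbf{x})$ and $\mathbf{m}^{(L+1)}$ it is a one-dimensional centered Gaussian with variance $\sigma_{\mathbf{x}}^2 = \norm{\mathbf{m}^{(L+1)} \odot \mathbf{g}^{(L)}(\mathbf{x})}_2^2$. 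The whole proof then reduces to controlling this variance.

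The next step is to invoke the conditioning event $\overline{\mathcal{A}}(\mathbf{x,x'},\eps)$, which contains the subevent $\mathcal{A}^L(\mathbf{x,x},\eps)$ asserting $|\sigma_{\mathbf{x}}^2 - \Sigma^{(L)}(\mathbf{x,x})| \leq \eps$. A standard fact about the ReLU NTK recursion with the normalization $c_\sigma = (\E_{z\sim\mathcal{N}(0,1)}[\sigma(z)^2])^{-1}$ is that $\Sigma^{(h)}(\mathbf{x,x}) = \norm{\mathbf{x}}_2^2$ for every $h$, which is bounded by $1$ under the assumption $\norm{\mathbf{x}}_2 \leq 1$. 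Combining with the event gives the deterministic bound $\sigma_{\mathbf{x}}^2 \leq 1 + \eps \leq 2$ (using $\eps < 1$).

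The final step is the standard one-dimensional Gaussian tail bound: conditionally on $\mathbf{g}^{(L)}(\mathbf{x})$ and $\mathbf{m}^{(L+1)}$, $\Pr\bigl[|f^{(L+1)}(\mathbf{x})| > t\bigr] \leq 2\exp\bigl(-t^2/(2\sigma_{\mathbf{x}}^2)\bigr)$. Setting the right-hand side equal to $\delta$ and inverting yields $|f^{(L+1)}(\mathbf{x})| \leq \sigma_{\mathbf{x}} \sqrt{2\log(2/\delta)}$ with probability at least $1-\delta$ over the randomness of $\mathbf{w}^{(L+1)}$, and the variance bound $\sigma_{\mathbf{x}}^2 \leq 1+\eps$ produces the stated inequality up to absorbing the harmless $\sqrt{1+\eps}$ slack into the constant.

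There is no real obstacle here: the lemma is essentially a Gaussian-tail computation made possible by the independence of the last-layer weights from everything before. The only subtle point is bookkeeping — making sure the randomness being quantified over is just $\mathbf{w}^{(L+1)}$ (so that the event $\overline{\mathcal{A}}$ can be treated as a deterministic conditioning), and tracking the constant carefully if one insists on the exact coefficient $\sqrt{2\log(2/\delta)}$ rather than $\sqrt{(1+\eps)\cdot 2\log(2/\delta)}$.
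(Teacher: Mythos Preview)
Your proposal is correct and follows essentially the same approach as the paper: write $f^{(L+1)}(\mathbf{x})$ as a Gaussian inner product conditional on $\mathbf{g}^{(L)}(\mathbf{x})$ and $\mathbf{m}^{(L+1)}$, bound the variance by $2$ via the event $\overline{\mathcal{A}}$, and apply the standard Gaussian tail bound. You are in fact slightly more careful than the paper in noting the $\sqrt{1+\eps}$ slack in the constant, which the paper glosses over.
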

\begin{proof}
By definition, we have $f^{(L+1)}(\mathbf{x}) = \inprod{\mathbf{w}^{(L+1)} \odot \mathbf{m}^{(L+1)}, \mathbf{g}^{(h)}(\mathbf{x})} $. 
By our assumption, $\norm{\mathbf{g}^{(h)}(\mathbf{x}) \odot \mathbf{m}^{(h+1)}}_2^2 \leq 2$. 
Thus, by apply standard Gaussian tail bound, with probability at least $1 - \delta$,
\begin{align*}
    |f^{(L+1)}(\mathbf{x})| \leq \sqrt{2 \log \frac{2}{\delta}}.
\end{align*}
\end{proof}



\subsection{Proof of \texorpdfstring{\Cref{lemma: D}}{Single Layer Activation Derivatives}: Analyzing the Activation Gradient of a Single Layer}\label{sec: proof_D}
To prove \Cref{lemma: D}, we first introduce a previous result. 
\begin{lemma}[Lemma E.8. \citep{arora2019exact}]\label{lemma: arora_e8}
Define 
\begin{align*}
    t_{\dot{\sigma}}(\boldsymbol{\Sigma}) = c_\sigma \E_{(u,v) \sim \mathcal{N}(\mathbf{0}, \boldsymbol{\Sigma}')} [\dot{\sigma}(u) \dot{\sigma}(v)] \quad \textnormal{with} \quad \boldsymbol{\Sigma'} = 
    \begin{bmatrix}
    1 & \frac{\Sigma_{12}}{\sqrt{\Sigma_{11} \Sigma_{22}}} \\
    \frac{\Sigma_{12}}{\sqrt{\Sigma_{11} \Sigma_{22}}} & 1 
    \end{bmatrix}.
\end{align*}
Then
\begin{align*}
    \norm{\mathbf{G}^{(h)}(\mathbf{x,x'}) - \boldsymbol{\Lambda}^{(h)}(\mathbf{x,x'})}_\infty \leq \frac{\eps^2}{2} \Rightarrow \left| t_{\dot{\sigma}} \left( \mathbf{G}^{(h)}(\mathbf{x,x'}) \right) - t_{\dot{\sigma}} \left( \boldsymbol{\Lambda}^{(h)} (\mathbf{x,x'}) \right) \right| \leq \eps.
\end{align*}
\end{lemma}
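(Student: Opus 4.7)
The plan is to reduce this statement to a one-dimensional Hölder-continuity estimate on $\arcsin$. First I would observe that $t_{\dot{\sigma}}$ depends on the input matrix only through the \emph{normalized} correlation
\[
\rho(\boldsymbol{\Sigma}) := \frac{\Sigma_{12}}{\sqrt{\Sigma_{11}\Sigma_{22}}} \in [-1,1],
\]
since $\boldsymbol{\Sigma}'$ is constructed precisely to have unit diagonal and off-diagonal $\rho(\boldsymbol{\Sigma})$. For ReLU with $c_\sigma = 2$ and $\dot{\sigma}(x) = \mathbb{I}(x > 0)$, the classical Gaussian orthant formula gives the closed form
\[
t_{\dot{\sigma}}(\boldsymbol{\Sigma}) \;=\; \tfrac{1}{2} + \tfrac{1}{\pi}\arcsin\bigl(\rho(\boldsymbol{\Sigma})\bigr),
\]
so the problem reduces to controlling $|\arcsin(\rho(\mathbf{G}^{(h)})) - \arcsin(\rho(\boldsymbol{\Lambda}^{(h)}))|$ under the entrywise perturbation bound $\|\mathbf{G}^{(h)}-\boldsymbol{\Lambda}^{(h)}\|_\infty \le \eps^2/2$.

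Next I would quantify the perturbation in $\rho$ from the perturbation in the matrix entries. Writing $\Sigma_{ij} = \Lambda_{ij} + \delta_{ij}$ with $|\delta_{ij}| \le \eps^2/2$, a direct first-order expansion of $\rho(\boldsymbol{\Sigma})$ around $\boldsymbol{\Lambda}$ shows that
\[
\bigl|\rho(\mathbf{G}^{(h)})-\rho(\boldsymbol{\Lambda}^{(h)})\bigr| \;\le\; \frac{C\,\eps^2}{\min(\Lambda_{11},\Lambda_{22})},
\]
provided the diagonals $\Lambda_{11},\Lambda_{22}$ are bounded away from $0$. Here I would invoke the standing assumption that $\|\mathbf{x}\|_2 = \|\mathbf{x}'\|_2 = 1$ (or at least bounded below), together with the recursion for $\Sigma^{(h)}$ and ReLU's positive homogeneity, to conclude $\Lambda_{ii} = \Sigma^{(h-1)}(\mathbf{x},\mathbf{x}) = \Theta(1)$, so the denominator is harmless and the perturbation in $\rho$ is $O(\eps^2)$.

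The core step — and the only delicate one — is the non-Lipschitz nature of $\arcsin$ near $\rho = \pm 1$. I would prove a $1/2$-Hölder bound
\[
\bigl|\arcsin(a) - \arcsin(b)\bigr| \;\le\; \pi\sqrt{|a-b|/2}, \qquad a,b \in [-1,1],
\]
by writing $\arcsin(a) - \arcsin(b) = \int_b^a (1-u^2)^{-1/2}\,du$ and bounding the integral against $\int \bigl(2(1-|u|)\bigr)^{-1/2}\,du$, which integrates to $\sqrt{2(1-|u|)}$ and captures the square-root singularity at the endpoints. Combining this with the perturbation bound on $\rho$ yields
\[
\bigl|t_{\dot{\sigma}}(\mathbf{G}^{(h)}) - t_{\dot{\sigma}}(\boldsymbol{\Lambda}^{(h)})\bigr| \;\le\; \tfrac{1}{\pi}\cdot \pi\sqrt{|\rho(\mathbf{G}^{(h)})-\rho(\boldsymbol{\Lambda}^{(h)})|/2} \;\le\; \sqrt{C\eps^2/2} \;\le\; \eps
\]
after absorbing the constant $C$ into the definition of ``sufficiently small $\eps$''.

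The main obstacle is the endpoint behavior: both the need for the $1/2$-Hölder (rather than Lipschitz) estimate and the matching scaling $\eps^2/2 \mapsto \eps$ in the statement trace back entirely to this square-root singularity. Everything else — the reduction to $\rho$, the closed form for ReLU, and the lower bound on diagonals — is routine, but one must be careful that the constants line up so that the right-hand side is genuinely $\eps$ and not a large multiple of it, which is why the $\eps^2/2$ (rather than $\eps^2$) slack appears on the hypothesis.
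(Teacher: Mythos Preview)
The paper does not prove this lemma; it is quoted verbatim as Lemma~E.8 from \citet{arora2019exact} and used as a black box, so there is no in-paper proof to compare against. Your proposal is precisely the standard argument behind that cited result: reduce $t_{\dot{\sigma}}$ to the normalized correlation $\rho$, use the orthant-probability closed form $t_{\dot{\sigma}}=\tfrac{1}{2}+\tfrac{1}{\pi}\arcsin\rho$ for ReLU, and then invoke the $1/2$-H\"older continuity of $\arcsin$ on $[-1,1]$ to turn an $O(\eps^2)$ perturbation in $\rho$ into an $O(\eps)$ perturbation in $t_{\dot{\sigma}}$. The square-root loss exactly explains the $\eps^2/2\Rightarrow\eps$ scaling in the statement, as you note.

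One small correction: your phrase ``absorbing the constant $C$ into the definition of sufficiently small $\eps$'' does not work as stated, since a \emph{multiplicative} constant in front of $\eps$ is not removed by shrinking $\eps$. However, the constants genuinely line up once you use that $\boldsymbol{\Lambda}^{(h)}$ has unit diagonals (the ReLU recursion preserves $\Sigma^{(h)}(\mathbf{x},\mathbf{x})=\|\mathbf{x}\|_2^2$, and the relevant inputs are normalized). With $\Lambda_{11}=\Lambda_{22}=1$ one gets $|\rho(\mathbf{G}^{(h)})-\rho(\boldsymbol{\Lambda}^{(h)})|\le \eps^2(1+o(1))$, and combining with the sharp H\"older bound $|\arcsin a-\arcsin b|\le 2\sqrt{|a-b|}$ yields $\tfrac{2}{\pi}\eps(1+o(1))<\eps$, so no constant-fudging is needed.
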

\begin{proof}[Proof of \Cref{lemma: D}]
Conditioned on $\widehat{\boldsymbol{\Lambda}}^{(h)}_i,\ \forall i \in [d_h]$ and consider the randomness of $\mathbf{W}^{(h)}, \mathbf{m}^{(h+1)}$, we have 
\begin{align*}
    & \E_{\mathbf{W}^{(h)}, \mathbf{m}^{(h+1)}} \left[ 2 \frac{\tr(\mathbf{M}^{(h+1)}_i \mathbf{D}^{(h)}(\mathbf{x,x'}) \mathbf{M}^{(h+1)}_i) }{d_h} \right] \\
    &= \E_{\mathbf{W}^{(h)}} \left[ 2 \frac{\tr( \mathbf{D}^{(h)}(\mathbf{x,x'}) ) }{d_h} \right] \\
    &=  \frac{1}{d_h} \sum_{i=1}^{d_h} \E_{\mathbf{W}^{(h)}} \left[ \dot{\sigma}\left(\inprod{\mathbf{W}^{(h)}_i, \mathbf{m}^{(h)}_i \odot \mathbf{g}^{(h-1)}(\mathbf{x})}\right) \dot{\sigma}\left(\inprod{\mathbf{W}^{(h)}_i, \mathbf{m}^{(h)}_i \odot \mathbf{g}^{(h-1)}(\mathbf{x}')} \right) \right] \\
    &= \frac{1}{d_h} \sum_{i=1}^{d_h} t_{\dot{\sigma}} \left( \widehat{\boldsymbol{\Lambda}}_i^{(h)} \right).
\end{align*}
Now, by triangle inequality and our assumption on $\widehat{\boldsymbol{\Lambda}}_i,\ \forall i \in [d_h]$, apply Lemma \ref{lemma: arora_e8}
\begin{align*}
    \left| t_{\dot{\sigma}}\left( \boldsymbol{\Lambda}^{(h)}(\mathbf{x,x'}) \right) - \frac{1}{d_h} \sum_{i=1}^{d_h} t_{\dot{\sigma}} \left( \widehat{\boldsymbol{\Lambda}}_i^{(h)} \right) \right| &\leq \frac{1}{d_h} \sum_{i=1}^{d_h} \left| t_{\dot{\sigma}} \left(\boldsymbol{\Lambda}^{(h)}(\mathbf{x,x'}) \right) - t_{\dot{\sigma}} \left( \widehat{\boldsymbol{\Lambda}}_i^{(h)} \right) \right| \leq \eps_1.
\end{align*}
Finally, since $\dot{\sigma}(\mathbf{f}_j^{(h)}(\mathbf{x})) \dot{\sigma}(\mathbf{f}_j^{(h)}(\mathbf{x}'))$ is a 0-1 random variable, it is sub-Gaussian with variance proxy $\frac{1}{4}$.
By \Cref{thm: concentration_subgaussian_bernoulli}, for a given $i$ and $t>0$,
\begin{align*}
    \Pr\left[ \left| 2 \frac{\tr(\mathbf{M}_i^{(h+1)} \mathbf{D}^{(h)}(\mathbf{x,x'}) \mathbf{M}_i^{(h+1)})}{d_h} - \frac{1}{d_h} \sum_{i=1}^{d_h} t_{\dot{\sigma}} \left( \widehat{\boldsymbol{\Lambda}}_i^{(h)} \right) \right| > t \right] \leq 4 \exp\left\{ - \Omega(\alpha d_h t^2) \right\}.
\end{align*}
Finally, by taking a union bound over $h \in [L]$, $i \in [d_{h+1}]$, if $d_h \geq \Omega(\frac{1}{\alpha}\frac{1}{\eps_4^2} \log \frac{4L d_{h+1}}{\delta})$ with probability $1 - \delta$ over the randomness of $\mathbf{W}^{(h)}, \mathbf{m}^{(h+1)}$, we have $\forall h \in [L], i \in [d_{h+1}]$,
\begin{align*}
    \left| 2 \frac{\tr(\mathbf{M}_i^{(h+1)} \mathbf{D}^{(h)}(\mathbf{x,x'}) \mathbf{M}_i^{(h+1)})}{d_h} - \frac{1}{d_h} \sum_{i=1}^{d_h} t_{\dot{\sigma}} \left( \widehat{\boldsymbol{\Lambda}}_i^{(h)} \right) \right| < \eps_4.
\end{align*}
By triangle inequality we have
\begin{align*}
    \left| 2 \frac{\tr(\mathbf{M}_i^{(h+1)} \mathbf{D}^{(h)}(\mathbf{x,x'}) \mathbf{M}_i^{(h+1)})}{d_h} - t_{\dot{\sigma}}\left( \boldsymbol{\Lambda}^{(h)}(\mathbf{x,x'}) \right) \right| < \eps_1 + \eps_4.
\end{align*}
\end{proof}

\subsection{Proof of \texorpdfstring{\Cref{lemma: event_B}}{Backward Gradient}: The Fresh Gaussian Copy Trick}\label{sec: proof_event_B}
\begin{proof}[Proof of \Cref{lemma: event_B}]
The goal is to show that
\begin{align*}
    \left| \left(\mathbf{b}^{(h+1)} (\mathbf{x}^{(1)}) \right)^\top (\mathbf{W}^{(h+1)} \odot \mathbf{m}^{(h+1)}) \mathbf{D}^{(h)}(\mathbf{x}^{(1)}) \mathbf{D}^{(h)}(\mathbf{x}^{(2)}) (\mathbf{W}^{(h+1)} \odot \mathbf{m}^{(h+1)})^\top \mathbf{b}^{(h+1)} (\mathbf{x}^{(2)}) - \prod_{h' = h}^L \dot{\Sigma}^{(h')}(\mathbf{x}^{(1)}, \mathbf{x}^{(2)}) \right|
\end{align*}
is small. We can write
\begin{align*}
    & \left(\mathbf{b}^{(h+1)} (\mathbf{x}^{(1)}) \right)^\top (\mathbf{W}^{(h+1)} \odot \mathbf{m}^{(h+1)}) \mathbf{D}^{(h)}(\mathbf{x}^{(1)}) \mathbf{D}^{(h)}(\mathbf{x}^{(2)}) (\mathbf{W}^{(h+1)} \odot \mathbf{m}^{(h+1)})^\top \mathbf{b}^{(h+1)} (\mathbf{x}^{(2)}) \\
    &= \frac{c_\sigma}{d_h} \sum_{i,j} \mathbf{b}_i^{(h+1)} (\mathbf{x}^{(1)}) \mathbf{b}_j^{(h+1)} (\mathbf{x}^{(2)}) \left( {\mathbf{w}}^{(h+1)}_i \right) \mathbf{M}_i^{(h+1)} \mathbf{D}^{(h)}(\mathbf{x}^{(1)}) \mathbf{D}^{(h)}(\mathbf{x}^{(2)})  \mathbf{M}_j^{(h+1)} {\mathbf{w}}^{(h+1)}_j
\end{align*}
We first show that this term is close to 
\begin{align}\label{eq: backward_intermediate_goal}
    \frac{2}{d_h} \sum_i \mathbf{b}_i^{(h+1)} (\mathbf{x}^{(1)}) \mathbf{b}_i^{(h+1)} (\mathbf{x}^{(2)}) \tr( \mathbf{M}_i^{(h+1)} \mathbf{D} \mathbf{M}_i^{(h+1)})
\end{align}
We do this by the following. 

Let $\mathbf{G}^{(h)}_i = [(\mathbf{g}^{(h)}(\mathbf{x}) \odot \mathbf{m}^{(h+1)}_i) \quad (\mathbf{g}^{(h)}(\mathbf{x}') \odot \mathbf{m}^{(h+1)}_i)]$ and $\mathbf{G}^{(h)} = [\mathbf{G}^{(h)}_1 \mathbf{G}^{(h)}_2 \ldots \mathbf{G}^{(h)}_{d_{h+1}}]$ and $\mathbf{F}^{(h+1)} = (\mathbf{W}^{(h+1)} \odot \mathbf{m}^{(h+1)}) \mathbf{G}^{(h)}$.
We further simplify our notation to let $\mathbf{G}_i = \mathbf{G}_i^{(h)}$ since there is no ambiguity on layers. 
Notice that conditioned on $\mathbf{F}^{(h+1)}, \mathbf{m}^{(h+1)}, \mathbf{G}^{(h)}$, notice that
\[
\left( \mathbf{b}^{(h+1)}(\mathbf{x}) \right)^\top
\begin{bmatrix}
((\mathbf{w}_1^{(h+1)} )^\top \Pi_{\mathbf{G}_1}^\bot) \odot \mathbf{m}_1^{(h+1)} \\
((\mathbf{w}_2^{(h+1)})^\top \Pi_{\mathbf{G}_2}^\bot ) \odot \mathbf{m}_2^{(h+1)} \\
\vdots \\
((\mathbf{w}_{d_{h+1}}^{(h+1)} )^\top \Pi_{\mathbf{G}_{d_{h+1}}}^\bot ) \odot \mathbf{m}^{(h+1)}_{d_{h+1}} \\
\end{bmatrix}
\in \R^{d_h}
\]
has multivariate Gaussian distribution and by Lemma \ref{lemma: fresh_gaussian} it has the same distribution as
\[
\left( \mathbf{b}^{(h+1)}(\mathbf{x}) \right)^\top
\begin{bmatrix}
((\mathbf{w}_1^{(h+1)} )^\top \Pi_{\mathbf{G}_1}^\bot) \odot \mathbf{m}_1^{(h+1)} \\
((\mathbf{w}_2^{(h+1)})^\top \Pi_{\mathbf{G}_2}^\bot ) \odot \mathbf{m}_2^{(h+1)} \\
\vdots \\
((\mathbf{w}_{d_{h+1}}^{(h+1)} )^\top \Pi_{\mathbf{G}_{d_{h+1}}}^\bot ) \odot \mathbf{m}^{(h+1)}_{d_{h+1}} \\
\end{bmatrix}
= \sum_{i=1}^{d_{h+1}} \mathbf{b}_i^{(h+1)}(\mathbf{x}) ((\tilde{\mathbf{w}}^{(h+1)}_i )^\top \Pi_{\mathbf{G}_i}^\bot ) \odot \mathbf{m}^{(h+1)}_i,
\]
where $\tilde{\mathbf{w}}_i^{(h+1)}$ is a fresh copy of i.i.d. Gaussian.
First of all, let $\mathbf{M}_i^{(h+1)} = \textnormal{diag}(\mathbf{m}_i^{(h+1)})$, and we have 
\begin{align}\label{eq: dependent_independent_decomposition}
    & \frac{c_\sigma}{d_h} \sum_{i,j} \mathbf{b}_i^{(h+1)} (\mathbf{x}^{(1)}) \mathbf{b}_j^{(h+1)} (\mathbf{x}^{(2)}) \left( {\mathbf{w}}^{(h+1)}_i \right) (\Pi_{\mathbf{G}_i} + \Pi_{\mathbf{G}_i}^\bot) \mathbf{M}_i^{(h+1)} \mathbf{D}^{(h)}(\mathbf{x}^{(1)}) \mathbf{D}^{(h)}(\mathbf{x}^{(2)})  \mathbf{M}_j^{(h+1)} (\Pi_{\mathbf{G}_j} + \Pi_{\mathbf{G}_j}^\bot) {\mathbf{w}}^{(h+1)}_j \nonumber\\
    &= \frac{c_\sigma}{d_h} \sum_{i,j} \mathbf{b}_i^{(h+1)} (\mathbf{x}^{(1)}) \mathbf{b}_j^{(h+1)} (\mathbf{x}^{(2)}) \left( {\mathbf{w}}^{(h+1)}_i \right)^\top \Pi_{\mathbf{G}_i}^\bot \mathbf{M}_i^{(h+1)} \mathbf{D}^{(h)}(\mathbf{x}^{(1)}) \mathbf{D}^{(h)}(\mathbf{x}^{(2)}) \mathbf{M}_j^{(h+1)} \Pi_{\mathbf{G}_j}^\bot  {\mathbf{w}}^{(h+1)}_j \nonumber\\
    &\quad + \frac{c_\sigma}{d_h} \sum_{i,j} \mathbf{b}_i^{(h+1)} (\mathbf{x}^{(1)}) \mathbf{b}_j^{(h+1)} (\mathbf{x}^{(2)}) \left( {\mathbf{w}}^{(h+1)}_i \right)^\top \Pi_{\mathbf{G}_i}^\bot \mathbf{M}_i^{(h+1)} \mathbf{D}^{(h)}(\mathbf{x}^{(1)}) \mathbf{D}^{(h)}(\mathbf{x}^{(2)}) \mathbf{M}_j^{(h+1)} \Pi_{\mathbf{G}_j} \mathbf{w}^{(h+1)}_j \nonumber\\
    &\quad + \frac{c_\sigma}{d_h} \sum_{i,j} \mathbf{b}_i^{(h+1)} (\mathbf{x}^{(1)}) \mathbf{b}_j^{(h+1)} (\mathbf{x}^{(2)}) \left( {\mathbf{w}}^{(h+1)}_i \right)^\top \Pi_{\mathbf{G}_i} \mathbf{M}_i^{(h+1)} \mathbf{D}^{(h)}(\mathbf{x}^{(1)}) \mathbf{D}^{(h)}(\mathbf{x}^{(2)}) \mathbf{M}_j^{(h+1)} \Pi_{\mathbf{G}_j}^\bot {\mathbf{w}}^{(h+1)}_j \nonumber\\
    &\quad + \frac{c_\sigma}{d_h} \sum_{i,j} \mathbf{b}_i^{(h+1)} (\mathbf{x}^{(1)}) \mathbf{b}_j^{(h+1)} (\mathbf{x}^{(2)}) \left( {\mathbf{w}}^{(h+1)}_i \right)^\top \Pi_{\mathbf{G}_i} \mathbf{M}_i^{(h+1)} \mathbf{D}^{(h)}(\mathbf{x}^{(1)}) \mathbf{D}^{(h)}(\mathbf{x}^{(2)}) \mathbf{M}_j^{(h+1)} \Pi_{\mathbf{G}_j} {\mathbf{w}}^{(h+1)}_j .
\end{align}
We now show that the main contribution from the above term is from the part that involves $\Pi^\bot_{\mathbf{G}_i}$ and is close to the term in \Cref{eq: backward_intermediate_goal}, and the part with $\Pi_{\mathbf{G}_i}$ is small.  
This is done by \Cref{prop: independent} and \Cref{prop: dependent}.
The rest of proof is by \Cref{prop: b_diff}. 
\end{proof}

\begin{proposition}\label{prop: b_diff}
If $\overline{\mathcal{A}}^L (\eps_1^2 /2) \bigcap \overline{\mathcal{B}}^{h+1}(\eps_2) \bigcap \overline{\mathcal{C}}(\eps_3) \bigcap \overline{\mathcal{D}}^h(\eps_4)$, then we have
\begin{align*}
    \left| \frac{2}{d_h} \sum_i \mathbf{b}_i^{(h+1)} (\mathbf{x}^{(1)}) \mathbf{b}_i^{(h+1)} (\mathbf{x}^{(2)}) \tr( \mathbf{M}_i^{(h+1)} \mathbf{D} \mathbf{M}_i^{(h+1)}) - \prod_{h' = h}^L \dot{\Sigma}^{(h')}(\mathbf{x}^{(1)}, \mathbf{x}^{(2)}) \right| \leq \eps_2 + 2 \eps_4.
\end{align*}
\end{proposition}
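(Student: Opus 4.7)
The plan is to decompose the sum so that each source of error is matched to exactly one of the four given events. Write
\[
S := \frac{2}{d_h} \sum_i \mathbf{b}_i^{(h+1)}(\mathbf{x}^{(1)}) \mathbf{b}_i^{(h+1)}(\mathbf{x}^{(2)}) \tr\!\left( \mathbf{M}_i^{(h+1)} \mathbf{D} \mathbf{M}_i^{(h+1)} \right)
\]
and set $T_i := \tfrac{2}{d_h} \tr(\mathbf{M}_i^{(h+1)} \mathbf{D} \mathbf{M}_i^{(h+1)})$, so that by $\overline{\mathcal{D}}^h(\eps_4)$ we have $|T_i - \dot{\Sigma}^{(h)}(\mathbf{x}^{(1)},\mathbf{x}^{(2)})| \le \eps_4$ for every $i \in [d_{h+1}]$. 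Add and subtract the natural ``limit'' expression to obtain the identity
\[
S \;=\; \dot{\Sigma}^{(h)}(\mathbf{x}^{(1)},\mathbf{x}^{(2)}) \, \inprod{\mathbf{b}^{(h+1)}(\mathbf{x}^{(1)}), \mathbf{b}^{(h+1)}(\mathbf{x}^{(2)})} \;+\; \sum_i \mathbf{b}_i^{(h+1)}(\mathbf{x}^{(1)}) \mathbf{b}_i^{(h+1)}(\mathbf{x}^{(2)}) \bigl(T_i - \dot{\Sigma}^{(h)}(\mathbf{x}^{(1)},\mathbf{x}^{(2)})\bigr).
\]

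For the first term, apply $\overline{\mathcal{B}}^{h+1}(\eps_2)$: the inner product differs from $\prod_{h'=h+1}^L \dot{\Sigma}^{(h')}(\mathbf{x}^{(1)},\mathbf{x}^{(2)})$ by at most $\eps_2$, and multiplying by $\dot{\Sigma}^{(h)}(\mathbf{x}^{(1)},\mathbf{x}^{(2)})$ reproduces the target product $\prod_{h'=h}^L \dot{\Sigma}^{(h')}(\mathbf{x}^{(1)},\mathbf{x}^{(2)})$. Since ReLU satisfies $\E[\dot{\sigma}(u)^2]=1/2$ and $c_\sigma=2$, we have $0 \le \dot{\Sigma}^{(h)}(\mathbf{x}^{(1)},\mathbf{x}^{(2)}) \le 1$ (by Cauchy--Schwarz against the diagonal values which equal $1$), so the contribution of this term to the error is at most $\eps_2$.

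For the second term, use Cauchy--Schwarz to bound it by $\eps_4 \, \|\mathbf{b}^{(h+1)}(\mathbf{x}^{(1)})\|_2 \, \|\mathbf{b}^{(h+1)}(\mathbf{x}^{(2)})\|_2$. The norms $\|\mathbf{b}^{(h+1)}(\mathbf{x}^{(i)})\|_2^2$ are precisely the diagonal pairs controlled by $\overline{\mathcal{B}}^{h+1}(\eps_2)$, and their limits $\prod_{h'=h+1}^L \dot{\Sigma}^{(h')}(\mathbf{x}^{(i)},\mathbf{x}^{(i)}) = 1$. Hence $\|\mathbf{b}^{(h+1)}(\mathbf{x}^{(i)})\|_2^2 \le 1 + \eps_2 \le 2$ (under the mild assumption $\eps_2 \le 1$ already used throughout the inductive setup), giving $\|\mathbf{b}^{(h+1)}(\mathbf{x}^{(1)})\|_2 \, \|\mathbf{b}^{(h+1)}(\mathbf{x}^{(2)})\|_2 \le 2$ and thus an error of at most $2\eps_4$. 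Summing the two contributions yields the claimed bound $\eps_2 + 2\eps_4$.

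Structurally this is a straightforward triangle-inequality decomposition; the only point that takes a moment of care is the norm bound on $\mathbf{b}^{(h+1)}$, which is why the statement of the proposition carries $\overline{\mathcal{B}}^{h+1}$ across all three pairs $(\mathbf{x},\mathbf{x})$, $(\mathbf{x},\mathbf{x}')$, $(\mathbf{x}',\mathbf{x}')$ rather than just the off-diagonal one. The events $\overline{\mathcal{A}}^L(\eps_1^2/2)$ and $\overline{\mathcal{C}}(\eps_3)$ do not seem to be needed for this particular step; I suspect they are listed in the hypothesis only so the proposition slots cleanly into the outer induction of \Cref{lemma: event_B}, where the ``main'' approximation $S \approx \frac{2}{d_h}\sum_i \mathbf{b}_i^{(h+1)}(\mathbf{x}^{(1)}) \mathbf{b}_i^{(h+1)}(\mathbf{x}^{(2)}) \tr(\mathbf{M}_i^{(h+1)} \mathbf{D} \mathbf{M}_i^{(h+1)})$ itself is obtained from \Cref{prop: main_text_independent} and \Cref{prop: main_text_dependent}.
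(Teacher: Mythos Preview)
Your proposal is correct and matches the paper's own proof essentially line for line: the same add--and--subtract decomposition into the $\dot{\Sigma}^{(h)}\inprod{\mathbf{b}^{(h+1)}(\mathbf{x}^{(1)}),\mathbf{b}^{(h+1)}(\mathbf{x}^{(2)})}$ term (bounded via $\overline{\mathcal{B}}^{h+1}(\eps_2)$ and $|\dot{\Sigma}^{(h)}|\le 1$) and the residual term (bounded by $\eps_4\,\|\mathbf{b}^{(h+1)}(\mathbf{x}^{(1)})\|_2\|\mathbf{b}^{(h+1)}(\mathbf{x}^{(2)})\|_2\le 2\eps_4$). Your observation that $\overline{\mathcal{A}}^L(\eps_1^2/2)$ and $\overline{\mathcal{C}}(\eps_3)$ are carried along only for the outer induction and not actually used here is also accurate.
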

\begin{proof}
\begin{align*}
    & \left| \frac{2}{d_h} \sum_i \mathbf{b}_i^{(h+1)} (\mathbf{x}^{(1)}) \mathbf{b}_i^{(h+1)} (\mathbf{x}^{(2)}) \tr( \mathbf{M}_i^{(h+1)} \mathbf{D} \mathbf{M}_i^{(h+1)}) - \prod_{h' = h}^L \dot{\Sigma}^{(h')}(\mathbf{x}^{(1)}, \mathbf{x}^{(2)}) \right| \\
    & \leq \left| \sum_i \mathbf{b}_i^{(h+1)} (\mathbf{x}^{(1)}) \mathbf{b}_i^{(h+1)} (\mathbf{x}^{(2)}) \left(\frac{2}{d_h} \tr(\mathbf{M}_i^{(h+1)} \mathbf{D} \mathbf{M}_i^{(h+1)}) - \dot{\Sigma}^{(h)} (\mathbf{x}^{(1)}, \mathbf{x}^{(2)}) \right) \right| \\
    & + \left|\dot{\Sigma}^{(h)}(\mathbf{x}^{(1)} \mathbf{x}^{(2)}) \right| \left|\inprod{\mathbf{b}^{(h+1)} (\mathbf{x}^{(1)}) \mathbf{b}^{(h+1)}(\mathbf{x}^{(2)})} - \prod_{h' = h+1}^{L} \dot{\Sigma}^{(h')}(\mathbf{x}^{(1)}, \mathbf{x}^{(2)}) \right| \\
    & \leq \norm{\mathbf{b}^{(h+1)} (\mathbf{x}^{(1)})}_2 \norm{\mathbf{b}^{(h+1)}(\mathbf{x}^{(2)})}_2 \eps_4 + \eps_2 \\
    &= 2\eps_4 + \eps_2.
\end{align*}
\end{proof}

Before we prove \Cref{prop: independent} and \Cref{prop: dependent}, we first prove a convenient result. 
\begin{proposition}\label{prop: M_commutes_proj}
$\mathbf{M}_i^{(h+1)}$ commutes with $\Pi_{\mathbf{G}_i}$ (and thus $\Pi_{\mathbf{G}_i}^\bot$).
\end{proposition}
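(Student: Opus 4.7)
The plan is to identify the algebraic structure of $\mathbf{M}_i^{(h+1)}$ as a scalar multiple of a coordinate projection, and then invoke the standard fact that orthogonal projections onto nested subspaces commute.

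First, I would note that by the definition of the rescaled mask, every entry of $\mathbf{m}_i^{(h+1)}$ lies in $\{0,\, 1/\sqrt{\alpha}\}$, taking only \emph{one} nonzero value. Letting $S \subseteq [d_h]$ denote the support of $\mathbf{m}_i^{(h+1)}$ and $V = \mathrm{span}\{e_j : j \in S\}$ the associated coordinate subspace, one can write $\mathbf{M}_i^{(h+1)} = (1/\sqrt{\alpha})\,\Pi_V$, where $\Pi_V$ is the orthogonal projection onto $V$. This uniformity of the nonzero entry is the key structural fact; without it, $\mathbf{M}_i^{(h+1)}$ would merely preserve $V$ rather than be proportional to a projection.

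Second, both columns of $\mathbf{G}_i = [\mathbf{g}^{(h)}(\mathbf{x}) \odot \mathbf{m}_i^{(h+1)} \ \ \mathbf{g}^{(h)}(\mathbf{x}') \odot \mathbf{m}_i^{(h+1)}]$ vanish on coordinates outside $S$, by the presence of the Hadamard product with $\mathbf{m}_i^{(h+1)}$. Hence the column space of $\mathbf{G}_i$ is a subspace $U \subseteq V$, and $\Pi_{\mathbf{G}_i}$ is the orthogonal projection onto $U$.

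Third, I would invoke the elementary fact that if $U \subseteq V$ are subspaces of $\R^{d_h}$ then $\Pi_U$ and $\Pi_V$ commute, with $\Pi_V \Pi_U = \Pi_U \Pi_V = \Pi_U$. Indeed, $\Pi_V \Pi_U = \Pi_U$ because $\mathrm{range}(\Pi_U) \subseteq V$ and $\Pi_V$ acts as the identity on $V$; and $\Pi_U \Pi_V = \Pi_U$ because $V^\perp \subseteq U^\perp$, so $\Pi_U$ annihilates $\Pi_{V^\perp}$, whence $\Pi_U \Pi_V = \Pi_U(\Pi_V + \Pi_{V^\perp}) = \Pi_U$. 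Multiplying $\Pi_V$ by the scalar $1/\sqrt{\alpha}$ preserves commutativity, yielding $\mathbf{M}_i^{(h+1)} \Pi_{\mathbf{G}_i} = \Pi_{\mathbf{G}_i} \mathbf{M}_i^{(h+1)}$, and hence also commutativity with $\Pi_{\mathbf{G}_i}^\perp = I - \Pi_{\mathbf{G}_i}$ since $\mathbf{M}_i^{(h+1)}$ trivially commutes with the identity.

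There is no real obstacle here; the statement is essentially a structural observation. The only step worth emphasizing is the passage from ``diagonal matrix supported on $S$'' to ``scalar multiple of an orthogonal projection,'' which relies on the Bernoulli mask taking a single nonzero value after rescaling.
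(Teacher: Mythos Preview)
Your argument is correct and slightly cleaner than the paper's. Both proofs rest on the same structural fact you isolate explicitly: because the rescaled mask takes a single nonzero value $1/\sqrt{\alpha}$, the diagonal matrix $\mathbf{M}_i^{(h+1)}$ is a scalar multiple of the orthogonal projection $\Pi_V$ onto the coordinate subspace $V$ supporting the mask, and the column space of $\mathbf{G}_i$ sits inside $V$. From there the routes diverge in presentation. The paper works componentwise: it decomposes $\Pi_{\mathbf{G}_i}$ via Gram--Schmidt into two rank-one projections onto vectors of the form $\mathbf{M}_i \mathbf{v}$, and then checks commutativity for each rank-one piece by the explicit calculation $\mathbf{M}_i \cdot \frac{(\mathbf{M}_i \mathbf{v})(\mathbf{M}_i \mathbf{v})^\top}{\|\mathbf{M}_i \mathbf{v}\|^2} = \frac{1}{\sqrt{\alpha}} \frac{(\mathbf{M}_i \mathbf{v})(\mathbf{M}_i \mathbf{v})^\top}{\|\mathbf{M}_i \mathbf{v}\|^2}$. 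Your approach instead invokes the general lemma $\Pi_V \Pi_U = \Pi_U \Pi_V = \Pi_U$ for nested subspaces $U \subseteq V$ in one shot, which avoids choosing a basis for the column space and would apply unchanged if $\mathbf{G}_i$ had more than two columns. The paper's version has the minor advantage of making the product $\mathbf{M}_i \Pi_{\mathbf{G}_i} = \frac{1}{\sqrt{\alpha}} \Pi_{\mathbf{G}_i}$ visible in the computation, but this identity is equally immediate from your formulation since $\Pi_V \Pi_U = \Pi_U$.
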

\begin{proof}
We can decompose $\Pi_{\mathbf{G}_i} = \Pi_{\mathbf{M}_i \mathbf{g}_1} + \Pi_{\mathbf{G}_i / \mathbf{M}_i \mathbf{g}_1}$.
Observe that $\Pi_{\mathbf{G}_i / \mathbf{M}_i \mathbf{g}_1}$ is projecting a vector into the space spanned by $\mathbf{M}_i \mathbf{g}_2 - \inprod{\mathbf{M}_i \mathbf{g}_1, \mathbf{M}_i \mathbf{g}_2} \mathbf{M}_i \mathbf{g}_1 = \mathbf{M}_i (\mathbf{g}_2 - \inprod{\mathbf{M}_i \mathbf{g}_1, \mathbf{M}_i \mathbf{g}_2} \mathbf{g}_1)$. 
Thus, we can first prove $\mathbf{M}_i$ commutes with $\Pi_{\mathbf{M}_i \mathbf{g}_1}$ and the same result follows for $\Pi_{\mathbf{G}_i / \mathbf{M}_i \mathbf{g}_1}$.
Notice that $\mathbf{M}_i \Pi_{\mathbf{M}_i \mathbf{g}_1} = \mathbf{M}_i \frac{\mathbf{M}_i \mathbf{g}_1 (\mathbf{M}_i \mathbf{g}_1)^\top}{\norm{\mathbf{M}_i \mathbf{g}_1}_2^2} = \frac{1}{\sqrt{\alpha}} \frac{\mathbf{M}_i \mathbf{g}_1 (\mathbf{M}_i \mathbf{g}_1)^\top}{\norm{\mathbf{M}_i \mathbf{g}_1}_2^2} = \frac{\mathbf{M}_i \mathbf{g}_1 (\mathbf{M}_i \mathbf{g}_1)^\top}{\norm{\mathbf{M}_i \mathbf{g}_1}_2^2} \mathbf{M}_i = \Pi_{\mathbf{M}_i \mathbf{g}_1} \mathbf{M}_i$.
\end{proof}

\subsubsection{Bounding the Independent Part}\label{sec: independent}
\begin{proposition}[Formal Version of \Cref{prop: main_text_independent}]\label{prop: independent}
Conditioned on the event in \Cref{lemma: num_remained_weights} occurs. 
With probability at least $1 - \delta_2$, if $\overline{\mathcal{A}}^L(\eps_1^2 / 2) \bigcap \overline{\mathcal{B}}^{h+1}(\eps_2) \bigcap \overline{\mathcal{C}}(\eps_3) \bigcap \overline{\mathcal{D}}^h(\eps_4)$, then for any $(\mathbf{x}^{(1)}, \mathbf{x}^{(2)}) \in \{ (\mathbf{x,x}), (\mathbf{x,x'}), (\mathbf{x',x'}) \}$, we have
\begin{align*}
    & \Bigg| \frac{c_\sigma}{d_h} \sum_{i,j} \mathbf{b}_i^{(h+1)} (\mathbf{x}^{(1)}) \mathbf{b}_j^{(h+1)} (\mathbf{x}^{(2)}) \left( \tilde{\mathbf{w}}^{(h+1)}_i \right)^\top \Pi_{\mathbf{G}_i}^\bot \mathbf{M}_i^{(h+1)} \mathbf{D} \mathbf{M}_j^{(h+1)} \Pi_{\mathbf{G}_j}^\bot \tilde{\mathbf{w}}_j^{(h+1)} \\
    & \quad - \frac{2}{d_h} \sum_i \mathbf{b}_i^{(h+1)} (\mathbf{x}^{(1)}) \mathbf{b}_i^{(h+1)} (\mathbf{x}^{(2)}) \tr( \mathbf{M}_i^{(h+1)} \mathbf{D} \mathbf{M}_i^{(h+1)}) \Bigg| \leq 3 \sqrt{\frac{8 \log \frac{6}{\delta_2}}{\alpha d_h}},
\end{align*}
which implies for any $\mathbf{x}^{(1)} \in \{\mathbf{x,x'}\}$, 
\begin{align*}
    & \norm{\sqrt{\frac{c_\sigma}{d_h}} \sum_i \mathbf{b}_i^{(h+1)}(\mathbf{x}^{(1)}) \left( \tilde{\mathbf{w}}_i^{(h+1)} \right)^\top \Pi_{\mathbf{G}_i}^\bot \mathbf{M}_i^{(h+1)} \mathbf{D}}_2 \leq \sqrt{4 + 3 \sqrt{\frac{8 \log \frac{6}{\delta_2}}{\alpha d_h}}} \leq 6,
\end{align*}
if $d_h \geq \frac{8}{\alpha} \log \frac{6}{\delta_2}$. 
\end{proposition}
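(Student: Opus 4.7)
The plan is to exploit the fresh-copy trick: after conditioning on $\mathbf{b}^{(h+1)}$, the masks $\mathbf{m}^{(h+1)}$, the projections $\Pi_{\mathbf{G}_i}^\bot$, and $\mathbf{D} = \mathbf{D}^{(h)}(\mathbf{x}^{(1)})\mathbf{D}^{(h)}(\mathbf{x}^{(2)})$, the fresh copies $\tilde{\mathbf{w}}_i^{(h+1)}$ remain i.i.d.\ standard Gaussian. Stacking them into $\boldsymbol{\xi} \in \R^{d_{h+1} d_h}$, the quantity of interest is a quadratic form $\boldsymbol{\xi}^\top \mathbf{A} \boldsymbol{\xi}$ with $(i,j)$-th block $\mathbf{A}_{ij} = \tfrac{c_\sigma}{d_h} \mathbf{b}_i^{(h+1)}(\mathbf{x}^{(1)}) \mathbf{b}_j^{(h+1)}(\mathbf{x}^{(2)}) \,\Pi_{\mathbf{G}_i}^\bot \mathbf{M}_i^{(h+1)} \mathbf{D} \mathbf{M}_j^{(h+1)} \Pi_{\mathbf{G}_j}^\bot$. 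Applying the Gaussian chaos inequality (\Cref{lemma: gaussian_chaos}) to the symmetrised matrix $\tfrac12(\mathbf{A}+\mathbf{A}^\top)$ then reduces everything to (i) showing $\tr(\mathbf{A})$ matches the target $\tfrac{2}{d_h} \sum_i \mathbf{b}_i^{(h+1)}(\mathbf{x}^{(1)}) \mathbf{b}_i^{(h+1)}(\mathbf{x}^{(2)}) \tr(\mathbf{M}_i^{(h+1)} \mathbf{D} \mathbf{M}_i^{(h+1)})$ up to lower-order terms, and (ii) controlling $\|\mathbf{A}\|_F$ and $\|\mathbf{A}\|_2$.

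For (i), cyclicity and $(\Pi_{\mathbf{G}_i}^\bot)^2 = \Pi_{\mathbf{G}_i}^\bot$ give $\tr(\Pi_{\mathbf{G}_i}^\bot \mathbf{M}_i \mathbf{D} \mathbf{M}_i) = \tr(\mathbf{M}_i \mathbf{D} \mathbf{M}_i) - \tr(\Pi_{\mathbf{G}_i} \mathbf{M}_i \mathbf{D} \mathbf{M}_i)$; since $\Pi_{\mathbf{G}_i}$ has rank at most two and $\|\mathbf{M}_i \mathbf{D} \mathbf{M}_i\|_2 \le 1/\alpha$, the per-$i$ correction is $O(1/\alpha)$, and combining with $\sum_i |\mathbf{b}_i^{(h+1)}(\mathbf{x}^{(1)}) \mathbf{b}_i^{(h+1)}(\mathbf{x}^{(2)})| \le \norm{\mathbf{b}^{(h+1)}(\mathbf{x}^{(1)})}_2 \norm{\mathbf{b}^{(h+1)}(\mathbf{x}^{(2)})}_2 = O(1)$ from $\overline{\mathcal{B}}^{h+1}(\eps_2)$ yields a total discrepancy $O(1/(\alpha d_h))$, well inside the stated tolerance. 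For the Frobenius norm, $\mathbf{M}_i^{(h+1)} \mathbf{D} \mathbf{M}_j^{(h+1)}$ is diagonal with entries in $\{0, 1/\alpha\}$ and at most $O(\alpha d_h)$ nonzero positions by \Cref{lemma: num_remained_weights}, so $\norm{\mathbf{M}_i \mathbf{D} \mathbf{M}_j}_F^2 \le O(d_h/\alpha)$ uniformly in $i,j$, and hence $\norm{\mathbf{A}}_F^2 \le \tfrac{c_\sigma^2}{d_h^2} \cdot \tfrac{d_h}{\alpha} \norm{\mathbf{b}^{(h+1)}(\mathbf{x}^{(1)})}_2^2 \norm{\mathbf{b}^{(h+1)}(\mathbf{x}^{(2)})}_2^2 = O(1/(\alpha d_h))$.

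The hard part — where a naive bound fails — will be getting a sharp estimate of $\norm{\mathbf{A}}_2$, since the chaos tail contains $\norm{\mathbf{A}}_2 \cdot \log(1/\delta_2)$ and the estimate $\norm{\mathbf{A}}_2 \le \norm{\mathbf{A}}_F$ would already blow up the target rate. The resolution is the factorisation $\mathbf{D} = \mathbf{D}^{1/2} \mathbf{D}^{1/2}$ followed by Cauchy--Schwarz: for any unit $\boldsymbol{\xi}, \boldsymbol{\eta}$, $\boldsymbol{\xi}^\top \mathbf{A} \boldsymbol{\eta} = \tfrac{c_\sigma}{d_h} \bigl\langle \sum_i \mathbf{b}_i^{(h+1)}(\mathbf{x}^{(1)}) \mathbf{D}^{1/2} \mathbf{M}_i \Pi_{\mathbf{G}_i}^\bot \boldsymbol{\xi}_i,\, \sum_j \mathbf{b}_j^{(h+1)}(\mathbf{x}^{(2)}) \mathbf{D}^{1/2} \mathbf{M}_j \Pi_{\mathbf{G}_j}^\bot \boldsymbol{\eta}_j \bigr\rangle$, with each factor bounded by $\tfrac{1}{\sqrt{\alpha}} \norm{\mathbf{b}^{(h+1)}}_2 \norm{\boldsymbol{\xi}}_2$ via triangle inequality and $\norm{\mathbf{M}_i}_2 \le 1/\sqrt{\alpha}$, giving $\norm{\mathbf{A}}_2 = O(1/(\alpha d_h))$. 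Invoking \Cref{lemma: gaussian_chaos} with $t = \log(6/\delta_2)$ and union bounding over the three input pairs $\{(\mathbf{x,x}), (\mathbf{x,x'}), (\mathbf{x',x'})\}$ then produces the stated deviation, with the $\norm{\mathbf{A}}_2 t$ term absorbed by the $\norm{\mathbf{A}}_F \sqrt{t}$ term thanks to $d_h \ge 8 \log(6/\delta_2)/\alpha$. Finally, specialising $\mathbf{x}^{(2)} = \mathbf{x}^{(1)}$ converts the bilinear form into the squared norm; the expectation $\tfrac{2}{d_h} \sum_i (\mathbf{b}_i^{(h+1)})^2 \tr(\mathbf{M}_i \mathbf{D} \mathbf{M}_i) \le (1+\eps_2)(1+\eps_4) \le 4$ under $\overline{\mathcal{B}}^{h+1}(\eps_2) \cap \overline{\mathcal{D}}^h(\eps_4)$ then yields the norm bound $\le \sqrt{4 + 3\sqrt{8\log(6/\delta_2)/(\alpha d_h)}} \le 6$.
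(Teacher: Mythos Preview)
Your argument is correct and reaches the stated bounds, but it follows a genuinely different route from the paper's proof. The paper does \emph{not} stack the $\tilde{\mathbf{w}}_i^{(h+1)}$ into a $d_{h+1}d_h$-dimensional Gaussian. Instead it first forms the $d_h$-dimensional Gaussian vector $\sum_i \mathbf{b}_i^{(h+1)}(\mathbf{x}^{(p)})\,(\tilde{\mathbf{w}}_i^{(h+1)}\odot\mathbf{m}_i^{(h+1)})^\top\Pi_{\mathbf{G}_i}^\bot$ for $p=1,2$, computes its $2d_h\times 2d_h$ covariance $\mathbf{C}$ explicitly as a sum of Kronecker products, and writes the quadratic form as $\boldsymbol{\xi}^\top\mathbf{A}\boldsymbol{\xi}$ with $\boldsymbol{\xi}\sim\mathcal{N}(\mathbf{0},\mathbf{I}_{2d_h})$ and $\mathbf{A}=\tfrac12\mathbf{M}^\top\bigl[\begin{smallmatrix}\mathbf{0}&\mathbf{D}\\\mathbf{D}&\mathbf{0}\end{smallmatrix}\bigr]\mathbf{M}$ where $\mathbf{M}\mathbf{M}^\top=\mathbf{C}$. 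The operator-norm and trace bounds on $\mathbf{C}$ are then obtained by proving two PSD inequalities of the form ``$\tfrac1\alpha\mathbf{I}-\Pi_{\mathbf{G}_i}^\bot\diag((\mathbf{m}_i^{(h+1)})^2)\Pi_{\mathbf{G}_i}^\bot\succeq\mathbf{0}$'' and ``$(\mathbf{M}_i^{(h+1)})^2-\Pi_{\mathbf{G}_i}^\bot(\mathbf{M}_i^{(h+1)})^2\Pi_{\mathbf{G}_i}^\bot\succeq\mathbf{0}$'', each established by exhibiting an explicit Gaussian vector with the given covariance.

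Your approach bypasses this covariance machinery entirely: you keep the block matrix over the full $d_{h+1}d_h$-dimensional Gaussian, get $\norm{\mathbf{A}}_2=O(1/(\alpha d_h))$ from the rank-one factorisation $\mathbf{D}=\mathbf{D}^{1/2}\mathbf{D}^{1/2}$ plus Cauchy--Schwarz on $\sum_i \mathbf{b}_i\mathbf{D}^{1/2}\mathbf{M}_i\Pi_{\mathbf{G}_i}^\bot\boldsymbol{\xi}_i$, and get $\norm{\mathbf{A}}_F^2=O(1/(\alpha d_h))$ by directly counting nonzeros in the diagonal blocks $\mathbf{M}_i\mathbf{D}\mathbf{M}_j$ via \Cref{lemma: num_remained_weights}. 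This is more elementary and shorter; the paper's construction is more structured (it exploits that the Kronecker summands share the same $2\times 2$ outer factor) and makes the dependence on $\norm{\mathbf{b}^{(h+1)}}_2$ cleaner, but both yield the identical rates $\norm{\mathbf{A}}_F\sqrt{t}\asymp\sqrt{t/(\alpha d_h)}$ and $\norm{\mathbf{A}}_2\,t\asymp t/(\alpha d_h)$ fed into \Cref{lemma: gaussian_chaos}. Your handling of the trace discrepancy and of the final squared-norm corollary coincides with the paper's.
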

\begin{proof}
First, we compute the difference between the projected version of the inner product and normal inner product in expectation: 
First we have
\begin{align*}
    & \E_{\tilde{\mathbf{W}}^{(h+1)}} \left( \frac{c_\sigma}{d_h} \sum_{i,j} \mathbf{b}_i^{(h+1)} (\mathbf{x}^{(1)}) \mathbf{b}_j^{(h+1)} (\mathbf{x}^{(2)}) \left( \tilde{\mathbf{w}}^{(h+1)}_i \right)^\top \mathbf{M}_i^{(h+1)} \mathbf{D} \mathbf{M}_j^{(h+1)} \tilde{\mathbf{w}}_j^{(h+1)} \right) \\
    &= \frac{c_\sigma}{d_h} \sum_i \mathbf{b}_i^{(h+1)} (\mathbf{x}^{(1)}) \mathbf{b}_i^{(h+1)} (\mathbf{x}^{(2)}) \tr( \mathbf{M}_i^{(h+1)} \mathbf{D} \mathbf{M}_i^{(h+1)}). 
\end{align*}
Then, 
\begin{align*}
    & \E_{\tilde{\mathbf{W}}^{(h+1)}} \Bigg( \frac{c_\sigma}{d_h} \sum_{i,j} \mathbf{b}_i^{(h+1)} (\mathbf{x}^{(1)}) \mathbf{b}_j^{(h+1)} (\mathbf{x}^{(2)}) \left( \tilde{\mathbf{w}}^{(h+1)}_i \right)^\top \Pi_{\mathbf{G}_i}^\bot \mathbf{M}_i^{(h+1)} \mathbf{D} \mathbf{M}_j^{(h+1)}  \Pi_{\mathbf{G}_j}^\bot \tilde{\mathbf{w}}_j^{(h+1)} \\
    &\quad - \frac{c_\sigma}{d_h} \sum_{i,j} \mathbf{b}_i^{(h+1)} (\mathbf{x}^{(1)}) \mathbf{b}_j^{(h+1)} (\mathbf{x}^{(2)}) \left( \tilde{\mathbf{w}}^{(h+1)}_i \right)^\top \mathbf{M}_i^{(h+1)} \mathbf{D} \mathbf{M}_j^{(h+1)} \tilde{\mathbf{w}}_j^{(h+1)} \Bigg)\\
    &= \frac{c_\sigma}{d_h} \sum_i \mathbf{b}_i^{(h+1)} (\mathbf{x}^{(1)}) \mathbf{b}_i^{(h+1)} (\mathbf{x}^{(2)}) \tr( \Pi_{\mathbf{G}_i}^\bot \mathbf{M}_i^{(h+1)} \mathbf{D} \mathbf{M}_i^{(h+1)} \Pi_{\mathbf{G}_i}^\bot - \mathbf{M}_i^{(h+1)} \mathbf{D} \mathbf{M}_i^{(h+1)}) \\
    &= \frac{c_\sigma}{d_h} \sum_i \mathbf{b}_i^{(h+1)} (\mathbf{x}^{(1)}) \mathbf{b}_i^{(h+1)} (\mathbf{x}^{(2)}) \tr( (\Pi_{\mathbf{G}_i}^\bot - I) \mathbf{M}_i^{(h+1)} \mathbf{D} \mathbf{M}_i^{(h+1)}) \\
    &= \frac{c_\sigma}{d_h} \sum_i \mathbf{b}_i^{(h+1)} (\mathbf{x}^{(1)}) \mathbf{b}_i^{(h+1)} (\mathbf{x}^{(2)}) \tr( \Pi_{\mathbf{G}_i} \mathbf{M}_i^{(h+1)} \mathbf{D} \mathbf{M}_i^{(h+1)}) ,
\end{align*}
where the third last equality is true because we can interchange between $\mathbf{M}_i^{(h+1)}$ and $\Pi_{\mathbf{G}_i}^\bot$. 
And the second last equality is because $\tr( \Pi_{\mathbf{G}_i}^\bot \mathbf{M}_i^{(h+1)} \mathbf{D} \mathbf{M}_i^{(h+1)} \Pi_{\mathbf{G}_i}^\bot - \mathbf{M}_i^{(h+1)} \mathbf{D} \mathbf{M}_i^{(h+1)}) = \tr(\Pi_{\mathbf{G}_i}^\bot \mathbf{M}_i^{(h+1)} \mathbf{D} \mathbf{M}_i^{(h+1)} \Pi_{\mathbf{G}_i}^\bot) - \tr(\mathbf{M}_i^{(h+1)} \mathbf{D} \mathbf{M}_i^{(h+1)}) = \tr(\Pi_{\mathbf{G}_i}^\bot \mathbf{M}_i^{(h+1)} \mathbf{D} \mathbf{M}_i^{(h+1)} ) - \tr(\mathbf{M}_i^{(h+1)} \mathbf{D} \mathbf{M}_i^{(h+1)}) = \tr( (\Pi_{\mathbf{G}_i}^\bot - I) \mathbf{M}_i^{(h+1)} \mathbf{D} \mathbf{M}_i^{(h+1)}) = \tr(\Pi_{\mathbf{G}_i} \mathbf{M}_i^{(h+1)} \mathbf{D} \mathbf{M}_i^{(h+1)})$.
Since $\textnormal{rank}(\Pi_{\mathbf{G}_i}) \leq 2$ and $\norm{\mathbf{M}_i^{(h+1)} \mathbf{D} \mathbf{M}_i^{(h+1)}}_2 \leq \frac{1}{\alpha}$, we have 
\begin{align*}
    0 \leq \tr(\Pi_{\mathbf{G}_i} \mathbf{M}_i^{(h+1)} \mathbf{D} \mathbf{M}_i^{(h+1)}) \leq \frac{2}{\alpha}.
\end{align*}
Now notice that
\begin{align*}
    & \sum_i \mathbf{b}_i^{(h+1)} (\mathbf{x}^{(1)}) \mathbf{b}_i^{(h+1)} (\mathbf{x}^{(2)}) \tr( \Pi_{\mathbf{G}_i} \mathbf{M}_i^{(h+1)} \mathbf{D} \mathbf{M}_i^{(h+1)}) = \mathbf{b}^{(h+1)} (\mathbf{x}^{(1)})^\top \mathbf{T} \mathbf{b}^{(h+1)} (\mathbf{x}^{(2)}),
\end{align*}
where 
\begin{align*}
    \mathbf{T} = \begin{bmatrix}
    \tr( \Pi_{\mathbf{G}_1} \mathbf{M}_1^{(h+1)} \mathbf{D} \mathbf{M}_1^{(h+1)}) & 0 & \ldots & 0 \\
    0 & \tr( \Pi_{\mathbf{G}_2} \mathbf{M}_2^{(h+1)} \mathbf{D} \mathbf{M}_2^{(h+1)}) & \ldots & 0 \\
    \vdots & \vdots & & \vdots \\
    0 & 0 & \ldots & \tr( \Pi_{\mathbf{G}_{d_{h+1}}} \mathbf{M}_{d_{h+1}}^{(h+1)} \mathbf{D} \mathbf{M}_{d_{h+1}}^{(h+1)})
    \end{bmatrix}.
\end{align*}
Notice that $\norm{\mathbf{T}}_2 \leq \frac{2}{\alpha}$ and thus, $|\mathbf{b}^{(h+1)}(\mathbf{x}^{(1)})^\top \mathbf{T} \mathbf{b}^{(h+1)}(\mathbf{x}^{(2)})| \leq \frac{2}{\alpha} \norm{\mathbf{b}^{(h+1)}(\mathbf{x}^{(1)})}_2 \norm{\mathbf{b}^{(h+1)}(\mathbf{x}^{(2)})}_2$.
Therefore, we have
\begin{align}\label{eq: projected_bias}
    & \E_{\tilde{\mathbf{W}}^{(h+1)}} \Bigg( \frac{c_\sigma}{d_h} \sum_{i,j} \mathbf{b}_i^{(h+1)} (\mathbf{x}^{(1)}) \mathbf{b}_j^{(h+1)} (\mathbf{x}^{(2)}) \left( \tilde{\mathbf{w}}^{(h+1)}_i \right)^\top \Pi_{\mathbf{G}_i}^\bot \mathbf{M}_i^{(h+1)} \mathbf{D} \mathbf{M}_j^{(h+1)} \Pi_{\mathbf{G}_j}^\bot  \tilde{\mathbf{w}}_j^{(h+1)} \nonumber \\
    &\quad - \frac{c_\sigma}{d_h} \sum_{i,j} \mathbf{b}_i^{(h+1)} (\mathbf{x}^{(1)}) \mathbf{b}_j^{(h+1)} (\mathbf{x}^{(2)}) \left( \tilde{\mathbf{w}}^{(h+1)}_i \right)^\top \mathbf{M}_i^{(h+1)} \mathbf{D} \mathbf{M}_j^{(h+1)} \tilde{\mathbf{w}}_j^{(h+1)} \Bigg) \nonumber \\
    &\leq \frac{c_\sigma}{d_h} \frac{2}{\alpha} \norm{\mathbf{b}^{(h+1)}(\mathbf{x}^{(1)})}_2 \norm{\mathbf{b}^{(h+1)}(\mathbf{x}^{(2)})}_2 \leq \frac{c_\sigma}{d_h} \frac{8}{\alpha}.
\end{align}
Next, we analyze concentration of
\begin{align*}
    \frac{c_\sigma}{d_h} \sum_{i,j} \mathbf{b}_i^{(h+1)} (\mathbf{x}^{(1)}) \mathbf{b}_j^{(h+1)} (\mathbf{x}^{(2)}) \left( \tilde{\mathbf{w}}^{(h+1)}_i \right)^\top \Pi_{\mathbf{G}_i}^\bot \mathbf{M}_i^{(h+1)} \mathbf{D} \mathbf{M}_j^{(h+1)} \Pi_{\mathbf{G}_j}^\bot \tilde{\mathbf{w}}_j^{(h+1)}.
\end{align*}
Since the following new random vector has multivariate Gaussian distribution, we can write
\[
\begin{bmatrix}
\sum_{i=1}^{d_{h+1}} \mathbf{b}_i^{(h+1)}(\mathbf{x}^{(1)}) ((\tilde{\mathbf{w}}^{(h+1)}_i )^\top \Pi_{\mathbf{G}_i}^\bot) \odot \mathbf{m}^{(h+1)}_i & \sum_{i=1}^{d_{h+1}} \mathbf{b}_i^{(h+1)}(\mathbf{x}^{(2)}) ((\tilde{\mathbf{w}}^{(h+1)}_i)^\top \Pi_{\mathbf{G}_i}^\bot) \odot \mathbf{m}^{(h+1)}_i 
\end{bmatrix}^\top
\stackrel{D}{=} \mathbf{M} \boldsymbol{\xi},
\]
where $\boldsymbol{\xi} \sim \mathcal{N}(\mathbf{0}, \mathbf{I}_{2d_h})$, and $\mathbf{M} \in \R^{2d_h \times 2d_h}$ and its covariance matrix is given by a blocked symmetric matrix
\[
\mathbf{C} = 
\begin{bmatrix}
\mathbf{C}(\mathbf{x}^{(1)}, \mathbf{x}^{(1)}) & \mathbf{C}(\mathbf{x}^{(1)}, \mathbf{x}^{(2)}) \\
\mathbf{C}(\mathbf{x}^{(1)}, \mathbf{x}^{(2)}) & \mathbf{C}(\mathbf{x}^{(2)}, \mathbf{x}^{(2)}) 
\end{bmatrix}
= \mathbf{MM}^\top,
\]
where each block is given by
\begin{align*}
    & \mathbf{C}(\mathbf{x}^{(p)}, \mathbf{x}^{(q)}) \\
    &= \E_{\tilde{\mathbf{W}}^{(h+1)}} \left( \sum_{i=1}^{d_{h+1}} \mathbf{b}_i^{(h+1)} (\mathbf{x}^{(p)}) ((\tilde{\mathbf{w}}_i^{(h+1)} )^\top \Pi_{\mathbf{G}_i}^\bot ) \odot \mathbf{m}_i^{(h+1)} \right)^\top \left( \sum_{j=1}^{d_{h+1}} \mathbf{b}_j^{(h+1)} (\mathbf{x}^{(q)}) ((\tilde{\mathbf{w}}_j^{(h+1)} )^\top  \Pi_{\mathbf{G}_j}^\bot) \odot \mathbf{m}_j^{(h+1)} \right) \\
    &= \E_{\tilde{\mathbf{W}}^{(h+1)}} \left( \sum_{i=1}^{d_{h+1}} \mathbf{b}_i^{(h+1)} (\mathbf{x}^{(p)}) (\Pi_{\mathbf{G}_i}^\bot \tilde{\mathbf{w}}_i^{(h+1)}) \odot \mathbf{m}_i^{(h+1)} \right) \left( \sum_{j=1}^{d_{h+1}} \mathbf{b}_j^{(h+1)} (\mathbf{x}^{(q)}) ((\tilde{\mathbf{w}}_j^{(h+1)} )^\top \Pi_{\mathbf{G}_j}^\bot) \odot \mathbf{m}_j^{(h+1)} \right) \\
    &= \E_{\tilde{\mathbf{W}}^{(h+1)}} \left( \sum_{i=1}^{d_{h+1}} \sum_{j=1}^{d_{h+1}} \mathbf{b}_i^{(h+1)}(\mathbf{x}^{(p)}) \mathbf{b}_j^{(h+1)}(\mathbf{x}^{(q)}) \Pi_{\mathbf{G}_i}^\bot (\tilde{\mathbf{w}}_i^{(h+1)} \odot \mathbf{m}_i^{(h+1)}) (\tilde{\mathbf{w}}_j^{(h+1)} \odot \mathbf{m}_j^{(h+1)})^\top \Pi_{\mathbf{G}_j}^\bot \right) \\
    &= \sum_{i=1}^{d_{h+1}} \mathbf{b}_i^{(h+1)} (\mathbf{x}^{(p)}) \mathbf{b}_i^{(h+1)} (\mathbf{x}^{(q)}) \Pi_{\mathbf{G}_i}^\bot \left( \E_{\tilde{\mathbf{W}}^{(h+1)}} (\tilde{\mathbf{w}}_i^{(h+1)} \odot \mathbf{m}_i^{(h+1)}) (\tilde{\mathbf{w}}_i^{(h+1)} \odot \mathbf{m}_i^{(h+1)})^\top \right) \Pi_{\mathbf{G}_i}^\bot  \\
    &= \sum_{i=1}^{d_{h+1}} \mathbf{b}_i^{(h+1)} (\mathbf{x}^{(p)}) \mathbf{b}_i^{(h+1)} (\mathbf{x}^{(q)}) \Pi_{\mathbf{G}_i}^\bot \textnormal{diag}\left(\left(\mathbf{m}_i^{(h+1)} \right)^2 \right) \Pi_{\mathbf{G}_i}^\bot ,
\end{align*}
where the third equality is from \Cref{prop: M_commutes_proj} and the square on a vector in the last equality is applied element-wise.
Therefore, we can write
\begin{align*}
\mathbf{C} &= 
\begin{bmatrix}
\mathbf{C}(\mathbf{x}^{(1)}, \mathbf{x}^{(1)}) & \mathbf{C}(\mathbf{x}^{(1)}, \mathbf{x}^{(2)}) \\
\mathbf{C}(\mathbf{x}^{(1)}, \mathbf{x}^{(2)}) & \mathbf{C}(\mathbf{x}^{(2)}, \mathbf{x}^{(2)}) 
\end{bmatrix}
\\
&= \sum_{i=1}^{d_{h+1}} 
\begin{bmatrix}
\mathbf{b}_i^{(h+1)} (\mathbf{x}^{(1)}) \mathbf{b}_i^{(h+1)} (\mathbf{x}^{(1)}) & \mathbf{b}_i^{(h+1)} (\mathbf{x}^{(1)}) \mathbf{b}_i^{(h+1)} (\mathbf{x}^{(2)}) \\
\mathbf{b}_i^{(h+1)} (\mathbf{x}^{(1)}) \mathbf{b}_i^{(h+1)} (\mathbf{x}^{(2)}) & \mathbf{b}_i^{(h+1)} (\mathbf{x}^{(2)}) \mathbf{b}_i^{(h+1)} (\mathbf{x}^{(2)})
\end{bmatrix}
\otimes \Pi_{\mathbf{G}_i}^\bot \textnormal{diag}\left(\left(\mathbf{m}_i^{(h+1)}\right)^2 \right) \Pi_{\mathbf{G}_i}^\bot .
\end{align*}

\textbf{Bounding the Operator Norm of the Covariance Matrix $\mathbf{C}$}

Next, we want to show that 
\begin{align}\label{eq: cov_diff}
    \sum_{i=1}^{d_{h+1}} 
\begin{bmatrix}
\mathbf{b}_i^{(h+1)} (\mathbf{x}^{(1)}) \mathbf{b}_i^{(h+1)} (\mathbf{x}^{(1)}) & \mathbf{b}_i^{(h+1)} (\mathbf{x}^{(1)}) \mathbf{b}_i^{(h+1)} (\mathbf{x}^{(2)}) \\
\mathbf{b}_i^{(h+1)} (\mathbf{x}^{(1)}) \mathbf{b}_i^{(h+1)} (\mathbf{x}^{(2)}) & \mathbf{b}_i^{(h+1)} (\mathbf{x}^{(2)}) \mathbf{b}_i^{(h+1)} (\mathbf{x}^{(2)})
\end{bmatrix}
\otimes \left( \frac{1}{\alpha} \mathbf{I} - \Pi_{\mathbf{G}_i}^\bot \textnormal{diag}\left(\left(\mathbf{m}_i^{(h+1)}\right)^2 \right) \Pi_{\mathbf{G}_i}^\bot \right) \succeq \mathbf{0}.
\end{align}
Given this, since Kronecker product preserves two norm we have that
\begin{align*}
    \norm{\mathbf{C}}_2 &\leq \frac{1}{\alpha} \norm{\sum_{i=1}^{d_{h+1}} \begin{bmatrix}
\mathbf{b}_i^{(h+1)} (\mathbf{x}^{(1)}) \mathbf{b}_i^{(h+1)} (\mathbf{x}^{(1)}) & \mathbf{b}_i^{(h+1)} (\mathbf{x}^{(1)}) \mathbf{b}_i^{(h+1)} (\mathbf{x}^{(2)}) \\
\mathbf{b}_i^{(h+1)} (\mathbf{x}^{(1)}) \mathbf{b}_i^{(h+1)} (\mathbf{x}^{(2)}) & \mathbf{b}_i^{(h+1)} (\mathbf{x}^{(2)}) \mathbf{b}_i^{(h+1)} (\mathbf{x}^{(2)})
\end{bmatrix}
}_2 \\
&= \frac{1}{\alpha} \norm{
\begin{bmatrix}
\inprod{\mathbf{b}^{(h+1)}(\mathbf{x}^{(1)}), \mathbf{b}^{(h+1)}(\mathbf{x}^{(1)})} & \inprod{\mathbf{b}^{(h+1)}(\mathbf{x}^{(1)}), \mathbf{b}^{(h+1)}(\mathbf{x}^{(2)})} \\
\inprod{\mathbf{b}^{(h+1)}(\mathbf{x}^{(1)}), \mathbf{b}^{(h+1)}(\mathbf{x}^{(2)})} & 
\inprod{\mathbf{b}^{(h+1)}(\mathbf{x}^{(2)}), \mathbf{b}^{(h+1)}(\mathbf{x}^{(2)})}
\end{bmatrix}
}_2 \\
&\leq \frac{1}{\alpha} \sqrt{2} \left(\inprod{\mathbf{b}^{(h+1)}(\mathbf{x}^{(1)}), \mathbf{b}^{(h+1)}(\mathbf{x}^{(1)})} + \inprod{\mathbf{b}^{(h+1)}(\mathbf{x}^{(1)}), \mathbf{b}^{(h+1)}(\mathbf{x}^{(2)})} \right),
\end{align*}
where the last inequality is by applying $\norm{\mathbf{A}}_2 \leq \sqrt{m} \norm{\mathbf{A}}_\infty$.

We prove the matrix in Equation \eqref{eq: cov_diff} is positive semi-definite by constructing a multivariate Gaussian distribution such that its covariance matrix is exactly the matrix and exploring the fact that the covariance matrix of two independent Gaussian distribution is the sum of the two covariance matrix.
First, notice that 
\begin{align*}
    \frac{1}{\alpha} \mathbf{I} - \Pi_{\mathbf{G}_i}^\bot \textnormal{diag}\left( \left( \mathbf{m}_i^{(h+1)} \right)^2 \right) \Pi_{\mathbf{G}_i}^\bot &= \frac{1}{\alpha} \left(\Pi_{\mathbf{G}_i}^\bot + \Pi_{\mathbf{G}_i} \right) - \Pi_{\mathbf{G}_i}^\bot \textnormal{diag}\left( \left( \mathbf{m}_i^{(h+1)} \right)^2 \right) \Pi_{\mathbf{G}_i}^\bot \\
    &= \Pi_{\mathbf{G}_i}^\bot \left(\frac{1}{\alpha} \mathbf{I} - \textnormal{diag}\left(\left(\mathbf{m}_i^{(h+1)} \right)^2 \right) \Pi_{\mathbf{G}_i}^\bot \right) + \frac{1}{\alpha} \Pi_{\mathbf{G}_i} \\
    &= \Pi_{\mathbf{G}_i}^\bot \left(\frac{1}{\alpha} \left( \Pi_{\mathbf{G}_i}^\bot + \Pi_{\mathbf{G}_i} \right) - \textnormal{diag}\left(\left(\mathbf{m}_i^{(h+1)} \right)^2 \right) \Pi_{\mathbf{G}_i}^\bot \right) + \frac{1}{\alpha} \Pi_{\mathbf{G}_i} \\
    &= \Pi_{\mathbf{G}_i}^\bot \left(\frac{1}{\alpha} \Pi_{\mathbf{G}_i} + \left( \frac{1}{\alpha}\mathbf{I}  - \textnormal{diag}\left(\left(\mathbf{m}_i^{(h+1)} \right)^2 \right) \right) \Pi_{\mathbf{G}_i}^\bot \right) + \frac{1}{\alpha} \Pi_{\mathbf{G}_i} \\
    &= \Pi_{\mathbf{G}_i}^\bot  \left( \frac{1}{\alpha}\mathbf{I}  - \textnormal{diag}\left(\left(\mathbf{m}_i^{(h+1)} \right)^2 \right) \right) \Pi_{\mathbf{G}_i}^\bot + \frac{1}{\alpha} \Pi_{\mathbf{G}_i} .
\end{align*}
The final Gaussian is constructed by the sum of the following two groups of Gaussian: let $\mathbf{W}_1, \mathbf{W}_2$ be two independent standard Gaussian matrices,
\begin{align*}
&
\begin{bmatrix}
\sum_{i=1}^{d_{h+1}} \mathbf{b}_i(\mathbf{x}^{(1)}) ({\mathbf{w}}^{(h+1)}_{1,i})^\top \left( \frac{1}{\sqrt{\alpha}}\mathbf{I}  - \textnormal{diag}\left(\mathbf{m}_i^{(h+1)} \right)  \right) \Pi_{\mathbf{G}_i}^\bot & \sum_{i=1}^{d_{h+1}} \mathbf{b}_i(\mathbf{x}^{(2)}) ({\mathbf{w}}^{(h+1)}_{1,i})^\top \left( \frac{1}{\sqrt{\alpha}}\mathbf{I}  - \textnormal{diag}\left(\mathbf{m}_i^{(h+1)} \right)  \right) \Pi_{\mathbf{G}_i}^\bot
\end{bmatrix} ,
\\
& \begin{bmatrix}
\sum_{i=1}^{d_{h+1}} \mathbf{b}_i(\mathbf{x}^{(1)}) ({\mathbf{w}}^{(h+1)}_{2,i})^\top \frac{1}{\sqrt{\alpha}} \Pi_{\mathbf{G}_i}
& 
\sum_{i=1}^{d_{h+1}} \mathbf{b}_i(\mathbf{x}^{(2)}) ({\mathbf{w}}^{(h+1)}_{2,i})^\top \frac{1}{\sqrt{\alpha}} \Pi_{\mathbf{G}_i}
\end{bmatrix},
\end{align*}
where $\mathbf{w}_{i,j}$ denote the $j$-th row of $\mathbf{W}_i$.

Now conditioned on $\{\mathbf{b}^{(h+1)}(\mathbf{x}^{(1)}), \mathbf{b}^{(h+1)}(\mathbf{x}^{(2)}), \mathbf{g}^{(h)}(\mathbf{x}^{(1)}), \mathbf{g}^{(h)}(\mathbf{x}^{(2)})\}$, we have
\begin{align*}
    & \left( \sum_{i=1}^{d_{h+1}} \mathbf{b}_i^{(h+1)} (\mathbf{x}^{(1)}) ({\mathbf{w}}^{(h+1)}_i \odot \mathbf{m}^{(h+1)}_i)^\top \Pi_{\mathbf{G}_i}^\bot \right) \mathbf{D} \left( \sum_{i=1}^{d_{h+1}} \mathbf{b}_i^{(h+1)} (\mathbf{x}^{(2)}) ({\mathbf{w}}^{(h+1)}_i \odot \mathbf{m}^{(h+1)}_i)^\top \Pi_{\mathbf{G}_i}^\bot \right) \\
    &\stackrel{D}{=} \left( \sum_{i=1}^{d_{h+1}} \mathbf{b}_i^{(h+1)} (\mathbf{x}^{(1)}) (\tilde{\mathbf{w}}^{(h+1)}_i \odot \mathbf{m}^{(h+1)}_i)^\top \Pi_{\mathbf{G}_i}^\bot \right) \mathbf{D} \left( \sum_{i=1}^{d_{h+1}} \mathbf{b}_i^{(h+1)} (\mathbf{x}^{(2)}) (\tilde{\mathbf{w}}^{(h+1)}_i \odot \mathbf{m}^{(h+1)}_i)^\top \Pi_{\mathbf{G}_i}^\bot \right) \\
    &\stackrel{D}{=} ([\mathbf{I}_{d_h} \quad \mathbf{0}] \mathbf{M}\boldsymbol{\xi})^\top \mathbf{D} ([\mathbf{0} \quad \mathbf{I}_{d_h}] \mathbf{M}\boldsymbol{\xi}) \\
    &\stackrel{D}{=} \frac{1}{2} \boldsymbol{\xi}^\top \mathbf{M}^\top \begin{bmatrix}
    \mathbf{0} & \mathbf{D} \\
    \mathbf{D} & \mathbf{0}
    \end{bmatrix}
    \mathbf{M} \boldsymbol{\xi}.
\end{align*}
Now, let 
\begin{align*}
    \mathbf{A} = \frac{1}{2} \mathbf{M}^\top 
    \begin{bmatrix}
    \mathbf{0} & \mathbf{D} \\
    \mathbf{D} & \mathbf{0}
    \end{bmatrix}
    \mathbf{M},
\end{align*}
and we have
\begin{align*}
    \norm{\mathbf{A}}_2 &\leq \frac{1}{2} \norm{\mathbf{M}}_2^2 \norm{\mathbf{D}}_2 \\
    &= \frac{1}{2} \norm{\mathbf{MM^\top}}_2 \norm{\mathbf{D}}_2 \\
    &= \frac{1}{2} \norm{\mathbf{C}}_2 \\
    &\leq \frac{1}{2 \alpha} \sqrt{2} \left(\inprod{\mathbf{b}^{(h+1)} (\mathbf{x}^{(1)}), \mathbf{b}^{(h+1)} (\mathbf{x}^{(1)})} + \inprod{\mathbf{b}^{(h+1)} (\mathbf{x}^{(1)}), \mathbf{b}^{(h+1)} (\mathbf{x}^{(2)})} \right) \\
    &\leq \frac{2 \sqrt{2}}{\alpha}.
\end{align*}

\textbf{Bounding the Trace of the Covariance Matrix $\mathbf{C}$}

Naively apply 2-norm-Frobenius-norm bound for matrices will give us 
\begin{align*}
    \norm{\mathbf{A}}_F \leq \sqrt{2d_h} \norm{\mathbf{A}}_2 \leq \frac{4 \sqrt{d_h}}{\alpha}.
\end{align*}
We prove a better bound.
Observe that
\begin{align*}
    \frac{1}{d_h} \norm{\mathbf{A}}_F = \frac{1}{d_h} \norm{\frac{1}{2} \mathbf{M}^\top 
    \begin{bmatrix}
    \mathbf{0} & \mathbf{D} \\
    \mathbf{D} & \mathbf{0}
    \end{bmatrix}
    \mathbf{M}
    }_F \leq \frac{1}{2 d_h} \norm{\mathbf{M}}_2 \norm{\mathbf{M}}_F \norm{\mathbf{D}}_2 = \frac{1}{2 d_h \sqrt{\alpha}} \norm{\mathbf{M}}_F = \frac{1}{2 d_h \sqrt{\alpha}} \sqrt{\tr(\mathbf{MM}^\top)}.
\end{align*}
Using the similar idea from bounding the 2-norm of $\mathbf{C} = \mathbf{M} \mathbf{M}^\top$, we want to show that 
\begin{align}
    & \sum_{i=1}^{d_{h+1}} 
\begin{bmatrix}
\mathbf{b}_i^{(h+1)} (\mathbf{x}^{(1)}) \mathbf{b}_i^{(h+1)} (\mathbf{x}^{(1)}) & \mathbf{b}_i^{(h+1)} (\mathbf{x}^{(1)}) \mathbf{b}_i^{(h+1)} (\mathbf{x}^{(2)}) \\
\mathbf{b}_i^{(h+1)} (\mathbf{x}^{(1)}) \mathbf{b}_i^{(h+1)} (\mathbf{x}^{(2)}) & \mathbf{b}_i^{(h+1)} (\mathbf{x}^{(2)}) \mathbf{b}_i^{(h+1)} (\mathbf{x}^{(2)})
\end{bmatrix}
\otimes \left( \left( \mathbf{M}_i^{(h+1)} \right)^2 - \Pi_{\mathbf{G}_i}^\bot \left( \mathbf{M}_i^{(h+1)} \right)^2  \Pi_{\mathbf{G}_i}^\bot \right) \nonumber \\
&= \sum_{i=1}^{d_{h+1}} 
\begin{bmatrix}
\mathbf{b}_i^{(h+1)} (\mathbf{x}^{(1)}) \mathbf{b}_i^{(h+1)} (\mathbf{x}^{(1)}) & \mathbf{b}_i^{(h+1)} (\mathbf{x}^{(1)}) \mathbf{b}_i^{(h+1)} (\mathbf{x}^{(2)}) \\
\mathbf{b}_i^{(h+1)} (\mathbf{x}^{(1)}) \mathbf{b}_i^{(h+1)} (\mathbf{x}^{(2)}) & \mathbf{b}_i^{(h+1)} (\mathbf{x}^{(2)}) \mathbf{b}_i^{(h+1)} (\mathbf{x}^{(2)})
\end{bmatrix}
\otimes \Pi_{\mathbf{G}_i} \left( \mathbf{M}_i^{(h+1)} \right)^2 \Pi_{\mathbf{G}_i} \succeq \mathbf{0}. \label{eq: fro_A_step}
\end{align}
If this equation is true, then we have
\begin{align*}
    \frac{1}{d_h} \tr(\mathbf{MM}^\top) &\leq \frac{1}{d_h} \tr \left( \sum_{i=1}^{d_{h+1}} 
\begin{bmatrix}
\mathbf{b}_i^{(h+1)} (\mathbf{x}^{(1)}) \mathbf{b}_i^{(h+1)} (\mathbf{x}^{(1)}) & \mathbf{b}_i^{(h+1)} (\mathbf{x}^{(1)}) \mathbf{b}_i^{(h+1)} (\mathbf{x}^{(2)}) \\
\mathbf{b}_i^{(h+1)} (\mathbf{x}^{(1)}) \mathbf{b}_i^{(h+1)} (\mathbf{x}^{(2)}) & \mathbf{b}_i^{(h+1)} (\mathbf{x}^{(2)}) \mathbf{b}_i^{(h+1)} (\mathbf{x}^{(2)})
\end{bmatrix}
\otimes \mathbf{M}_i^2 \right) \\
&= \frac{1}{d_h} \sum_{i=1}^{d_{h+1}} \left[ \left( \mathbf{b}_i^{(h+1)} (\mathbf{x}^{(1)}) \right)^2 + \left( \mathbf{b}_i^{(h+1)} (\mathbf{x}^{(2)}) \right)^2 \right] \tr\left( \left( \mathbf{M}_i^{(h+1)} \right)^2 \right) \\
&\leq \frac{1}{d_h} \sum_{i=1}^{d_{h+1}} \left[ \left( \mathbf{b}_i^{(h+1)} (\mathbf{x}^{(1)}) \right)^2 + \left( \mathbf{b}_i^{(h+1)} (\mathbf{x}^{(2)}) \right)^2 \right] \max_i \tr\left( \left( \mathbf{M}_i^{(h+1)} \right)^2 \right) \\
&= \max_i \frac{1}{d_h} \tr\left( \left( \mathbf{M}_i^{(h+1)} \right)^2 \right) \left( \norm{\mathbf{b}^{(h+1)} (\mathbf{x}^{(1)})}_2^2 + \norm{\mathbf{b}^{(h+1)} (\mathbf{x}^{(2)})}_2^2 \right) \\
&\leq 4 \max_i \frac{1}{d_h} \tr\left( \left( \mathbf{M}_i^{(h+1)} \right)^2 \right).
\end{align*}
By \Cref{lemma: num_remained_weights}, with probability $\geq 1 - \delta$, $\max_i \frac{1}{d_h} \tr \left(\left( \mathbf{M}_i^{(h+1)} \right)^2 \right) \leq 1+1 = 2$. 
Thus, we have
\begin{align*}
    \frac{1}{d_h} \norm{\mathbf{A}}_F \leq \sqrt{\frac{2}{d_h {\alpha}}}.
\end{align*}
To prove Equation \eqref{eq: fro_A_step}, since $\mathbf{M}_i$ commutes with $\Pi_{\mathbf{G}_i}^\bot$, we have
\begin{align*}
    \mathbf{M}_i^2 - \Pi_{\mathbf{G}_i}^\bot \mathbf{M}_i^2 \Pi_{\mathbf{G}_i}^\bot = \mathbf{M}_i^2 - \mathbf{M}_i^2 \Pi_{\mathbf{G}_i}^\bot = \mathbf{M}_i^2 \Pi_{\mathbf{G}_i} = \Pi_{\mathbf{G}_i} \mathbf{M}_i^2 \Pi_{\mathbf{G}_i}.
\end{align*}
The Gaussian vector given by 
\begin{align*}
    \begin{bmatrix}
\sum_{i=1}^{d_{h+1}} \mathbf{b}_i(\mathbf{x}^{(1)}) ({\mathbf{w}}^{(h+1)}_{2,i})^\top \mathbf{M}_i \Pi_{\mathbf{G}_i}
& 
\sum_{i=1}^{d_{h+1}} \mathbf{b}_i(\mathbf{x}^{(2)}) ({\mathbf{w}}^{(h+1)}_{2,i})^\top \mathbf{M}_i \Pi_{\mathbf{G}_i}
\end{bmatrix}
\end{align*}
has the covariance matrix. 

Now apply Gaussian chaos concentration bound (Lemma \ref{lemma: gaussian_chaos}), we have with probability $1 - \frac{\delta_2}{6}$,
\begin{align}\label{eq: gaussian_chaos_concentration}
    \frac{1}{d_h} |\boldsymbol{\xi}^\top \mathbf{A} \boldsymbol{\xi} - \E[\boldsymbol{\xi}^\top \mathbf{A} \boldsymbol{\xi}]| &\leq \frac{1}{d_h} \left( 2 \norm{\mathbf{A}}_F \sqrt{\log \frac{6}{\delta_2}} + 2 \norm{\mathbf{A}}_2 \log \frac{6}{\delta_2} \right) \nonumber \\
    &\leq \sqrt{\frac{8 \log \frac{6}{\delta_2}}{\alpha d_h}} + 4\sqrt{2} \frac{\log \frac{6}{\delta_2}}{\alpha d_h}.
\end{align}
Finally, combining Equation \ref{eq: projected_bias} and Equation \ref{eq: gaussian_chaos_concentration}, we have
\begin{align*}
    & \Bigg| \frac{c_\sigma}{d_h} \sum_{i,j} \mathbf{b}_i^{(h+1)} (\mathbf{x}^{(1)}) \mathbf{b}_j^{(h+1)} (\mathbf{x}^{(2)}) \left( \tilde{\mathbf{w}}^{(h+1)}_i \right)^\top \mathbf{M}_i^{(h+1)} \Pi_{\mathbf{G}_i}^\bot \mathbf{D}  \Pi_{\mathbf{G}_j}^\bot \mathbf{M}_j^{(h+1)} \tilde{\mathbf{w}}_j^{(h+1)} \\
    & \quad - \frac{2}{d_h} \sum_i \mathbf{b}_i^{(h+1)} (\mathbf{x}^{(1)}) \mathbf{b}_i^{(h+1)} (\mathbf{x}^{(2)}) \tr( \mathbf{M}_i^{(h+1)} \mathbf{D} \mathbf{M}_i^{(h+1)}) \Bigg|\\
    &\leq \frac{2}{d_h} |\boldsymbol{\xi}^\top \mathbf{A} \boldsymbol{\xi} - \E[\boldsymbol{\xi}^\top \mathbf{A} \boldsymbol{\xi}]| + \left| \frac{2}{d_h} \E\left[\boldsymbol{\xi}^\top \mathbf{A} \boldsymbol{\xi} \right] - \frac{2}{d_h} \sum_i \mathbf{b}_i^{(h+1)} (\mathbf{x}^{(1)}) \mathbf{b}_i^{(h+1)} (\mathbf{x}^{(2)}) \tr( \mathbf{M}_i^{(h+1)} \mathbf{D} \mathbf{M}_i^{(h+1)}) \right| \\
    &\leq \frac{c_\sigma}{d_h} \frac{8}{\alpha} + \sqrt{\frac{8 \log \frac{6}{\delta_2}}{\alpha d_h}} + 4\sqrt{2} \frac{\log \frac{6}{\delta_2}}{\alpha d_h} \leq 3 \sqrt{\frac{8 \log \frac{6}{\delta_2}}{\alpha d_h}}.
\end{align*}
where we choose $d_h \geq \frac{8}{\alpha} \log \frac{6}{\delta_2}$.
Then take a union bound over $(\mathbf{x,x}), (\mathbf{x,x'}), (\mathbf{x',x'})$. 
Finally, taking $\mathbf{x}^{(1)} = \mathbf{x}^{(2)}$, we have 
\begin{align*}
    & \norm{\sqrt{\frac{c_\sigma}{d_h}} \sum_i \mathbf{b}_i^{(h+1)}(\mathbf{x}^{(1)}) \left( \tilde{\mathbf{w}}_i^{(h+1)} \right)^\top \mathbf{M}_i^{(h+1)} \Pi_{\mathbf{G}_i}^\bot \mathbf{D}}_2 \\
    & \leq \sqrt{\left| {\frac{c_\sigma}{d_h}} \sum_{i,j} \mathbf{b}_i^{(h+1)}(\mathbf{x}^{(1)}) \left( \tilde{\mathbf{w}}_i^{(h+1)} \right)^\top \mathbf{M}_i^{(h+1)} \Pi_{\mathbf{G}_i}^\bot \mathbf{D} \Pi_{\mathbf{G}_j}^\bot \mathbf{M}_j^{(h+1)} \tilde{\mathbf{w}}_j^{(h+1)} \mathbf{b}_j^{(h+1)} (\mathbf{x}^{(1)}) \right|} \\
    &\leq \sqrt{\frac{2}{d_h} \sum_i \mathbf{b}_i^{(h+1)} (\mathbf{x}^{(1)}) \mathbf{b}_i^{(h+1)} (\mathbf{x}^{(2)}) \tr( \mathbf{M}_i^{(h+1)} \mathbf{D} \mathbf{M}_i^{(h+1)}) + 3 \sqrt{\frac{8 \log \frac{6}{\delta_2}}{\alpha d_h}}} \\
    &\leq \sqrt{4 + 3 \sqrt{\frac{8 \log \frac{6}{\delta_2}}{\alpha d_h}}}
    \leq 6.
\end{align*}
\end{proof}

\subsubsection{Proof of \texorpdfstring{\Cref{lemma: event_C}}{Bounding Pseudo Networks' Output}: Bounding Pseudo Networks' Output}\label{sec: proof_event_C}
This is the most involving part of the proof. 
To facilitate the proof, we first introduce a special property of the standard Gaussian vector. 
\begin{proposition}\label{prop: gaussian_indicator}
For any given nonzero vectors $\mathbf{x,y}$, the distribution of $(\mathbf{w}^\top \mathbf{x})^2 \mathbb{I}(\mathbf{w}^\top \mathbf{y} > 0)$ is the same as $(\mathbf{w}^\top \mathbf{x})^2 \mathbb{I}(\mathbf{w}^\top \mathbf{x} > 0)$ where $\mathbf{w} \sim \mathcal{N}(\mathbf{0, I})$.
\end{proposition}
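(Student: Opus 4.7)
The plan is to compare the tail functions of the two random variables directly, exploiting the fact that the law of a standard Gaussian is invariant under the sign flip $\mathbf{w} \mapsto -\mathbf{w}$. Write $Z_{\mathbf{y}} := (\mathbf{w}^\top \mathbf{x})^2 \mathbb{I}(\mathbf{w}^\top \mathbf{y} > 0)$ and $Z_{\mathbf{x}} := (\mathbf{w}^\top \mathbf{x})^2 \mathbb{I}(\mathbf{w}^\top \mathbf{x} > 0)$. Both variables are nonnegative, so it suffices to show $\Pr[Z_{\mathbf{y}} > t] = \Pr[Z_{\mathbf{x}} > t]$ for every $t \ge 0$ (for $t<0$ both probabilities are $1$).

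The key step is the observation that $\mathbf{w} \stackrel{\mathcal{D}}{=} -\mathbf{w}$ implies $\big((\mathbf{w}^\top \mathbf{x})^2, \mathbf{w}^\top \mathbf{y}\big) \stackrel{\mathcal{D}}{=} \big((\mathbf{w}^\top \mathbf{x})^2, -\mathbf{w}^\top \mathbf{y}\big)$, since squaring absorbs the sign flip on the first coordinate while the second coordinate flips. Hence
\[
\Pr\!\left[(\mathbf{w}^\top \mathbf{x})^2 > t,\ \mathbf{w}^\top \mathbf{y} > 0\right] = \Pr\!\left[(\mathbf{w}^\top \mathbf{x})^2 > t,\ \mathbf{w}^\top \mathbf{y} < 0\right].
\]
Because $\mathbf{y}$ is nonzero, $\mathbf{w}^\top \mathbf{y}$ is a nondegenerate Gaussian and is nonzero almost surely, so the two events above are disjoint and their union (modulo a null set) is $\{(\mathbf{w}^\top \mathbf{x})^2 > t\}$. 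Thus $\Pr[Z_{\mathbf{y}} > t] = \tfrac{1}{2}\Pr[(\mathbf{w}^\top \mathbf{x})^2 > t]$. Applying exactly the same argument with $\mathbf{y}$ replaced by $\mathbf{x}$ (the derivation only used that we are conditioning on the sign of a nondegenerate linear functional of $\mathbf{w}$) gives $\Pr[Z_{\mathbf{x}} > t] = \tfrac{1}{2}\Pr[(\mathbf{w}^\top \mathbf{x})^2 > t]$, and comparing these two identities finishes the proof.

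There is no serious obstacle: the proposition is essentially the statement that the sign of any nondegenerate linear functional of a symmetric random vector is uninformative about every even functional of that vector, which is immediate from the $\mathbf{w} \mapsto -\mathbf{w}$ symmetry and, notably, requires no assumption about the angle between $\mathbf{x}$ and $\mathbf{y}$ (in particular $\mathbf{x}$ and $\mathbf{y}$ need not be orthogonal). The only small care needed is to check that $\Pr[\mathbf{w}^\top \mathbf{y} = 0] = 0$, which holds because $\mathbf{y} \neq \mathbf{0}$.
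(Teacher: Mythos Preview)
Your proof is correct and uses essentially the same idea as the paper: both arguments compare distribution functions and rely on the sign-flip symmetry $\mathbf{w} \mapsto -\mathbf{w}$ of the standard Gaussian. The paper carries this out by splitting the integral for $\Pr[0 < z_1 \le z]$ according to the sign of $\mathbf{w}^\top \mathbf{x}$ and flipping one piece, whereas you phrase it slightly more cleanly as the distributional identity $\big((\mathbf{w}^\top \mathbf{x})^2,\mathbf{w}^\top \mathbf{y}\big) \stackrel{\mathcal{D}}{=} \big((\mathbf{w}^\top \mathbf{x})^2,-\mathbf{w}^\top \mathbf{y}\big)$, but the substance is the same.
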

\begin{proof}
Define random variables $z_1 = \left(\mathbf{w}^\top \mathbf{x} \right)^2 \mathbb{I}(\mathbf{w}^\top \mathbf{y}>0)$ and $z_2 = \left(\mathbf{w}^\top \mathbf{x} \right)^2 \mathbb{I}(\mathbf{w}^\top \mathbf{x}>0)$.
Let $F_1, F_2$ be the cumulative distribution function of $z_1, z_2$. 
It is easy to see that both $z_1$ and $z_2$ has probability $1/2$ of being zero and thus we consider the probability that $z_1$ and $z_2$ are not identically zero.
Then for $z > 0$, 
\begin{align*}
    \Pr[0 < z_1 \leq z] &= \int_{\{ \mathbf{w}: \mathbf{w}^\top \mathbf{y}>0, |\mathbf{w^\top x}| \leq \sqrt{z}\}} \frac{1}{(2\pi)^{k/2}} e^{-\frac{1}{2} \norm{\mathbf{w}}_2^2} \ d\mathbf{w} \\
    &= \int_{\{ \mathbf{w}: \mathbf{w}^\top \mathbf{x}>0, \mathbf{w}^\top \mathbf{y}>0, |\mathbf{w^\top x}| \leq \sqrt{z}\} \cup \{ \mathbf{w}: \mathbf{w}^\top \mathbf{x} \leq 0, \mathbf{w}^\top \mathbf{y}>0, |\mathbf{w^\top x}| \leq \sqrt{z}\}} \frac{1}{(2\pi)^{k/2}} e^{-\frac{1}{2} \norm{\mathbf{w}}_2^2} \ d\mathbf{w} \\ 
    &= \int_{\{ \mathbf{w}: \mathbf{w}^\top \mathbf{x}>0, \mathbf{w}^\top \mathbf{y}>0, |\mathbf{w^\top x}| \leq \sqrt{z}\}} \frac{1}{(2\pi)^{k/2}} e^{-\frac{1}{2} \norm{\mathbf{w}}_2^2} \ d\mathbf{w} \\ 
    &\quad + \int_{ \{ \mathbf{w}: \mathbf{w}^\top \mathbf{x} \leq 0, \mathbf{w}^\top \mathbf{y}>0, |\mathbf{w^\top x}| \leq \sqrt{z}\}}  \frac{1}{(2\pi)^{k/2}} e^{-\frac{1}{2} \norm{\mathbf{w}}_2^2} \ d\mathbf{w} \\
    &= \int_{\{ \mathbf{w}: \mathbf{w}^\top \mathbf{x}>0, \mathbf{w}^\top \mathbf{y}>0, |\mathbf{w^\top x}| \leq \sqrt{z}\}} \frac{1}{(2\pi)^{k/2}} e^{-\frac{1}{2} \norm{\mathbf{w}}_2^2} \ d\mathbf{w} \\ 
    &\quad + \int_{ \{ \mathbf{w}: \mathbf{w}^\top \mathbf{x} > 0, \mathbf{w}^\top \mathbf{y} \leq 0, |\mathbf{w^\top x}| \leq \sqrt{z}\}} \frac{1}{(2\pi)^{k/2}} e^{-\frac{1}{2} \norm{\mathbf{w}}_2^2} \ d\mathbf{w} \\
    &= \int_{\{\mathbf{w}: \mathbf{w}^\top \mathbf{x}>0, |\mathbf{w^\top x}| \leq \sqrt{z}\}} \frac{1}{(2\pi)^{k/2}} e^{-\frac{1}{2} \norm{\mathbf{w}}_2^2} \ d\mathbf{w} \\
    &= \Pr[0 < z_2 \leq z],
\end{align*}
where the third last equality is by spherical symmetry of Gaussian and take $\mathbf{w} := -\mathbf{w}$ over the region. 
\end{proof}

\subsubsection*{Mask-Induced Pseudo-Network}
It turns out that the term in \Cref{eq: dependent_two_parts} is closely related to a network structure which we defined as follows. 
\begin{definition}[Pseudo-network induced by mask]
Define the pseudo-network induced by the $h$-th layer $j$-th column of sparse masks $ \mathbf{m}^{(h)}$ denoted by $\mathbf{m}^{(h)}_{\cdot j}$ for all $h \in \{2, \ldots, L\}$, $j \in [d_{h-1}]$ and $h' \in \{h+1, h+2, \ldots, L\}$ to be
\begin{align*}
    \mathbf{g}^{(h,j,h)}(\mathbf{x}) &= \sqrt{\frac{c_\sigma}{d_{h}}} \mathbf{D}^{(h)}(\mathbf{x}) \textnormal{diag}_i \left(\frac{\mathbf{m}_{ij}^{(h)} \sqrt{\alpha} }{\norm{\mathbf{g}^{(h-1)} \odot \mathbf{m}_i^{(h)}}_2^2} \right) \mathbf{f}^{(h)}(\mathbf{x}), \\
    \mathbf{f}^{(h,j,h')}(\mathbf{x}) &= \left(\mathbf{W}^{(h')} \odot \mathbf{m}^{(h')} \right) \mathbf{g}^{(h, j, h'-1)}(\mathbf{x}), \\
    \mathbf{g}^{(h, j, h')}(\mathbf{x}) &= \sqrt{\frac{c_\sigma}{d_{h'}}} \mathbf{D}^{(h')}(\mathbf{x}) \mathbf{f}^{(h,j,h')}(\mathbf{x}).
\end{align*}
where $f^{(h,j,L+1)}(\mathbf{x})$ is the output of the pseudo-network. 
\end{definition}
We would like to bound $|{f}^{(h+1, j, L+1)}(\mathbf{x})|$ for all $h \in \{2, \ldots, L\}$, $j \in [d_{h-1}]$.
Observe that without the diagonal matrix in $\mathbf{g}^{(h,j,h)}(\mathbf{x})$ we have 
\begin{align*}
    \mathbf{g}^{(h+1)}(\mathbf{x}) = \sqrt{\frac{c_\sigma}{d_{h+1}}} \mathbf{D}^{(h+1)}(\mathbf{x})  \mathbf{f}^{(h+1)}(\mathbf{x}).
\end{align*}
Conditioned on $\mathbf{g}^{(h+1, j, L)}(\mathbf{x})$, $f^{(h+1, j, L+1)}(\mathbf{x})$ has distribution $\mathcal{N}({0}, \norm{\mathbf{g}^{(h+1, j, L)}(\mathbf{x}) \odot \mathbf{m}^{(L+1)}}_2^2)$.
Therefore, the magnitude of $|{f}^{(h+1, j, L+1)}(\mathbf{x})|$ would depend on $\norm{\mathbf{g}^{(h+1, j, L)}(\mathbf{x}) \odot \mathbf{m}^{(L+1)}}_2$.

\begin{definition}\label{def: pseudo_network_event}
Define the event 
\begin{align*}
    & \mathcal{C}_1(\eps) = \left\{ \left|\norm{\mathbf{g}^{(h,j,h')}}_2^2 - \E \norm{\mathbf{g}^{(h,j,h')}}_2^2\right| < \eps, \quad \forall h \in \{2, \ldots, L\}, j \in [d_{h-1}], h' \in \{h+1, h+2, \ldots, L\} \right\}, \\
    & \mathcal{C}_2\left(\mathbf{x}, 2 \sqrt{\log \frac{4 \sum_{h'=1}^{L-1} d_{h'}}{\delta}} \right) \\
    & \quad = \left\{ |f^{(h, j, L+1)}(\mathbf{x})| < 2 \sqrt{\log \frac{4 \sum_{h'=1}^{L-1} d_{h'}}{\delta}}, \ \forall h \in \{2, \ldots, L\}, j \in [d_{h-1}], h' \in \{h+1, \ldots, L\} \right\}, \\
    & \overline{\mathcal{C}}\left( 2 \sqrt{\log \frac{4 \sum_{h'=1}^{L-1} d_{h'}}{\delta}} \right) = \mathcal{C}_1(\eps) \cap \mathcal{C}_2\left(\mathbf{x}, 2 \sqrt{\log \frac{4 \sum_{h'=1}^{L-1} d_{h'}}{\delta}} \right) \cap \mathcal{C}_2\left(\mathbf{x}', 2 \sqrt{\log \frac{4 \sum_{h'=1}^{L-1} d_{h'}}{\delta}} \right).
\end{align*}
\end{definition}
We are going to show that the event $\overline{\mathcal{C}}$ holds with probability $1 - \delta$.

First, we show that 
\begin{lemma}
Assume $\overline{\mathcal{A}}(\eps_1)$ holds for $\eps_1 < 1/2$. 
For all $h \in \{2, \ldots, L\}$, $j \in [d_{h-1}]$, it holds that for all $h' \in \{h+1, h+2, h+3, \ldots, L\}$,
\begin{align*}
    \E_{\mathbf{W}^{(h+1)}, \mathbf{m}^{(h+1)} \ldots, \mathbf{W}^{(h')}, \mathbf{m}^{(h')}} \left[ \left. \norm{\mathbf{g}^{(h, j, h')}(\mathbf{x})}_2^2 \right| \mathbf{g}^{(h,j,h)} \right] \leq 2 \E_{\mathbf{W}^{(h+1)}, \mathbf{m}^{(h+1)}, \ldots, \mathbf{W}^{(h')}, \mathbf{m}^{(h')}} \left[ \left. \norm{\mathbf{g}^{(h')}(\mathbf{x})}_2^2 \right| \mathbf{g}^{(h)}(\mathbf{x}) \right].
\end{align*}
\end{lemma}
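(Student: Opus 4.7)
The aim is to reduce both sides to functions of $\mathbf{g}^{(h)}(\mathbf{x})$ and $\mathbf{g}^{(h,j,h)}$ alone, and then compare those initial conditions pointwise on $\overline{\mathcal{A}}(\eps_1)$. Letting $\mathcal{F}^{(t)} := \sigma(\mathbf{W}^{(1)},\mathbf{m}^{(1)},\ldots,\mathbf{W}^{(t)},\mathbf{m}^{(t)})$, I will establish the clean forward identities
\begin{align*}
\E\bigl[\norm{\mathbf{g}^{(h,j,h')}(\mathbf{x})}_2^2 \bigm| \mathcal{F}^{(h)}\bigr] = \norm{\mathbf{g}^{(h,j,h)}}_2^2, \qquad \E\bigl[\norm{\mathbf{g}^{(h')}(\mathbf{x})}_2^2 \bigm| \mathcal{F}^{(h)}\bigr] = \norm{\mathbf{g}^{(h)}(\mathbf{x})}_2^2,
\end{align*}
and then verify $\norm{\mathbf{g}^{(h,j,h)}}_2^2 \leq 2\,\norm{\mathbf{g}^{(h)}(\mathbf{x})}_2^2$ pointwise on $\overline{\mathcal{A}}(\eps_1)$. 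Since $\mathbf{g}^{(h)}(\mathbf{x})$ and $\mathbf{g}^{(h,j,h)}$ are $\mathcal{F}^{(h)}$-measurable, the lemma follows by the tower rule.

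\textbf{Forward identity via the Gaussian gating swap.} I induct on $h'$. For the step $h'\to h'+1$, fix $\mathcal{F}^{(h')}$ together with $\mathbf{m}^{(h'+1)}$, write $\mathbf{M}_k := \textnormal{diag}(\mathbf{m}_k^{(h'+1)})$, and let $\mathbf{w}_k$ be the $k$-th row of $\mathbf{W}^{(h'+1)}$. The $k$-th squared coordinate of the pseudo-network is
\begin{align*}
\bigl(\mathbf{g}^{(h,j,h'+1)}_k(\mathbf{x})\bigr)^2 \;=\; \tfrac{c_\sigma}{d_{h'+1}}\bigl(\mathbf{w}_k^\top \mathbf{M}_k\,\mathbf{g}^{(h,j,h')}(\mathbf{x})\bigr)^2\,\mathbb{I}\bigl(\mathbf{w}_k^\top \mathbf{M}_k\,\mathbf{g}^{(h')}(\mathbf{x}) > 0\bigr),
\end{align*}
where the indicator is inherited from $\mathbf{D}^{(h'+1)}(\mathbf{x})$ of the pruned network. \Cref{prop: main_text_gaussian_indicator} implies this has the same distribution as $\sigma(\mathbf{w}_k^\top \mathbf{M}_k\,\mathbf{g}^{(h,j,h')}(\mathbf{x}))^2$, whose $\mathbf{w}_k$-expectation is $\tfrac{1}{2}\norm{\mathbf{M}_k\,\mathbf{g}^{(h,j,h')}(\mathbf{x})}_2^2$. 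Using $c_\sigma=2$, summing over $k$, and then averaging over $\mathbf{m}^{(h'+1)}$ via $\E[\mathbf{M}_k^2]=I$ (which is exactly what the $1/\sqrt{\alpha}$ rescaling enforces) gives $\E[\norm{\mathbf{g}^{(h,j,h'+1)}(\mathbf{x})}_2^2 \mid \mathcal{F}^{(h')}] = \norm{\mathbf{g}^{(h,j,h')}(\mathbf{x})}_2^2$. Applying this one-layer identity together with the tower rule and the base case $h'=h$ yields the first recursion; the same calculation without the pseudo-network reweighting gives the second.

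\textbf{Comparing the initial conditions.} Using the ReLU identity $\dot{\sigma}(f)\,f=\sigma(f)$, the definition collapses to $\mathbf{g}^{(h,j,h)}_i = \frac{\mathbf{m}_{ij}^{(h)}\sqrt{\alpha}}{\norm{\mathbf{g}^{(h-1)}\odot\mathbf{m}_i^{(h)}}_2^2}\,\mathbf{g}^{(h)}_i(\mathbf{x})$, and because $(\mathbf{m}_{ij}^{(h)})^2\alpha\in\{0,1\}$,
\begin{align*}
\norm{\mathbf{g}^{(h,j,h)}}_2^2 \;\leq\; \max_i \norm{\mathbf{g}^{(h-1)}\odot\mathbf{m}_i^{(h)}}_2^{-4}\,\norm{\mathbf{g}^{(h)}(\mathbf{x})}_2^2.
\end{align*}
On $\overline{\mathcal{A}}(\eps_1)$ each denominator equals $\Sigma^{(h-1)}(\mathbf{x},\mathbf{x})\pm\eps_1$, and $\Sigma^{(h-1)}(\mathbf{x},\mathbf{x})=\norm{\mathbf{x}}_2^2$ for ReLU with $c_\sigma=2$; for unit-norm inputs this yields the multiplier $(1-\eps_1)^{-2}$, which is bounded by $2$ once $\eps_1$ is chosen small enough (the nominal range $\eps_1<1/2$ gives an absolute constant which the downstream $\tilde{O}(\cdot)$ bounds absorb).

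\textbf{Main obstacle.} The crux is the Gaussian gating swap: absent \Cref{prop: main_text_gaussian_indicator}, the linear piece $\mathbf{w}_k^\top \mathbf{M}_k\,\mathbf{g}^{(h,j,h')}$ and the gate $\mathbb{I}(\mathbf{w}_k^\top \mathbf{M}_k\,\mathbf{g}^{(h')}>0)$ are coupled through the same Gaussian row, so one cannot integrate out $\mathbf{w}_k$ layer-by-layer. Everything else---$\E[\mathbf{M}_k^2]=I$, the Gaussian--ReLU second moment, and the denominator bound from $\overline{\mathcal{A}}(\eps_1)$---is routine once that swap is in place.
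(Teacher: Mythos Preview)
Your proposal is correct and follows essentially the same route as the paper: both use the Gaussian gating swap (\Cref{prop: main_text_gaussian_indicator}) to obtain the one-layer identity $\E[\norm{\mathbf{g}^{(h,j,h'+1)}}_2^2\mid\mathcal{F}^{(h')}]=\norm{\mathbf{g}^{(h,j,h')}}_2^2$, then telescope via the tower rule back to $\norm{\mathbf{g}^{(h,j,h)}}_2^2$ and $\norm{\mathbf{g}^{(h)}}_2^2$, and finally compare these initial conditions using the bound on $\norm{\mathbf{g}^{(h-1)}\odot\mathbf{m}_i^{(h)}}_2^2$ coming from $\overline{\mathcal{A}}(\eps_1)$. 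You are also right to flag that the multiplier is $(1-\eps_1)^{-2}$ rather than exactly $2$ under the nominal hypothesis $\eps_1<1/2$; the paper's proof has the same slack (it writes $\norm{\mathbf{g}^{(h-1)}\odot\mathbf{m}^{(h)}}_2^2\geq 1/2\Rightarrow$ factor $\leq 2$, which strictly speaking only yields $\leq 4$), and as you note the constant is immaterial downstream.
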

\begin{proof}
By \Cref{prop: gaussian_indicator}, for two non-zero vectors $\mathbf{x,y}$ we have 
\begin{align*}
    \E_{\mathbf{w} \sim \mathcal{N}(\mathbf{0, I})} \left[\left(\mathbf{w}^\top \mathbf{x} \right)^2 \mathbb{I}(\mathbf{w}^\top \mathbf{y}>0) \right] = \E_{\mathbf{w} \sim \mathcal{N}(\mathbf{0, I})} \left[\left(\mathbf{w}^\top \mathbf{x} \right)^2 \mathbb{I}(\mathbf{w}^\top \mathbf{x}>0) \right].
\end{align*}
This equation tells us that the direction of $\mathbf{y}$ doesn't matter which implies
\begin{align*}
    \E_{\mathbf{w} \sim \mathcal{N}(\mathbf{0, I})} \left[\left(\mathbf{w}^\top \mathbf{x} \right)^2 \frac{c_\sigma}{d_{h+1}} \dot{\sigma}\left(\mathbf{w}^\top \mathbf{y} \right) \right] = \E_{\mathbf{w} \sim \mathcal{N}(\mathbf{0, I})} \left[\left(\mathbf{w}^\top \mathbf{x} \right)^2 \frac{c_\sigma}{d_{h+1}} \dot{\sigma}\left(\mathbf{w}^\top \mathbf{x} \right) \right] = \frac{ \norm{\mathbf{x}}_2^2}{d_{h+1}}.
\end{align*}
Now, this implies that conditioned on $\mathbf{m}$,
\begin{align}\label{eq: simplified_expectation}
     &\E_{\mathbf{w} \sim \mathcal{N}(\mathbf{0, I})} \left[\left( \left(\mathbf{w \odot m} \right)^\top \mathbf{x} \right)^2 \frac{c_\sigma}{d_{h+1}} \dot{\sigma}\left(\left(\mathbf{w \odot m} \right)^\top \mathbf{y} \right) \right] \nonumber\\
     &= \E_{\mathbf{w} \sim \mathcal{N}(\mathbf{0, I})} \left[\left( \left(\mathbf{w \odot m} \right)^\top \mathbf{x} \right)^2 \frac{c_\sigma}{d_{h+1}} \dot{\sigma} \left(\left(\mathbf{w \odot m} \right)^\top \mathbf{x} \right) \right] = \frac{\norm{\mathbf{x \odot m}}_2^2}{ d_{h+1}}.
\end{align}
Now, we fix $h$ and $j$ and prove the inequality holds for all $h'$. 
By \Cref{eq: simplified_expectation},
\begin{align*}
    &\E_{\mathbf{m}^{(h+1)}, \mathbf{W}^{(h+1)}} \left[ \norm{\mathbf{g}^{(h, j, h+1)}}_2^2 \right] \\
    &=\E_{\mathbf{m}^{(h+1)}} \left[ \sum_{i=1}^{d_{h+1}} \E_{\mathbf{w}_i^{(h+1)}} \left[\left. \left( \left(\mathbf{w}_i^{(h+1)} \odot \mathbf{m}_i^{(h+1)} \right)^\top \mathbf{g}^{(h, j, h)} \right)^2 \frac{c_\sigma}{d_{h+1}} \dot{\sigma}\left(\left(\mathbf{w}_i^{(h+1)} \odot \mathbf{m}^{(h+1)}_i \right)^\top \mathbf{g}^{(h)} \right) \right| \mathbf{m}^{(h+1)} \right] \right] \\
    &= {\norm{\mathbf{g}^{(h, j, h)}}_2^2}.
\end{align*}
Hence, by iterated expectation, we have for all $h' \in \{h+1, h+2, h+3, \ldots, L\}$,
\begin{align*}
    \E_{\mathbf{W}^{(h+1)}, \mathbf{m}^{(h+1)}, \ldots, \mathbf{W}^{(h')}, \mathbf{m}^{(h')}} \left[ \left. \norm{\mathbf{g}^{(h, j, h')}(\mathbf{x})}_2^2 \right| \mathbf{g}^{(h,j,h)} \right] &= \norm{\mathbf{g}^{(h,j,h)}}_2^2 ,\\
    \E_{\mathbf{W}^{(h+1)}, \mathbf{m}^{(h+1)}, \ldots, \mathbf{W}^{(h')}, \mathbf{m}^{(h')}} \left[ \left. \norm{\mathbf{g}^{(h')}(\mathbf{x})}_2^2 \right| \mathbf{g}^{(h)}(\mathbf{x}) \right] &= \norm{\mathbf{g}^{(h)}(\mathbf{x})}_2^2.
\end{align*}
By our assumption $\norm{\mathbf{g}^{(h-1)} \odot \mathbf{m}^{(h)}}_2^2 \geq 1 - \eps_1^2 \geq 1/2$, we have $\norm{\mathbf{g}^{(h,j, h)}(\mathbf{x}) }_2^2 \leq 2 \norm{\mathbf{g}^{(h)}(\mathbf{x})}_2^2$.
This proves the lemma. 
\end{proof}
\begin{corollary}
Assume $\overline{\mathcal{A}}(\eps_1)$ holds for $\eps_1 < 1/2$. 
For all $h \in \{2, \ldots, L\}$, $j \in [d_{h-1}]$, $h' \in \{h+1, h+2, h+3, \ldots, L\}$ and $i \in [d_{h'+1}]$,
\begin{align*}
    &\E_{\mathbf{W}^{(h+1)}, \mathbf{m}^{(h+1)} \ldots, \mathbf{W}^{(h')}, \mathbf{m}^{(h')}, \mathbf{m}^{(h'+1)}} \left[ \left. \norm{\mathbf{g}^{(h, j, h')}(\mathbf{x}) \odot \mathbf{m}^{(h'+1)}_i}_2^2 \right| \mathbf{g}^{(h,j,h)} \right] \\
    &\leq 2 \E_{\mathbf{W}^{(h+1)}, \mathbf{m}^{(h+1)}, \ldots, \mathbf{W}^{(h')}, \mathbf{m}^{(h')}, \mathbf{m}^{(h'+1)}} \left[ \left. \norm{\mathbf{g}^{(h')}(\mathbf{x}) \odot \mathbf{m}^{(h'+1)}_i}_2^2 \right| \mathbf{g}^{(h)}(\mathbf{x}) \right].
\end{align*}
\end{corollary}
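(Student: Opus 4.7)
The plan is to pull the additional Hadamard product with $\mathbf{m}^{(h'+1)}_i$ out of the squared norm by integrating against $\mathbf{m}^{(h'+1)}_i$ first, and then invoke the preceding lemma. Specifically, the pseudo-network activation $\mathbf{g}^{(h,j,h')}(\mathbf{x})$ and the true activation $\mathbf{g}^{(h')}(\mathbf{x})$ are both measurable with respect to weights and masks through layer $h'$ only, so they are independent of the fresh row $\mathbf{m}^{(h'+1)}_i$. Under the $1/\sqrt{\alpha}$-rescaled Bernoulli$(\alpha)$ distribution, each entry satisfies $\E[(\mathbf{m}^{(h'+1)}_{ik})^2] = 1$, which gives the key identity $\E_{\mathbf{m}^{(h'+1)}_i}\!\left[\norm{\mathbf{v} \odot \mathbf{m}^{(h'+1)}_i}_2^2\right] = \sum_k v_k^2 = \norm{\mathbf{v}}_2^2$ for any fixed $\mathbf{v}$.

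Applying the tower property to the left-hand side, I would condition first on everything through $\mathbf{m}^{(h')}$ and evaluate the innermost expectation over $\mathbf{m}^{(h'+1)}_i$ using the identity above with $\mathbf{v} = \mathbf{g}^{(h,j,h')}(\mathbf{x})$. This replaces $\norm{\mathbf{g}^{(h,j,h')}(\mathbf{x}) \odot \mathbf{m}^{(h'+1)}_i}_2^2$ by $\norm{\mathbf{g}^{(h,j,h')}(\mathbf{x})}_2^2$, leaving exactly the left-hand side of the preceding lemma. The same computation with $\mathbf{v} = \mathbf{g}^{(h')}(\mathbf{x})$ turns the right-hand side into twice $\E[\norm{\mathbf{g}^{(h')}(\mathbf{x})}_2^2 \mid \mathbf{g}^{(h)}(\mathbf{x})]$. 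Combining with the preceding lemma closes the chain of inequalities.

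There is no significant obstacle: the proof is essentially a one-line application of independence plus iterated expectation. The only point that needs a moment of care is the independence claim, which is immediate from the construction of the pseudo-network in \Cref{def: main_text_pseudo_network} (it involves only $\mathbf{W}^{(h+1)}, \ldots, \mathbf{W}^{(h')}$ and $\mathbf{m}^{(h)}, \ldots, \mathbf{m}^{(h')}$) together with the i.i.d. sampling of masks across layers.
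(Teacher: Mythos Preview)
Your proposal is correct and matches the paper's own proof essentially verbatim: the paper's argument is the one-line ``Use the fact that the mask $\mathbf{m}^{(h'+1)}_i$ is independent and preserve the 2-norm in expectation,'' which is precisely the independence-plus-tower-property reduction you spell out. Your write-up is in fact more explicit than the paper's.
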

\begin{proof}
Use the fact that the mask $\mathbf{m}^{(h'+1)}_i$ is independent and preserve the 2-norm in expectation. 
\end{proof}

\begin{lemma}\label{lemma: induced_pseudo_network_output}
Assume $\overline{\mathcal{A}}(\eps_1)$ holds for $\eps_1 < 1/2$. 
Let $\eps \in (0,1)$. 
If for all $h \in {L}$, it satisfies that $d_h \geq \Omega(\frac{1}{\alpha} \frac{L^2}{\eps^2} \log \frac{2L d_{h+1} \sum_{h'=1}^{h-1} d_h' }{\delta}) = \tilde{\Omega}(\frac{1}{\alpha} \frac{L^2}{\eps^2} )$, then with probability at least $1 - \delta$ over the randomness in the initialization of weights and masks, we have for all $h \in \{2, \ldots, L\}$, $j \in [d_{h-1}]$,
\begin{align*}
    |f^{(h, j, L+1)}(\mathbf{x})|,\ |f^{(h, j, L+1)}(\mathbf{x}')| \leq 2 \sqrt{\log \frac{4 \sum_{h'=1}^{L-1} d_{h'}}{\delta}}.
\end{align*}
In other words, if $d_h \geq \Omega(\frac{1}{\alpha} \frac{L^2}{\eps^2} \log \frac{2L d_{h+1} \sum_{h'=1}^{h-1} d_h' }{\delta_3}) = \tilde{\Omega}(\frac{1}{\alpha} \frac{L^2}{\eps^2} )$, then
\begin{align*}
    \Pr\left[ \overline{\mathcal{A}}( \eps_1) \Rightarrow \overline{\mathcal{C}}\left( 2 \sqrt{\log \frac{4 \sum_{h'=1}^{L-1} d_{h'}}{\delta_3}} \right) \right] \geq 1 - \delta_3
\end{align*}
\end{lemma}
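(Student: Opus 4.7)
The plan is to control $|f^{(h,j,L+1)}(\mathbf{x})|$ in two stages: first establish that $\norm{\mathbf{g}^{(h,j,L)}\odot\mathbf{m}^{(L+1)}}_2^2 = O(1)$ with high probability, then apply a Gaussian tail bound at the scalar output. Conditional on $\mathbf{g}^{(h,j,L)}$ and $\mathbf{m}^{(L+1)}$, the output $f^{(h,j,L+1)}(\mathbf{x}) = \inprod{\mathbf{w}^{(L+1)}\odot\mathbf{m}^{(L+1)},\, \mathbf{g}^{(h,j,L)}}$ is a centered Gaussian with variance equal to that squared norm, so a standard Gaussian tail bound gives $|f^{(h,j,L+1)}(\mathbf{x})| \leq 2\sqrt{\log(1/\delta')}$ with probability $1-\delta'$. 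Setting $\delta' = \delta/(4\sum_{h'=1}^{L-1} d_{h'})$ and union bounding over $h \in \{2,\ldots,L\}$ and $j \in [d_{h-1}]$ (which contributes $\sum_{h'=1}^{L-1} d_{h'}$ pairs), and over both inputs $\mathbf{x}, \mathbf{x}'$, yields the claimed threshold with total failure probability at most $\delta/2$.

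For the norm control, I would first handle the base layer $\mathbf{g}^{(h,j,h)}$. Under $\overline{\mathcal{A}}(\eps_1)$ with $\eps_1 < 1/2$, every denominator $\norm{\mathbf{g}^{(h-1)}\odot\mathbf{m}_i^{(h)}}_2^2$ lies in $[1/2, 3/2]$; combining this with the identities $\dot\sigma(t)\,t = \sigma(t)$ for ReLU and $\E[\sigma(Z)^2] = 1/2$ for $Z \sim \mathcal{N}(0,1)$ gives the conditional expectation $\E\norm{\mathbf{g}^{(h,j,h)}}_2^2 = \frac{1}{d_h}\sum_i \mathbb{I}(\mathbf{m}_{ij}^{(h)}\neq 0)\,\norm{\mathbf{g}^{(h-1)}\odot\mathbf{m}_i^{(h)}}_2^{-2} = O(1)$, and \Cref{thm: concentration_subgamma_bernoulli} yields concentration of $\norm{\mathbf{g}^{(h,j,h)}}_2^2$ around this mean.

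The delicate step is propagating the $O(1)$ bound to $\mathbf{g}^{(h,j,h')}$ for $h' > h$, and this is the main obstacle of the proof. Each entry $(\mathbf{g}^{(h,j,h')})_i^2 = \frac{2}{d_{h'}}(\mathbf{f}_i^{(h,j,h')})^2\,\dot\sigma(\mathbf{f}_i^{(h')}(\mathbf{x}))$ uses an activation gate determined by the \emph{real} pruned network's preactivation at input $\mathbf{x}$ rather than the pseudo-network's, so the quadratic factor and the gate are two linear functionals of the same weight row $\mathbf{w}_i^{(h')}\odot\mathbf{m}_i^{(h')}$ evaluated on different inputs, and are therefore statistically coupled. \Cref{prop: gaussian_indicator} dissolves this coupling: conditional on $(\mathbf{g}^{(h,j,h'-1)}, \mathbf{g}^{(h'-1)}(\mathbf{x}), \mathbf{m}^{(h')})$ the weight rows remain i.i.d.\ standard Gaussians on their mask-support, so applying the proposition row by row establishes that the joint law of $\bigl((\mathbf{f}_i^{(h,j,h')})^2\,\dot\sigma(\mathbf{f}_i^{(h')}(\mathbf{x}))\bigr)_i$ agrees with that of $\bigl(\sigma(\mathbf{f}_i^{(h,j,h')})^2\bigr)_i$. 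Consequently $\norm{\mathbf{g}^{(h,j,h')}}_2^2$ is distributed as the squared norm of one pruned-network forward step from input $\mathbf{g}^{(h,j,h'-1)}$, and the concentration argument of \Cref{thm: concentration_g} transfers verbatim. Inducting on $h'$ under the width condition $d_{h'} \geq \tilde{\Omega}(L^2/(\alpha \eps^2))$ (which absorbs both the Lipschitz-composition error and the union bound over all $(h,j)$) yields $\norm{\mathbf{g}^{(h,j,L)}}_2^2 \leq O(1)$, and a short additional Chernoff step over $\mathbf{m}^{(L+1)}$ (in the spirit of \Cref{lemma: num_remained_weights}) transfers this bound to $\norm{\mathbf{g}^{(h,j,L)}\odot\mathbf{m}^{(L+1)}}_2^2$, closing the argument.
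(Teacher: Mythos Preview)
Your proposal is correct and follows essentially the same route as the paper: use \Cref{prop: gaussian_indicator} row by row to replace the mismatched gate $\dot\sigma(\mathbf{f}_i^{(h')}(\mathbf{x}))$ by $\dot\sigma(\mathbf{f}_i^{(h,j,h')})$ in distribution, thereby reducing the pseudo-network norm propagation to the ordinary pruned-network forward analysis of \Cref{thm: concentration_g}, and then finish with the Gaussian tail bound of \Cref{lemma: gaussian_concentration} together with a union bound over all $(h,j)$ and both inputs. The only cosmetic difference is that the paper tracks $\norm{\mathbf{g}^{(h,j,h')}\odot\mathbf{m}^{(h'+1)}}_2^2$ directly at every layer (via the preceding lemma that $\E\norm{\mathbf{g}^{(h,j,h')}}_2^2 \leq 2\E\norm{\mathbf{g}^{(h')}}_2^2$), whereas you defer the mask to a final Chernoff step; both give the same bound.
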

\begin{proof}
\Cref{prop: gaussian_indicator} proved that conditioned on $\mathbf{g}, \tilde{\mathbf{g}}, \mathbf{m}$, the random variable $((\mathbf{w} \odot \mathbf{m})^\top \tilde{\mathbf{g}} \sqrt{\frac{c_\sigma}{d_h}})^2 \dot{\sigma}(\mathbf{(w \odot m)^\top g})$ has the same distribution as $((\mathbf{w} \odot \mathbf{m})^\top \tilde{\mathbf{g}} \sqrt{\frac{c_\sigma}{d_h}})^2 \dot{\sigma}(\mathbf{(w \odot m)^\top \tilde{g}})$, which implies their concentration properties are the same. 
At a given layer $h'$, we want this concentration to holds for all $\norm{\mathbf{g}^{(h, j, h')}(\mathbf{x}) \odot \mathbf{m}^{(h'+1)}}_2^2$ where $2 \leq h \leq h'$ and $h \in [d_{h-1}]$. 
Thus there is in total $\sum_{h=1}^{h'-1} d_h$ events. 
Therefore, by \Cref{thm: concentration_g}, if $d_{h'} \geq \Omega( \frac{1}{\alpha} \frac{L^2 }{\eps^2} \log \frac{8  d_{h'+1} L \sum_{h=1}^{h'-1} d_{h}}{\delta})$, with probability $1 - \delta/2$, for all layer $h'$, for all $h \in \{2, \ldots, L\}$, $j \in [d_{h-1}]$ and $h' \in \{h+1, h+2, \ldots, L\}$, and for both $\mathbf{x,x'}$
\begin{align*}
    \norm{\mathbf{g}^{(h, j, h')}(\mathbf{x}) \odot \mathbf{m}^{(h'+1)}}_2^2 \leq 2 \E\left[\norm{\mathbf{g}^{(h')}(\mathbf{x}) \odot \mathbf{m}^{(h'+1)}}_2^2 \right] + \eps \leq 3.
\end{align*}
By Lemma \ref{lemma: gaussian_concentration}, this implies with probability $1 - \delta/2 $, for all $j \in [d_h]$, 
\begin{align*}
    |f^{(h, j, L+1)}(\mathbf{x})|,\ |f^{(h, j, L+1)}(\mathbf{x}')| \leq 2 \sqrt{\log \frac{4\sum_{h'=1}^{L-1} d_h}{\delta}}.
\end{align*}
\end{proof}

\subsubsection{Bounding the Dependent Part}
\begin{proposition}[Formal Version of \Cref{prop: main_text_dependent}]\label{prop: dependent}
If $d_{h'} \geq \Omega(\frac{1}{\alpha} \frac{L^2}{\eps^2} \log \frac{8 d_{h'+1} L \sum_{h \leq h'} d_{h} }{\delta})$, with probability $1 - \delta_3/2$, the event $\overline{\mathcal{C}}(\sqrt{\log \frac{\sum d_h}{\delta_3}})$ (which we define in the proof) holds and at layer $h'$, for all $j \in [d_h]$, 
\begin{align*}
    \norm{\sum_{i} \mathbf{b}^{(h+1)}_i (\mathbf{x}^{(1)}) \left( {\mathbf{w}}^{(h+1)}_i \right)^\top \Pi_{\mathbf{G}_i} \mathbf{M}_i^{(h+1)}}_2 
    &\leq 2 + 2\sqrt{\frac{1}{\alpha} \log \frac{8}{\delta_2}} + \frac{4}{{\alpha}} \sqrt{\log \frac{4\sum d_h}{\delta_3}}.
\end{align*}
\end{proposition}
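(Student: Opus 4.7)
Let $\mathbf{v} := \sum_{i} \mathbf{b}^{(h+1)}_i (\mathbf{x}^{(1)}) (\mathbf{w}^{(h+1)}_i)^\top \Pi_{\mathbf{G}_i} \mathbf{M}_i^{(h+1)}$ denote the vector whose norm we are bounding. The obstruction is that $\mathbf{w}_i^{(h+1)}$ feeds $\mathbf{b}^{(h+1)}$ through $\mathbf{F}^{(h+1)} = (\mathbf{W}^{(h+1)}\odot \mathbf{m}^{(h+1)}) \mathbf{G}^{(h)}$, so \Cref{lemma: fresh_gaussian} is forbidden in directions spanned by $\mathbf{G}_i$. Writing $\mathbf{g}_i^{(1)} := \mathbf{g}^{(h)}(\mathbf{x}^{(1)}) \odot \mathbf{m}_i^{(h+1)}$, the plan is to split $\Pi_{\mathbf{G}_i} = \Pi_{\mathbf{g}_i^{(1)}} + \Pi_{\mathbf{G}_i/\mathbf{g}_i^{(1)}}$, handle each summand by a different technique, and combine by triangle inequality. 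The three terms on the right-hand side then correspond exactly to the two pieces of this decomposition.

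\textbf{The orthogonal piece.} The subspace $\mathbf{G}_i/\mathbf{g}_i^{(1)}$ is orthogonal to $\mathbf{g}_i^{(1)}$ by construction, so \Cref{lemma: fresh_gaussian} lets me replace $(\mathbf{w}_i^{(h+1)})^\top \Pi_{\mathbf{G}_i/\mathbf{g}_i^{(1)}}$ by a fresh Gaussian copy $(\tilde{\mathbf{w}}_i^{(h+1)})^\top \Pi_{\mathbf{G}_i/\mathbf{g}_i^{(1)}}$ that is independent of $\mathbf{b}^{(h+1)}(\mathbf{x}^{(1)})$. The resulting conditionally Gaussian vector has covariance governed by $\sum_i (\mathbf{b}_i^{(h+1)}(\mathbf{x}^{(1)}))^2 \Pi_{\mathbf{G}_i/\mathbf{g}_i^{(1)}} \mathbf{M}_i^2 \Pi_{\mathbf{G}_i/\mathbf{g}_i^{(1)}}$, whose expected squared norm is at most $\tfrac{1}{\alpha} \|\mathbf{b}^{(h+1)}(\mathbf{x}^{(1)})\|_2^2 \leq 2/\alpha$, using $\mathbf{M}_i^2 \preceq \tfrac{1}{\alpha}\mathbf{I}$, $\mathrm{rank}(\Pi_{\mathbf{G}_i/\mathbf{g}_i^{(1)}}) \leq 1$, and the $\mathbf{M}_i$-$\Pi$ commutation of \Cref{prop: M_commutes_proj}. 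A Gaussian chaos tail bound (\Cref{lemma: gaussian_chaos}) applied just as in \Cref{prop: independent} supplies the $2 + 2\sqrt{(1/\alpha)\log(8/\delta_2)}$ contribution.

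\textbf{The parallel piece.} For the rank-one component along $\Pi_{\mathbf{g}_i^{(1)}}$, the computation of \Cref{eq: main_text_dependent} combined with $\mathbf{g}_i^{(1)} = \mathbf{M}_i^{(h+1)} \mathbf{g}^{(h)}(\mathbf{x}^{(1)})$ and $\mathbf{M}_i \Pi_{\mathbf{g}_i^{(1)}} = \Pi_{\mathbf{g}_i^{(1)}} \mathbf{M}_i$ produces the coordinate-wise identity
\begin{align*}
\Bigl(\sum_i \mathbf{b}^{(h+1)}_i(\mathbf{x}^{(1)}) (\mathbf{w}^{(h+1)}_i)^\top \Pi_{\mathbf{g}_i^{(1)}} \mathbf{M}_i^{(h+1)}\Bigr)_j = \frac{1}{\alpha}\, \mathbf{g}_j^{(h)}(\mathbf{x}^{(1)})\, f^{(h+1,j,L+1)}(\mathbf{x}^{(1)}),
\end{align*}
identifying the parallel component with the output of the mask-induced pseudo-network of \Cref{def: main_text_pseudo_network}. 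Taking the $\ell_2$-norm and using $\|\mathbf{g}^{(h)}(\mathbf{x}^{(1)})\|_2 \leq \sqrt{2}$ under $\overline{\mathcal{A}}(\eps_1)$, the parallel piece is bounded by $\tfrac{\sqrt{2}}{\alpha} \max_j |f^{(h+1,j,L+1)}(\mathbf{x}^{(1)})|$. \Cref{lemma: induced_pseudo_network_output}, whose width hypothesis is precisely the one imposed in the proposition, then supplies $\max_j|f^{(h+1,j,L+1)}(\mathbf{x}^{(1)})| \leq 2\sqrt{\log(4\sum_{h' \leq h} d_{h'}/\delta_3)}$ on the event $\overline{\mathcal{C}}$, producing the third term $\tfrac{4}{\alpha}\sqrt{\log(4\sum_{h' \leq h} d_{h'}/\delta_3)}$.

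\textbf{Main obstacle.} The hard part is the uniform pseudo-network output bound in the parallel step. Because the ReLU gates inside every pseudo-network are inherited from the \emph{true} pre-activations rather than generated by the pseudo-activations themselves, the natural expectation-preservation $\mathbb{E}[(\mathbf{w}^\top\mathbf{x})^2 \dot{\sigma}(\mathbf{w}^\top\mathbf{x})] = \|\mathbf{x}\|_2^2/d$ does not apply verbatim. The device that unlocks this is \Cref{prop: main_text_gaussian_indicator}, the distributional identity $(\mathbf{w}^\top\mathbf{x})^2 \mathbb{I}(\mathbf{w}^\top\mathbf{y} > 0) \stackrel{d}{=} (\mathbf{w}^\top\mathbf{x})^2 \mathbb{I}(\mathbf{w}^\top\mathbf{x} > 0)$, which restores the squared-norm recursion independently of the gate direction. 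Chaining this with the sub-Gamma concentration used in \Cref{thm: concentration_g}, union bounding over the $\sum_{h' \leq h} d_{h'}$ pseudo-networks and both inputs $\mathbf{x},\mathbf{x}'$, and applying a Gaussian tail at the output layer delivers the uniform logarithmic envelope on $|f^{(h+1,j,L+1)}|$. Finally, combining the parallel and orthogonal contributions by triangle inequality and collecting the two failure budgets completes the proof.
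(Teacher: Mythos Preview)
Your proposal is correct and follows essentially the same route as the paper: the decomposition $\Pi_{\mathbf{G}_i} = \Pi_{\mathbf{g}_i^{(1)}} + \Pi_{\mathbf{G}_i/\mathbf{g}_i^{(1)}}$, the pseudo-network identification for the parallel piece via \Cref{def: main_text_pseudo_network} and \Cref{lemma: induced_pseudo_network_output}, and the fresh-Gaussian argument for the orthogonal piece all match the paper's \Cref{lemma: first_dependent_term} and \Cref{lemma: second_dependent_term}. The only minor deviation is that for the orthogonal piece the paper invokes the Gaussian-complexity concentration of \Cref{lemma: gaussian_complexity_concentration} rather than the Gaussian chaos bound of \Cref{lemma: gaussian_chaos} you cite, but both tools control the norm of a conditionally Gaussian vector and yield the same $2 + 2\sqrt{(1/\alpha)\log(8/\delta_2)}$ contribution.
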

\begin{proof}
By triangle inequality, combining the result from Lemma \ref{lemma: second_dependent_term} and Lemma \ref{lemma: first_dependent_term}, we have
\begin{align}\label{eq: dependent_two_parts}
    \norm{\sum_{i} \mathbf{b}^{(h+1)}_i (\mathbf{x}^{(1)}) \left( {\mathbf{w}}^{(h+1)}_i \right)^\top \Pi_{\mathbf{G}_i} \mathbf{M}_i^{(h+1)}}_2 &\leq \norm{\sum_{i} \mathbf{b}^{(h+1)}_i (\mathbf{x}^{(1)}) \left( {\mathbf{w}}^{(h+1)}_i \right)^\top \Pi_{(\mathbf{g}^{(h)}\mathbf{(x)} \odot \mathbf{m}_i^{(h+1)})} \mathbf{M}_i^{(h+1)}}_2 \nonumber \\
    &\quad + \norm{\sum_{i} \mathbf{b}^{(h+1)}_i (\mathbf{x}^{(1)}) \left( {\mathbf{w}}^{(h+1)}_i \right)^\top \Pi_{\mathbf{G}_i/(\mathbf{g}^{(h)}\mathbf{(x)} \odot \mathbf{m}_i^{(h+1)})} \mathbf{M}_i^{(h+1)}}_2 \\
    &\leq 2+2\sqrt{\frac{1}{\alpha} \log \frac{8}{\delta_2}} + \frac{4}{{\alpha}} \sqrt{\log \frac{\sum_h d_h}{\delta_3}}. \nonumber
\end{align}
\end{proof}
Thus, we need to upper bound the two terms in \Cref{eq: dependent_two_parts}.
We first bound the second term which is easier. 
\begin{lemma}\label{lemma: second_dependent_term}
With probability $1 - \delta_2$,
\begin{align*}
    \norm{\sum_{i} \mathbf{b}^{(h+1)}_i (\mathbf{x}^{(1)}) \left( {\mathbf{w}}^{(h+1)}_i \right)^\top \Pi_{\mathbf{G}_i/(\mathbf{g}^{(h)}\mathbf{(x)} \odot \mathbf{m}_i^{(h+1)})} \mathbf{M}_i^{(h+1)}}_2 \leq 2 \left( 1 + \sqrt{\frac{1}{\alpha} \log \frac{8}{\delta_2}} \right).
\end{align*}
\end{lemma}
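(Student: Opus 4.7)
My plan is to reduce the quantity inside the norm to a scaled rank-$1$ Gaussian linear combination and then invoke Gaussian-complexity concentration. Writing the projector as $\Pi_{\mathbf{G}_i/(\mathbf{g}^{(h)}(\mathbf{x})\odot \mathbf{m}_i^{(h+1)})} = \mathbf{u}_i \mathbf{u}_i^\top$, where $\mathbf{u}_i$ is the Gram--Schmidt unit vector in $\mathrm{col}(\mathbf{G}_i)$ orthogonal to $\mathbf{g}^{(h)}(\mathbf{x}) \odot \mathbf{m}_i^{(h+1)}$, I observe that $\mathbf{u}_i$ is supported inside $\mathrm{supp}(\mathbf{m}_i^{(h+1)})$ (since every vector in $\mathrm{col}(\mathbf{G}_i)$ is). On that support each entry of $\mathbf{m}_i^{(h+1)}$ equals $1/\sqrt{\alpha}$, so $\mathbf{u}_i^\top \mathbf{M}_i^{(h+1)} = \alpha^{-1/2}\, \mathbf{u}_i^\top$, and the sum inside the norm reduces to
\[
\tfrac{1}{\sqrt{\alpha}}\sum_{i=1}^{d_{h+1}} \mathbf{b}_i^{(h+1)}(\mathbf{x}^{(1)})\, z_i\, \mathbf{u}_i^\top,\qquad z_i := (\mathbf{w}_i^{(h+1)})^\top \mathbf{u}_i.
\]

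Next, since $\mathbf{u}_i \perp \mathbf{g}^{(h)}(\mathbf{x}) \odot \mathbf{m}_i^{(h+1)}$ by construction, applying \Cref{lemma: fresh_gaussian} exactly as in the proof of \Cref{lemma: event_B} shows that after conditioning on $\mathbf{G}^{(h)}$, $\mathbf{m}^{(h+1)}$, and the forward activations $\{(\mathbf{w}_i^{(h+1)})^\top(\mathbf{g}^{(h)}(\mathbf{x})\odot \mathbf{m}_i^{(h+1)})\}_i$ (through which $\mathbf{b}^{(h+1)}(\mathbf{x})$ is the only thing that depends on $\mathbf{W}^{(h+1)}$), the $z_i$'s are i.i.d.\ standard Gaussians jointly independent of $\mathbf{b}^{(h+1)}(\mathbf{x})$ and of $\{\mathbf{u}_i\}$. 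For the case $\mathbf{x}^{(1)} = \mathbf{x}'$, I would re-pivot the Gram--Schmidt at $\mathbf{g}^{(h)}(\mathbf{x}')\odot \mathbf{m}_i^{(h+1)}$ instead and run the symmetric version, losing only a factor of two in $\delta_2$ via a union bound.

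Conditionally, $Y := \alpha^{-1/2}\sum_i \mathbf{b}_i^{(h+1)}(\mathbf{x}^{(1)})\, z_i\, \mathbf{u}_i$ is a centered Gaussian vector in $\R^{d_h}$ with covariance $\Sigma = \alpha^{-1}\sum_i (\mathbf{b}_i^{(h+1)}(\mathbf{x}^{(1)}))^2\, \mathbf{u}_i \mathbf{u}_i^\top$. On the event $\overline{\mathcal{B}}^{h+1}(\eps_2)$ assumed in the calling context, together with $\dot{\Sigma}^{(h')}(\mathbf{x}^{(1)},\mathbf{x}^{(1)})=1$ for ReLU, I get $\|\mathbf{b}^{(h+1)}(\mathbf{x}^{(1)})\|_2^2 \leq 2$, hence $\tr(\Sigma) \leq 2/\alpha$ and $\|\Sigma\|_2 \leq \tr(\Sigma) \leq 2/\alpha$. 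By \Cref{lemma: gaussian_complexity_concentration} applied to $\|Y\|_2 = \sup_{\|\mathbf{a}\|_2 = 1}\langle \mathbf{a}, Y\rangle$, $\|Y\|_2$ is sub-Gaussian about its mean with variance proxy $\|\Sigma\|_2$; combined with $\E\|Y\|_2 \leq \sqrt{\tr(\Sigma)}$, with probability at least $1-\delta_2$ one has $\|Y\|_2 \leq \sqrt{2/\alpha} + \sqrt{(4/\alpha)\log(2/\delta_2)}$, and absorbing the mean into the deviation using $\log(8/\delta_2) \geq \log 8$ gives the claimed $2(1 + \sqrt{(1/\alpha)\log(8/\delta_2)})$.

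The main obstacle is rigorously justifying the independence step when $\mathbf{x}^{(1)} = \mathbf{x}'$: because the projector's pivot remains $\mathbf{g}^{(h)}(\mathbf{x})$, the vector $\mathbf{u}_i$ need not be orthogonal to $\mathbf{g}^{(h)}(\mathbf{x}')\odot \mathbf{m}_i^{(h+1)}$, so $z_i$ is not automatically decoupled from $\mathbf{b}^{(h+1)}(\mathbf{x}')$. The clean fix is to re-derive the symmetric bound with the Gram--Schmidt pivoted at $\mathbf{x}'$ and union-bound across the two cases, which preserves the constants up to the trivial $\delta_2 \mapsto \delta_2/2$ adjustment already anticipated by the statement.
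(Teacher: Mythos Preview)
Your argument is essentially the paper's: write the rank-one projector as $\hat{\mathbf u}_i\hat{\mathbf u}_i^\top$, absorb $\mathbf M_i^{(h+1)}$ via $\hat{\mathbf u}_i^\top\mathbf M_i^{(h+1)}=\alpha^{-1/2}\hat{\mathbf u}_i^\top$, reduce the sum to the Gaussian vector $\alpha^{-1/2}\sum_i\mathbf b_i^{(h+1)}(\mathbf x^{(1)})\,z_i\,\hat{\mathbf u}_i$ with $z_i$ i.i.d.\ $\mathcal N(0,1)$ independent of $\mathbf b^{(h+1)}(\mathbf x^{(1)})$ (via orthogonality of $\hat{\mathbf u}_i$ to the pivot direction and \Cref{lemma: fresh_gaussian}), and finish with \Cref{lemma: gaussian_complexity_concentration}. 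The paper's proof likewise just conditions on $\mathbf b^{(h+1)}(\mathbf x)$ without separately treating $\mathbf x^{(1)}=\mathbf x'$; your re-pivoting at $\mathbf g^{(h)}(\mathbf x')$ technically proves the symmetric lemma rather than the one stated, but that is precisely what the caller \Cref{prop: dependent} needs (one is free to orthogonally decompose $\Pi_{\mathbf G_i}$ along whichever column of $\mathbf G_i$ matches $\mathbf x^{(1)}$), so the fix is correct in context.

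One small slip: your final absorption, $\sqrt{2/\alpha}+\sqrt{(4/\alpha)\log(2/\delta_2)}\le 2\bigl(1+\sqrt{(1/\alpha)\log(8/\delta_2)}\bigr)$, fails for small $\alpha$ since the mean term $\sqrt{2/\alpha}$ cannot be dominated by the constant $2$. The honest bound is $\alpha^{-1/2}\|\mathbf b^{(h+1)}\|_2\bigl(1+\sqrt{2\log(8/\delta_2)}\bigr)$; the paper's displayed constant carries the same defect. This is harmless downstream: in the proof of \Cref{lemma: event_B} the whole quantity is scaled by $\sqrt{c_\sigma/d_h}$ and then dominated by the $1/\alpha$ contribution from \Cref{lemma: first_dependent_term}, so the width requirement in \Cref{thm: main_non_asymp} is unaffected.
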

\begin{proof}
We omit the superscript denoting layers in this proof when there is no confusion. 
Notice that $\mathbf{G}_i/(\mathbf{g}^{(h)}\mathbf{(x) \odot m}_i^{(h+1)})$ is spanned by the vector
\begin{align*}
    \mathbf{u}^{(i)} := \mathbf{g}^{(h)}\mathbf{(x') \odot m}_i^{(h+1)} - \inprod{\mathbf{g}^{(h)}\mathbf{(x) \odot m}_i^{(h+1)},  \mathbf{g}^{(h)}\mathbf{(x') \odot m}_i^{(h+1)}} \mathbf{g}^{(h)} \mathbf{(x) \odot m}_i^{(h+1)}.
\end{align*}
Now conditioned on $\mathbf{g}^{(h)}(\mathbf{x}), \mathbf{b}^{(h+1)}(\mathbf{x})$,
observe that $\sum_i \mathbf{b}_i (\tilde{\mathbf{w}}_i^\top  \mathbf{u}^{(i)}) \mathbf{u}^{(i)} = \sum_i \mathbf{b}_i w_i \mathbf{u}^{(i)}$ where $w_i \stackrel{}{\sim} \mathcal{N}(0,1)$ is Gaussian (independent of $\mathbf{g}^{(h)}(\mathbf{x}), \mathbf{b}^{(h+1)}(\mathbf{x})$).
Let $\mathbf{w} := [w_1, w_2, \ldots, w_{d_{h+1}}]$. 
Its covariance matrix is given by
\begin{align*}
    \E_{\tilde{\mathbf{w}}} \left( \sum_i \mathbf{b}_i {{w}}_i \mathbf{u}^{(i)} \right) \left( \sum_j \mathbf{b}_j {{w}}_j \mathbf{u}^{(j)} \right)^\top &= \E_{\mathbf{w}} \sum_{i,j} \mathbf{b}_i \mathbf{b}_j {w}_i w_j \mathbf{u}^{(i)} \left(\mathbf{u}^{(j)} \right)^\top = \sum_i \mathbf{b}_i^2 \mathbf{u}^{(i)} \left( \mathbf{u}^{(i)} \right)^\top.
\end{align*}
Let the eigenvalue decomposition of this matrix be $\mathbf{UDU^\top}$, then the vector $\sum_i \mathbf{b}_i w_i \mathbf{u}^{(i)}$ has the same distribution as $\mathbf{U D}^{1/2} \tilde{\mathbf{w}}$ where $\tilde{\mathbf{w}} \sim \mathcal{N}(\mathbf{0, I})$.
Thus, 
\begin{align*}
    \E_{\mathbf{w}} \left[ \norm{\sum_i \mathbf{b}_i {w}_i \mathbf{u}^{(i)}}_2^2 \right] = \E_{\mathbf{\tilde{w}}} \left[ \tilde{\mathbf{w}}^\top \mathbf{D}^{1/2} \mathbf{U}^\top \mathbf{U} \mathbf{D}^{1/2} \tilde{\mathbf{w}} \right] = \tr(\mathbf{D}).
\end{align*}
Now, we use the fact that the sum of the eigenvalues of a SPD matrix is its trace and we have
\begin{align*}
    \tr(\mathbf{D}) = \tr\left( \sum_i \mathbf{b}_i^2 \mathbf{u}^{(i)} \left(\mathbf{u}^{(i)} \right)^\top \right) = \sum_j \sum_i \mathbf{b}_i^2 \left(\mathbf{u}^{(i)}_{j} \right)^2 = \sum_i \mathbf{b}_i^2 = \norm{\mathbf{b}}_2^2.
\end{align*}
By Jensen's inequality, we have
\begin{align*}
    \E_{\mathbf{w}} \left[ \norm{\sum_i \mathbf{b}_i {w}_i \mathbf{u}^{(i)}}_2 \right] \leq \sqrt{\E_{\mathbf{w}} \left[ \norm{\sum_i \mathbf{b}_i {w}_i \mathbf{u}^{(i)}}_2^2 \right]} = \norm{\mathbf{b}}_2.
\end{align*}
Further, use the definition of two norm we can write
\begin{align*}
    \norm{\sum_i \mathbf{b}_i {w}_i \mathbf{u}^{(i)}}_2 = \sup_{\norm{\mathbf{x}}_2 = 1} \inprod{\mathbf{x}, \sum_i \mathbf{b}_i {w}_i \mathbf{u}^{(i)}} \stackrel{\mathcal{D}}{=} \sup_{\norm{\mathbf{x}}_2=1} \inprod{\mathbf{x}, \mathbf{UD}^{1/2} \tilde{\mathbf{w}}} = \sup_{\norm{\mathbf{x}}_2=1} \inprod{\mathbf{x} \mathbf{D}^{1/2}, \tilde{\mathbf{w}}}.
\end{align*}
The last quantity is in form of a Gaussian complexity and, by \Cref{lemma: gaussian_complexity_concentration}, has sub-Gaussian concentration with variance proxy $\sigma^2 = \max_i \mathbf{D}_{ii} \leq \tr(\mathbf{D}) = \norm{\mathbf{b}}_2^2$.
Thus, with probability $1 - \delta_2/4$,
\begin{align*}
    \norm{\sum_{i} \mathbf{b}^{(h+1)}_i (\mathbf{x}^{(1)}) \left( {\mathbf{w}}^{(h+1)}_i \right)^\top \Pi_{\mathbf{G}_i/(\mathbf{g}^{(h)}\mathbf{(x)} \odot \mathbf{m}_i^{(h+1)})} \mathbf{M}_i^{(h+1)}}_2 \leq \left( 1 + \sqrt{\frac{2}{\alpha}  \log \frac{8}{\delta_2}} \right) \norm{\mathbf{b}^{(h+1)}}_2 \leq 2 \left( 1 + \sqrt{\frac{1}{\alpha} \log \frac{8}{\delta_2}} \right).
\end{align*}
\end{proof}

Now we bound the first term in \Cref{eq: dependent_two_parts}.

\begin{lemma}\label{lemma: first_dependent_term}
With probability $1 - \delta$,
\begin{align*}
    \norm{\sum_{i} \mathbf{b}^{(h+1)}_i (\mathbf{x}^{(1)}) \left( {\mathbf{w}}^{(h+1)}_i \right)^\top  \Pi_{(\mathbf{g}^{(h)} \odot \mathbf{m}_i^{(h+1)})} \mathbf{M}_i^{(h+1)}}_2 \leq \frac{4}{{\alpha}} \sqrt{\log \frac{4\sum_h d_h}{\delta}}.
\end{align*}
\end{lemma}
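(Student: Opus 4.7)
The plan is to reduce the $\ell_2$-norm on the left to a coordinatewise product involving the activation $\mathbf{g}^{(h)}$ and the output of the mask-induced pseudo-network of Definition on pseudo-networks, then invoke the output bound that is already guaranteed by the event $\overline{\mathcal{C}}$ together with the forward concentration from $\overline{\mathcal{A}}$.

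First, I would expand the rank-one projection as $\Pi_{\mathbf{g}^{(h)} \odot \mathbf{m}_i^{(h+1)}} = \mathbf{g}_i^{(h)} (\mathbf{g}_i^{(h)})^{\top}/\|\mathbf{g}_i^{(h)}\|_2^2$, writing $\mathbf{g}_i^{(h)}$ short for $\mathbf{g}^{(h)} \odot \mathbf{m}_i^{(h+1)}$. The key algebraic identity, exploiting that $\mathbf{m}_{ij}^{(h+1)} \in \{0,1/\sqrt{\alpha}\}$ gives $(\mathbf{m}_{ij}^{(h+1)})^2 = \mathbf{m}_{ij}^{(h+1)}/\sqrt{\alpha}$, is that the $(k,j)$ entry of $\mathbf{g}_i^{(h)} (\mathbf{g}_i^{(h)})^{\top} \mathbf{M}_i^{(h+1)}$ equals $(\mathbf{g}_i^{(h)})_k \cdot \frac{1}{\sqrt{\alpha}} \mathbf{g}_j^{(h)} \mathbf{m}_{ij}^{(h+1)}$. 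Using $(\mathbf{w}_i^{(h+1)})^{\top} \mathbf{g}_i^{(h)} = \mathbf{f}_i^{(h+1)}$, the $j$-th coordinate of the sum collapses to
\begin{align*}
\left( \sum_i \mathbf{b}_i^{(h+1)} (\mathbf{w}_i^{(h+1)})^{\top} \Pi_{\mathbf{g}_i^{(h)}} \mathbf{M}_i^{(h+1)} \right)_j
= \frac{\mathbf{g}_j^{(h)}}{\sqrt{\alpha}} \sum_i \mathbf{b}_i^{(h+1)} \frac{\mathbf{m}_{ij}^{(h+1)} \mathbf{f}_i^{(h+1)}}{\|\mathbf{g}_i^{(h)}\|_2^2}.
\end{align*}

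Next, I would identify the inner sum with the output of the mask-induced pseudo-network of Definition on pseudo-networks. Comparing with the recursion $\mathbf{g}^{(h+1,j,h+1)} = \sqrt{c_\sigma/d_{h+1}} \,\mathbf{D}^{(h+1)} \mathrm{diag}_i(\mathbf{m}_{ij}^{(h+1)} \sqrt{\alpha}/\|\mathbf{g}_i^{(h)}\|_2^2) \mathbf{f}^{(h+1)}$ and then propagating this signal forward through layers $h+2,\dots,L+1$ using the same weights and masks that generate $\mathbf{b}^{(h+1)}$ in the true network, one unrolls $\sum_i \mathbf{b}_i^{(h+1)} \mathbf{m}_{ij}^{(h+1)} \mathbf{f}_i^{(h+1)}/\|\mathbf{g}_i^{(h)}\|_2^2$ as $\sqrt{\alpha}^{-1} f^{(h+1,j,L+1)}$. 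This yields the clean identity asserted in the main text:
\begin{align*}
\left( \sum_i \mathbf{b}_i^{(h+1)} (\mathbf{w}_i^{(h+1)})^{\top} \Pi_{\mathbf{g}_i^{(h)}} \mathbf{M}_i^{(h+1)} \right)_j = \frac{1}{\alpha}\, \mathbf{g}_j^{(h)}\, f^{(h+1,j,L+1)}.
\end{align*}

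Finally, I would take the $\ell_2$-norm and apply a crude coordinate-wise estimate: under the event $\overline{\mathcal{C}}$ every pseudo-network output satisfies $|f^{(h+1,j,L+1)}| \leq 2\sqrt{\log(4\sum_{h'} d_{h'}/\delta)}$, and under $\overline{\mathcal{A}}$ we have $\|\mathbf{g}^{(h)}\|_2^2 \leq 1+\eps_1 \leq 2$. Pulling the maximum of $|f^{(h+1,j,L+1)}|^2$ out of the sum $\sum_j (\mathbf{g}_j^{(h)})^2 (f^{(h+1,j,L+1)})^2$ and bounding the residual by $\|\mathbf{g}^{(h)}\|_2^2$ yields $\|\cdot\|_2 \leq (2\sqrt{2}/\alpha)\sqrt{\log(4\sum_{h'} d_{h'}/\delta)} \leq (4/\alpha)\sqrt{\log(4\sum_{h'} d_{h'}/\delta)}$. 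The genuinely hard step is the second one: tracing the pseudo-network propagation from $\mathbf{g}^{(h+1,j,h+1)}$ through $L-h$ intermediate layers and checking that, because the weights and masks from layers $h+2$ onwards are shared between the real backward pass and the pseudo-network forward pass, the unrolled sum is exactly $f^{(h+1,j,L+1)}/\sqrt{\alpha}$. Once that combinatorial bookkeeping is done, the probabilistic content is entirely absorbed into the previously established events $\overline{\mathcal{A}}$ and $\overline{\mathcal{C}}$.
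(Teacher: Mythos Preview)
Your proposal is correct and follows essentially the same approach as the paper: expand the rank-one projection, use the $\mathbf{M}_i^{(h+1)}$ factor to pull out $1/\sqrt{\alpha}$, identify the $j$-th coordinate with $\frac{1}{\alpha}\mathbf{g}_j^{(h)} f^{(h+1,j,L+1)}$ by unrolling $\mathbf{b}^{(h+1)}$ against the pseudo-network recursion, and then bound $\|\mathbf{g}^{(h)}\|_2$ via $\overline{\mathcal{A}}$ and $|f^{(h+1,j,L+1)}|$ via $\overline{\mathcal{C}}$. The only cosmetic difference is that you use $\|\mathbf{g}^{(h)}\|_2^2\leq 2$ (yielding the slightly sharper intermediate constant $2\sqrt{2}/\alpha$) whereas the paper writes $\|\mathbf{g}^{(h)}\|_2\leq 2$ directly; both lead to the stated $4/\alpha$ bound.
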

\begin{proof}
Since $\Pi_{(\mathbf{g}^{(h)} \odot \mathbf{m}_i^{(h+1)})} = \frac{(\mathbf{g}^{(h)} \odot \mathbf{m}_i^{(h+1)}) (\mathbf{g}^{(h)} \odot \mathbf{m}_i^{(h+1)})^\top}{\norm{(\mathbf{g}^{(h)} \odot \mathbf{m}_i^{(h+1)})}^2_2}$ we have
\begin{align*}
    & \norm{\sum_{i} \mathbf{b}^{(h+1)}_i (\mathbf{x}^{(1)}) \left( {\mathbf{w}}^{(h+1)}_i \right)^\top \Pi_{(\mathbf{g}^{(h)} \odot \mathbf{m}_i^{(h+1)})}  \mathbf{M}_i^{(h+1)}}_2 \\
    &= \norm{\sum_{i} \mathbf{b}^{(h+1)}_i (\mathbf{x}^{(1)}) \left( {\mathbf{w}}^{(h+1)}_i \right)^\top \frac{(\mathbf{g}^{(h)} \odot \mathbf{m}_i^{(h+1)}) (\mathbf{g}^{(h)} \odot \mathbf{m}_i^{(h+1)})^\top}{\norm{(\mathbf{g}^{(h)} \odot \mathbf{m}_i^{(h+1)})}^2_2} \mathbf{M}_i^{(h+1)}}_2 \\
    &= \norm{\frac{1}{\sqrt{\alpha}} \sum_{i} \mathbf{b}^{(h+1)}_i (\mathbf{x}^{(1)}) \left( {\mathbf{w}}^{(h+1)}_i \right)^\top \frac{(\mathbf{g}^{(h)} \odot \mathbf{m}_i^{(h+1)}) (\mathbf{g}^{(h)} \odot \mathbf{m}_i^{(h+1)})^\top}{\norm{(\mathbf{g}^{(h)} \odot \mathbf{m}_i^{(h+1)})}^2_2}}_2.
\end{align*}
Now let's look at the $j$-th coordinate of this vector: 
\begin{align*}
    & \left(\frac{1}{\sqrt{\alpha}} \sum_{i} \mathbf{b}^{(h+1)}_i (\mathbf{x}^{(1)}) \left( {\mathbf{w}}^{(h+1)}_i \right)^\top \frac{(\mathbf{g}^{(h)} \odot \mathbf{m}_i^{(h+1)}) (\mathbf{g}^{(h)} \odot \mathbf{m}_i^{(h+1)})^\top}{\norm{(\mathbf{g}^{(h)} \odot \mathbf{m}_i^{(h+1)})}^2_2} \right)_j \\
    &= \frac{1}{\sqrt{\alpha}} \sum_{i} \mathbf{b}^{(h+1)}_i (\mathbf{x}^{(1)}) \left( {\mathbf{w}}^{(h+1)}_i \right)^\top \frac{(\mathbf{g}^{(h)} \odot \mathbf{m}_i^{(h+1)}) \mathbf{m}_{ij}^{(h+1)} \mathbf{g}_j^{(h)} }{\norm{(\mathbf{g}^{(h)} \odot \mathbf{m}_i^{(h+1)})}^2_2} \\
    &= \frac{1}{{\alpha}} \mathbf{g}_j^{(h)} \left( \mathbf{b}^{(h+1)}(\mathbf{x}^{(1)}) \right)^\top \textnormal{diag}_i \left(\frac{ \mathbf{m}_{ij}^{(h+1)} \sqrt{\alpha}}{\norm{\mathbf{g}^{(h)} \odot \mathbf{m}_i^{(h+1)}}_2^2} \right)
    \begin{bmatrix}
    \left( {\mathbf{w}}^{(h+1)}_1 \odot \mathbf{m}^{(h+1)}_1 \right)^\top (\mathbf{g}^{(h)} \odot \mathbf{m}_1^{(h+1)}) \sqrt{\alpha} \\
    \left( {\mathbf{w}}^{(h+1)}_2 \odot \mathbf{m}^{(h+1)}_2 \right)^\top (\mathbf{g}^{(h)} \odot \mathbf{m}_2^{(h+1)}) \sqrt{\alpha} \\
    \vdots \\
    \left( {\mathbf{w}}^{(h+1)}_{d_{h+1}} \odot \mathbf{m}^{(h+1)}_{d_{h+1}} \right)^\top (\mathbf{g}^{(h)} \odot \mathbf{m}_{d_{h+1}}^{(h+1)}) \sqrt{\alpha}\\
    \end{bmatrix} \\
    &= \frac{1}{{\alpha}} \mathbf{g}_j^{(h)} \left( \mathbf{b}^{(h+1)}(\mathbf{x}^{(1)}) \right)^\top \textnormal{diag}_i\left(\frac{ \mathbf{m}_{ij}^{(h+1)} \sqrt{\alpha}}{\norm{\mathbf{g}^{(h)} \odot \mathbf{m}_i^{(h+1)}}_2^2} \right) \mathbf{f}^{(h+1)}(\mathbf{x}^{(1)}) \\
    &= \frac{1}{{\alpha}} \mathbf{g}_j^{(h)} \left(\mathbf{w}^{(L+1)} \odot \mathbf{m}^{(L+1)} \right)^\top \sqrt{\frac{c_\sigma}{d_L}} \mathbf{D}^{(L)}(\mathbf{x}^{(1)}) \left( \mathbf{W}^{(L)} \odot \mathbf{m}^{(L)} \right) \\
    & \quad \ldots \sqrt{\frac{c_\sigma}{d_{h+1}}} \mathbf{D}^{(h+1)}(\mathbf{x}^{(1)}) \textnormal{diag}_i\left(\frac{ \mathbf{m}_{ij}^{(h+1)} \sqrt{\alpha}}{\norm{\mathbf{g}^{(h)} \odot \mathbf{m}_i^{(h+1)}}_2^2} \right) \mathbf{f}^{(h+1)}(\mathbf{x}^{(1)}) \\
    &= \frac{1}{{\alpha}} \mathbf{g}_j^{(h)} {f}^{(h+1, j, L+1)}(\mathbf{x}^{(1)})
\end{align*}
By \Cref{lemma: induced_pseudo_network_output}, we have
\begin{align*}
    |f^{(h, j, L+1)}| \leq 2 \sqrt{\log \frac{4 \sum_{h'=1}^{L-1} d_{h'}}{\delta}}.
\end{align*}
Finally, by \Cref{thm: concentration_g}, we have $\norm{\mathbf{g}^{(h)}}_2 \leq 2$. 
This implies
\begin{align*}
    \norm{\sum_{i} \mathbf{b}^{(h+1)}_i (\mathbf{x}^{(1)}) \left( {\mathbf{w}}^{(h+1)}_i \right)^\top \mathbf{M}_i^{(h+1)} \Pi_{(\mathbf{g}^{(h)} \odot \mathbf{m}_i^{(h+1)})}}_2 \leq \frac{4}{{\alpha}} \sqrt{\log \frac{4 \sum_{h'=1}^{L-1} d_h}{\delta_3}}
\end{align*}
\end{proof}

\begin{proof}[Continuing Proof of \Cref{lemma: event_B}]
Wrapping things up, from \Cref{eq: dependent_independent_decomposition}, by \Cref{prop: independent} and \Cref{prop: dependent},
\begin{align*}
    & \Bigg| \frac{c_\sigma}{d_h} \sum_{i,j} \mathbf{b}_i^{(h+1)} (\mathbf{x}^{(1)}) \mathbf{b}_j^{(h+1)} (\mathbf{x}^{(2)}) \left( {\mathbf{w}}^{(h+1)}_i \right)  \mathbf{M}_i^{(h+1)} \mathbf{D}^{(h)}(\mathbf{x}^{(1)}) \mathbf{D}^{(h)}(\mathbf{x}^{(2)})  \mathbf{M}_j^{(h+1)} {\mathbf{w}}^{(h+1)}_j \\
    &\quad - \frac{c_\sigma}{d_h} \sum_{i,j} \mathbf{b}_i^{(h+1)} (\mathbf{x}^{(1)}) \mathbf{b}_j^{(h+1)} (\mathbf{x}^{(2)}) \left( {\mathbf{w}}^{(h+1)}_i \right)^\top \mathbf{M}_i^{(h+1)} \Pi_{\mathbf{G}_i}^\bot \mathbf{D}^{(h)}(\mathbf{x}^{(1)}) \mathbf{D}^{(h)}(\mathbf{x}^{(2)}) \Pi_{\mathbf{G}_j}^\bot \mathbf{M}_j^{(h+1)} {\mathbf{w}}^{(h+1)}_j \Bigg|\\
    \leq & \norm{\sqrt{\frac{c_\sigma}{d_h}} \sum_{i} \mathbf{b}_i^{(h+1)} (\mathbf{x}^{(1)}) \left( {\mathbf{w}}^{(h+1)}_i \right)^\top \mathbf{M}_i^{(h+1)} \Pi_{\mathbf{G}_i}^\bot \mathbf{D}^{(h)}(\mathbf{x}^{(1)})} \norm{\sqrt{\frac{c_\sigma}{d_h}} \sum_j \mathbf{b}_j^{(h+1)} (\mathbf{x}^{(2)}) \mathbf{D}^{(h)}(\mathbf{x}^{(2)}) \Pi_{\mathbf{G}_j} \mathbf{M}_j^{(h+1)} \mathbf{w}^{(h+1)}_j} \\
    +& \norm{\sqrt{\frac{c_\sigma}{d_h}} \sum_{i} \mathbf{b}_i^{(h+1)} (\mathbf{x}^{(1)}) \left( {\mathbf{w}}^{(h+1)}_i \right)^\top \mathbf{M}_i^{(h+1)} \Pi_{\mathbf{G}_i} \mathbf{D}^{(h)}(\mathbf{x}^{(1)})} \norm{\sqrt{\frac{c_\sigma}{d_h}} \sum_j  \mathbf{b}_j^{(h+1)} (\mathbf{x}^{(2)}) \mathbf{D}^{(h)}(\mathbf{x}^{(2)}) \Pi_{\mathbf{G}_j}^\bot \mathbf{M}_j^{(h+1)} {\mathbf{w}}^{(h+1)}_j} \\
    +& \norm{\sqrt{\frac{c_\sigma}{d_h}} \sum_{i} \mathbf{b}_i^{(h+1)} (\mathbf{x}^{(1)}) \left( {\mathbf{w}}^{(h+1)}_i \right)^\top \mathbf{M}_i^{(h+1)} \Pi_{\mathbf{G}_i} \mathbf{D}^{(h)}(\mathbf{x}^{(1)})} \norm{\sqrt{\frac{c_\sigma}{d_h}} \sum_j \mathbf{b}_j^{(h+1)} (\mathbf{x}^{(2)}) \mathbf{D}^{(h)}(\mathbf{x}^{(2)}) \Pi_{\mathbf{G}_j} \mathbf{M}_j^{(h+1)} {\mathbf{w}}^{(h+1)}_j }\\
    \leq & 2\left(\frac{12 \sqrt{2}}{\sqrt{d_h}}+ 12\sqrt{\frac{2}{\alpha} \frac{\log \frac{8}{\delta_2}}{d_h}} + \frac{24}{{\alpha}} \frac{\sqrt{2\log \frac{4\sum d_h}{\delta_3}}}{\sqrt{d_h}} \right) + \frac{2}{d_h} \left( 2 + 2\sqrt{\frac{1}{\alpha} \log \frac{8}{\delta_2}} + \frac{4}{{\alpha}} \sqrt{\log \frac{4\sum d_h}{\delta_3}} \right)^2 \\
    \leq & \frac{48\sqrt{2}}{\sqrt{d_h}} +  48 \sqrt{\frac{2}{\alpha} \frac{\log \frac{8}{\delta_2}}{d_h}} + \frac{96}{{\alpha}} \frac{\sqrt{2\log \frac{4\sum d_h}{\delta_3}}}{\sqrt{d_h}}
\end{align*}
\end{proof}

\section{ADDITIONAL EXPERIMENT RESULTS}\label{app: exp}
\subsection{Experimental Setup}\label{sec: exp_setup}
All of our models are trained with SGD and the detailed settings are summarized below. 
\begin{table}[th]
\caption{Summary of architectures, dataset and training hyperparameters}
\label{sample-table}
\vskip 0.15in
\begin{center}
\begin{small}
\begin{sc}
\begin{tabular}{lcccccccr}
\toprule
Model & Data & Epoch & Batch Size & LR & Momentum & LR Decay, Epoch & Weight Decay \\
\midrule
LeNet & MNIST & 40 & 128 & 0.1 & 0 & 0 & 0 \\
{VGG} & {CIFAR-10} & 160 & 128 & 0.1 & 0.9 & 0.1 $\times$ [80, 120] & 0.0001 \\
ResNets & CIFAR-10 & 160 & 128 & 0.1 & 0.9 & 0.1 $\times$ [80, 120] & 0.0001 \\
\bottomrule
\end{tabular}
\end{sc}
\end{small}
\end{center}
\vskip -0.1in
\end{table}

\subsection{Further Experiment Results}\label{sec: further_exp}
\subsubsection{MNIST}
For MNIST dataset, we train a fully-connected neural network with 2-hidden layers of width $2048$. 
The performance is shown in Figure \ref{figure: lenet}.

\begin{figure}[ht]
\vskip 0.2in
\begin{center}
\centerline{\includegraphics[width=0.5\columnwidth]{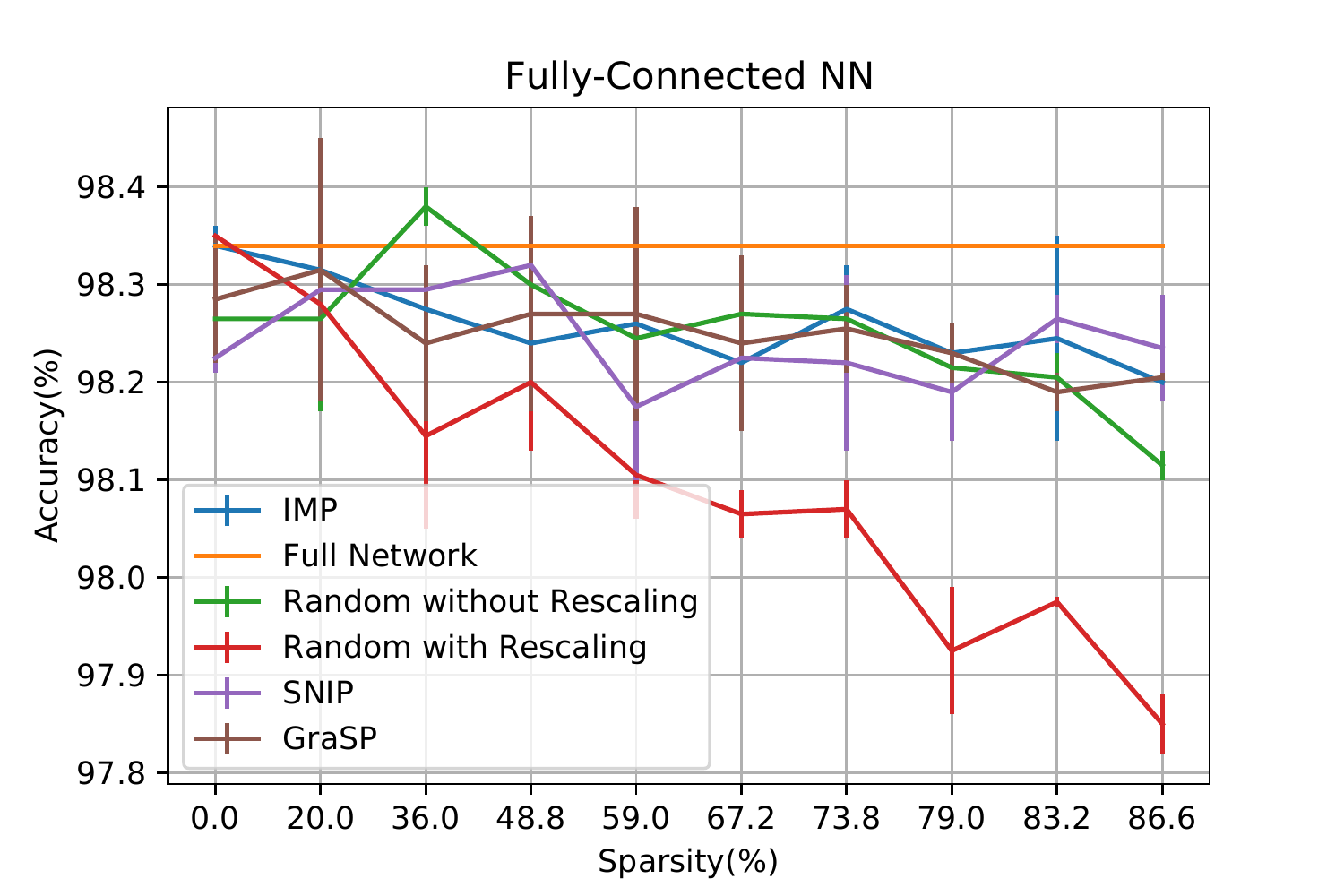}}
\caption{Comparing the performance of random pruning with/without rescaling with IMP, SNIP and GraSP using a fully-connected neural network with 2 hidden layers of width 2048 on MNIST dataset.}
\label{figure: lenet}
\end{center}
\vskip -0.2in
\end{figure}

\subsubsection{CIFAR-10}
\textbf{VGG.}
{We train standard VGG-11 (i.e., VGG-11-64) and VGG-11-128 on CIFAR-10 dataset. 
The results are shown in \Cref{figure: vgg-imp} and \Cref{figure: vgg-random}.}

\begin{figure*}[ht]
  \centering
  \subfloat[]{\includegraphics[width=0.5\columnwidth]{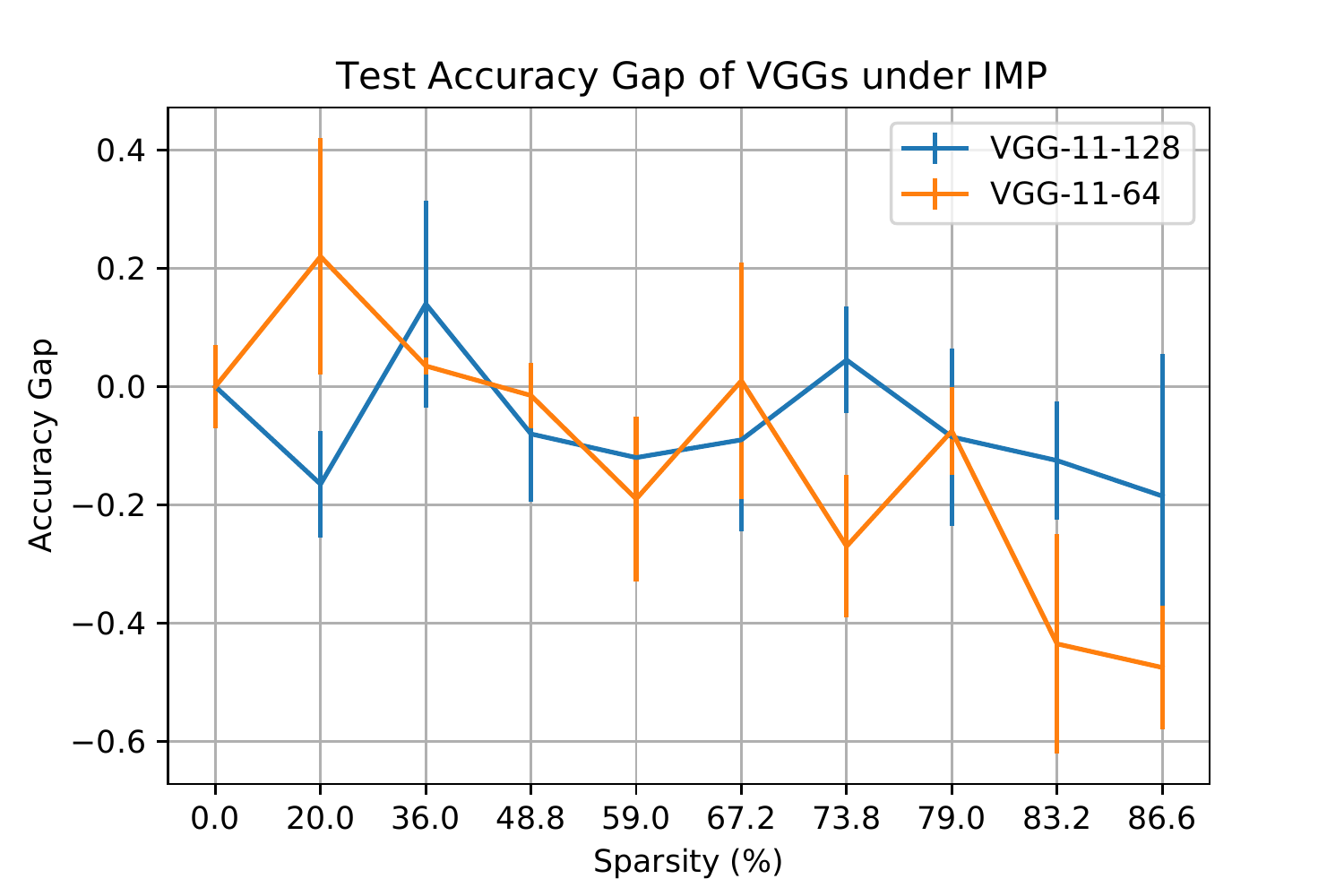} \label{figure: vgg-imp}}
  \subfloat[]{\includegraphics[width=0.5\columnwidth]{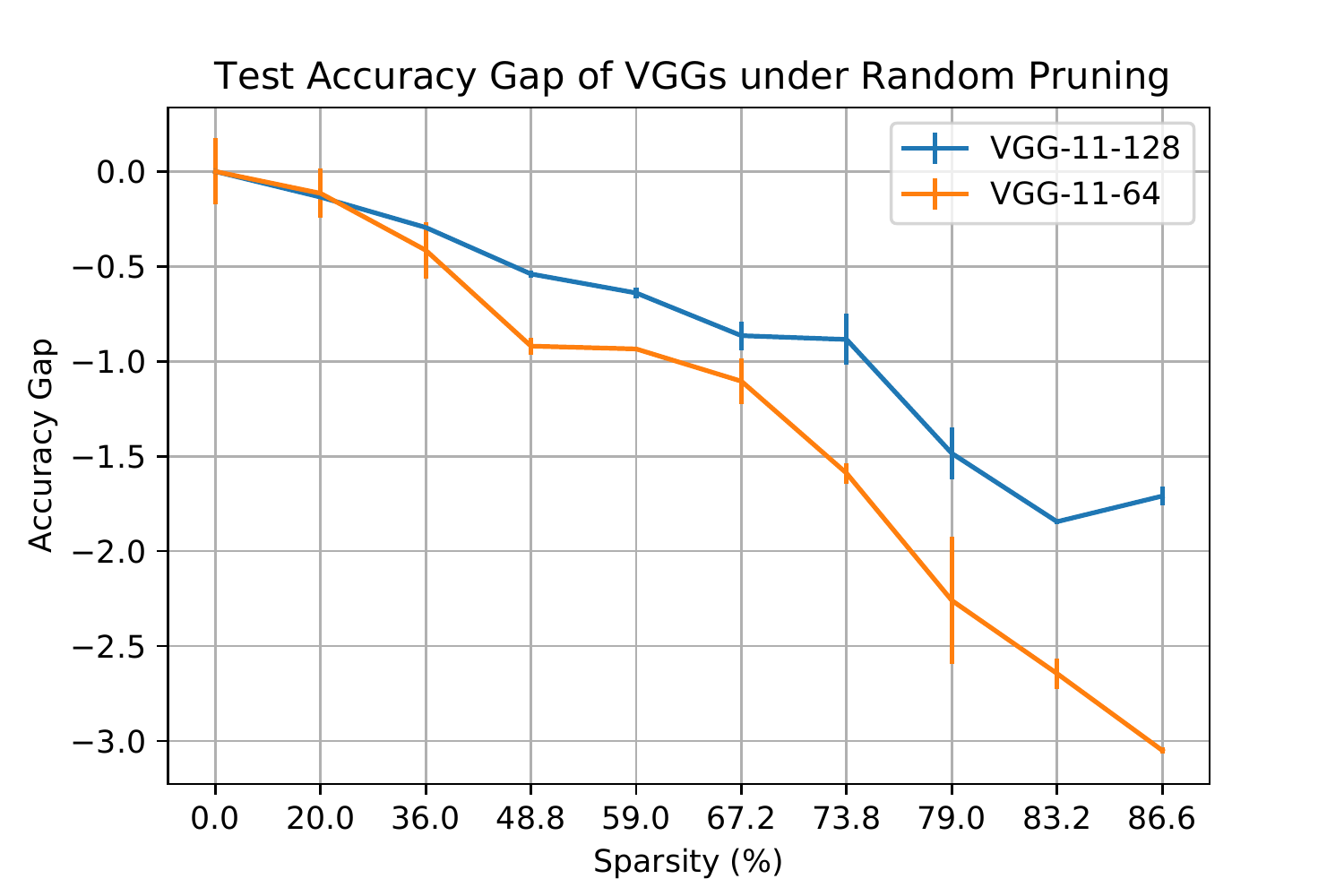} \label{figure: vgg-random}}
  \caption{{The performance of random pruning and IMP using VGG-11 of different width on CIFAR-10 dataset.}}
\end{figure*}

\textbf{ResNet.}
We further train ResNet-20 of width 32, 64 and 128 and compare the performance of random pruning with and without rescaling against IMP. 
The results are shown in Figure \ref{figure: resnet-20-32}, Figure \ref{figure: resnet-20-64} and Figure \ref{figure: resnet-20-128}.

\begin{figure*}[ht]
  \centering
  \subfloat[]{\includegraphics[width=0.5\columnwidth]{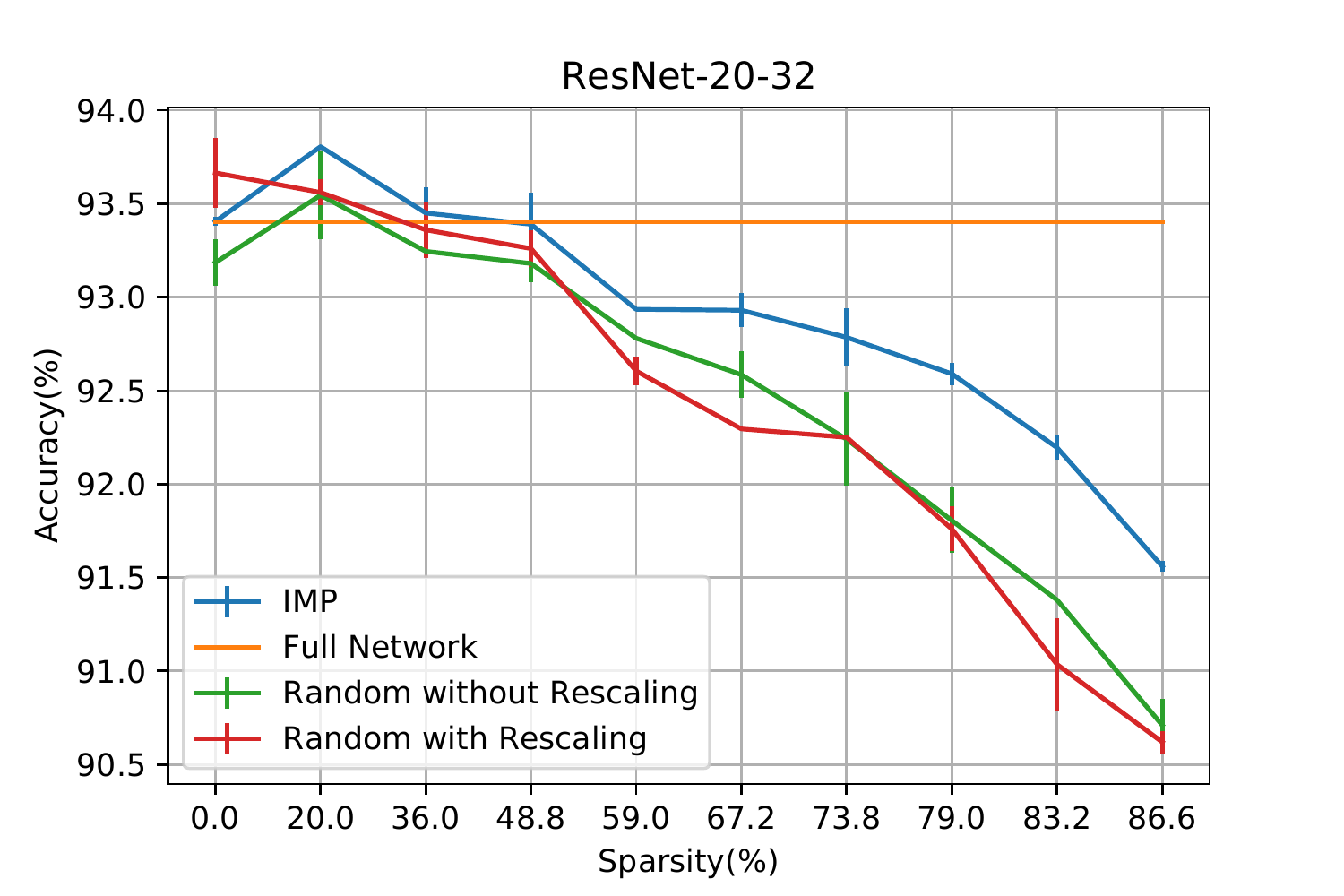} \label{figure: resnet-20-32}}
  \hfill
  \subfloat[]{\includegraphics[width=0.5\columnwidth]{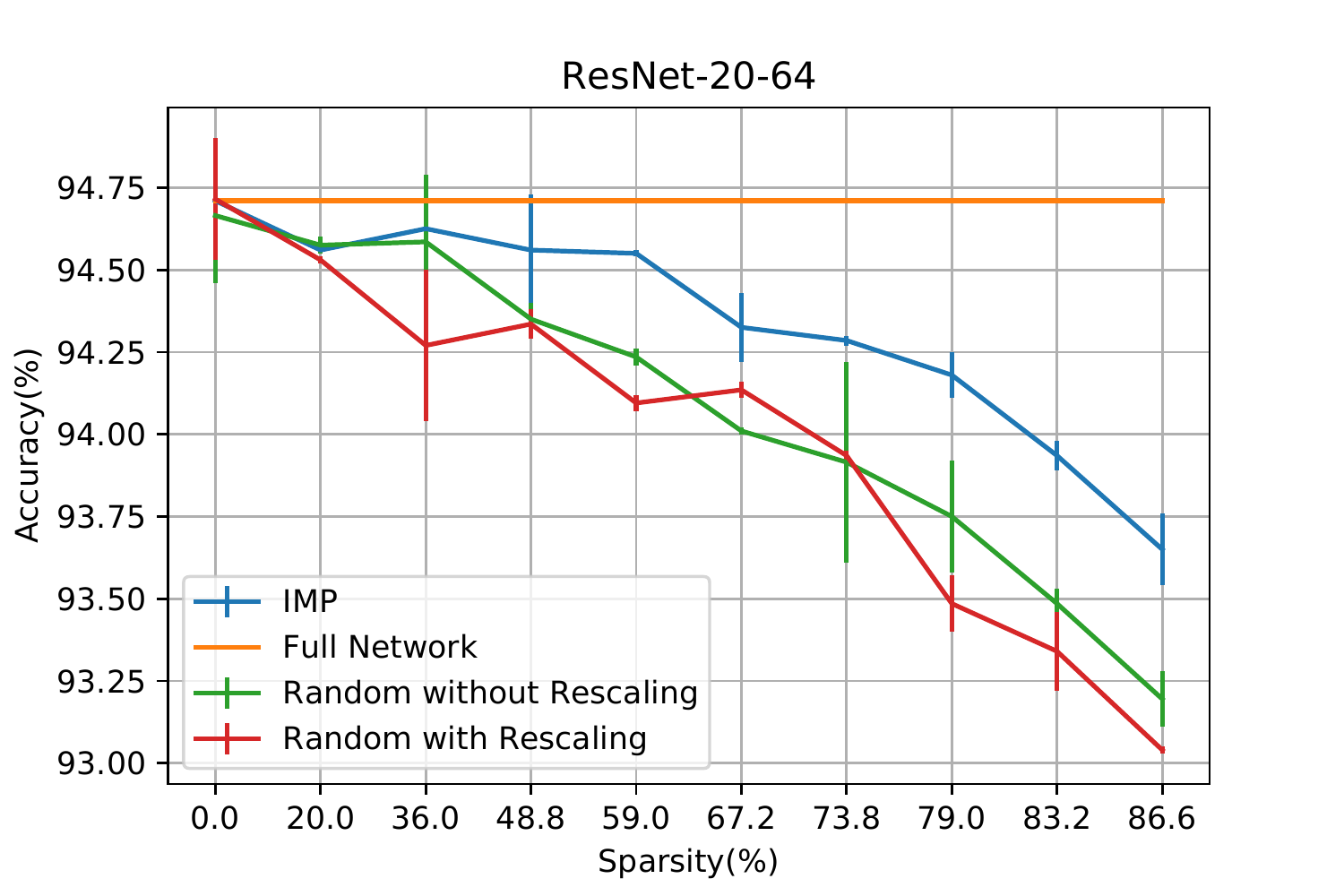} \label{figure: resnet-20-64}}
  \subfloat[]{\includegraphics[width=0.5\columnwidth]{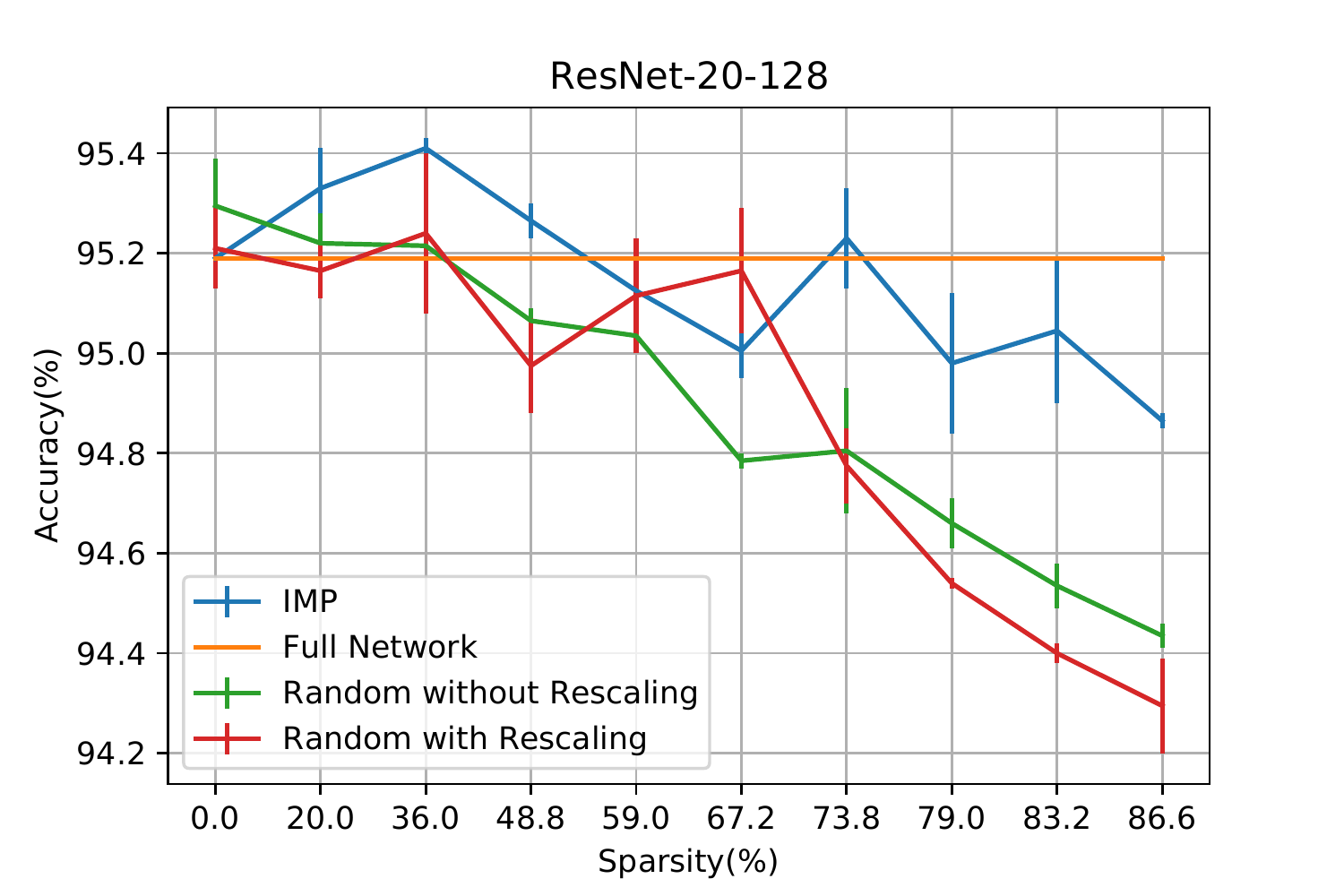} \label{figure: resnet-20-128}}
  \caption{The performance of random pruning with/without rescaling and IMP using ResNet-20 of different width on CIFAR-10 dataset.}
\end{figure*}
We further plot random pruning with rescaling across different width in \Cref{figure: resnet_accuracy_gap_rescale} and pruning by IMP in \Cref{figure: resnet_accuracy_gap_imp}.
The result further shows under the same pruning rate, increasing width can make the pruned model perform on par with the full model. 
\begin{figure*}[ht]
  \centering
  {\includegraphics[width=0.5\columnwidth]{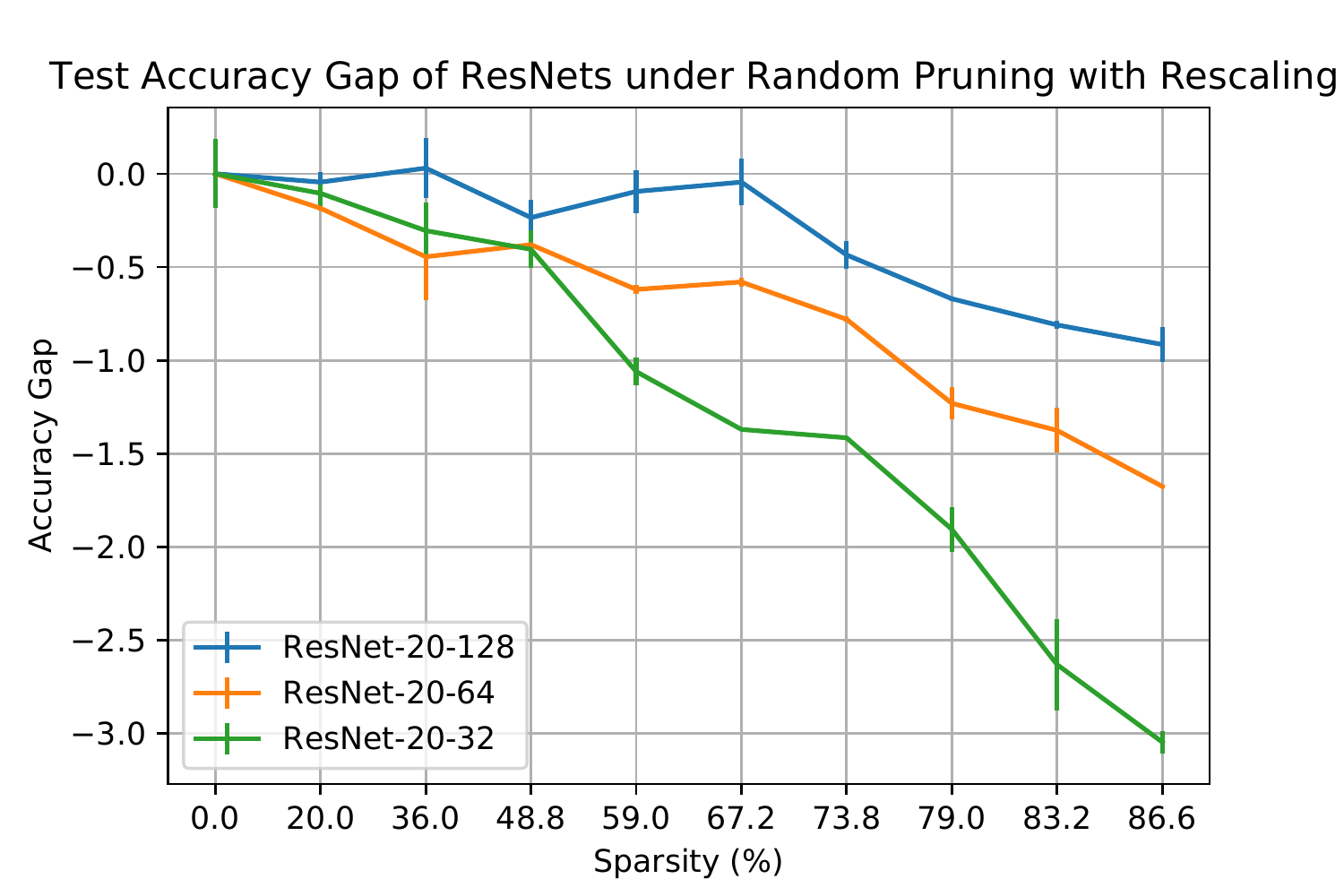} }
  \caption{The test accuracy gap of random pruning with rescaling using ResNet-20 of different width on CIFAR-10 dataset.}
  \label{figure: resnet_accuracy_gap_rescale}
\end{figure*}

\begin{figure*}[ht]
  \centering
  {\includegraphics[width=0.5\columnwidth]{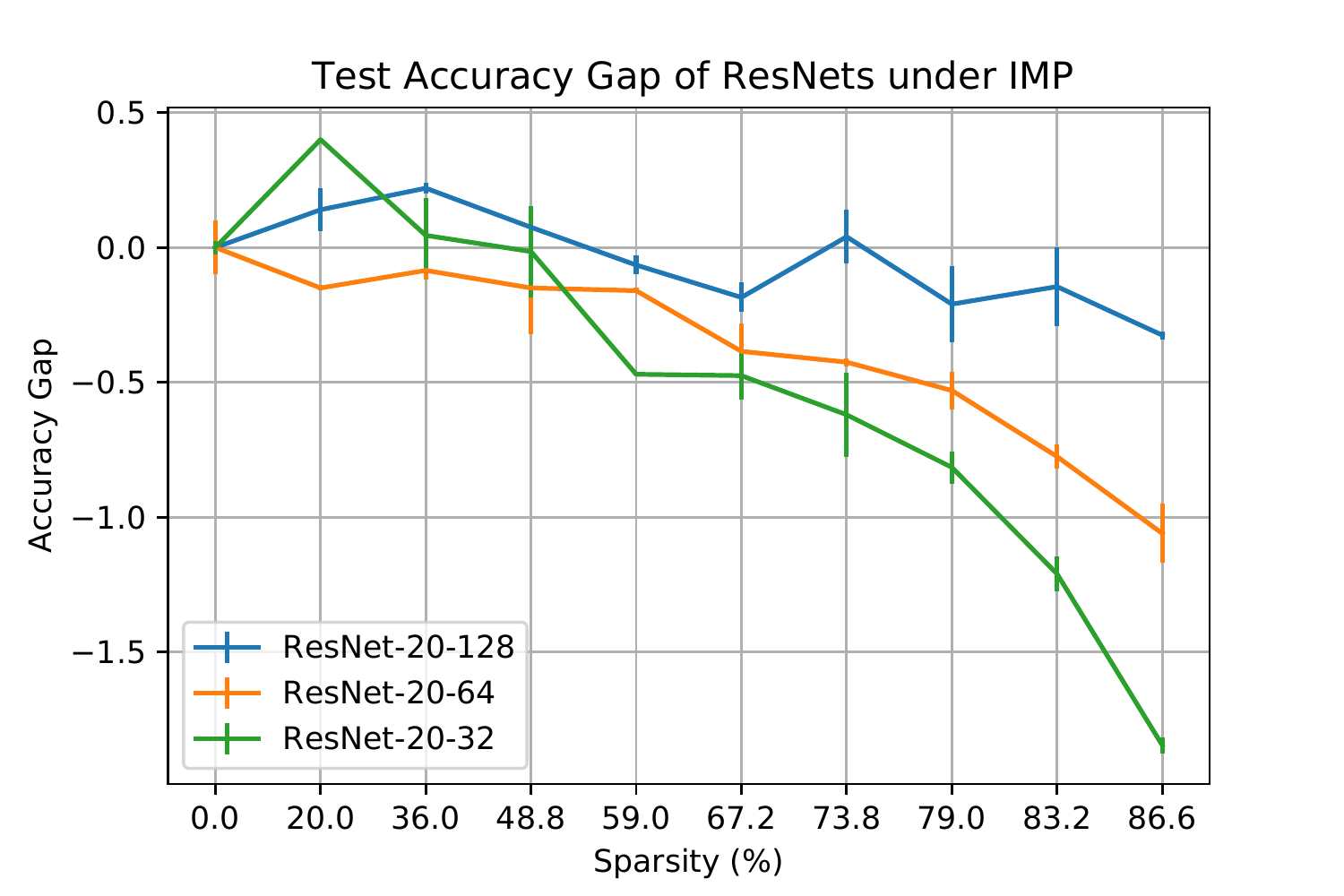} }
  \caption{The test accuracy gap of IMP using ResNet-20 of different width on CIFAR-10 dataset.}
  \label{figure: resnet_accuracy_gap_imp}
\end{figure*}



\end{document}